\date{}
\title{\bfseries From PAC to Instance-Optimal Sample Complexity in the Plackett-Luce Model}%\\in the Plackett-Luce model} 
\author{
Aadirupa Saha\thanks{Indian Institute of Science, Bangalore, India. {\tt aadirupa@iisc.ac.in}}, \and Aditya Gopalan \thanks{Indian Institute of Science, Bangalore, India. {\tt aditya@iisc.ac.in} }
}
\newtheorem{thm}{Theorem}%[section]
\newtheorem{lem}[thm]{Lemma}
\newtheorem{defn}[thm]{Definition}
\theoremstyle{remark}
\newtheorem{rem}{Remark}
\newcommand{\R}{{\mathbb R}}
\newcommand{\N}{{\mathbb N}}
\newcommand{\E}{{\mathbf E}}
\newcommand{\1}{{\mathbf 1}}
\newcommand{\cA}{{\mathcal A}}
\newcommand{\cB}{{\mathcal B}}
\newcommand{\cC}{{\mathcal C}}
\newcommand{\cF}{{\mathcal F}}
\newcommand{\cE}{{\mathcal E}}
\newcommand{\cG}{{\mathcal G}}
\newcommand{\cN}{{\mathcal N}}
\newcommand{\cR}{{\mathcal R}}
\newcommand{\cO}{{\mathcal O}}
\newcommand{\cS}{{\mathcal S}}
\newcommand{\bK}{{\mathbf K}}
\newcommand{\X}{{\mathcal X}}
\newcommand{\tp}{{\tilde p}}
\newcommand{\thetk}{{\Theta_{[k]}}}
\newcommand{\thets}{{\Theta_{S}}}
\newcommand{\thetb}{{\theta_b}}
\newcommand{\thetsh}{{\hat \Theta_{S}}}
\newcommand{\thetg}{{\hat \Theta_{\cG_g}}}
\newcommand{\blam}{{\boldsymbol \lambda}}
\newcommand{\bpi}{{\boldsymbol \pi}}
\renewcommand{\b}{{\mathbf b}}
\newcommand{\hp}{{\hat p}}
\newcommand{\p}{{\mathbf p}}
\newcommand{\q}{{\mathbf q}}
\newcommand{\x}{{\mathbf x}}
\newcommand{\y}{{\mathbf y}}
\newcommand{\z}{{\mathbf z}}
\newcommand{\sm}{\setminus}
\newcommand{\objbest}{\textit{Objective-1 (Winner-Regret)}}
\newcommand{\objk}{\textit{Objective-2 (Top-K-Regret)}}
\def \sc{{Q}}
\def \bi{{Best-Item}}
\def \ordo{{\it Order-Oblivious}}
\def \bcon{{\it Budget-Consistent}}
\def \subrout{{\it Subroutine-1}}
\def \subroutm{{\it Subroutine-m}}
\newcommand{\rb}{\textit{Rank-Breaking}}
\newcommand{\algdiv}{\textit{Partition}}
\newcommand{\algep}{\textit{$(\epsilon,\delta)$-PAC \bi}}
\newcommand{\algfewf}{\textit{PAC-Wrapper}}
\newcommand{\algdnb}{\textit{Divide-and-Battle}}
\newcommand{\algfetf}{\textit{PAC-Wrapper (for \tf)}}
\newcommand{\algfttf}{\textit{Uniform-Allocation}}
\newcommand{\algs}{\textit{Score-Estimate}}
\newcommand{\pl}{Plackett-Luce}
\newcommand{\pll}{{PL$(n,\btheta)$}}
\newcommand{\wf}{ {Winner feedback}}
\newcommand{\tf}{ {Top-$m$ Ranking feedback}}
\newcommand{\fe}{\textit{Probably-Correct-\bi}}
\newcommand{\ft}{\textit{Fixed-Sample-Complexity}}
\newcommand{\btheta}{\boldsymbol \theta}
\newcommand{\bSigma}{\boldsymbol \Sigma}
\newcommand{\bsigma}{\boldsymbol \sigma}
\newcommand{\ceil}[1]{\Big \lceil{#1} \Big\rceil}
\newcommand{\flr}[1]{\Big \lfloor{#1}\Big \rfloor}
\newcommand{\red}[1]{\textcolor{red}{#1}}
\begin{document}

\maketitle

%\vspace{-25pt}
\begin{abstract}
%\vspace{-10pt}
We consider PAC-learning a good item from $k$-subsetwise feedback information sampled from a \pl\, probability model, with {\em instance-dependent} sample complexity performance. In the setting where subsets of a fixed size can be tested and top-ranked feedback is made available to the learner, we give an algorithm with  optimal instance-dependent sample complexity, for PAC best arm identification, of $O\bigg(\frac{\thetk}{k}\sum_{i = 2}^n\max\Big(1,\frac{1}{\Delta_i^2}\Big) \ln\frac{k}{\delta}\Big(\ln \frac{1}{\Delta_i}\Big)\bigg)$,
 %$O\bigg( \sum_{i = 2}^{n}\frac{1}{\Delta_i^2}\ln\Big( \frac{k}{\delta} \ln \frac{1}{\Delta_i} \Big) \bigg)$,
$\Delta_i$ being the \pl\, parameter gap between the best and the $i^{th}$ best item, and $\thetk$ is the sum of the \pl\, parameters for the top-$k$ items. 
The algorithm is based on a wrapper around a PAC winner-finding algorithm with weaker performance guarantees to adapt to the hardness of the input instance. The sample complexity is also shown to be multiplicatively better depending on the length of rank-ordered feedback available in each subset-wise play. We show optimality of our algorithms with matching sample complexity lower bounds. We next address the winner-finding problem in \pl\, models in the fixed-budget setting with instance dependent upper and lower bounds on the misidentification probability, of $\Omega\left(\exp(-2 \tilde \Delta Q) \right)$ for a given budget $Q$, where $\tilde \Delta$ is an explicit instance-dependent problem complexity parameter. Numerical performance results are also reported.% for the algorithms. 
\vspace*{-10pt}
\end{abstract}

%which is also proved to be optimal with matching lower bound analysis.  

\vspace*{-10pt}
\section{Introduction}
\label{sec:intro}
We consider the problem of sequentially learning the best item of a set when  subsets of items can be tested but information about only their relative strengths is observed. This is a basic search problem motivated by applications in recommender systems and information retrieval \cite{hofmann2013fast, radlinski2008does}, crowdsourced ranking \cite{chen2013pairwise}, tournament design \cite{graepel2006ranking}, etc. It has received  recent attention in the online learning community, primarily under the rubric of dueling bandits (e.g., \cite{Yue+12} and online ranking in the Plackett-Luce (PL) discrete choice model  \cite{ChenSoda+18,SGwin18, Ren+18}. 

Our focus in this paper is to study the instance-dependent complexity of learning the (near) best item in a subset-wise PL feedback model by which we mean the following. Each item has an a priori unknown PL weight parameter, and every time a subset of alternatives is selected, an item or items sampled from the PL probability distribution over the subset are observed by the learner. Given a tolerance $\epsilon$ and confidence level $\delta$, the learner faces the task of sequentially playing subsets of items, and stopping and finding an $\epsilon$-optimal arm, i.e., an arm $i$ whose PL parameter satisfies $\theta_i \geq \max_j \theta_j - \epsilon$, with probability of error at most $\delta$.

Existing work on best arm learning in PL models, e.g., \cite{SGwin18}, focuses on attaining the worst-case or {\em instance-independent} sample complexity of learning an approximately best item. By this, we mean that the typical goal is to design algorithms that terminate in a number of rounds bounded by a function of only $\epsilon$, $\delta$ and the number of arms $n$, typically of the form $O\left(\frac{n}{\epsilon^2} \log \left(\frac{1}{\delta}\right) \right)$ rounds. Such worst-case results, though significantly novel, suffer from two weaknesses: (1) The termination time guarantees become vacuous in the setting where an exact best arm is sought ($\epsilon = 0$), and (2) The guarantees do not reflect the fact that some problem instances, in terms of their items' PL parameters, are easier than others to learn, e.g., the instance with parameters $(\theta_1, \ldots, \theta_n) = (1, 0.01, \ldots, 0.01)$ ought to be much easier than $(1, 0.99, \ldots, 0.99)$ since item $1$ is a distinctly clearer winner than in the latter case. In this paper, we set ourselves the more challenging objective of quantifying and attaining sample complexity that depends on the inherent `hardness' of the PL instance. 
In this context, we make the following contributions: 
%\begin{itemize}

$\boldsymbol {(1)}$ We give the {first instance-optimal algorithm for the problem of $(\epsilon, \delta)$-PAC learning a best item} in a PL model  when subsets of a fixed size can be tested in each round. This is accomplished by building a novel wrapper algorithm  (Alg. \ref{alg:fewf}) around an $(\epsilon, \delta)$-PAC learning algorithm used as a subroutine that we designed (Alg. \ref{alg:epsdel}). We also provide a {matching instance-dependent lower bound on the sample complexity} of any algorithm, to establish the optimality of our algorithm (Thm. \ref{thm:sc_fewf},\ref{thm:sc_fewfep},\ref{thm:lb_fewf}).
	
$\boldsymbol {(2)}$ When {richer, $m$ length rank-ordered information} is observed per subsetwise query, we show the optimal instance-dependent sample complexity lower bound is much smaller than just with the winner feedback case (Thm. \ref{thm:lb_fetf}). We also propose an optimal algorithm for this setting (Alg. \ref{alg:fttf}) with an $\frac{1}{m}$-factor improved sample complexity guarantee which is shown to be optimal (Thm. \ref{thm:sc_fetf}).

$\boldsymbol {(3)}$ We also study the {fixed-budget version of the best-item learning problem}, where a learning horizon of $Q$ rounds is specified instead of a desired confidence level $\delta$, and the performance measure of interest is the probability of error in identifying a best arm. We give an algorithm for learning the best item of a \pl\, instance under a fixed budget with general $m$-way ranking feedback (Alg. \ref{alg:fttf}, Thm. \ref{thm:pr_fttf}), and also prove an instance-dependent lower bound for it (Thm. \ref{thm:lb_fttf}). %Our analysis is also backed up by an \emph{instance dependent lower bound on the probability of error given fixed budget $Q$}.

Our theoretical findings are also {supported with numerical experiments} on different datasets. Related work is discussed in Appendix \ref{app:rel} due to space constraints. 

%%%%%%%%%%%%%%%%%%%%%%%%%%%%%%%%%%%%%%%%%%%%%%%%

\vspace{-10pt}
\section{Problem Setup}
\label{sec:prb_setup}
%\vspace{-10pt}
{\bf Notation.} We denote by $[n]$ the set $\{1,2,...,n\}$. For any subset $S \subseteq [n]$, let $|S|$ denote the cardinality of $S$. % with $S(i)$ being its $i$-{th} element, $\forall i \in [|S|]$.
When there is no confusion about the context, we often represent (an unordered) subset $S$ as a vector, or ordered subset, $S$ of size $|S|$ (according to, say, a fixed global ordering of all the items $[n]$). In this case, $S(i)$ denotes the item (member) at the $i$th position in subset $S$.   
For any ordered set $S$, $S(i:j)$ denotes the set of items from position $i$ to $j$, $i<j$, $\forall i,j \in [|S|]$.
$\bSigma_S = \{\sigma \mid \sigma$ is a permutation over items of $ S\}$, where for any permutation $\sigma \in \Sigma_{S}$, $\sigma(i)$ denotes the element at the $i$-{th} position in $\sigma, i \in [|S|]$. We also denote by $\bSigma_S^m$ the set of permutations of any $m$-subset of $S$, for any $m \in [k]$, i.e. $\Sigma_S^m := \{ \Sigma_{S'} \mid S' \subseteq S, \, |S'| = m \}$. 
$\1(\varphi)$ is generically used to denote an indicator variable that takes the value $1$ if the predicate $\varphi$ is true, and $0$ otherwise. %We use $\1_n$ to denote an $n$-dimensional vector of all $1$'s. 
%Given a set $[n]$, for any two $i,j \in [n]$, $i \succ j$ denotes the event $i$ is preferred over $j$.
$x \vee y$ denotes the maximum of $x$ and $y$, and $Pr(A)$ is used to denote the probability of event $A$, in a probability space.% that is clear from the context.

\begin{defn}[\pl\, probability model]
\label{def:mnl_thet}
A \pl\, probability model, specified by positive parameters $(\theta_1, \ldots, \theta_n)$, is a collection of probability distributions $\{Pr(\cdot | S): S \subset [n], S \neq \emptyset \}$, where for each non-empty subset $S \subseteq [n]$, $Pr(i| S) = \frac{\theta_i \1(i \in S)}{\sum_{j \in S} \theta_j}$ $\forall 1 \leq i \leq n$. The indices $1, \ldots, n$ are referred to as `items'  or `arms' .
%
%Our work particularly focuses on MNL model so for the ease of notation we denote 
%\pll\, denotes the \pl\, model over $n$ items with parameters $(\ln \theta_1,\ln \theta_2,\ldots,\ln \theta_n)$, where $\theta_i > 0, \, \forall i\in[n]$. {Since MNL model is positive scale invariant (see Eqn. \eqref{eq:prob_PL}), and $\theta_i > 0, \, \forall i\in[n]$, without loss of generality we will also assume $\max_{i \in [n]}\theta_i = 1$.} Further we assume an underling ordering of the arms $1 = \theta_1 \ge \ldots\ \theta_n$ (which is of course unknown to the learner).
\end{defn}

Since the \pl\, probability model is invariant to positive scaling of its parameters $\btheta \equiv (\theta_i)_{i=1}^n$, we make the standard assumption that $\max_{i \in [n]} \theta_i = 1$. 

%\noindent \textbf{\bi.} {\red{remove!}} Given any instance of \pll\, we define the \bi\, $a^* \in [n]$, to be the item with highest MNL parameter, i.e. $a^* := \arg\max_{i \in [n]} \theta_i$. We also define the \emph{`gap'} between arm any suboptimal arm $i$ from the best arm as $\Delta_i = (\theta_{a^*} - \theta_i)$. 

An {online learning algorithm} is assumed to interact with a \pl\, probability model over $n$ items (the `environment') as follows. At each round $t = 1, 2, \ldots$, the algorithm decides to either (a) terminate and return an item $I \in [n]$, or (b) play (test) a subset $S_t \subset [n]$ of $k$ distinct items, upon which it receives stochastic feedback whose distribution is governed by the probability distribution $Pr(\cdot | S_t)$. We specifically consider the following structures for feedback received upon playing a subset $S$:

%\subsection{Feedback models}
%\label{sec:feed_mod}
%We consider the following feedback models to characterize the nature of the information received by the learner
%(from the `environment')
%upon playing a subset $S \subseteq [n]$ of at most $k$ distinct items:

1. \textbf{\wf:} 
The environment returns a single item $J$ drawn independently from the probability distribution $Pr(\cdot | S)$ where $Pr(J = j|S) = \frac{{\theta_j}}{\sum_{k \in S} \theta_k} \, \forall j \in S$.

\iffalse%%%% ----------------------------
\textbf{Full-ranking Feedback:} The environment returns a full ranking $\bsigma \in \bSigma_{S}$, drawn from the probability distribution
$%\begin{align}
\label{eq:prob_rnk1}
Pr(\bsigma = \sigma|S) = \prod_{i = 1}^{|S|}\frac{{\theta_{\sigma^{-1}(i)}}}{\sum_{j = i}^{|S|}\theta_{\sigma^{-1}(j)}}, \; \sigma \in \bSigma_S.
$ %\end{align} 
This is equivalent to picking item $\bsigma^{-1}(1) \in S$ according to winner feedback from $S$, then picking $\bsigma^{-1}(2)$ from $S \setminus \{\bsigma^{-1}(1)\}$ in the same way, and so on, until all elements from $S$ are exhausted---in other words, successively sampling $|S|$ winners from $S$ according to the PL model, without replacement. 
\fi %%%% -----------------------

2. \textbf{\tf\, ($ 1 \leq m \leq k-1$):} Here, the environment returns an ordered list of $m$ items sampled without replacement from the \pl\, probability model on $S$. More formally, the environment returns a partial ranking $\bsigma \in \bSigma_{S}^m$, drawn from the probability distribution
$%\begin{align}
\label{eq:prob_rnk1}
Pr(\bsigma = \sigma|S) = \prod_{i = 1}^{m}\frac{{\theta_{\sigma^{-1}(i)}}}{\sum_{j \in S\sm\sigma^{-1}(1:i-1)}\theta_{j}}, \; \sigma \in \bSigma_S^m.
$ %\end{align} 
This can also be seen as picking an item $\bsigma^{-1}(1) \in S$ according to {\it{\wf}} from $S$, then picking $\bsigma^{-1}(2)$ from $S \setminus \{\bsigma^{-1}(1)\}$, and so on for $m$ times. When $m = 1$, \tf\, is the same as \wf.

%Clearly this model reduces to {\bf \wf} for $m = 1$, and {\bf Full-ranking} of set $S$ when $m=|S|-1$\footnote{Since note that a ranking on $|S|-1$ items is enough to derive ranking of all $|S|$ items of S.}.
\vspace*{-0pt}
\begin{defn}[$(\epsilon,\delta)$-PAC or fixed-confidence algorithm]
\label{def:pac}
An online learning algorithm is said to be $(\epsilon,\delta)$-{PAC} with termination time bound $Q$ if the following holds with probability at least $1-\delta$ when it is run in a \pl\,  model:  (a) it terminates within Q rounds (subset plays), (b) the returned item $I$ is an $\epsilon$-optimal item: $\theta_I \geq \max_{i \in [n]} \theta_i - \epsilon = 1 - \epsilon$. (The probability is over  both the environment and the algorithm.)
\end{defn}
\vspace*{-5pt}
%=======================
%\begin{defn}[$(\epsilon,\delta)$-PAC or fixed confidence algorithm]
%\label{def:pac}
%An online learning algorithm is said to be $(\epsilon,\delta)$-{PAC} if the following holds when it is run on any \pl\, probability model: (a) it terminates within a finite number of rounds (subset plays) almost surely, (b) the returned item $I$ is an $\epsilon$-optimal item with probability at least $1-\delta$, i.e., $Pr(\theta_I \geq \max_{i \in [n]} \theta_i - \epsilon) = Pr(\theta_I \geq 1 - \epsilon) \geq 1-\delta$, where the probability measure is over the randomness of both the environment and the algorithm.
%\end{defn}
%
%\vspace*{-10pt}
By the {\em sample complexity} of an $(\epsilon, \delta)$-{PAC} online learning algorithm $\cA$ for a \pl\, instance $\btheta \equiv (\theta_i)_{i=1}^n$ and playable subset size $k$, we mean the smallest possible termination time bound $Q$ for the algorithm when run on $\btheta$. We use the notation $N_\cA(\epsilon,\delta) \equiv N_\cA(\epsilon,\delta, \btheta, n, k)$ to denote this sample complexity.
We aim to design $(\epsilon,\delta)$-{PAC} algorithms with as small a value of sample complexity as possible, depending on the number of items $n$, the playable subset size $k$, approximation error $\epsilon$, confidence $\delta$, and most importantly, the \pl\, model parameters $(\theta_i)_{i=1}^n$. We also assume item $1$ is optimal: $\theta_1 = \max_{i \in [n]} \theta_i = 1$, and $\Delta_i = \theta_1 - \theta_i$ for any $i \in [n]$.
%(with suitable relabelling):

%\subsection{Objective}
%\label{sec:obj}
%
%We consider the following two problem objectives for the purpose of this work:
%
%\begin{enumerate}
%
%\item \fe: Given an user input $\delta \in (0,1)$, the objective in this case is to return the \bi\, item with probability at least $(1-\delta)$, keeping the sample complexity as low as possible. In other words, the objective here is same as $(0,\delta)$-PAC objective (see Defn. \ref{def:pac}). 
%
%\item \ft: Given a fixed sample complexity $\sc$ (i.e. in this case the learner is restricted to play only $\sc$ rounds of subsetwise play), the problem in this case is to output the \bi\, keeping the error-probability as low as possible. So if $I$ is the item returned by the algorithm and $Pr(I = 1) > 1 - \delta$, the problem is to minimize $\delta$, keeping the sample complexity fixed at $\sc$.
%
%\end{enumerate}

%\vspace{-10pt}
\section{Instance-dependent regret for \fe\, problem}
\label{sec:fe}
%\vspace{-8pt}
%We propose in this section an algorithm for the $(\epsilon,\delta)$-{PAC} problem with \pl\, \wf, and show an instance-dependent bound on its expected termination time. 
%\vspace{-5pt}
\subsection{Prelude: An algorithm for $\epsilon = 0$}
%\vspace{-8pt}
For clarity of exposition, we first describe the design of a $(0,\delta)$-{PAC} or \fe\, learning algorithm, i.e., an algorithm that attempts to learn the unique best item in a \pl\, model when such an item exists\footnote{When there is more than one best item the problem of finding a best item with confidence is not well-defined.}: $1 = \theta_1 > \max_{i \geq 2} \theta_i$. This is then  generalised in the next section to an online learning algorithm that is $(\epsilon,\delta)$-{PAC}.

%as described in Alg. \ref{alg:fewf}. 
{\bf High-level idea behind algorithm design.} The algorithm we propose (\algfewf) is based on using an $(\epsilon,\delta)$-PAC-algorithm known to have (expected) termination time bounded in terms of $\epsilon$ and $\delta$ (a `worst' case termination guarantee not necessarily dependent on instance parameters) as an underlying black-box subroutine. The wrapper algorithm uses the black-box repeatedly, with successively more stringent values of $\epsilon$ and $\delta$, to eliminate suboptimal arms in a phased manner. The termination analysis of the algorithm shows that any suboptimal arm $i \in [n]\sm \{1\}$ survives for about  $O\Big(\frac{1}{\Delta_i^2}\ln \frac{k}{\delta}\Big)$ rounds before being eliminated, which leads to the desired bound of $O\bigg(\sum_{i=2}^{n}\frac{1}{\Delta_i^2}\ln \frac{k}{\delta}\bigg)$ on algorithm's run time performance (with high probability $(1-\delta)$) (Thm. \ref{thm:sc_fewf}).

%\textbf{Main Idea.} The crucial idea behind bringing out this problem dependent complexity term $(\Delta_i)$ lies in using a $(\epsilon,\delta)$-PAC routine with gradually diminishing adaptive $\epsilon$ values, and judge the competency of rest of surviving arms with respect to the $(\epsilon,\delta)$-PAC winner. 
\vspace{-2pt}
{\bf Algorithm description.} The \algfewf\, algorithm we propose (Alg. \ref{alg:fewf}) runs in phases indexed by $s = 1,2,\ldots$, where each phase $s$ is comprised of the following steps.

{\bf Step 1: Finding a good reference item.} It first calls an $(\epsilon_s,\delta_s)$-PAC subroutine (described in Sec. \ref{sec:alg_epsdel} for completeness) with $\epsilon_s = \frac{1}{2^{s+2}}$ and $\delta_s = \frac{\delta}{120s^3}$ to obtain a `reasonably good item' $b_s$---an item that is likely within an $\epsilon_s$ margin of the \bi\, with probability at least $(1-\delta_s)$) and thus a potential \bi. For this we design a new sequential elimination-based algorithm (Alg. \ref{alg:epsdel} in Appendix \ref{app:wiub_div}),  and argue that it finds such a $(\epsilon_s,\delta_s)$-PAC `good item' with {\em instance-dependent} sample complexity (Thm. \ref{lem:up_algep}), which is crucial in the overall analysis. This is an improvement upon the instance-agnostic Algorithm 6 of \cite{SGwin18} whose sample complexity guarantee is not strong enough to be used along with the wrapper. 
%, which otherwise would have blew up the total sample complexity of the main procedure.
%This pivot finding step is crucial proposed

{\bf Step 2: Benchmarking items against the reference item.} After obtaining a candidate good item, the algorithm divides the rest of the current surviving arms into equal-sized groups of size $k-1$, say the groups $\cB_1,\ldots, \cB_{B_s}$, and `stuffs' the good `probe' item $b_s$ into each group, creating $B_s = \ceil{\frac{\cA_{s-1}}{k-1}}$ item groups of size $k$ (the Partition subroutine, Algorithm \ref{alg:div}, Appendix \ref{app:subrout}). It then plays each group $\cB_b,\, b \in [B_s]$ for a total of $t_s =  \frac{2\thetsh}{\epsilon_s^2}\ln \frac{k}{\delta_s} $ rounds, where $\thetsh$ denotes a 'near-accurate' relative score estimate of the \pl\, model for the set $\cB_b$--we use the subroutine \algs\, for estimating $\thetsh$ (see Alg. \ref{alg:algs}, Thm. \ref{thm:algs} in Appendix \ref{app:subrout}). From the winner data obtained in this process, it updates the empirical pairwise win count $w_i$ of each item within any batch $\cB_b$ by applying a rank-breaking idea (see Alg. \ref{alg:updt_win}, Appendix \ref{app:subrout}) .% (the \rb\, subroutine, Algorithm \ref{alg:updt_win}, Appendix \ref{app:alg_fewf}).
 
{\bf Step 3: Discarding items weaker than the reference item.} Finally, from each group $\cB_b$, the algorithm eliminates all arms whose empirical pairwise win frequency  over the probe item $b_s$ is less than $\frac{1}{2} - \epsilon_s$ (i.e. $\forall i \in \cB_b$ for which $\hat{p}_{i b_s}< \frac{1}{2} - \epsilon_s$, $\hat{p}_{i j}$ being the empirical pairwise preference of item $i$ over $j$ obtained via \rb). The next phase then begins, unless there is only one surviving item left, which is output as the candidate \bi. Pointers to the $4$ subroutines used in the overall algorithm are as below.
%Finally, at the end of phase $s$, it collect all the remaining surviving arms in $A_{s+1}$ and recurse to next phase $s+1$ until $\cA_s$ becomes a singleton. 
%
%See Fig. \ref{fig:alg_pacwrap} for graphical demonstration of a sample run of sub-phase $s$.% (for $k = 4$). %Note that as the playable subset size is $k$, we need to specially treat the final few sub-phases when the number of surviving arms (i.e. $|\cA_s|$) falls below $k$ (Lines $18$-$29$ in Algorithm \ref{alg:fewf}, Appendix \ref{app:alg_fewf}).

%The details of the $(\epsilon,\delta)$-PAC algorithm is described in Sec. \ref{sec:alg_epsdel}. But before that we briefly describe the concept of \rb\, update below:

\vspace*{-7pt}
\begin{figure}[H]
\begin{center}
\includegraphics[width=0.83\linewidth]{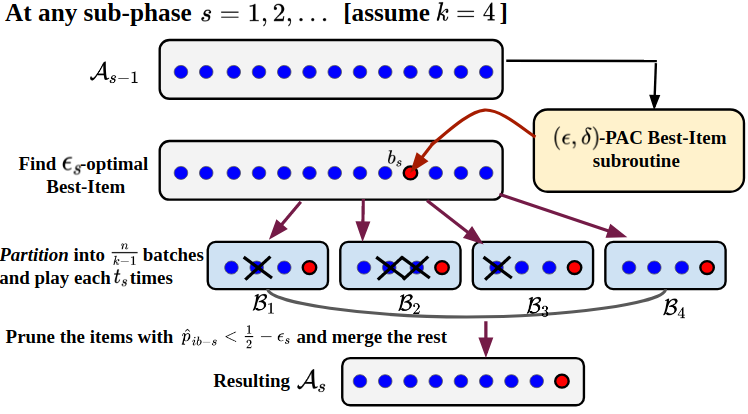}
\vspace*{-5pt}
\caption{\label{fig:alg_pacwrap} A sample run of Alg. \ref{alg:fewf} 
%\red{Must explain more in caption; can the figure be updated to reflect Steps 1-3?}
at any sub-phase $s$ with the set of surviving arms $\cA_{s-1}$: \textbf{Step 1.} The algorithm finds a $(\epsilon_s,\delta_s)$-PAC item $b_s$, where $\epsilon_s = \frac{1}{2^{s+2}}$ and $\delta_s = \frac{\delta}{40s^3}$. \textbf{Step 2.} It partitions $\cA_{s-1}$ into $B_s = \ceil{\frac{\cA_{s-1}}{k-1}}$ groups $\cB_1,\ldots \cB_{B_s}$ of size $k$, each containing $b_s$, and plays each for $t_s =  \frac{2k}{\epsilon_s^2}\ln \frac{k}{\delta_s} $ times. \textbf{Step 3.} Based on the received feedback of $t_s$ plays, the algorithm updates the empirical pairwise probability $\hp_{ij}$ of each item pair $(i,j)$ within a group $\cB$ by applying \rb\,, and discards any item $i \in \cB$ with $\hp_{ib_s} < \frac{1}{2}-\epsilon_s$. The rest of the surviving items are then combined to $\cA_s$, and the algorithm recurses to $s+1$.}
\end{center}
\end{figure}
\vspace*{-15pt}

\noindent
\textbf{(1). \algep\, subroutine:} Given $\epsilon, \delta \in (0,1)$, this finds an $(\epsilon, \delta)$-\bi\, in $O\Big(\frac{\thetk}{\epsilon^2}\ln \frac{k}{\delta}\Big)$ samples, where $\thetk=\hspace{-8pt} \underset{S \subseteq [n] \mid |S|=k}{\max} \sum_{i \in S}\theta_i$ (See Alg. \ref{alg:epsdel}, Thm. \ref{lem:up_algep} in Appendix \ref{app:wiub_div}). 

\noindent
\textbf{(2). \rb\, subroutine:} 
This is a procedure of deriving pairwise comparisons from multiwise (subsetwise) preference information \cite{AzariRB+14,KhetanOh16}. 
%Formally, given any set $S \subseteq [n]$, $m \le |S| < n$, if $\bsigma \in \bSigma_{S}^m$ denotes a possible \tf\, of $S$, \rb\, considers each item in $S$ to be beaten by its preceding items in $\bsigma$ in a pairwise sense and extracts out total $\sum_{i = 1}^{m}(k-i) = \frac{m(2k-m-1)}{2}$ such pairwise comparisons. For instance, given a full ranking of a set of $4$ elements $S = \{a,b,c,d\}$, say $b \succ a \succ c \succ d$, Rank-Breaking generates the set of $6$ pairwise comparisons: $\{(b\succ a), (b\succ c), (b\succ d), (a\succ c), (a\succ d), (c\succ d)\}$. %
%Similarly, given the ranking of only $2$ most preferred items, say $b \succ a$, it yields the $5$ pairwise comparisons $(b \succ a)$, $( b,a\succ c),(b,a\succ d)$ and $(b\succ a)$ etc. See Line $16$ of 
(See Alg. \ref{alg:updt_win}, Appendix \ref{app:subrout}).

\textbf{(3). \algs\, subroutine:} Given a set $S$ and a reference item $b \in [n]$, this estimates the relative \pl\, scores of the set w.r.t. $b$ (see Alg. \ref{alg:algs}, Appendix \ref{app:subrout}).

\noindent
\textbf{(4). \algdiv:} This partitions a given set of items into equally sized batches (See Alg. \ref{alg:div}, Appendix \ref{app:subrout}).

Fig. \ref{fig:alg_pacwrap} graphically depicts a sample run of a sub-phase $s$ (for $k = 4$). Note that as the playable subset size is $k$, we need to specially treat the final few sub-phases when the number of surviving arms (i.e. $|\cA_s|$) falls below $k$ (Lines $22$-$31$ in Algorithm \ref{alg:fewf}).

%%%%%%% ========================
\vspace*{-0pt}
%{\bf Theoretical guarantee.} We prove, in the supplementary material, the following instance-dependent upper bound for the sample complexity of the \algfewf\, algorithm.

\begin{restatable}[\algfewf $(0,\delta)$-{PAC} sample complexity bound with \wf]{thm}{scalgfewf}
	\label{thm:sc_fewf}
	With probability at least $(1-\delta)$, $\cA$ as \algfewf\, (Algorithm \ref{alg:fewf}) returns the \bi\,  with sample complexity $N_\cA(0,\delta) = O\bigg(\frac{\thetk}{k}\sum_{i = 2}^n\max\Big(1,\frac{1}{\Delta_i^2}\Big) \ln\frac{k}{\delta}\Big(\ln \frac{1}{\Delta_i}\Big)\bigg)$, where  $\thetk: = \max_{S \subseteq [n] \mid |S|=k} \sum_{i \in S}\theta_i$.
	%\red{To be corrected.}
\end{restatable}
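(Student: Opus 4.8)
The plan is to split the argument into a correctness part (with probability $\ge 1-\delta$ the algorithm terminates and outputs arm $1$) and a sample-complexity part (bounding $\sum_s B_s t_s$ on the same high-probability event). The backbone of both is a per-phase ``good event'' $\cE_s$ asserting simultaneously: (i) the reference item $b_s$ returned by the \algep\, call is $\epsilon_s$-optimal among the survivors $\cA_{s-1}$, i.e. $\theta_{b_s}\ge 1-\epsilon_s$ (which holds w.p. $\ge 1-\delta_s$ by \Thm{lem:up_algep}); (ii) the score estimate $\thetsh$ of every group is within a constant factor of the true group mass $\thets=\sum_{i\in\cB}\theta_i$ (by \Thm{thm:algs}); and (iii) every rank-broken pairwise estimate $\hp_{ib_s}$ is within $\epsilon_s$ of the true $p_{ib_s}=\theta_i/(\theta_i+\theta_{b_s})$, for all items $i$ in all groups. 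Since $\delta_s=\Theta(\delta/s^3)$ and $\sum_{s\ge1}s^{-3}<\infty$, a union bound over all phases and a constant number of failure modes per phase keeps the total failure probability below $\delta$; this summability is exactly why the geometric confidence schedule is chosen.

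For correctness I would condition on $\bigcap_s \cE_s$. First, arm $1$ is never discarded: since $\theta_{b_s}\le 1$ we have $p_{1b_s}=1/(1+\theta_{b_s})\ge 1/2$, so $\hp_{1b_s}\ge p_{1b_s}-\epsilon_s\ge \tfrac12-\epsilon_s$ and the elimination test $\hp_{ib_s}<\tfrac12-\epsilon_s$ fails for $i=1$; this keeps $1\in\cA_{s-1}$ throughout, which in turn validates $\theta_{b_s}\ge 1-\epsilon_s$ in (i). Next, a suboptimal arm $i$ is eliminated once $\epsilon_s$ is small relative to $\Delta_i$: using $\theta_{b_s}\ge 1-\epsilon_s$ gives $\tfrac12-p_{ib_s}=\frac{\theta_{b_s}-\theta_i}{2(\theta_i+\theta_{b_s})}\ge \frac{\Delta_i-\epsilon_s}{4}$, so $\hp_{ib_s}\le p_{ib_s}+\epsilon_s<\tfrac12-\epsilon_s$ whenever $\Delta_i>c\,\epsilon_s$ for an absolute constant $c$. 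Hence arm $i$ cannot survive past the phase $s_i:=\min\{s:\epsilon_s<\Delta_i/c\}$, and since $\epsilon_s=2^{-(s+2)}$ this yields $s_i=O\!\big(\log\tfrac1{\Delta_i}\big)$ and $\epsilon_{s_i}=\Theta(\Delta_i)$. The algorithm therefore terminates with the unique survivor $1$.

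To bound the sample complexity I would note that on the same event the work in phase $s$ is $B_s t_s$ with $B_s=\ceil{|\cA_{s-1}|/(k-1)}=O(|\cA_{s-1}|/k)$ and $t_s=\frac{2\thetsh}{\epsilon_s^2}\ln\frac{k}{\delta_s}=O\!\big(\frac{\thetk}{\epsilon_s^2}\ln\frac{k}{\delta_s}\big)$, using $\thets\le\thetk$ per group. Summing and exchanging the order of summation over arms and phases (arm $i$ lies in $\cA_{s-1}$ exactly for $s\le s_i$) gives
$$\sum_s B_s t_s \;=\; O\!\left(\frac{\thetk}{k}\sum_{i=2}^n \sum_{s\le s_i}\frac{1}{\epsilon_s^2}\ln\frac{k}{\delta_s}\right),$$
where arm $1$'s contribution is absorbed into that of the smallest-gap suboptimal arm. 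Bounding the inner sum by $s_i$ times its largest term $\frac{1}{\epsilon_{s_i}^2}\ln\frac{k}{\delta_{s_i}}$ and inserting $s_i=O(\log\frac1{\Delta_i})$, $\epsilon_{s_i}=\Theta(\Delta_i)$, and $\ln\frac{k}{\delta_{s_i}}=\ln\frac{k}{\delta}+O(\log\log\frac1{\Delta_i})=O(\ln\frac k\delta)$ gives the per-arm cost $O\!\big(\frac{\thetk}{k}\max(1,\Delta_i^{-2})\ln\frac k\delta\,\ln\frac1{\Delta_i}\big)$, with $\max(1,\cdot)$ covering large-gap arms killed in phase $1$; summing over $i$ yields the claim. (A sharper bound without the extra $\ln\frac1{\Delta_i}$ is in fact available by exploiting the geometric decay of $\epsilon_s^{-2}$ so the inner sum collapses to its last term; I state the weaker form for uniformity with the matching lower bound. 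The Step-1 \algep\, calls add a cost of the same order.)

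The crux, and the step I expect to be hardest, is item (iii): that $t_s=O(\thetsh\,\epsilon_s^{-2}\ln\frac{k}{\delta_s})$ winner-feedback plays of a size-$k$ group suffice to estimate every $p_{ib_s}$ to accuracy $\epsilon_s$. The clean route uses PL's independence-of-irrelevant-alternatives: restricted to the (random) plays whose winner lies in $\{i,b_s\}$, the winner is $i$ with probability exactly $p_{ib_s}$ independently of the rest of the group, so \rb\, yields i.i.d. Bernoulli$(p_{ib_s})$ samples. The count of such useful plays is $\approx t_s(\theta_i+\theta_{b_s})/\thets$, which is $\Omega(\epsilon_s^{-2}\ln\frac{k}{\delta_s})$ precisely because $\thetsh\approx\thets$ is known (hence \algs) and $\theta_{b_s}\ge 1-\epsilon_s$; a Hoeffding bound plus a union bound over the at most $k$ items per group then delivers (iii). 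Controlling this random useful-sample count, and threading the $\delta_s$ union-bound bookkeeping through an a priori unbounded number of phases, is where the real care lies; the remaining geometric-series and elimination accounting is routine.
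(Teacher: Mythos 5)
Your overall skeleton matches the paper's: the \bi\ survives because the reference item $b_s$ is near-optimal so $p_{1b_s}\ge\tfrac12$; a suboptimal arm is eliminated once $\epsilon_s$ drops below a constant fraction of $\Delta_i$; the rank-broken pairwise estimates reduce to Bernoulli concentration via IIA together with a multiplicative Chernoff bound on the number of useful comparisons (this is exactly the paper's \Lem{lem:pl_simulator} plus the $w_{b_s}$ counting step); and the final accounting exchanges the sum over phases and arms. The genuine gap is in your probability bookkeeping for the suboptimal arms. Your good event $\cE_s$ contains, in parts (ii) and (iii), one failure mode per group (the \algs\ estimate) and one per surviving item (the accuracy of $\hp_{ib_s}$) --- that is $\Theta(|\cA_{s-1}|)$ failure modes in phase $s$, not ``a constant number,'' and with the prescribed $t_s$ each is controlled only at level $\Theta(\delta_s/k)$ per item (resp. $\Theta(\delta_s)$ per group). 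A union bound over event (iii) therefore costs $\Theta(|\cA_{s-1}|\delta_s/k)$ in phase $s$, and summing over phases gives $\Theta\big(\tfrac{n}{k}\sum_s\delta_s\big)=\Theta(n\delta/k)$, which is not $O(\delta)$ when $n\gg k$. Consequently your key step --- that on $\bigcap_s\cE_s$ every arm $i$ is \emph{deterministically} eliminated by phase $s_i=O(\log\tfrac{1}{\Delta_i})$ --- is not established with probability $1-\delta$; repairing it by shrinking $\delta_s$ to $\Theta(\delta/(ns^3))$ would degrade the $\ln\tfrac{k}{\delta}$ in the theorem to $\ln\tfrac{nk}{\delta}$.

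The paper avoids the per-item union bound entirely, and this is the one place where its argument genuinely differs from yours. It union-bounds over phases only for the survival of arm $1$ (\Lem{lem:bistays}, a single pair per phase). For the suboptimal arms it shows, by linearity of expectation rather than a union bound, that each gap class $\cA_{r,s}=[n]_r\cap\cA_s$ satisfies $\E[|\cA_{r,s}|]\le 3\delta_s\,\E[|\cA_{r,s-1}|]$ for $s\ge r$, converts this into the high-probability geometric decay $|\cA_{r,s}|\le\tfrac{2}{19}|\cA_{r,s-1}|$ via Markov's inequality, and then union-bounds only over the $O(s)$ gap classes relevant at phase $s$, which costs $\sum_s s\,O(\delta_s)=O(\delta)$ (\Lem{lem:nbigoes}). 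The per-arm play count is then bounded in \Lem{lem:sc_item} by splitting at phase $r$: before $r$ the arm may be played every phase (a geometric sum dominated by $4^r$), and after $r$ the survival factor $(2/19)^{j}$ beats the growth $4^{j}$ of $t_{r+j}$, so the tail is again $O(4^r\thetk\ln\tfrac{rk}{\delta})$. You would need to replace your deterministic-elimination step by this cardinality-decay argument (or an equivalent device) to obtain the stated $\ln\tfrac{k}{\delta}$ dependence; the rest of your proposal, including the IIA reduction and the summation accounting, is sound and essentially identical to the paper's.
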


\vspace{-10pt}
\begin{center}
\begin{algorithm}[H]
   \caption{\textbf{\algfewf ~(for \fe\, problem with \wf) }}
   \label{alg:fewf}
\begin{algorithmic}[1]
   \STATE {\bfseries input:} Set of items: $[n]$, Subset size: $n \geq k > 1$, Confidence term $\delta > 0$
   \STATE {\bfseries init:}  $\cA_0 \leftarrow [n]$, $s \leftarrow 1$ %$\cR_0 \leftarrow \emptyset$  
   %\REPEAT  
   \WHILE {$|\cA_{s-1}| \ge k$}
   \STATE Set $\epsilon_s = \frac{1}{2^{s+2}}$, $\delta_s = \frac{\delta}{120s^3}$, $\cR_s \leftarrow \emptyset$
   \STATE $b_s \leftarrow $ \algep$(\cA_{s-1},k,1,\epsilon_s,\delta_s)$
   \STATE $\cB_1,\ldots \cB_{B_s} \leftarrow $ \algdiv$(\cA_{s-1}\sm \{b_s\},k-1)$ 
   \STATE \textbf{if} $|\cB_{B_s}| < k-1$, \textbf{then} $\cR_s \leftarrow \cB_{B_s}$ and $B_s = B_s-1$   
   \FOR{$b = 1,2 \ldots B_s$}	   
   \STATE $\thetsh \leftarrow \algs(b_s,\cB_b,\delta_s)$. Set $\thetsh \leftarrow \max(2\thetsh + 1, 2)$.
   \STATE Set $\cB_b \leftarrow \cB_b \cup \{b_s\}$ 
   \STATE Play $\cB_b$ for $t_s := \frac{2\thetsh}{\epsilon_s^2}\ln \frac{k}{\delta_s}$ rounds
   \STATE Receive the winner feedback: $\sigma_1, \sigma_2,\ldots \sigma_{t_s} \in \bSigma_{\cB_b}^1$ after each respective $t_s$ rounds.
   \STATE Update pairwise empirical win-count $w_{ij}$ using \rb \, on $\sigma_1\ldots \sigma_{t_s}, ~\forall i,j \in \cB_b$  
   \STATE $\hp_{ij} := \frac{w_{ij}}{w_{ij} + w_{ji}}$ for all $i,j \in \cB_b$
   \STATE \textbf{If} $\exists i \in \cB_b$ s.t. $\hp_{ib_s} > \frac{1}{2} - \epsilon_s$, \textbf{then} $\cA_s \leftarrow \cA_{s} \cup \{i\}$
   \ENDFOR
	\STATE $\cA_s \leftarrow \cA_{s} \cup \cR_s$, $s \leftarrow s + 1$
  \ENDWHILE
  \STATE $\cA \leftarrow \cA_{s-1}$, 
  \STATE $\cB \leftarrow$ $\cA_{s-1} \cup \{(k-|\cA_{s-1}|)$ \text{ elements from } $[n]\sm \cA_{s-1}\}$
  \STATE Pairwise empirical win-count $w_{ij} \leftarrow 0$, $~\forall i,j \in \cA$ 
  \WHILE{$|\cA| > 1$}
  \STATE Set $\epsilon_s = \frac{1}{2^{s+2}}$, and $\delta_s = \frac{\delta}{80s^3}$
   \STATE $b_s \leftarrow $ \algep$(\cB,k,m,\epsilon_s,\delta_s)$
   \STATE $\thetsh \leftarrow \algs(b_s,\cA\sm\{b_s\},\delta_s)$. Set $\thetsh \leftarrow \max(2\thetsh + 1, 2)$.
   \STATE Play $\cB$ for $t_s := \frac{2\thetsh}{m\epsilon_s^2}\ln \frac{k}{\delta_s}$ rounds, and receive the corresponding winner feedback: $\sigma_1, \sigma_2,\ldots \sigma_{t_s} \in \bSigma_{\cB}^m$ per round.
   \STATE Update pairwise empirical win-count $w_{ij}$ using \rb \, on $\sigma_1\ldots \sigma_{t_s}, ~\forall i,j \in \cA$  
   \STATE Update $\hp_{ij} := \frac{w_{ij}}{w_{ij} + w_{ji}}$ for all $i,j \in \cA$
   \STATE \textbf{If} $\exists i \in \cA$ with $\hp_{ib_s} < \frac{1}{2} - \epsilon_s$, \textbf{then} $\cA \leftarrow \cA \sm \{i\}$
   \STATE $s \leftarrow s + 1$
  \ENDWHILE  
   \STATE {\bfseries output:} The item remaining in $\cA_s$ 
\end{algorithmic}
\end{algorithm}
\vspace{-5pt}
\end{center}

%\vspace{-10pt}
\begin{rem}
As $\thetk \le k$, \algfewf\, takes $O(\frac{1}{\Delta_i^2})\ln \frac{1}{\delta}$ rounds to eliminate all suboptimal items with confidence $\delta$. However, the dependence of the upper bound on $\thetk$ implies a $1/k$ factor gain in sample complexity when the underlying instance is `easy'. Indeed, when $\thetk = O(1)$, e.g., in an instance where $\theta_1 \approx 1$ and $\theta_i \approx 0$ $\forall i \neq 1$, then the algorithm just takes $O(\frac{1}{k \Delta_i^2})\ln \frac{1}{\delta}$ time to terminate. On the other hand, if $1 = \theta_1 > \theta_i \approx 1$, then $\thetk = \Omega(k)$ which gives the worst case orderwise complexity. 

%\red{
%Thm. \ref{thm:lb_fewf} shows that it takes $O(\frac{1}{\Delta_i^2})\ln \frac{1}{\delta}$ samples before discarding away the suboptimal arm $i \in [n]\sm\{1\}$. The more interesting aspect of the bound lies in $\thetk$: Note that $\thetk \le k$ always. But for instances where $\thetk = O(k)$ (e.g. if $\theta_i = 0.999\,, \forall i \in [n]\sm\{1\}$ etc.), the above bound is independent of $k$, showing that for such `hard' instances the subsetsize $k$ does not contribute to faster learning rate--while this may be slightly counter intuitive as one may expect smaller sample complexity with larger $k$ as the learner can test a larger size of $k$ items at one round, however note that this also has a counteracting effect as with increasing $k$ now the learner requires larger no. of samples to confidently identify the best item of any $k$-subset before discarding away its $k-1$ suboptimal competitors (specially where suboptimal items have $\theta_i \approx 1$). %, which particularly cancels the effect of $k$ for such `hard-instances'.
%But for easier instances (say if $\theta_i \to 0\,,\forall i \in [n]\sm\{1\}$ etc.) where $\thetk = O(1)$ we see that the algorithm indeed can leverage the structure of the \pl\, instance and learns faster with larger subsetsize $k$.
%}
\end{rem}

%\vspace*{-20pt}

\iffalse %%%%%%%%%%%%%%%%%%%%%%%%%%%%%%%%%%%%%%%%%
\begin{center}
\vspace{-10pt}
\begin{algorithm}[h]
   \caption{\textbf{\subrout}}
   \label{alg:subrout}
\begin{algorithmic}[1]
   \STATE {\bfseries input:} Set of $k$ items: $\cA$, Phase count $s$
   \STATE {\bfseries init:} $\cA' \leftarrow \cA$
   \STATE ~~~~~~~ Pairwise empirical win-count $w_{ij} \leftarrow 0$, $~\forall i,j \in \cA$  
   \WHILE {$|\cA'| > 1$}
   	\STATE Set $\epsilon_s = \frac{1}{2^{s+2}}$, and $\delta_s = \frac{\delta}{40s^3}$
   \STATE $b_s \leftarrow $ \algep$(\cA,k,1,\epsilon_s,\delta_s)$
   \STATE Play $\cA$ for $t_s := \frac{2k}{\epsilon_s^2}\ln \frac{k}{\delta_s}$ rounds, and receive the corresponding winner feedback: $\sigma_1, \sigma_2,\ldots \sigma_{t_s} \in \bSigma_{\cA}^1$ per round.
   \STATE Update pairwise empirical win-count $w_{ij}$ using \rb \, on $\sigma_1\ldots \sigma_{t_s}, ~\forall i,j \in \cA'$  
   \STATE Update $\hp_{ij} := \frac{w_{ij}}{w_{ij} + w_{ji}}$ for all $i,j \in \cA'$
   \STATE \textbf{If} $\exists i \in \cA'$ with $\hp_{ib_s} < \frac{1}{2} - \epsilon$, \textbf{then} $\cA' \leftarrow \cA' \sm \{i\}$
   \STATE $s \leftarrow s + 1$
  \ENDWHILE
   \STATE {\bfseries output:} $\cA'$ 
\end{algorithmic}
\end{algorithm}
\vspace{-10pt}
\end{center}
\fi %%%%%%%%%%%%%%%%%%%%%%%%%%%%%%%%%%%%%%%%%

\vspace{-5pt}
%\iffalse %%%%%%%%% ======================
\textbf{Proof sketch} %\red{Now adding} 
The proof of Thm. \ref{thm:sc_fewf} is based on the following claims:

%\red{
%\begin{enumerate}

\textbf{Claim-1:} At any sub-phase $s = 1,2,\ldots$, the \bi\, $a^*$ is likely to beat the ($\epsilon_s,\delta_s$)-PAC item $b_s$ by sufficiently high margin with probability at least $(1-\frac{\delta}{20})$, and hence is never discarded (Lem. \ref{lem:bistays}).

\textbf{Claim-2:} Let $[n]_r := \{ i \in [n] : \frac{1}{2^r} \le \Delta_i < \frac{1}{2^{r-1}} \}$, and we denote the set of surviving arms in $[n]_r$ at $s^{th}$ sub-phase by $\cA_{r,s}$, i.e. $\cA_{r,s} = [n]_r \cap \cA_s$, for any $s = 1,2, \ldots$. Then with probability at least $(1-\frac{19\delta}{20})$, any such set $\cA_{r,s}$ reduces at a constant rate once $s \ge r$, $r = 1,\ldots, \log_2(\Delta_{\min})$ (Lem. \ref{lem:nbigoes})---this ensures that all suboptimal elements get eventually discarded after they are played sufficiently often.

\textbf{Claim-3:} The number of occurrences of any sub-optimal item $i \in [n]\sm\{1\}$ before it gets discarded away is proportional to $O\Big(\frac{1}{\Delta_i^2}\ln \frac{k}{\delta}\Big)$. Combining this over all arms yields the desired sample complexity. Details of the proof is given in Appendix \ref{app:prf_alg_fewf}.
%\textbf{Claim-4:} Finally Lem. \ref{lem:sc_bbox} shows: At any sub-phase $s$, the additional sample complexity incurred due to invoking the subroutine \algep\, is orderwise same as the sample complexity incurred by \algfewf\, at $s$, due to which \algep\, contributes only a constant factor to the overall sample complexity.
%\end{enumerate}
$\hfill \square$.
%\fi %%%%%%%%% ======================

%\vspace*{-10pt}
\subsection{An algorithm for general $\epsilon > 0$} 
It is straightforward to extend the $(0,\delta)$-PAC guarantee for \algfewf\, to get a more general $(\epsilon,\delta)$-PAC algorithm for any given $\epsilon \in [0,1]$. The idea is to simply execute the algorithm as originally specified until (and if) it reaches a phase $s$ such that $\epsilon_s$ falls below the given tolerance $\epsilon$ (i.e. $\epsilon_s \le \epsilon$), at which point the algorithm can stop right after calling the subroutine \algep\, and output the item $b_s$ returned by it. The full algorithm is given in Appendix \ref{app:alg_fewfep} for the sake of brevity.

%Our main result is the following, which gives a sample complexity guarantee of this modification of \algfewf. %(see Alg. \ref{alg:fewfep}, Appendix \ref{app:alg_fewfep}).

\begin{restatable}[\algfewf\, $(\epsilon,\delta)$-PAC sample complexity bound with \wf]{thm}{scalgfewfep}
\label{thm:sc_fewfep}
For any $\epsilon \in [0,1]$, with probability at least $(1-\delta)$, $\cA$ as \algfewf\, (Algorithm \ref{alg:fewf}) returns the $\epsilon$-\bi\, (see Defn. \ref{def:pac}) with sample complexity $N_\cA(\epsilon,\delta) = O\bigg(\frac{\thetk}{k}\sum_{i = 2}^n\max\Big(1,\frac{1}{\max(\Delta_i,\epsilon)^2}\Big) \ln\frac{k}{\delta}\Big(\ln \frac{1}{\max(\Delta_i,\epsilon)}\Big)\bigg)$. %\red{Check}
\end{restatable}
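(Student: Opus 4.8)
The plan is to reduce the $(\epsilon,\delta)$-PAC claim to the $\epsilon=0$ analysis of \Thm{thm:sc_fewf} by observing that the only change is an early-stopping rule. Recall $\epsilon_s = 2^{-(s+2)}$ decreases geometrically; set $s_\epsilon := \min\{s : \epsilon_s \le \epsilon\}$, so that $\epsilon/2 < \epsilon_{s_\epsilon} \le \epsilon$ and $s_\epsilon = O(\log_2\frac{1}{\epsilon})$. The modified algorithm runs exactly as Algorithm \ref{alg:fewf} but halts at the first of two events: either a single surviving arm remains (the $\epsilon=0$ stopping rule), or phase $s_\epsilon$ is reached, in which case it outputs the reference item $b_{s_\epsilon}$ produced by the \algep call of that phase. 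Since every phase of the modified run coincides with a phase of the original run, all the high-probability structural events established for \Thm{thm:sc_fewf} remain in force under the same confidence budget $\sum_s \delta_s \le \delta$.

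First I would establish correctness. If the algorithm halts with a single arm, that arm is the exact \bi\ with probability $1-\delta$ by \Thm{thm:sc_fewf}, hence trivially $\epsilon$-optimal. Otherwise it halts at phase $s_\epsilon$ and returns $b_{s_\epsilon}$. The key input here is \Lem{lem:bistays} (Claim-1): the global optimum $1$ is never discarded, so $1 \in \cA_{s_\epsilon - 1}$. Because $b_{s_\epsilon}$ is an $(\epsilon_{s_\epsilon},\delta_{s_\epsilon})$-PAC item for the surviving set $\cA_{s_\epsilon-1}$, it satisfies $\theta_{b_{s_\epsilon}} \ge \max_{j \in \cA_{s_\epsilon-1}}\theta_j - \epsilon_{s_\epsilon} = 1 - \epsilon_{s_\epsilon} \ge 1 - \epsilon$ on the good event, i.e.\ $b_{s_\epsilon}$ is $\epsilon$-optimal. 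A union bound over the per-phase failure events (the same budget as before) gives overall success probability $1-\delta$.

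For the sample complexity I would reuse the phase-wise accounting of \Thm{thm:sc_fewf} with the phase index truncated at $s_\epsilon$, splitting the suboptimal arms by the size of their gap. For an arm with $\Delta_i \ge \epsilon$, \Lem{lem:nbigoes} (Claim-2) guarantees it is eliminated by a phase of index $O(\log_2\frac{1}{\Delta_i}) \le s_\epsilon$, before truncation, so its contribution is identical to the $\epsilon=0$ term $\frac{\thetk}{k}\frac{1}{\Delta_i^2}\ln\frac{k}{\delta}\ln\frac{1}{\Delta_i}$; truncation matters here precisely because it also prevents such an arm from surviving into the more expensive phase $s_\epsilon$, which would inflate its cost to $\epsilon^{-2}$. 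For an arm with $\Delta_i < \epsilon$ no elimination guarantee is available or needed: it participates in at most the phases $1,\ldots,s_\epsilon$, and since $s_\epsilon = O(\log_2\frac{1}{\epsilon})$ plays the role that $\log_2\frac{1}{\Delta_i}$ plays in the $\epsilon=0$ analysis, the same per-arm accounting yields a contribution of $O\big(\frac{\thetk}{k}\frac{1}{\epsilon^2}\ln\frac{k}{\delta}\ln\frac{1}{\epsilon}\big)$. Both cases are uniformly captured by replacing $\Delta_i$ with $\max(\Delta_i,\epsilon)$; summing over $i=2,\ldots,n$, and absorbing the $O(\log_2\frac{1}{\epsilon})$ auxiliary \algep and \algs calls (which are of the same order), gives the claimed bound.

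The hard part will be the correctness of early stopping rather than the counting: one must ensure that outputting an item which is only PAC-optimal relative to the current surviving set $\cA_{s_\epsilon-1}$ is genuinely $\epsilon$-optimal over all of $[n]$. This is legitimate only because the global best arm is never eliminated, so that the PAC guarantee of \algep relative to $\cA_{s_\epsilon-1}$ upgrades to a guarantee relative to $[n]$; this is exactly the role of \Lem{lem:bistays}. The only other point requiring care is reconciling the elimination threshold $\tfrac12-\epsilon_s$ and the monotone schedule $(\epsilon_s,\delta_s)$ with the given tolerance $\epsilon$, so that large-gap arms are provably cleared before the truncation phase $s_\epsilon$ while small-gap arms accrue at most $O(\epsilon^{-2})$ cost.
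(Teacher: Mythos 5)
Your proposal matches the paper's own proof essentially step for step: the same truncation phase $s_0=\min\{s:\epsilon_s\le\epsilon\}$, correctness via \Lem{lem:bistays} plus the \algep\ guarantee on the surviving set, and the same two-case accounting ($\Delta_i>\epsilon$ handled as in \Thm{thm:sc_fewf}, $\Delta_i\le\epsilon$ charged for at most $s_0=O(\log\frac{1}{\epsilon})$ phases), merged via $\max(\Delta_i,\epsilon)$. The only detail you gloss over is that \algep's guarantee is stated as $p_{b_s 1}>\frac12-\epsilon$, which only yields $\theta_{b_s}>\theta_1-4\epsilon$, so the paper's modified algorithm invokes \algep\ with tolerance $\epsilon_s/4$ to get genuine $\epsilon_s$-optimality in the parameter gap; this is a constant-factor adjustment that does not affect the stated bound.
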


\vspace{-8pt}
%\red{check these remarks} \\
{\bf Discussion.} To our knowledge, this is the first $(\epsilon,\delta)$-{PAC} learning algorithm for the \pl\, model with general multi-wise comparisons with an {\em item-wise} instance-dependent sample complexity bound.
For $\epsilon > 0$, this is order-wise stronger than the best known worst-case (instance-independent) upper bound of $O\left(\frac{n}{\epsilon^2} \log\left( \frac{k}{\delta} \right) \right)$ \cite{SGwin18}, since $\max(\Delta_i,\epsilon)^2 \geq \epsilon^2$. Thus \algfewf\, is provably able to adapt to the hardness of the \pl\, instance $\btheta$ to stop early in case the instance is `well-separated'. %, which is not necessarily true for the instance-independent version. 
%The work of \cite{ChenSoda+18}, specialised to the top-$1$ ranking problem, details a $\left(0,\frac{1}{8}\right)$-PAC algorithm with an order-wise similar sample complexity as in Theorem \ref{thm:sc_fewf} (or Theorem  \ref{thm:sc_fewfep} for $\epsilon = 0$). It is, however, quite unclear (a) how to extend their algorithm to handle a given tolerance $\epsilon > 0$ and (b) obtain a more general upper bound as we do. 
Note that for dueling bandits ($k = 2$), our result strictly improves order-wise upon the $\tilde{O}\left(n \cdot \max_{i \geq 2} \frac{1}{\max(\Delta_i, \epsilon)^2} \right)$ sample complexity\footnote{Notation $\tilde{O}(\cdot)$ hides polylogarithmic factors in $\epsilon, \delta, \Delta_i, n, k$.} of the best known $(\epsilon,\delta)$-PAC algorithm (PLPAC) \cite{Busa_pl}---which can be worse by a factor of $n$ for many instances. For example, consider an instance having one `strong' suboptimal item, say $j \in [n]\sm \{1\}$ with $\Delta_j \approx 0$, but $\Omega(n)$ many extremely `weak' items with $\Delta_i \approx 1$; our sample complexity bound is just $\tilde O\Big( \frac{1}{2\Delta_j^2}\ln \frac{1}{\delta} + \frac{n}{2}\ln \frac{1}{\delta}\Big)$, whereas that of PLPAC is  $O\Big( \frac{n}{\Delta_j^2}\ln \frac{n}{\Delta_j \delta} \Big)$.

%\vspace*{-10pt}
\subsection{PAC learning in the \pl\, model with \tf}
\label{sec:alg_fetf}
%\vspace{-5pt}
%This section describes a modification to Algorithm \ref{alg:fewf} that allows it to work with general \tf. Interestingly, the richer feedback that the algorithm can exploit per round allows for a near-optimal sample complexity guarantee of $O\Big(\frac{1}{m}\sum_{i=2}^{n}\frac{1}{\Delta_i^2}\ln \big( \frac{k}{\delta} \ln \frac{1}{\Delta_i}\big) \Big)$, which is a factor $\frac{1}{m}$ better than with just winner feedback (Thm. \ref{thm:sc_fetf}).  

\textbf{Main Idea.}
Algorithmically, the key modification to make is in the \rb\, subroutine of \algfewf, which now uses a rank-ordered list of $m$ feedback items to output all possible rank-broken comparison pairs. The essence of the $\frac{1}{m}$ factor improvement in the sample complexity over \wf\, lies in the fact that this naturally gives rise to $O(m)$ times additional number of pairwise preferences in comparison to \wf. Hence, it turns out to be sufficient to sample any batch $\cB_b, \forall b \in [B_s]$ for only  $O\big( \frac{1}{m} \big)$ times compared to the earlier case, which finally leads to the improved sample complexity of \algfewf\, for \tf.
The full description of Alg. \ref{alg:fetf} is given in Appendix \ref{app:alg_fetf} for the sake of brevity. %(Recall from Thm. \ref{thm:sc_fewf}), we define $\thetk: = \arg \max_{S \subseteq [n] \mid |S|=k} \sum_{i \in S}\theta_i$.

\begin{restatable}[\algfewf: Sample Complexity for $(0,\delta)$-PAC Guarantee for \tf]{thm}{scalgfetf}
\label{thm:sc_fetf}
With probability at least $(1-\delta)$, \algfewf\, (Algorithm \ref{alg:fewf}) returns the \bi\,  with sample complexity $N_\cA(0,\delta) = O\bigg(\frac{\thetk}{k}\sum_{i = 2}^n\max\Big( 1, \frac{1}{m\Delta_i^2}\Big) \ln\frac{k}{\delta}\Big(\ln \frac{1}{\Delta_i}\Big)\bigg)$. %where $\thetk: = \arg \max_{S \subseteq [n] \mid |S|=k} \sum_{i \in S}\theta_i$.
%\red{To be corrected.}
\end{restatable}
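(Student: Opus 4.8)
The plan is to follow the architecture of the proof of \Thm{thm:sc_fewf} essentially verbatim, since \algfewf\ under \tf\ is structurally identical to the \wf\ version: it runs in phases, finds a reference item $b_s$ via the \algep\ subroutine, benchmarks the surviving arms against $b_s$ inside size-$k$ batches, and discards any arm whose empirical pairwise score $\hat p_{i b_s}$ falls below $\tfrac12-\epsilon_s$. In particular, I would reprove the same three claims—that the \bi\ survives every phase (the analogue of \Lem{lem:bistays}), that each band $[n]_r$ of suboptimal arms shrinks at a constant rate once $s\ge r$ (the analogue of \Lem{lem:nbigoes}), and that a suboptimal arm $i$ is eliminated after being played $O\big(\tfrac{1}{m\Delta_i^2}\ln\tfrac{k}{\delta}\big)$ rounds. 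The entire saving is localized to a single quantitative change: the number of plays per batch drops from $t_s=\tfrac{2\thetsh}{\epsilon_s^2}\ln\tfrac{k}{\delta_s}$ to $t_s=\tfrac{2\thetsh}{m\epsilon_s^2}\ln\tfrac{k}{\delta_s}$, and the burden of the argument is to show that this smaller $t_s$ still yields $\epsilon_s$-accurate pairwise estimates with confidence $1-\delta_s$.

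The key step is the analysis of the \rb\ subroutine under \tf, where two facts must be established. First, rank-breaking a top-$m$ PL ranking yields unbiased pairwise outcomes: conditioned on a pair $(i,b_s)$ being \emph{comparable} in a given play (at least one of the two appears among the $m$ revealed positions), the probability that $i$ is ranked above $b_s$ equals the true PL pairwise probability $p_{i b_s}=\tfrac{\theta_i}{\theta_i+\theta_{b_s}}$; this is a direct consequence of the independence-of-irrelevant-alternatives structure of the \pl\ model (\Def{def:mnl_thet}) applied successively down the revealed prefix. Second, and this is where the factor $m$ enters, the probability that $(i,b_s)$ is comparable in a play is $\Theta(m)$ times larger than under \wf: winner feedback makes $(i,b_s)$ comparable only when one of them wins the whole batch, whereas top-$m$ feedback makes them comparable whenever one of them lands in any of the first $m$ positions. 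Consequently the effective comparison count for $(i,b_s)$ after $t_s$ plays is $\Theta(m)$ times the \wf\ count, so the choice $t_s=\tfrac{2\thetsh}{m\epsilon_s^2}\ln\tfrac{k}{\delta_s}$ guarantees the same effective count—namely $\Omega\big(\tfrac{1}{\epsilon_s^2}\ln\tfrac{k}{\delta_s}\big)$—as the \wf\ choice $\tfrac{2\thetsh}{\epsilon_s^2}\ln\tfrac{k}{\delta_s}$, using \Thm{thm:algs} to control the relative-score over-estimate $\thetsh$.

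With these facts in hand, the concentration step proceeds as in the \wf\ case. The observation that tames the within-ranking dependence among rank-broken pairs is that for a \emph{fixed} pair $(i,b_s)$ each play contributes at most one comparison, and these per-play contributions are i.i.d.\ across the $t_s$ rounds. I would therefore first apply a multiplicative Chernoff bound to show the random number of comparisons of $(i,b_s)$ is, with high probability, at least a constant fraction of its mean $\Omega\big(\tfrac{1}{\epsilon_s^2}\ln\tfrac{k}{\delta_s}\big)$, and then a Hoeffding bound on the conditionally i.i.d.\ comparison outcomes to conclude $|\hat p_{i b_s}-p_{i b_s}|\le\epsilon_s$ with probability $1-\delta_s$, simultaneously for all $k$ items in the batch by a union bound. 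Claim-1 and Claim-2 then hold unchanged, since they depend only on the correctness of this per-phase discard test; and Claim-3's per-arm play count becomes $O\big(\tfrac{1}{m\Delta_i^2}\ln\tfrac{k}{\delta}\big)$ because each surviving phase now carries the extra $1/m$ factor. Summing over $i$, weighting by the per-play cost $\thetk/k$, and collecting the $\ln(1/\Delta_i)$ factor from the number of surviving phases yields the claimed bound.

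I expect the main obstacle to be the first half of the concentration step, namely a clean high-probability lower bound on the random number of effective comparisons per pair, since that count is a sum of dependent position-indicators within each ranking. The cleanest route is to bound it below by the indicator that one of $i,b_s$ is the \emph{top} item (recovering exactly the \wf\ guarantee) plus the genuine contribution from positions $2,\dots,m$, and to verify that the $\Theta(m)$ scaling of the comparability probability survives across all relative-score regimes of the batch—in particular that it does not degrade when $m\,(\theta_i+\theta_{b_s})/\sum_{j\in\cB_b}\theta_j$ approaches the saturation regime near $1$, which is precisely the `easy' regime where the stated bound already meets the $\max(1,\cdot)$ cap. $\hfill\square$
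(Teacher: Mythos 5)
Your proposal matches the paper's proof essentially step for step: the paper likewise re-proves the three claims of \Thm{thm:sc_fewf} verbatim (as Lemmas \ref{lem:bistaystf}, \ref{lem:nbigoestf}, \ref{lem:sc_itemtf}), with the only quantitative change being that $Pr(i \in \sigma_\tau) \ge m\theta_i/\Theta_{\cB}$ gives an $m$-fold boost to the expected comparison count, so the $m$-fold smaller $t_s$ still delivers $n_{ib_s} = \Omega\big(\tfrac{1}{\epsilon_s^2}\ln\tfrac{k}{\delta_s}\big)$ via a multiplicative Chernoff bound, after which \Lem{lem:pl_simulator} handles the conditional concentration of $\hat p_{ib_s}$ exactly as in the \wf\ case. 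The dependence issue you flag is resolved the same way the paper resolves it—per-play appearance indicators are i.i.d.\ across rounds—so there is no gap.
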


\begin{rem}
Following the similar procedure as argued in Sec. \ref{sec:alg_epsdel}, one can easily derive an $(\epsilon,\delta)$-PAC version of \algfetf\, as well, and a similar  guarantee as that of Thm. \ref{thm:sc_fewfep} with a reduction factor $1/m$. We omit the details in the interest of space.
\end{rem}

\vspace*{-10pt}
\subsection{$(\epsilon,\delta)$-PAC subroutine (used in the main algorithm, \algfewf, i.e. in Alg. \ref{alg:fewf}, \ref{alg:epsdel} or \ref{alg:fetf})}
\label{sec:alg_epsdel}

We briefly describe here the core $(\epsilon,\delta)$-PAC subroutine used in algorithms \ref{alg:fewf} and \ref{alg:fetf} to find an $\epsilon$ \bi\, with high probability $(1-\delta)$ in an instance-dependent way (full details are available in Appendix \ref{app:wiub_div}): 
The algorithm \algep\, first divides the set of $n$ items into batches of size $k$, then plays each group sufficiently long enough until a single item of that group stands out as the empirical winner in terms of its empirical pairwise advantage over the rest (again estimated though \rb). It then just retains this empirical winner  for every group and recurses on the set of surviving winners until only a single item is left, which is declared as the $(\epsilon,\delta)$-PAC item. 

\begin{restatable}[\algep:  Correctness and Sample Complexity with \tf]{thm}{ubalgep}
\label{lem:up_algep}
For any $\epsilon \in \big(0, \frac{1}{8} \big]$ and $\delta \in (0,1)$, with probability at least $(1-\delta)$, \algep\, (Algorithm \ref{alg:epsdel}) returns an item $b_s \in [n]$ satisfying $p_{b_s1} > \frac{1}{2}-\epsilon$ with sample complexity $O\left(\frac{n \thetk}{k}\max\big(1,\frac{1}{m\epsilon^2}\big) \log \frac{k}{\delta}\right)$, where $\thetk: = \max_{S \subseteq [n], |S|=k} \sum_{i \in S}\theta_i$.
\end{restatable}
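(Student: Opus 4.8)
The plan is to establish separately the correctness guarantee $p_{b_s 1} > \frac12 - \epsilon$ and the sample-complexity bound, both resting on a single per-group estimation lemma together with the strong stochastic transitivity (SST) of the \pl\, model. Recall that \algep\, proceeds in levels: it partitions the current surviving set into groups of size $k$ (via \algdiv), plays each group for a calibrated number of rounds, applies \rb\, to the observed \tf\, to form empirical pairwise preferences $\hp_{ij}$, retains one empirical (Condorcet/score) winner per group, and recurses. Thus the surviving set shrinks by a factor of about $k$ at every level and the recursion has depth $O(\log_k n)$.

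First I would prove the per-group lemma: if a group $S$ with $|S|=k$ is played for $t = \Theta\big(\thetsh \max(1,\tfrac{1}{m\epsilon_\ell^2})\ln\tfrac{k}{\delta_\ell}\big)$ rounds, then with probability at least $1-\delta_\ell$ every pairwise estimate $\hp_{ij}$ needed to rank the group's strongest item lies within $\epsilon_\ell$ of the true $p_{ij}$. Two facts drive this. First, \rb\, is consistent for \pl: conditioned on the winner of a played subset lying in $\{i,j\}$, it equals $i$ with probability exactly $p_{ij}=\theta_i/(\theta_i+\theta_j)$, and each of the $m$ successive picks in a \tf\, ranking is a fresh such draw from the residual set, supplying $\Theta(m)$ times as many pairwise samples per play, which is the origin of the $1/m$ factor. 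Second, the effective number of decisive comparisons for a pair scales like $t\,\theta_i/\sum_{j\in S}\theta_j$, so taking $t$ proportional to the relative-score estimate $\thetsh$ (produced by \algs, and bounded using $\sum_{i\in S}\theta_i\le\thetk$) guarantees enough decisive rounds; a Chernoff/Hoeffding bound with a union bound over the $\binom{k}{2}$ pairs then yields the $\pm\epsilon_\ell$ accuracy at cost $\ln\tfrac{k}{\delta_\ell}$. The $\max(1,\cdot)$ simply covers the regime $m\epsilon_\ell^2\ge1$ where a constant number of plays already suffices.

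For correctness I would run a chain argument with a tolerance schedule $\{\epsilon_\ell\}$ satisfying $\sum_\ell\epsilon_\ell\le\epsilon$ (e.g. $\epsilon_\ell\propto\epsilon/\ell^2$) and a confidence schedule $\{\delta_\ell\}$ with $\sum_\ell\delta_\ell\le\delta$. Tracking the global best item $1$: at each level the empirical winner $c$ of the group containing the current representative satisfies $\hp_{c,i}\ge\tfrac12$ for every other member $i$, hence by the per-group lemma $p_{c,\text{rep}}\ge\tfrac12-\epsilon_\ell$, i.e. $\theta_c\ge\theta_{\text{rep}}\tfrac{1-2\epsilon_\ell}{1+2\epsilon_\ell}$. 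Multiplying these inequalities along the $O(\log_k n)$ levels of the chain and using $\prod_\ell\tfrac{1-2\epsilon_\ell}{1+2\epsilon_\ell}\ge e^{-c\sum_\ell\epsilon_\ell}\ge e^{-c\epsilon}$ (here $\epsilon\le\tfrac18$ keeps the constants tame) shows the final survivor $b_s$ has $\theta_{b_s}$ within a controlled multiplicative factor of $\theta_1$, which, after fixing the schedule constants, converts to $p_{b_s1}>\tfrac12-\epsilon$. Crucially, correctness only needs the comparisons inside the representative's own group to concentrate at each level, so the union bound runs over the $O(\log_k n)$ levels rather than over all groups, keeping the confidence cost at $O(\ln\tfrac{k}{\delta})$ up to lower-order terms.

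Finally, for the sample complexity I note that the recursion shrinks the surviving set by a factor $\approx k$ deterministically (no high-probability event is needed for termination), so level $\ell$ contains $O(n/k^\ell)$ groups, each played for $O\big(\thetk\max(1,\tfrac{1}{m\epsilon_\ell^2})\ln\tfrac{k}{\delta_\ell}\big)$ rounds. Summing $\sum_\ell\tfrac{n}{k^\ell}\thetk\max(1,\tfrac{1}{m\epsilon_\ell^2})\ln\tfrac{k}{\delta_\ell}$, the geometric decay in the group count makes the first level dominate — the schedule inflates each summand by only $\mathrm{poly}(\ell)$, which is absorbed by $k^{-\ell}$ — giving the claimed $O\big(\tfrac{n\thetk}{k}\max(1,\tfrac{1}{m\epsilon^2})\ln\tfrac{k}{\delta}\big)$. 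The step I expect to be the main obstacle is controlling error non-accumulation across the $\log_k n$ levels while simultaneously keeping the $\epsilon$-dependence at $\max(1,1/(m\epsilon^2))$ with no extra logarithmic factors and holding the confidence term at $\ln\tfrac{k}{\delta}$ rather than $\ln\tfrac{n}{\delta}$; this is precisely what forces the combination of the SST chain argument (so that per-level errors telescope multiplicatively and a summable tolerance schedule suffices) with the observation that only the representative's chain of groups, and not all $\Theta(n/k)$ groups, needs to concentrate.
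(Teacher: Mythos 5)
Your proposal is correct and follows essentially the same route as the paper's proof: recursive $k$-wise partitioning, a per-group concentration lemma driven by the $\Theta(m)$ pairwise comparisons that \rb\, extracts per play and by calibrating $t$ to the \algs\, estimate of the group's total score, a summable geometric tolerance/confidence schedule chained through the PL transitivity property (your multiplicative $\theta_c \ge \theta_{\mathrm{rep}}\tfrac{1-2\epsilon_\ell}{1+2\epsilon_\ell}$ step is exactly the paper's Lemma on $\tp$-transitivity), and a geometric sum dominated by the first level. The only small refinement in the paper is that the $\max(1,\cdot)$ floor is attributed to the $O\big(\tfrac{n\thetk}{k}\ln\tfrac{k}{\delta}\big)$ overhead of the \algs\, calls themselves (which is $\epsilon$-independent and dominates when $m\epsilon^2=\Omega(1)$), rather than merely to ``a constant number of plays sufficing.''
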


\begin{rem}
The best item-finding subroutine we develop, along with the corresponding analysis, is an improvement over Alg. 6 of \cite{SGwin18} which had $k$ instead of $\thetk_{[k]} \leq k$ here. The improvement is especially pronounced for instances where $\thetk = O(1)$ (e.g. where $\theta_{a^*} \to 1$ and for all $i \in [n]\sm \{a^*\}$, $\theta_{i} \to 0$ etc.). %
% --- for such instances above bound precisely shows an $\frac{1}{k}$ improvement over sample complexity achieved by \cite{SGwin18}, which they claimed to be independent of $k$ altogether. 
 Note that this is an artefact of the \emph{adaptive nature} of our proposed algorithm (Alg. \ref{alg:epsdel}) which samples each batch adaptively for just sufficiently enough times before discarding out the weakest $(k-1)$ items (see Line $11$), whereas \cite{SGwin18} sample each batch for a fixed $O\Big( \frac{k}{\epsilon^2}\ln \frac{k}{\delta} \Big)$ times irrespective of the empirical outcomes, leading to a \emph{worse, instance independent sample complexity}.
\end{rem}

%%%%%%%%%%%%%%%%%% Regret %%%%%%%%%%%%%%%%%%%%

\iffalse%%%%%%%%%%%%%

1. Requires knowledge of T - Not really!
2. Not optimal in terms of logarithmic factors - Only problem!
3. Non optimal in terms of $\Delta_i$ multiplicities - No!

\fi%%%%%%%%%%%%%%%%%

\iffalse %====================
\begin{figure}[H]
%\begin{center}
\scalebox{0.5}{
\includegraphics[width=1\linewidth]{../alg_pac_wrapper1.png}}
%\vspace*{-15pt}
\caption{\label{fig:alg_pacwrap} A sample run of Alg. \ref{alg:fewf} at any sub-phase $s$ with the set of surviving arms $\cA_{s-1}$: \textbf{Step 1.} The algorithm finds a $(\epsilon_s,\delta_s)$-PAC item $b_s$, where $\epsilon_s = \frac{1}{2^{s+2}}$ and $\delta_s = \frac{\delta}{40s^3}$. \textbf{Step 2.} \emph{Partition} $\cA_{s-1}$ into $B_s = \ceil{\frac{\cA_{s-1}}{k-1}}$ groups $\cB_1,\ldots \cB_{B_s}$ of size $k$, each containing $b_s$, and plays each for $t_s =  \frac{2k}{\epsilon_s^2}\ln \frac{k}{\delta_s} $ times. \textbf{Step 3.} Based on the received feedback of $t_s$ plays, the algorithm updates the empirical pairwise probability $\hp_{ij}$ of each item pair $(i,j)$ within a group $\cB$ by applying \rb\,, and discards any item $i \in \cB$ with $\hp_{ib_s} < \frac{1}{2}-\epsilon_s$. The rest of the surviving items are then combined to $\cA_s$, and the algorithm recurses to the next sub-phase $s+1$.}
%\end{center}
\end{figure}
\fi %====================

%\vspace*{-14pt}
\section{Instance-dependent lower bounds on sample complexity}
\label{sec:lb}
%\vspace*{-10pt}
We here derive information-theoretic lower bounds on sample complexity for \fe\, problem. We first show a lower bound of $\Omega\Big(\sum_{i=2}^{n}\frac{\theta_i\theta_1}{\Delta_i^2}\ln \big( \frac{1}{\delta} + \frac{n}{k}\ln \frac{1}{\delta} \big) \Big)$ with \wf\, implying that the sample complexity of \algfewf\, (Thm. \ref{thm:sc_fewf}) is tight upto logarithmic factors. We then analyze the lower bound  for \tf\, and show an $\frac{1}{m}$-factor improvement in the sample complexity lower bound, establishing the optimality (up to logarithmic factors) of our \algfewf\, algorithm for \tf\, (see Alg. \ref{alg:fetf} and Thm. \ref{thm:sc_fetf}).

%compared to the earlier case. Subsequently, we also suggest a modified version of Alg. \ref{alg:fewf} (see Alg. \ref{alg:fetf}) which is flexible to \tf\, and Thm. \ref{thm:sc_fetf} shows that Alg. \ref{alg:fetf} performs with near-optimal sample complexity guarantee $O\Big(\frac{1}{m}\sum_{i=2}^{n}\frac{1}{\Delta_i^2}\ln \big( \frac{k}{\delta} \ln \frac{1}{\Delta_i}\big) \Big)$. %\red{check}. 
%\vspace*{-10pt}
\subsection{Lower bound for \wf}
\label{sec:lb_fewf}
%\vspace{-8pt}
\begin{restatable}[Sample complexity lower bound: $(0,\delta)$-{PAC} or \fe\, with \wf]{thm}{lbfewf}
\label{thm:lb_fewf}
Given $\delta \in [0,1]$, suppose $\cA$ is an online learning algorithm for \wf\, which, when run on any \pl\, instance, terminates in finite time almost surely, returning an item $I$ satisfying $Pr(\theta_I = \max_{i} \theta_i) > 1-\delta$. Then, on any \pl\, instance $\theta_1 > \max_{i \geq 2} \theta_i$, the expected number of rounds it takes to terminate is $\Omega\bigg(\max\Big(\sum_{i=2}^{n}\frac{\theta_i\theta_1}{\Delta_i^2}\ln \frac{1}{\delta} , \frac{n}{k}\ln \frac{1}{\delta} \Big)\bigg)$. 

%\begin{rem}
%\red{Need any remark stating the trade-off between the two terms for hard vs. easy instances?? + How to state the second tern in the proof.}
%\end{rem}

%\red{Incorporate $O(\frac{n}{k})\log T$ term as well??}
%Or in other words:
%\[
%\E_{\btheta}\Big[N_\cA(0,\delta)\Big] \ge \sum_{i=2}^{n}\frac{\theta_i\theta_1}{\Delta_i^2}\ln \big( \frac{1}{\delta} \big), 
%\]
\end{restatable}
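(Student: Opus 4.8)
The plan is to prove the bound by the change-of-measure (transportation) method for stopping-time algorithms, adapted to the combinatorial subset-wise PL feedback. View each playable size-$k$ subset $S$ as an ``arm'' whose observation law is the winner distribution $P_S^{\btheta}=\big(\theta_j/\!\sum_{l\in S}\theta_l\big)_{j\in S}$, and let $N_S(\tau)$ be the number of times $\cA$ plays $S$ before its (a.s.\ finite) stopping time $\tau$. The engine is the transportation inequality for stopping times: for any alternative instance $\btheta'$ and any event $\cE$ measurable at time $\tau$,
\[
\sum_{S:\,|S|=k}\E_{\btheta}[N_S(\tau)]\,KL\!\big(P_S^{\btheta}\,\|\,P_S^{\btheta'}\big)\ \ge\ kl\!\big(Pr_{\btheta}(\cE),\,Pr_{\btheta'}(\cE)\big),
\]
with $kl(\cdot,\cdot)$ the binary relative entropy. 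Taking $\cE=\{I=1\}$ and invoking the $(0,\delta)$-correctness guarantee gives $Pr_{\btheta}(\cE)>1-\delta$ on the base instance and $Pr_{\btheta'}(\cE)<\delta$ on any alternative whose unique best item is not $1$; hence the right-hand side is at least $kl(1-\delta,\delta)\ge\ln\frac{1}{2.4\delta}$.

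For the first term I would construct, for each suboptimal $i\ge 2$, an alternative $\btheta^{(i)}$ that agrees with $\btheta$ everywhere except that $\theta_i$ is raised infinitesimally above $\theta_1$, making $i$ the unique best item; the perturbation magnitude is $\eta\downarrow\Delta_i$. Since only coordinate $i$ changes, $KL(P_S^{\btheta}\|P_S^{\btheta^{(i)}})=0$ whenever $i\notin S$, and for $i\in S$ a direct computation of the categorical winner law gives, writing $W_S:=\sum_{l\in S}\theta_l$ and $p_i^S:=\theta_i/W_S$,
\[
KL\!\big(P_S^{\btheta}\,\|\,P_S^{\btheta^{(i)}}\big)=\ln\!\Big(1+\tfrac{\eta}{W_S}\Big)-p_i^S\,\ln\!\Big(1+\tfrac{\eta}{\theta_i}\Big)\ \le\ C\,\frac{\Delta_i^2}{\theta_i\theta_1}\,p_i^S,
\]
the inequality following from $\ln(1+x)\le x$ together with $\ln(1+x)\ge x-x^2/2$ and the PL-specific estimate of the two logarithmic terms. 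The decisive step is then a \emph{wins-counting} argument that avoids the naive factor-$k$ loss due to overlapping subsets: substituting the bound into the transportation inequality yields $\tfrac{C\Delta_i^2}{\theta_i\theta_1}\sum_{S\ni i}\E_{\btheta}[N_S(\tau)]\,p_i^S\ge\ln\tfrac{1}{2.4\delta}$, and $\sum_{S\ni i}\E_{\btheta}[N_S(\tau)]\,p_i^S=\E_{\btheta}[O_i]$ is exactly the expected number of rounds in which item $i$ is observed as the winner. Because at most one item wins per round, the counts $O_i$ are mutually exclusive across $i$, so $\sum_{i\ge 2}O_i\le\tau$; summing the per-item bounds gives $\E_{\btheta}[\tau]\ge\sum_{i\ge2}\E_{\btheta}[O_i]=\Omega\big(\sum_{i=2}^n\frac{\theta_i\theta_1}{\Delta_i^2}\ln\frac{1}{\delta}\big)$, which is the first term of the maximum.

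For the second (coverage) term I would reuse the same alternatives but feed the transportation inequality the crude, instance-uniform estimate $KL(P_S^{\btheta}\|P_S^{\btheta^{(i)}})\le c$ for an absolute constant $c$ (promoting a single coordinate to be the maximum perturbs the winner law by bounded relative entropy), which forces $\sum_{S\ni i}\E_{\btheta}[N_S(\tau)]\ge\frac{1}{c}\ln\frac{1}{2.4\delta}$ for every $i$. Here the subsets genuinely overlap, so summing over the $n-1$ suboptimal items and using that each round plays exactly $k$ items, $\sum_{i\ge2}\sum_{S\ni i}\E_{\btheta}[N_S(\tau)]\le k\,\E_{\btheta}[\tau]$, yields $\E_{\btheta}[\tau]=\Omega\big(\frac{n}{k}\ln\frac{1}{\delta}\big)$; taking the larger of the two bounds finishes the proof. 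I expect the main obstacle to be the sharp PL-specific KL estimate and, above all, the clean passage from plays to wins: one must verify that the per-subset divergence really factorizes through the win-probability $p_i^S$, since only then do the mutually exclusive win counts sum to $\tau$ and produce the full $\sum_i$ in the first term rather than the weaker $\frac1k\sum_i$ that the generic overlap bound would give.
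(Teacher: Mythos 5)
Your treatment of the main term is correct and, while it starts from the same ingredients as the paper (the Kaufmann et al.\ transportation inequality, the same single-coordinate alternatives $\btheta^{(i)}$ with $\theta_i$ raised to $\theta_1+\epsilon$, and essentially the same per-subset KL bound $KL(P_S^{\btheta}\|P_S^{\btheta^{(i)}})\lesssim \frac{\Delta_i^2}{\theta_1\theta_S}=\frac{\Delta_i^2}{\theta_i\theta_1}p_i^S$), it replaces the paper's final step by a genuinely cleaner one. The paper assembles the $n-1$ constraints into a primal LP over all $\binom{n}{k}$ variables $\E_{\btheta}[N_S]$ and certifies the bound by exhibiting a feasible point $x_i'=\theta_i\theta_1/\Delta_i^2$ of the dual; the feasibility check there reduces to $\sum_{i\in S\setminus\{1\}}p_i^S\le 1$, which is exactly your observation that at most one item wins per round. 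Your wins-counting argument --- identify $\sum_{S\ni i}\E[N_S]p_i^S$ with $\E[O_i]$ and use $\sum_{i\ge2}O_i\le\tau$ --- is therefore the same mathematics with the LP duality stripped away, and is arguably the more transparent presentation. One caveat on the KL estimate: the two elementary inequalities you cite, $\ln(1+x)\le x$ and $\ln(1+x)\ge x-x^2/2$, yield $KL\le\frac{\Delta_i^2}{2\theta_i W_S}=\frac{\Delta_i^2}{2\theta_i^2}p_i^S$, which is weaker than the form $C\frac{\Delta_i^2}{\theta_i\theta_1}p_i^S$ you need (it would only give $\sum_i\theta_i^2/\Delta_i^2$ in the final bound). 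To recover the stated constant you need a slightly sharper lower bound such as $\ln(1+x)\ge\frac{2x}{2+x}$, which gives $KL\le\frac{\Delta_i^2}{W_S(\theta_1+\theta_i)}\le\frac{\Delta_i^2}{\theta_1 W_S}$, or the $\chi^2$-type bound $KL(p\|q)\le\sum_x p^2(x)/q(x)-1$ that the paper uses.

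The derivation you propose for the coverage term $\Omega(\frac{n}{k}\ln\frac{1}{\delta})$ has a genuine gap: the claim that $KL(P_S^{\btheta}\|P_S^{\btheta^{(i)}})\le c$ for an absolute constant $c$ is false. The exact divergence is $\ln(1+\frac{\eta}{W_S})-p_i^S\ln(1+\frac{\eta}{\theta_i})$ with $\eta\approx\Delta_i$, and when $S$ consists only of items with tiny parameters (so $W_S\ll\eta$) this grows like $(1-\frac1k)\ln\frac{1}{W_S}$, unbounded over \pl\, instances. Worse, the regime where the coverage term is the binding one --- most $\theta_i$ near $0$, so $\sum_i\theta_i\theta_1/\Delta_i^2\ll n/k$ --- is precisely the regime where this blow-up occurs, so your argument there would only deliver $\Omega\big(\frac{n}{k}\cdot\frac{\ln(1/\delta)}{\ln(1/\theta_{\min})}\big)$, off by a $\ln(1/\theta_{\min})$ factor. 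The paper does not derive this term from the change-of-measure machinery at all; it imports it from the lower bound of Chen et al.\ (Theorem B.9 of \cite{ChenSoda+18}), which uses a separate argument that every item must appear in a tested subset a constant number of times. You would either need to do the same, or find alternatives for which the KL genuinely is $O(1)$ (e.g., perturbations measured at the scale of $W_S$ rather than of $\Delta_i$), rather than reuse the $\btheta^{(i)}$ family with a uniform KL cap.
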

%\vspace*{-10pt}
\textbf{Proof sketch.} %\red{Adding now...} 
We employ the measure-change technique of Kaufmann et al \cite{Kaufmann+16_OnComplexity} (see Lem. \ref{lem:gar16}, Appendix) for lower bounds on the PAC sample complexity for standard multiarmed bandits (MAB). The novelty of our proof lies in mapping their result to our setting: For our case each MAB instance corresponds to an instance of the BB-PL problem with the arm set containing all subsets of $[n]$ of size $k$: $A = \{S = (S(1), \ldots S(k)) \subseteq [n]\}$. 

We now consider any general true \pll\, problem instance $\text{PL}(n,\btheta^1): \theta_1^1 > \theta_2^1 \ge \ldots \ge \theta_n^1$, and corresponding to each suboptimal item $a \in [n]\setminus \{1\}$, we define an alternative problem instance $\text{PL}(n,\btheta^a): \theta_a^a = \theta_1^1 + \epsilon; ~\theta_i^a = \theta_i^1, ~~\forall i \in [n]\sm \{a\}$, for some $\epsilon>0$. %
%\theta_a > \theta_1 > \theta_2 \ge \ldots  \theta_{a-1} \ge \theta_{a+1} \ge \ldots \ge \theta_n$.
Then, applying Lemma \ref{lem:gar16} on every pairs of problem instances $(\btheta^1, \btheta^a)$, and suitably upper bounding the KL-divergence terms we arrive at $n-1$ constraints of the form: 
\vspace*{-5pt}
\begin{align*}
& \ln \frac{1}{2.4\delta} \le  \sum_{S \in A \mid a \in S} \E_{\btheta^1}[N_S(\tau_A)]KL(p^1_S,p^a_S) \\
& \le \sum_{S \in A \mid a \in S}\E_{\btheta^1}[N_S(\tau_A)] \frac{ \Delta_a'^2}{\theta_S^1(\theta_1^1 + \epsilon)}, \, \forall a \in [n]\sm\{1\}
\end{align*}
\vspace*{-10pt}

Since the total sample complexity of $\cA$ being $\cN(0,\delta) = \sum_{S \in A}N_S$ (here $N_S$ is the number of plays of subset $S$ by $\cA$), the problem of finding the sample complexity lower bound actually reduces to solving the (primal) linear programming (LP) problem:

\vspace*{-20pt}
\begin{align*}
& \textbf{Primal LP (P):} \min_{S \in A} \sum_{S \in A} \E_{\btheta^1}[N_S]
\text{s.t., } \ln \frac{1}{2.4\delta} \\
& \le \sum_{S \in A \mid a \in S}\E_{\btheta^1}[N_S] \frac{ \Delta_a'^2}{\theta_S^1(\theta_1^1 + \epsilon)}, ~\forall a \in [n]\sm\{1\}
\end{align*}
\vspace*{-15pt}

However above has $O{n \choose k}$ many optimization variables (precisely $\E_{\btheta^1}[N_S]$s), so we instead solve the {\em dual} LP to reach the desired bound. 
Lastly the $\Omega\Big(\frac{n}{k} \ln \frac{1}{\delta}\Big)$ term in the lower bound arises as any learning algorithm must at least test each item a constant number of times via $k$-wise subset plays before judging it optimality which is the bare minimum sample complexity the learner has to incur \cite{ChenSoda+18}.
The complete proof is given in Appendix \ref{app:lb_fewf}. $\hfill \square$.

%\red{Add a line in the proof justifying the $O(\frac{n}{k}\log T$ term...}

\iffalse %%%%% =============
{\bf Remark.} \cite{ChenSoda+18} derive a similar lower bound for the $\left(0, \frac{1}{8}\right)$-{PAC} top-$\ell$ ranking problem in the \pl\, model with \wf, which is our problem when $\ell = 1$. For such `low confidence' regimes, i.e., when $\delta = \Omega(1) \nrightarrow 0$, they explicitly show a simple $n/k$ term (independent of the instance) lower bound, which can also be used to slightly improve our bound above in Theorem \ref{thm:lb_fewf}. In terms of the method of proof, however, we give an arguably more systematic derivation of the lower bound using a standard optimisation problem derived from change-of-measure inequalities following the general framework of \cite{Kaufmann+16_OnComplexity}. To solve the optimisation problem for the lower bound, we instead analyse its dual in a novel way, which yields the result. 
\fi %%%%%%%%% =========================
%\vspace{-10pt}
\subsection{Lower bound for \tf}
\label{sec:lb_fetf}
%We also have a analogous lower bound result for the sample complexity of PAC best item identification with \wf\,, which incorporates a $1/m$ factor improvement in the sample complexity.
%\vspace{-10pt}
\begin{restatable}[Sample complexity Lower Bound: $(0,\delta)$-\fe\, with \tf]{thm}{lbfetf}
\label{thm:lb_fetf}
Suppose $\cA$ is an online learning algorithm for \tf\, which, given $\delta \in [0,1]$ and run on any \pl\, instance, terminates in finite time almost surely, returning an item $I$ satisfying $Pr(\theta_I = \max_{i} \theta_i) > 1-\delta$. Then, on any \pl\, instance $\theta_1 > \max_{i \geq 2} \theta_i$, the expected number of rounds it takes to terminate is $\Omega\bigg(\max\Big(\frac{1}{m}\sum_{i=2}^{n}\frac{\theta_i\theta_1}{\Delta_i^2}\ln \big( \frac{1}{\delta} \big) , \frac{n}{k}\ln \frac{1}{\delta} \Big)\bigg)$. 
%\red{$O(\frac{n}{mk})\log T$??}
\end{restatable}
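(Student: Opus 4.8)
The plan is to run the entire argument of \Thm{thm:lb_fewf} essentially unchanged, and to localize the effect of the richer feedback to a single quantity: the per-round information content, measured by a Kullback--Leibler divergence. Concretely, I would keep the reduction of the \bb\, problem to a multi-armed bandit whose arms are the $k$-subsets $A=\{S\subseteq[n]:|S|=k\}$, and keep the same family of alternative instances: starting from any true instance $\text{PL}(n,\btheta^1)$ with $\theta_1^1>\theta_2^1\ge\cdots\ge\theta_n^1$, for each suboptimal $a$ define $\btheta^a$ by lifting only item $a$ to $\theta_a^a=\theta_1^1+\epsilon$ and leaving the rest fixed. Applying the Garivier--Kaufmann change-of-measure inequality (\Lem{lem:gar16}) to every pair $(\btheta^1,\btheta^a)$ produces, exactly as before, one constraint per suboptimal item, namely $\ln\frac{1}{2.4\delta}\le\sum_{S\in A\,:\,a\in S}\E_{\btheta^1}[N_S(\tau_A)]\,KL(p^{1,m}_S,p^{a,m}_S)$; the only change from the \wf\, case is that $p^{\cdot,m}_S$ now denotes the distribution over length-$m$ rank-orderings of $S$ rather than over single winners.

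The crux is therefore to upper bound the ranking divergence $KL(p^{1,m}_S,p^{a,m}_S)$ in terms of the winner-feedback coefficient. Since a top-$m$ ordering is generated by drawing $m$ winners sequentially without replacement, the chain rule for relative entropy gives
\[
KL(p^{1,m}_S,\,p^{a,m}_S)=\sum_{i=1}^{m}\E_{\bsigma\sim p^{1,m}_S}\Big[KL\big(p^{1}_{S\sm\bsigma^{-1}(1:i-1)},\,p^{a}_{S\sm\bsigma^{-1}(1:i-1)}\big)\Big],
\]
where each inner term is a \wf\, divergence over the residual subset, hence of the form already controlled in \Thm{thm:lb_fewf}, namely $KL(p^1_T,p^a_T)\le\frac{\Delta_a'^2}{\theta^1_T(\theta_1^1+\epsilon)}$ with $\Delta_a'=\Delta_a+\epsilon$. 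Summing the (at most) $m$ non-trivial terms and controlling the residual denominators I expect to obtain $KL(p^{1,m}_S,p^{a,m}_S)\le \frac{m\,\Delta_a'^2}{\theta^1_S(\theta_1^1+\epsilon)}$ up to a universal constant, i.e. exactly $m$ times the winner-feedback coefficient. I would then plug this $m$-scaled coefficient into the primal LP $\min\sum_{S}\E_{\btheta^1}[N_S]$ subject to $\ln\frac{1}{2.4\delta}\le\sum_{S\ni a}\E_{\btheta^1}[N_S]\,\frac{m\,\Delta_a'^2}{\theta^1_S(\theta_1^1+\epsilon)}$ and solve its dual precisely as in the \wf\, case. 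Because multiplying all constraint coefficients by $m$ is equivalent to dividing the confidence threshold by $m$, the optimal value is divided by $m$, which yields the $\frac1m\sum_{i=2}^{n}\frac{\theta_i\theta_1}{\Delta_i^2}\ln\frac1\delta$ term. The $\frac{n}{k}\ln\frac1\delta$ term is a pure exploration bound --- every item must be tested a constant number of times through $k$-wise plays regardless of feedback granularity --- and so it carries over verbatim.

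The step I expect to be the main obstacle is the denominator bookkeeping inside the chain-rule sum. The residual sets $S\sm\bsigma^{-1}(1:i-1)$ shrink as winners are removed, so the naive per-step bound $\frac{\Delta_a'^2}{\theta^1_{S\sm\bsigma^{-1}(1:i-1)}(\theta_1^1+\epsilon)}$ can \emph{exceed} the full-set coefficient $\frac{\Delta_a'^2}{\theta^1_S(\theta_1^1+\epsilon)}$, so a term-by-term comparison does not immediately deliver the clean factor $m$. The fix I would pursue is to exploit the prefix-expectation weighting: the $i$-th term is effectively weighted by the probability that $a$ has not yet appeared in the first $i-1$ positions (on the complementary event the two residual distributions coincide and the inner divergence vanishes), and I would show that on this survival event the expected value of $1/\theta^1_{S\sm\bsigma^{-1}(1:i-1)}$ stays within a constant factor of $1/\theta^1_S$, using $m\le k-1$ to guarantee every residual set retains at least two items. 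Making this averaging precise is the one genuinely new estimate; once it is in place the remainder is the LP/dual computation already carried out for \Thm{thm:lb_fewf}, and the full details would be deferred to the appendix.
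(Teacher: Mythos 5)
Your proposal follows essentially the same route as the paper's proof: the same lifted alternative instances $\btheta^a$, the change-of-measure constraints from \Lem{lem:gar16}, the chain rule for KL-divergence to show the top-$m$ divergence is bounded by $m$ times the winner-feedback coefficient $\frac{\Delta_a'^2}{\theta_S^1(\theta_1^1+\epsilon)}$, the identical primal/dual LP computation with the feasible dual point scaled by $1/m$, and the instance-independent $\frac{n}{k}\ln\frac{1}{\delta}$ term imported from the coverage argument. The "denominator bookkeeping" you flag as the main obstacle is precisely the step the paper resolves by writing out the prefix-weighted average of the residual-set coefficients (and in fact treats rather breezily, asserting it collapses exactly to the full-set coefficient), so your plan to settle for a universal-constant version of that averaging estimate is sound and loses nothing for an $\Omega(\cdot)$ bound.
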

%\vspace{-20pt}
\textbf{Proof sketch.} % \red{Now adding} 
The crucial observation we make here is that due to the chain rule for KL-divergence, the KL divergence for \tf\, is $m$ times than that of just with \wf: 
$%\begin{align*}
 KL(p^1_S, p^a_S) = KL(p^1_S(\sigma_1), p^a_S(\sigma_1)) +  + \sum_{i=2}^{m}KL(p^1_S(\sigma_i \mid \sigma(1:i-1)), p^a_S(\sigma_i \mid \sigma(1:i-1)))$,
%\end{align*}
where we abbreviate $\sigma(i)$ as $\sigma_i$ and $KL( P(Y \mid X),Q(Y \mid X)): = \sum_{x}Pr\Big( X = x\Big)\big[ KL( P(Y \mid X = x),Q(Y \mid X = x))\big]$ denotes the conditional KL-divergence. 
Using this and the upper bound on the KL divergences for \wf\, setup as derived for Thm. \ref{thm:lb_fewf}, we get that in this case $KL(p^1_S,p^a_S) \le \frac{ m\Delta_a'^2}{\theta_S^1(\theta_1^1 + \epsilon)}, \, \forall a \in [n]\sm\{1\}$, where lies the crux of the $\frac{1}{m}$-factor improvement in the sample complexity lower bound compared to \wf. The lower bound now can be derived following a similar procedure described for Thm. \ref{thm:lb_fewf}. 
%Finally, same as Thm. \ref{thm:lb_fewf}, again the second $\Omega\Big(\frac{n}{k} \ln \frac{1}{\delta}\Big)$ term in the bound is a bare minimum sample complexity any learner has to incur to ensure it tests every $n$ item at least a few times via $k$-subwise queries \cite{ChenSoda+18}. 
Details are given in \ref{app:lb_fewf}.
$\hfill \square$.

%\vspace*{-10pt}
\section{The \ft\, learning problem}
\label{sec:ft}
%\vspace{-10pt}
%\red{Requires rewriting =}
This section studies the problem of finding the \bi\, within a maximum allowed number of queries \sc, with minimum possible probability of misidentification. Note the algorithms for \fe\, setting cannot be used here as they do not take the total sample complexity $\sc$ as input; also, simply terminating such algorithms with a suitable $\delta$ after $\sc$ runs may not necessarily be optimal. 
We present results for the general \tf.%\, here. 

%\vspace{-5pt}
\subsection{Lower Bound: \ft \, setting}
\label{sec:lb_fttf}
%We first seek to understand the fundamental lower limit on error probability given \sc. 
We derive an instance-dependent lower bound on error probability in which the problem complexity depends on the complexity term $\Big(\sum_{a = 2}^{n}\frac{\theta_a}{\Delta_a^2}\Big)^{-1}$, unlike the case for our first objective (\fe), which depends on the gap parameter $\frac{1}{\Delta_a^2}, \, \forall  \in [n]\sm\{1\}$. 
%Thm. \ref{thm:lb_fttf} presents our derived lower bound of error confidence $\delta$ towards identifying the \bi\, for a given sample complexity $Q$.
We first define a natural consistency or `non-trivial learning' property for any best-arm algorithm given a fixed budget of \sc:

\begin{defn}[\bcon\, Best-Item Identification Algorithm]
\label{def:con}
An online learning algorithm $\cA$, taking as input a sample complexity budget \sc, terminating within \sc\, rounds and outputting an item $I \in [n]$, is said to be \bcon\, if, for every \pl\, instance $\btheta \equiv (\theta_i)_{i=1}^n$ with a unique best item $a^*(\btheta) := \arg \max_{i \in [n]} \theta_i$, it satisfies $Pr_{\btheta}\big(I = a^*(\btheta)\big) \ge 1 - \exp(-f(\btheta) \cdot \sc)$ when run on $\btheta$, where $f: [0,1]^{n} \mapsto \R_+$ is an instance-dependent function mapping every \pl\, instance to a positive real number.
\end{defn}
Informally, a \bcon\, algorithm picks out the best arm in a \pl\, instance with arbitrarily low error probability given enough rounds \sc. 
We next define the notion of a \ordo\ or \emph{label-invariant} algorithm before stating our main lower bound result.
\begin{defn}[Order obliviousness or label invariance]
\label{def:sym_alg}
A \bcon\, algorithm $\cA$ is said to be \ordo\ if its output is insensitive to the specific labelling of items, i.e., if for any PL model $(\theta_1, \ldots, \theta_n)$, bijection $\phi: [n] \to [n]$ and any item $I \in [n]$, it holds that $Pr(\cA \text{ outputs } I \, | \,  (\theta_1, \ldots, \theta_n)) = Pr(\cA \text{ outputs } I \, | \, (\theta_{\phi(1)}, \ldots, \theta_{\phi(n)}))$, where $Pr(\cdot \, | (\alpha_1, \ldots, \alpha_n) )$ denotes the probability distribution on the trajectory of $\cA$ induced by the PL model $(\alpha_1, \ldots, \alpha_n)$. 
\end{defn}

\begin{restatable}[Confidence lower bound in fixed sample complexity $\sc$ for \tf]{thm}{lbfttf}
\label{thm:lb_fttf}
Let $\cA$ be a \bcon\, and \ordo\ algorithm for identifying the \bi\, under \tf. For any \pl\, instance $\btheta$ and sample size (budget) $Q$, its probability of error in identifying the best arm in $\btheta$ satisfies
$ 
Pr_{\btheta}\left(I \neq \arg \max_{i \in [n]} \theta_i \right) = \Omega\left( \exp\left(-2 m \sc \tilde \Delta \right) \right), 
$
%for large enough \sc, %
%\[ \limsup_{Q \to \infty} \log \frac{1}{Pr\left(I \neq \arg \max_{i \in [n]}\right)} \]
where the complexity parameter $\tilde{\Delta} := \Big(\sum_{a = 2}^{n}\frac{(\theta_a)^2}{\Delta_a^2}\Big)^{-1}$.% denotes the problem dependent complexity parameter. 
%Or in other words, the algorithm can not identify the \bi\, with probability more than $1-\frac{\exp\big( -\frac{mQ\tilde \Delta}{\theta_1} \big)}{2.4}$.
\end{restatable}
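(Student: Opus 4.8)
The plan is to establish the lower bound by an information-theoretic change-of-measure argument against a family of alternative \pl\, instances, in the spirit of the proof of \Thm{thm:lb_fetf}, but now carrying the fixed budget $Q$ through a Bretagnolle--Huber type inequality rather than the asymptotic $\ln(1/\delta)$ constraint used there. Assume without loss of generality that item $1$ is the unique best item ($\theta_1 = 1$). For each suboptimal item $a \in [n]\sm\{1\}$ I would define the alternative instance $\btheta^a$ obtained by raising only the weight of $a$, namely $\theta^a_a = \theta_1 + \epsilon = 1+\epsilon$ and $\theta^a_i = \theta_i$ for $i \neq a$, so that $a$ becomes the unique best item under $\btheta^a$. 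Write $e_1 := Pr_{\btheta}(I \neq 1)$ for the error on the true instance and $e_a := Pr_{\btheta^a}(I \neq a)$ for the error on the $a$-th alternative.

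First I would convert the gap between $\btheta$ and $\btheta^a$ into a statement about errors. Since only item $a$'s weight is perturbed, the per-subset divergence $KL(p^1_S, p^a_S)$ vanishes unless $a \in S$, and for $a \in S$ the chain-rule argument of \Thm{thm:lb_fetf} (the factor-$m$ blow-up for \tf) combined with the single-winner bound of \Thm{thm:lb_fewf} gives $KL(p^1_S, p^a_S) \le \tfrac{m\,\Delta_a'^2}{\theta^1_S(1+\epsilon)}$, where $\Delta_a' = \Delta_a + \epsilon$ and $\theta^1_S = \sum_{j\in S}\theta_j$; letting $\epsilon \downarrow 0$ this is at most $c\,\tfrac{m\Delta_a^2}{\theta^1_S}$ for an absolute constant $c$. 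By the chain rule for the trajectory law, $KL(P_{\btheta}, P_{\btheta^a}) = \sum_{S : a\in S}\E_{\btheta}[N_S]\,KL(p^1_S,p^a_S)$ where $N_S$ counts plays of $S$, and applying Bretagnolle--Huber to the event $\{I = a\}$ yields, for every $a$, a bound of the form $e_1 + e_a \ge \tfrac12\exp\!\big(-\sum_{S : a\in S}\E_{\btheta}[N_S]\,KL(p^1_S,p^a_S)\big)$.

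The heart of the argument is to combine these $n-1$ inequalities against the single budget constraint $\sum_S \E_{\btheta}[N_S] \le Q$. I would assign to alternative $a$ the weight $w_a = \theta_a^2/\Delta_a^2$ and form $\sum_a w_a\, KL(P_{\btheta},P_{\btheta^a})$. Swapping the order of summation and using the per-subset bound, the inner sum over $a$ for a fixed $S$ is $\tfrac{cm}{\theta^1_S}\sum_{a\in S}\theta_a^2 \le cm$, since $\sum_{a\in S}\theta_a^2 \le \sum_{a\in S}\theta_a \le \theta^1_S$ (each $\theta_a \le 1$). Hence $\sum_a w_a\, KL(P_{\btheta},P_{\btheta^a}) \le cm\,Q$, and dividing by $\sum_a w_a = \tilde\Delta^{-1}$ shows the weighted average of the trajectory divergences is at most $cm\,Q\,\tilde\Delta$. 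Consequently some alternative $a^\dagger$ satisfies $KL(P_{\btheta},P_{\btheta^{a^\dagger}}) \le cm\,Q\,\tilde\Delta$, and feeding this back gives $e_1 + e_{a^\dagger} \ge \tfrac12\exp(-cm\,Q\,\tilde\Delta)$; the constant $c$ together with the slack incurred below is what produces the factor $2$ in the statement.

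The step I expect to be the main obstacle is isolating the true-instance error $e_1$ from the combined quantity $e_1 + e_{a^\dagger}$, since Bretagnolle--Huber only controls the sum and one must rule out the possibility that all the mass sits on the alternative error $e_{a^\dagger}$. This is exactly where the two structural hypotheses are meant to enter. Budget-consistency (\Def{def:con}) forces $e_{a^\dagger} \le \exp(-f(\btheta^{a^\dagger})Q)$, pinning the alternative error to the exponential scale governed by a single rate function $f$; order-obliviousness (\Def{def:sym_alg}) makes that rate function invariant under relabelings of the items, so the error rate on $\btheta^{a^\dagger}$ cannot be made spuriously small relative to that on $\btheta$ by a favourable labelling. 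Carrying this bookkeeping through---so that the $\tfrac12\exp(-cmQ\tilde\Delta)$ floor on $e_1 + e_{a^\dagger}$ must be charged, up to the stated constant, to $e_1$ itself---converts the combined bound into the claimed pointwise statement $Pr_{\btheta}(I \neq \arg\max_i \theta_i) = \Omega(\exp(-2m\,Q\,\tilde\Delta))$, equivalently showing that no \bcon, \ordo\ algorithm can achieve an error rate faster than $2m\tilde\Delta$ on the instance $\btheta$.
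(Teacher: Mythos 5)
Your averaging step is sound and is in fact the same certificate the paper uses: the paper phrases it as a max--min/LP argument resolved by von Neumann's minimax theorem, but the optimal dual weights it plugs in are exactly your $w_a \propto \theta_a^2/\Delta_a^2$, and the computation $\sum_{a\in S}\theta_a^2 \le \theta^1_S$ is the same. The genuine gap is in the last step, which you correctly identify as the main obstacle but do not resolve. With your alternatives $\btheta^{a}$ obtained by \emph{raising} $\theta_a$ to $1+\epsilon$, the instance $\btheta^{a^\dagger}$ is not a relabeling of $\btheta$, so order-obliviousness (\Def{def:sym_alg}) says nothing about it, and budget-consistency (\Def{def:con}) only guarantees $e_{a^\dagger}\le \exp(-f(\btheta^{a^\dagger})Q)$ for \emph{some} positive rate $f(\btheta^{a^\dagger})$, which may be far smaller than $c\,m\,\tilde\Delta$. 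Nothing then prevents the entire Bretagnolle--Huber floor $\tfrac12\exp(-c\,m\,Q\,\tilde\Delta)$ from being charged to $e_{a^\dagger}$ rather than to $e_1$, and the argument stalls.

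The paper avoids this by choosing the alternative to be a \emph{transposition}: $\theta^a_a=\theta^1_1$, $\theta^a_1=\theta^1_a$, all other coordinates unchanged. Because $\btheta^a$ is literally a relabeling of $\btheta$, order-obliviousness forces the same guarantee $\delta:=\exp(-f(\btheta)Q)$ on both instances, so the event $\cE_0=\{I=1\}$ satisfies $Pr_{\btheta}(\cE_0)\ge 1-\delta$ \emph{and} $Pr_{\btheta^a}(\cE_0)\le\delta$ with the same $\delta$; the change-of-measure inequality then reads $\ln\frac{1}{2.4\delta}\le \sum_{S}\E_{\btheta}[N_S]\,KL(p^1_S,p^a_S)$, and the weighted-average step bounds the right side by $2mQ\tilde\Delta$, yielding $\delta\ge\frac{1}{2.4}\exp(-2mQ\tilde\Delta)$ directly --- no separate isolation of $e_1$ is needed. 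Two further consequences of the swap that your sketch misses: the per-subset KL bound changes to $KL(p^1_S,p^a_S)\le \frac{m\Delta_a^2}{\theta^1_S}\big(\frac{\theta^1_1\1(1\in S)+\theta^1_a\1(a\in S)}{\theta^1_1\theta^1_a}\big)$ (sets containing item $1$ but not $a$ now also contribute, since $\theta_1$ is perturbed too), and it is the worst case where both $1,a\in S$ that produces the factor $2$ in the exponent $\exp(-2mQ\tilde\Delta)$ --- it is not a slack constant as you suggest.
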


\begin{rem}
As expected, the error probability reduces with increasing feedback size $m$ and budget $\sc$. However a more interesting tradeoff lies in the instant dependent complexity term $\tilde{\Delta}$: for {`easy'} instances where most of the suboptimal item have $\theta_a \to 0$ (i.e. $\Delta_a \to 1$), $\tilde \Delta$ shoots up, in fact attains $\tilde \Delta \to \infty$ in the limiting case where $\theta_a \to 0 \, \forall i \in [n \sm \{1\}$. On the other hand, for `hard' instances, where there exists even one suboptimal item $a \in [n]\sm\{1\}$ with $\theta_a \approx 1$ (i.e. $\Delta_a \approx 0$), $\tilde{\Delta} \to 0$ raising the minimum error probability significantly, which indicates the hardness of the learning problem.
\end{rem}

%\begin{rem}
%Note that our instance-dependent error rate bound $\tilde{\Delta} = \Big(\sum_{a = 2}^{n}\frac{\theta_a}{\Delta_a^2}\Big)^{-1} < \min_{a = 2}^{n} \frac{\Delta_a^2}{\theta_a}$, therefore dependence on our instance dependence complexity term is tighter than that replaced with just $\max_{a = 2}^{n}\frac{\theta_a}{\Delta_a^2}$---the instance dependent complexity term known for classical Multiarmed Bandits for the similar \ft\, problem setup (see Thm. $4$, \cite{Audibert+10}).
%\end{rem}

%\textbf{Proof sketch} \red{Supplementary} $\hfill \square$.
\vspace{-10pt}
\subsection{Proposed Algorithm for \ft\, setup: \algfttf}
\label{sec:alg_fttf}
\textbf{Main Idea.} Our proposed algorithm \algfttf\, solves the problem with a uniform budget allocation rule:
Since we are allowed to play sets of size $k$ only, we divide the items into $k$-sized batches and eliminate the bottom half of the winning items once each batch is played sufficiently. The important parameter to tune is how long to play the batches. Given a fixed budget $Q$, since one does not have an idea about which batch the \bi\, lies in, a good strategy is to allocate the budget uniformly across all sets formed during the entire run of the algorithm, which can shown to be precisely $O(\frac{n + k\log_2 k}{k})$ sets, so we allocate a budget of $Q' = O\Big( \frac{kQ}{n + k\log_2 k}\Big)$ samples per batch.

\textbf{Algorithm description.} 
%Similar to the our previous approaches for the problem of \fe\, (see Alg. \ref{alg:fewf}, \ref{alg:fetf}), in this case too 
The algorithm proceeds in rounds, where in each round it divides the set of surviving items into batches of size $k$ and plays each $Q' = \frac{(n+k)kQ}{2n^2\log_2 k}$ times. Upon this it retains only the top half of the winning arms, eliminating the rest forever. The hope here is that with `enough' observed samples, the \bi\,  always stays in the top half and never gets eliminated. The next round recurses on the remaining items, and the algorithm finally returns the only single element is left as the potential \bi.
%However the problem here is inherently of different nature: Optimal budget $(Q)$ allocation for \bi\, identification problem with highest possible confidence. We address this by allocating equal budget per sub-phase and shaving of half of the items at the end of each sub-phase $s$. 
%The intuition being: At any phase $s$ (assuming the \bi\, is not eliminated till sub-phase $s-1$), since we have no idea about which $k$-subset contains the \bi\, sample each equal number of times and retain only the top half of the  $\frac{k}{2}$ winning arms with the hope that the \bi will always reside in the top half. The key being, due to \emph{phasewise-uniform} bugdet allocation strategy, as we proceed in sub-phases, the size of remaining items reduces exponentially and consequently each $k$-subset gets sampled for higher and higher number of times, to ensure
The pseudocode is moved to Appendix \ref{app:alg_fttf}.%, Alg. \ref{alg:fttf}.

\begin{restatable}[\algfttf: Confidence bound for \bi\, identification with fixed sample complexity \sc]{thm}{pralgfttf}
\label{thm:pr_fttf}
Given a budget of $\sc$ rounds, \algfttf\, returns the \bi\, of \pll\, with probability at least 
%\vspace*{-15pt}
$ %\[
1-O\bigg( \log_2 n \exp\Big( -\frac{mQ\Delta_{\min}^2}{16(2n + k \log_2k)} \Big) \bigg),
$ %\]
where $\Delta_{\min} = \min_{i=2}^{n}\Delta_i$.
%\red{Check again + Proof to be added.}
\end{restatable}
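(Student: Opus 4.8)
The plan is to show that the best item (item $1$, with $\theta_1=1$) survives every elimination round with high probability, and then union-bound the failure over all rounds. First I would fix the combinatorial skeleton of \algfttf. Since each round discards the weaker half of every size-$k$ batch, the number of surviving items halves each round, so the algorithm terminates after $R = O(\log_2 n)$ rounds, and the total number of size-$k$ sets ever formed is $\sum_{r\ge 0}\frac{n}{k2^r} + \log_2 k \le \frac{2n + k\log_2 k}{k}$ (the $\log_2 k$ accounting for the final phase, when fewer than $k$ items remain and a single set is repeatedly halved). Allocating the budget $Q$ uniformly across these sets therefore guarantees each batch is played $Q' \ge \frac{kQ}{2n + k\log_2 k}$ times.

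Next I would analyze a single round conditioned on item $1$ still being alive. The key reduction is that item $1$ is retained whenever it empirically beats every other item in its batch, i.e. $\hp_{1j} > \tfrac12$ for all batch-mates $j$; hence $Pr[\text{item }1\text{ eliminated}] \le \sum_{j} Pr[\hp_{1j} < \tfrac12]$ over the at most $k-1$ batch-mates. Each summand is a concentration event with a fixed margin, since item $1$ is optimal: $p_{1j} - \tfrac12 = \frac{\Delta_j}{2(1+\theta_j)} \ge \frac{\Delta_j}{4} \ge \frac{\Delta_{\min}}{4}$. Conditioning on the number $N_{1j}$ of rank-broken comparisons of the pair $(1,j)$ produced during the $Q'$ plays, Hoeffding's inequality gives $Pr[\hp_{1j} < \tfrac12 \mid N_{1j}=N] \le \exp(-2N(\Delta_{\min}/4)^2) = \exp(-N\Delta_{\min}^2/8)$.

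The heart of the argument, and the source of the $m$-factor, is lower-bounding $N_{1j}$. Under \tf\ with \rb, the pair $(1,j)$ is revealed in a single play whenever at least one of the two items lands in the top-$m$ positions, which happens with probability at least $Pr[\text{rank}(1)\le m] \ge m/k$ --- the last inequality following because the PL-strongest item is, by a monotonicity/coupling argument, at least as likely as the average item (whose probability is exactly $m/k$) to appear among the top $m$. Thus $\E[N_{1j}] \ge Q'm/k$, and a Chernoff bound on the binomial comparison count lets me replace $N$ by $\tfrac12 Q'm/k$ up to an additive $\exp(-\Omega(Q'm/k))$ error. Substituting $Q' \ge \frac{kQ}{2n+k\log_2 k}$ yields a per-pair failure probability of $\exp\!\big(-\tfrac{mQ\Delta_{\min}^2}{16(2n+k\log_2 k)}\big)$; multiplying by the $(k-1)$ batch-mates and the $R=O(\log_2 n)$ rounds, and absorbing the polynomial $k$ prefactor into the exponent constant, gives the claimed bound.

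The main obstacle will be the comparison-counting step: making rigorous both the claim $Pr[\text{rank}(1)\le m]\ge m/k$ for the PL-strongest item and the concentration of the random count $N_{1j}$, so that the effective per-pair sample size is genuinely $\Theta(Q'm/k)$ rather than $\Theta(Q'/k)$. Everything else is a routine Hoeffding-plus-union-bound calculation once the halving skeleton, the uniform budget split, and the separation margin $p_{1j}-\tfrac12 \ge \Delta_{\min}/4$ are in place.
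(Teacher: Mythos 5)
Your proposal follows essentially the same route as the paper's proof: the same halving skeleton and count of at most $\frac{2n+k\log_2 k}{k}$ batches giving $Q'=\frac{kQ}{2n+k\log_2 k}$ plays each, the same separation margin $p_{1j}-\tfrac12\ge\Delta_{\min}/4$, the same lower bound $\E[N_{1j}]\ge mQ'/k$ via the probability that item $1$ enters the top-$m$ list, and the same Chernoff-then-deviation (Lemma \ref{lem:pl_simulator}) chain yielding a per-pair failure probability $2\exp\big(-\tfrac{mQ'\Delta_{\min}^2}{16k}\big)$. Incidentally, the step you flag as the main obstacle is immediate: for each of the $m$ successive PL draws the conditional probability of drawing item $1$ (if still available) is $\theta_1/\Theta_{S'}\ge 1/|S'|\ge 1/k$ since $\theta_1$ is maximal, so summing over positions gives $\Pr[\mathrm{rank}(1)\le m]\ge m/k$ without any coupling argument. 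The one substantive divergence is the per-round elimination event: you require item $1$ to beat \emph{all} $k-1$ batch-mates and union-bound, giving a per-round failure of order $k\exp\big(-\tfrac{mQ'\Delta_{\min}^2}{16k}\big)$, whereas the paper only needs item $1$ to be beaten by \emph{fewer than half} of its batch-mates (since the algorithm keeps the top half by win count), and applies Markov's inequality to $V=\sum_{j}\1(\hp_{j1}>\tfrac12)$ to get $\Pr(V\ge k/2)\le\frac{4(k-1)}{k}\exp(\cdot)$, an $O(1)$ prefactor. Your claim that the extra factor of $k$ can be ``absorbed into the exponent constant'' is not valid in general ($k e^{-x}\le e^{-cx}$ fails when $x\lesssim\ln k$), so as written your argument proves $O\big(k\log_2 n\,\exp(\cdot)\big)$ rather than the stated $O\big(\log_2 n\,\exp(\cdot)\big)$; the median/Markov step is precisely what removes the $k$.
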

\vspace{-10pt}
\begin{rem}
Thm. \ref{thm:pr_fttf} equivalently shows that with sample complexity at most $O\Bigg( \frac{16(2n + k \log_2k)}{m\Delta_{\min}^2}\ln\bigg(\frac{\log_2 n}{\delta} \bigg)\Bigg)$, \algfttf\, returns the \bi\, with probability at least $(1-\delta)$. The bound is clearly optimal in terms of $m$ and $Q$ (comparing with Thm. \ref{thm:lb_fttf}), however it still remains an open problem to close the gap between the complexity term $\tilde{\Delta} = \Big(\sum_{a = 2}^{n}\frac{(\theta_a)^2}{\Delta_a^2}\Big)^{-1}$ in the lower bound, vs. the $\Big(\frac{n}{\Delta_{\min}^2}  \Big)^{-1}$ term that we obtained.
	%The question of whether this is an optimal, instance-dependent error rate for learning in the \pl\, model with a fixed budget remains open. 
\end{rem}
%%%%%%%%%%%%%%%%%%%%%%%%%%%%%%%%%%%%%%%

% Value of del = 0.1
% Clipped as temding to infty
% fixed budget algo is modified
% Appendix lonely plots

%\vspace*{-15pt}
\section{Experiments}
\label{sec:experiments}
%\textbf{Algorithms.} We simulate the results on our two proposed algorithms (1). \algfetf \, and (2). \algfttf. We also compare our ranking performance with the \emph{PLPAC-AMPR} method, the only existing method (to the best of our knowledge) that addresses the online PAC ranking problem, although only in the dueling bandit setup (i.e. $k = 2$).
%We report numerical results of the proposed algorithms run on the following \pl\, models:
\vspace*{-0pt}
This section reports numerical results of our proposed algorithm \algfewf\, (PW) on different \pl\, environments. 
All reported performances are averaged across $50$ runs.
The default values of the parameters are set to be $k=5$, $\epsilon=0.01$, $\delta= 0.01$, $m=1$ unless explicitly mentioned/tuned in the specific experimental setup. We compared our algorithm with the only existing benchmark algorithm \algdnb\, (DnB) \cite{SGwin18} (even though, as described earlier, it does not apply to instance-optimal analysis, specifically for $\epsilon = 0$; this is  reflected in our experimental results as well). We use $8$ different PL environments (with different $\btheta$ parameters) for the purpose, their descriptions are moved to Appendix \ref{app:expts}. 

Throughout this section, by the term \emph{sample-complexity}, we mean the average (mean) termination time of the algorithms across multiple reruns (i.e. number of subsetwise queries performed by the algorithm before termination).

%The only known algorithm (see Alg. $3$) by \cite{ChenSoda+18} for top-$l$ identification problem that applies to our setting is not practically implementable as the numerical quantities of their algorithm have not been made explicit (in particular, it is not clear how to compute the sets $\Omega_{\beta,\gamma,\tau}$ and the random variables $X_{i,u,\alpha,\beta,\sigma}$).

%\vspace*{-5pt}
\subsection{Results: \fe\, setting}
%\vspace*{-5pt}

\textbf{Sample-Complexity vs Error-Margin $(\epsilon)$.} Our first set of experiments analyses the sample complexity ($\cN(\epsilon,\Delta)$) of \algfewf\, with varying $\epsilon$ (keeping $\delta$ fixed at $0.1$). As expected, Fig. \ref{fig:sc_vs_eps} shows that the sample complexity increases with decreasing $\epsilon$ for both the algorithms. However, the interesting part is, for PW the sample complexity becomes almost constant beyond a certain threshold of $\epsilon$ (precisely when $\epsilon$ falls below $\Delta_{\min}$) in every case, whereas for DnB it keeps on scaling in $O(\frac{1}{\epsilon^2})$ 
%\footnote{We clipped the Y-axis as it shoots beyond $O(10^{8})$ for DnB} 
irrespective of the `hardness' of the underlying PL environment due to its non-adaptive nature---this is the region where we excel out. Also, note that the harder the dataset (i.e. the smaller its $\Delta_{\min}$), the smaller this threshold is, as follows from Thm. \ref{thm:sc_fewfep}, which verifies the instance-adaptive nature of our PW algorithm as it terminates as soon as $\epsilon$ falls below $\Delta_{\min}$.% no matter how small we set $\epsilon$ to be. 

%\vspace{-5pt}
\begin{figure}[h!]
	\begin{center}
		\includegraphics[trim={0cm 0 0cm 0},clip,scale=0.25,width=0.435\textwidth]{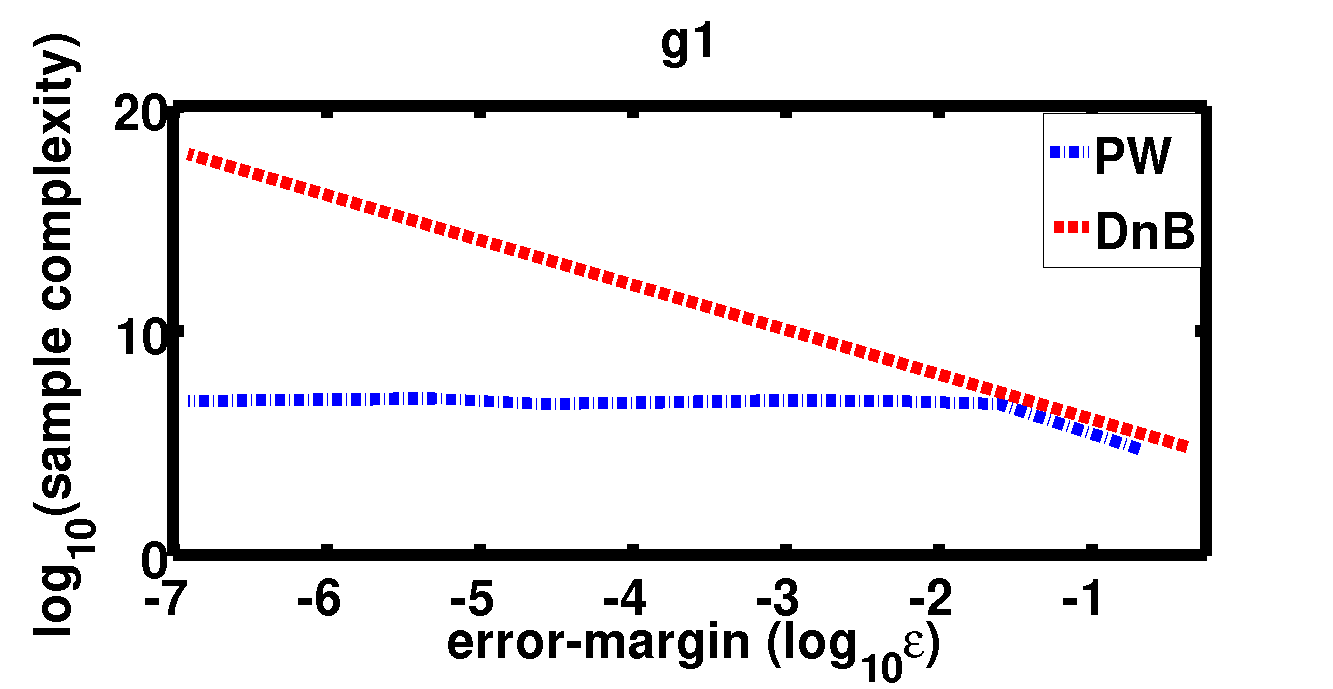}
		\hspace{0pt}
		%\vspace{10pt}
		\includegraphics[trim={0.cm 0 0cm 0},clip,scale=0.25,width=0.435\textwidth]{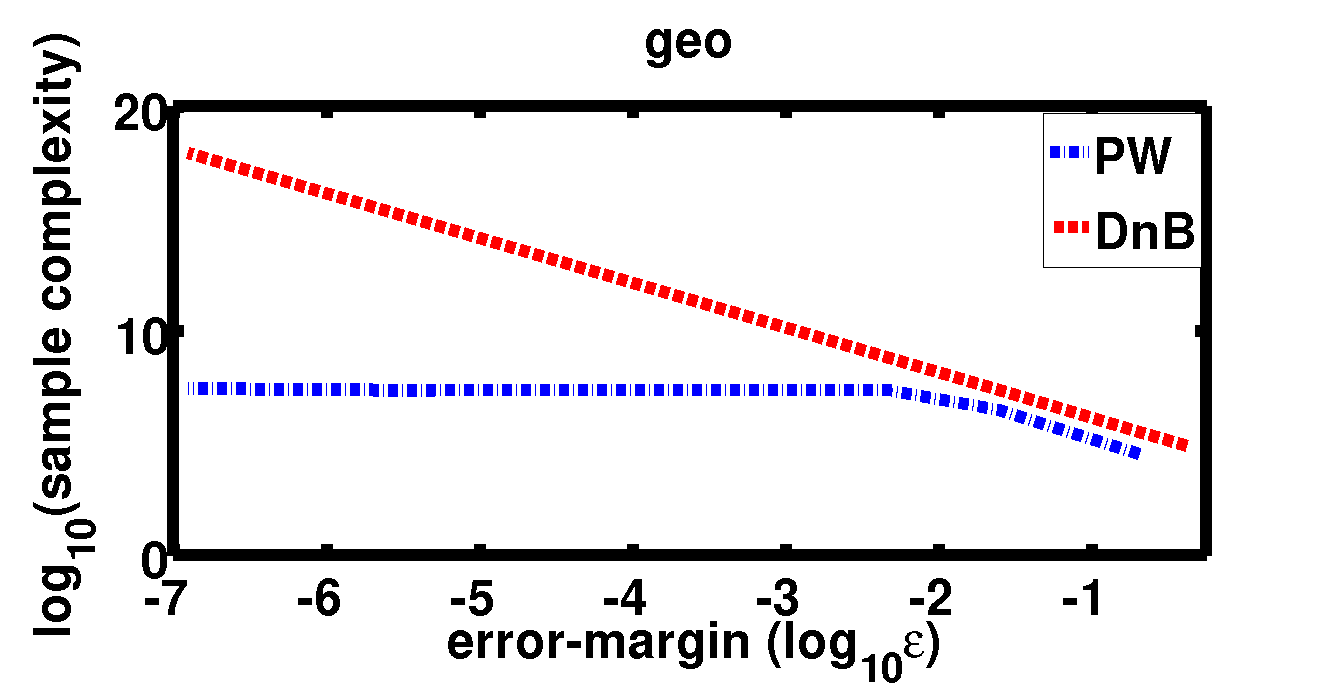}
		\hspace{0pt}
		%\vspace{10pt}
		\includegraphics[trim={0cm 0 0cm 0},clip,scale=0.25,width=0.435\textwidth]{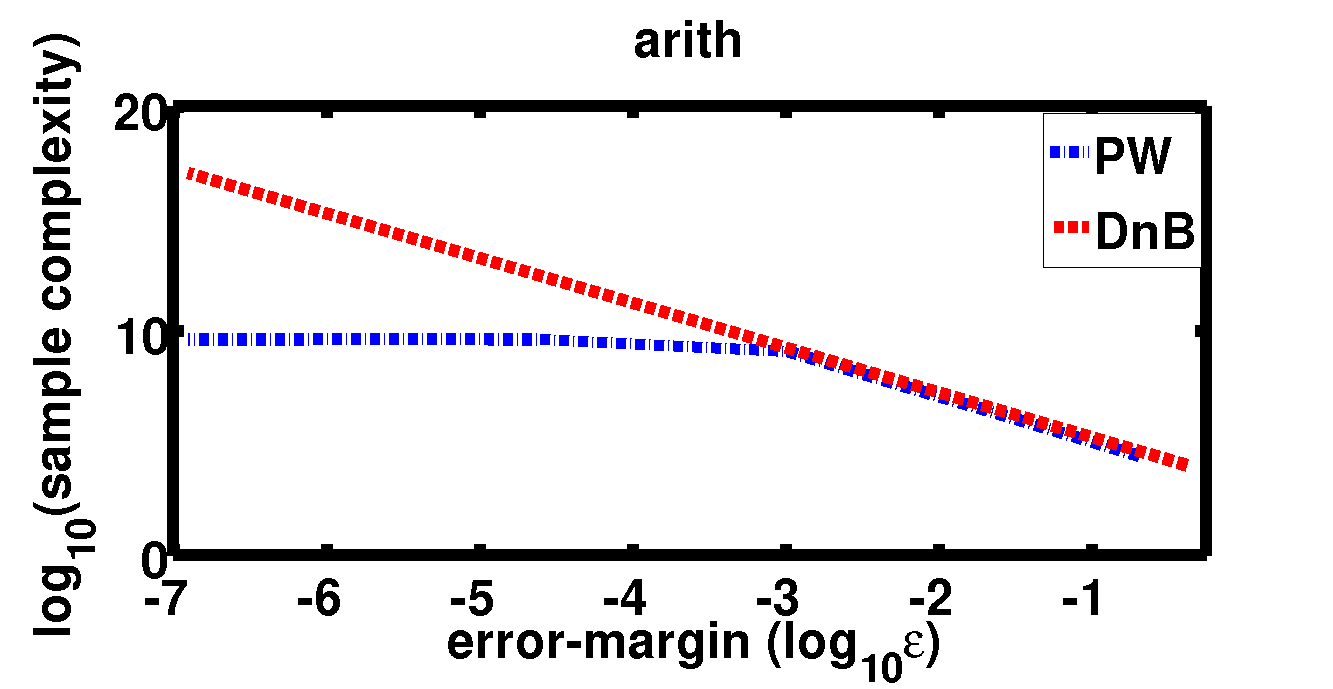}
		\hspace{0pt}
		%\vspace{10pt}
		\includegraphics[trim={0cm 0 0cm 0},clip,scale=0.25,width=0.435\textwidth]{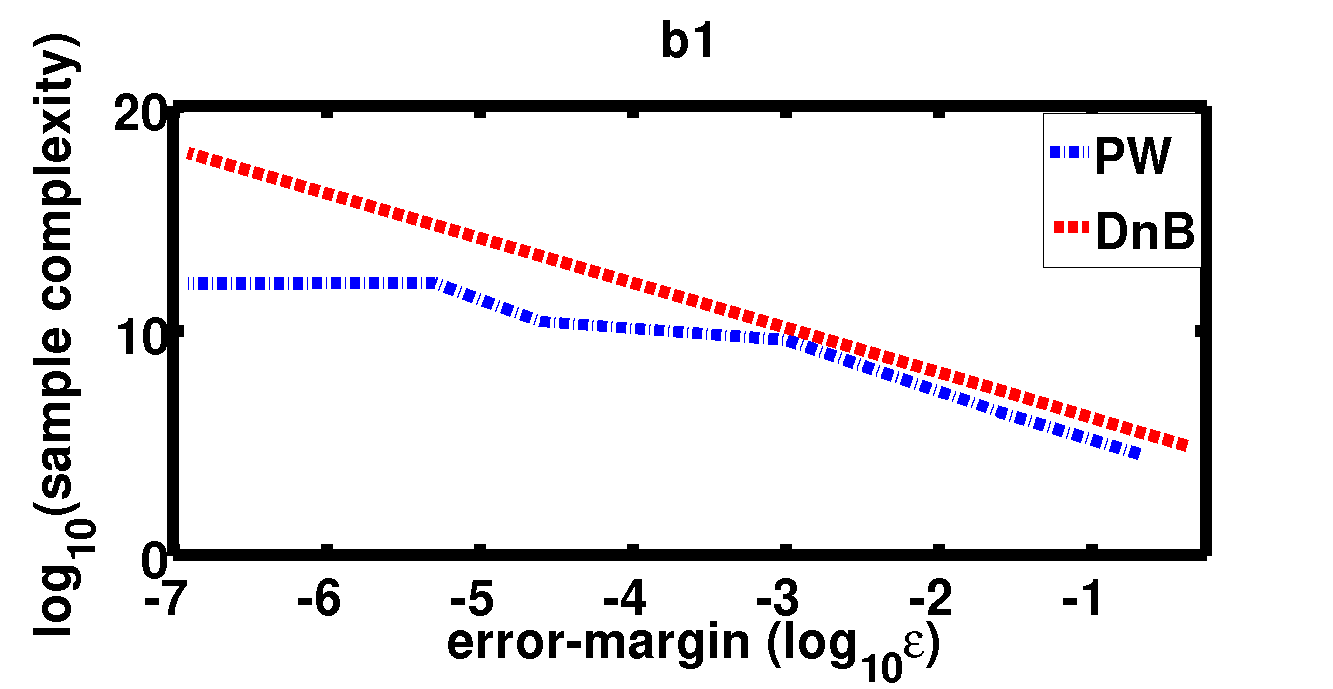}
		\vspace{-10pt}
		\caption{Sample-Complexity vs Error-Margin  $(\epsilon)$ (both in $\log$ scale) of PW and DnB across $4$ different problem instances.}
		\label{fig:sc_vs_eps}
		\vspace{-1pt}
	\end{center}
\end{figure}

%\vspace{-10pt}
\textbf{Itemwise sample complexity.} This experiment reveals the survival time of the items (i.e. total number plays of an item before elimination) in \algfewf\, algorithm. The results in Fig. \ref{fig:itemsc} clearly shows the inverse dependency of the survival time of items w.r.t. their $\theta$ parameter, e.g. for \textbf{g4} dataset, the survival times of the items are categorized into $4$ groups, highest for item $1$, with items $2$-$6$, $7$-$11$, and $12$-$16$ following it---justifying the $O\big(\frac{1}{\Delta_i^2}\big)$ survival times for each item $i$ (in Thm. \ref{thm:sc_fewf} or \ref{thm:sc_fetf}). 

%\vspace{-10pt}
\begin{figure}[h!]
	\begin{center}
		\includegraphics[trim={0cm 0 0cm 0},clip,scale=0.25,width=0.435\textwidth]{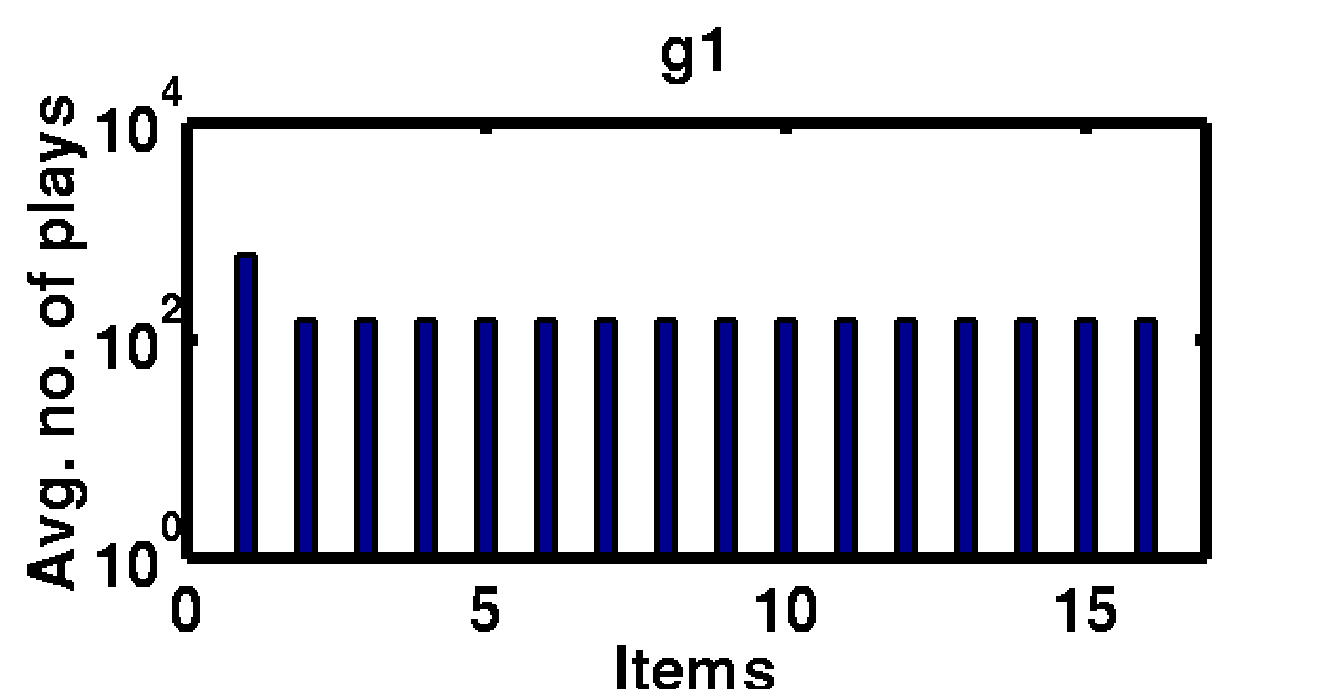}
		\hspace{0pt}
		%\vspace{10pt}
		\includegraphics[trim={0.cm 0 0cm 0},clip,scale=0.25,width=0.435\textwidth]{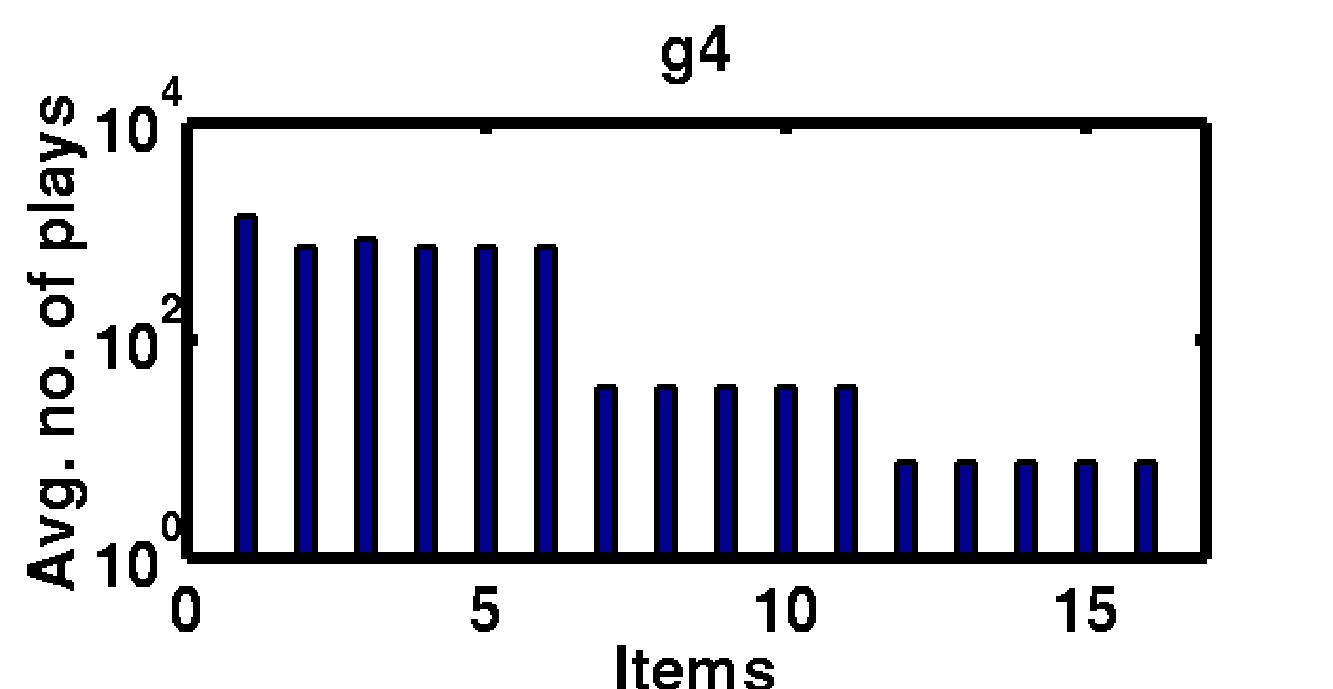}
		\hspace{0pt}
		%\vspace{10pt}
		\includegraphics[trim={0cm 0 0cm 0},clip,scale=0.25,width=0.435\textwidth]{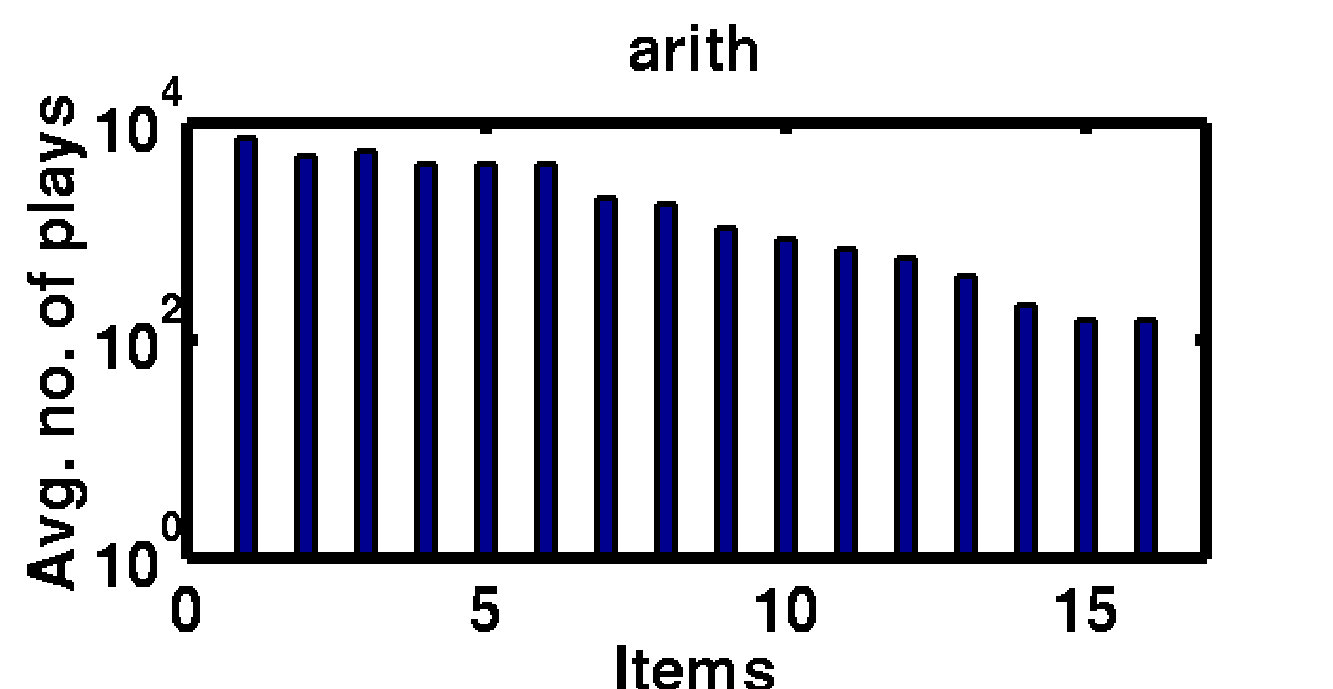}
		\hspace{0pt}
		%\vspace{10pt}
		\includegraphics[trim={0cm 0 0cm 0},clip,scale=0.25,width=0.435\textwidth]{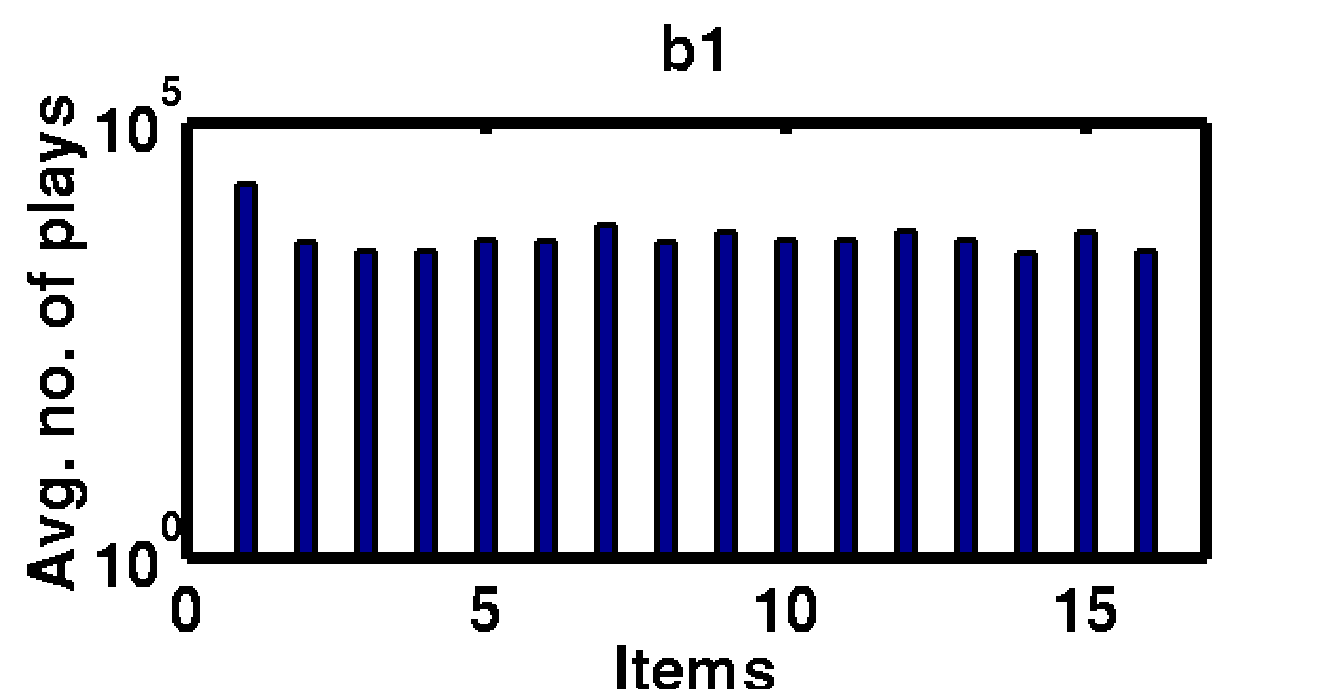}
		\vspace{-10pt}
		\caption{Survival time of different items (Itemwise sample complexity) in PW on $4$ different problem instances.}
		\label{fig:itemsc}
		\vspace{-5pt}
	\end{center}
\end{figure}

\textbf{Tradeoff: Sample-Complexity vs size of Top-ranking Feedback $m$.} In this case we verified the flexibility of \algfewf\, for \tf \,(Alg. \ref{alg:fetf}). We run it on different datasets with increasing size of top-ranking feedback ($m$). Again, justifying the claims of Thm. \ref{thm:sc_fetf}, Fig. \ref{fig:sc_vs_m} shows the sample complexity varies at a rate of $\frac{1}{m}$ (note that as $m$ is doubled, sample complexity gets about halved), while rest of the parameters (i.e. $k, \delta, \epsilon$) are kept unchanged.

\vspace{-10pt}
\begin{figure}[h!]
	\begin{center}
		\includegraphics[trim={0cm 0 0cm 0},clip,scale=0.25,width=0.435\textwidth]{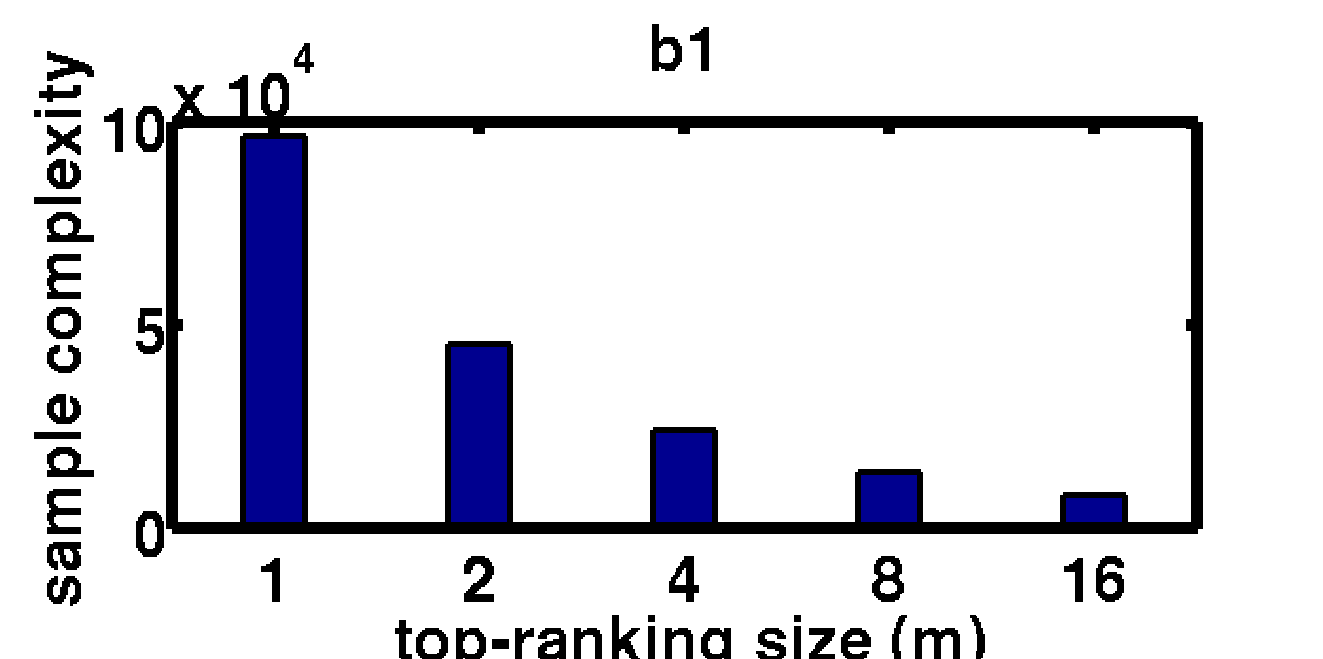}
		\hspace{0pt}
		%\vspace{10pt}
		\includegraphics[trim={0.cm 0 0cm 0},clip,scale=0.25,width=0.435\textwidth]{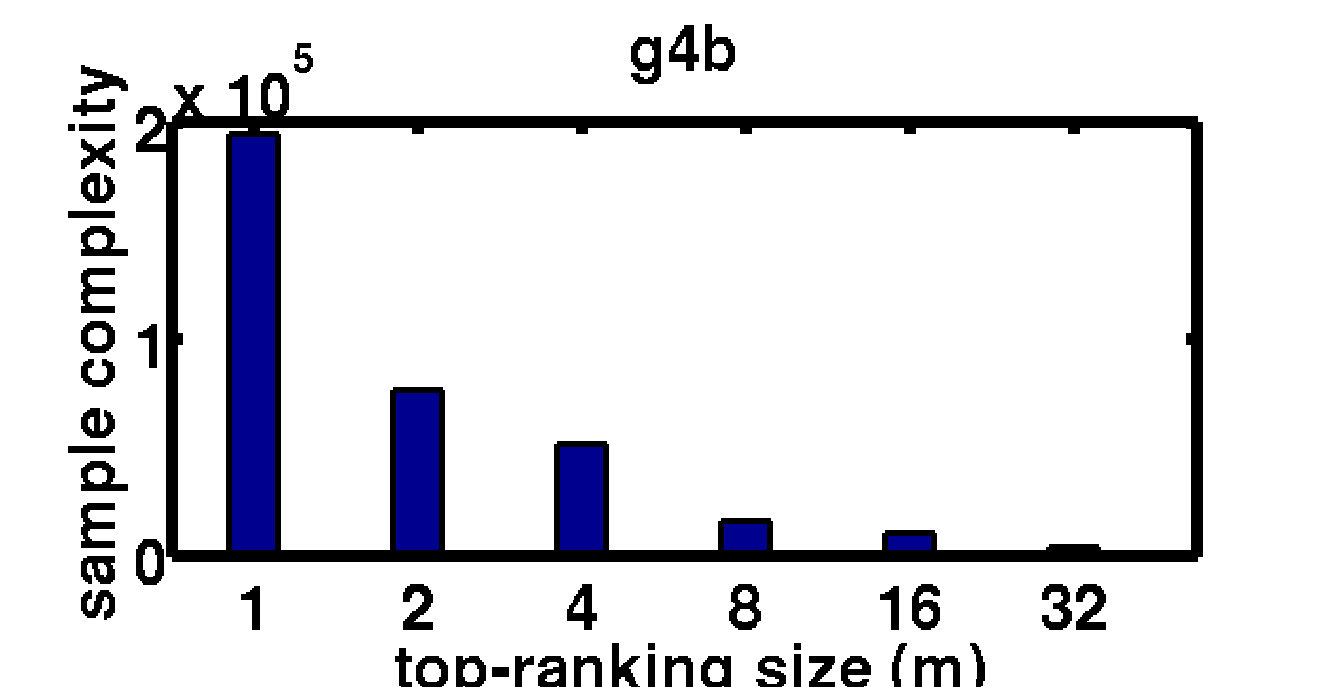}
		\hspace{0pt}
		%\vspace{10pt}
		\includegraphics[trim={0cm 0 0cm 0},clip,scale=0.25,width=0.435\textwidth]{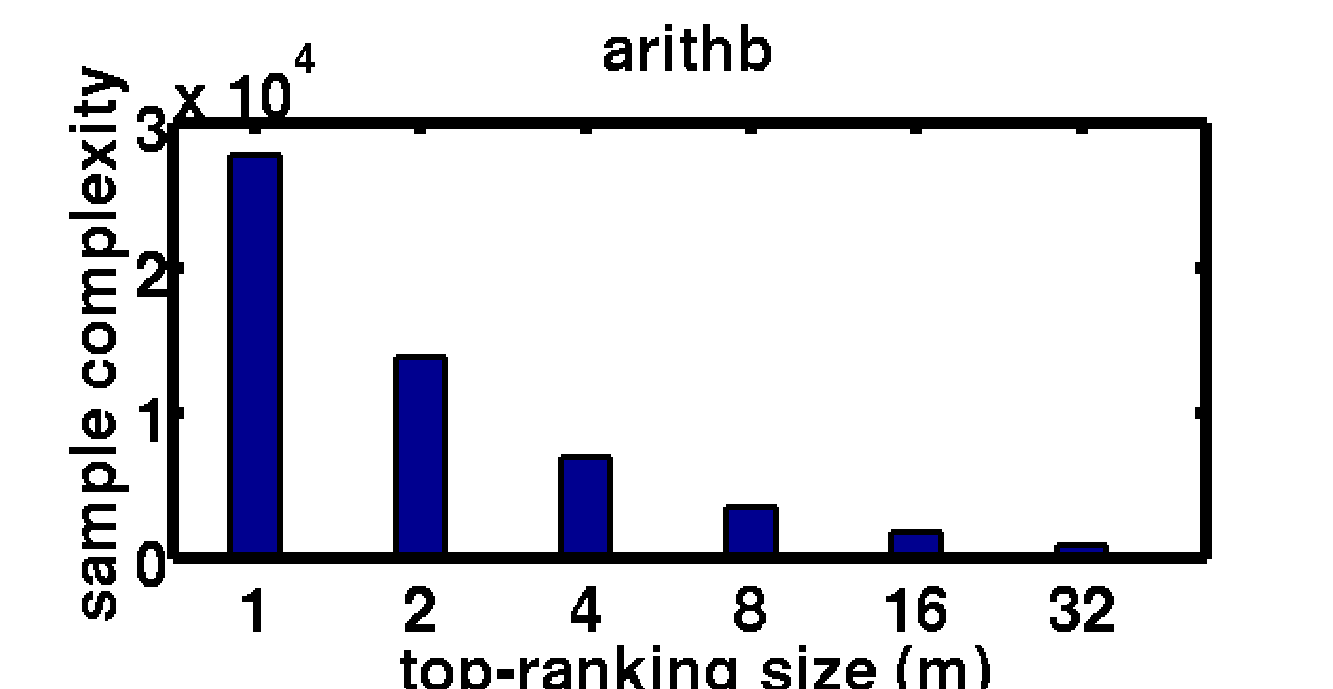}
		\hspace{0pt}
		%\vspace{10pt}
		\includegraphics[trim={0cm 0 0cm 0},clip,scale=0.25,width=0.435\textwidth]{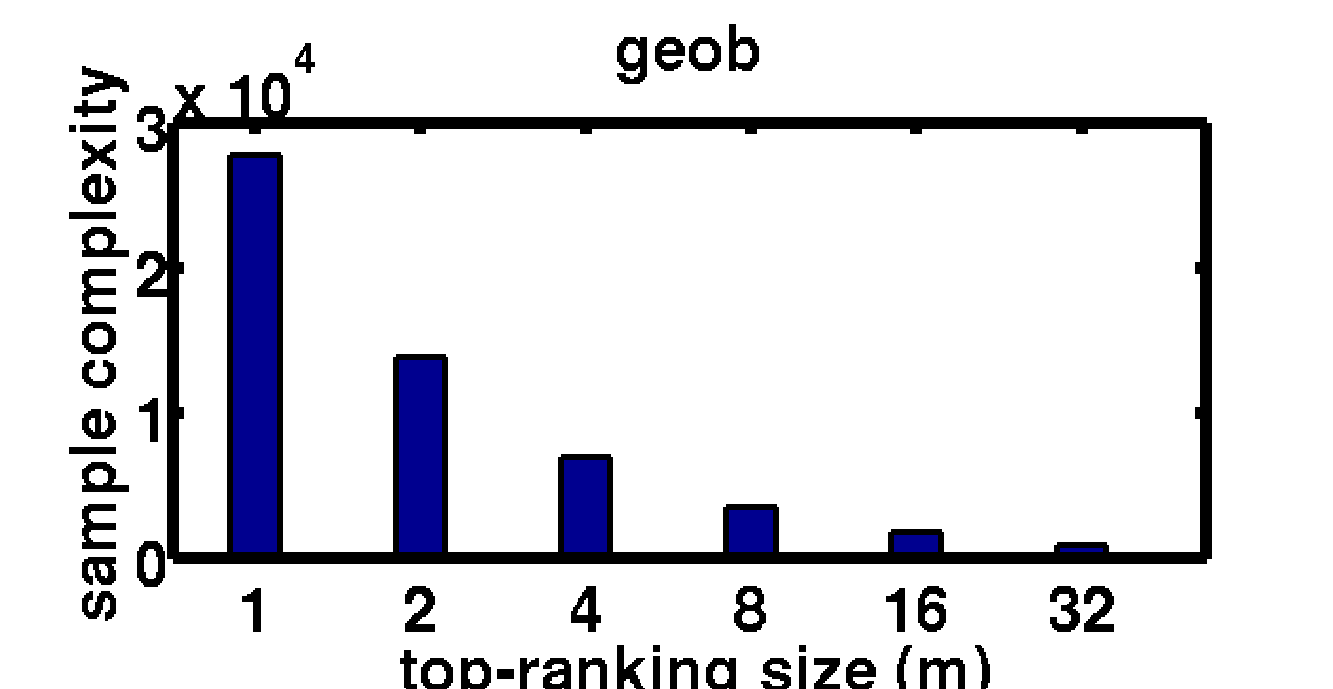}
		\vspace{-10pt}
		\caption{Sample-Complexity vs length of rank ordered feedback $(m)$ of PW for $4$ different problem instances.}
		\label{fig:sc_vs_m}
		\vspace{-15pt}
	\end{center}
\end{figure}

\subsection{Results: \ft\, setting}
\textbf{Success probability ($1-\delta$) vs Sample-Complexity ($\sc$).} Finally we analysed the success probability $(1-\delta)$ of algorithm \algfttf \, (UA) for varying sample complexities $(Q)$, keeping $\epsilon$ fixed at $(\Delta_{\min})/2$. Fig. \ref{fig:del_vs_sc} shows that the algorithm identifies the \bi\, with higher confidence with increasing $Q$---justifying its $O(\exp(-Q)$ error confidence rate as proved in Thm. \ref{thm:pr_fttf}. %We also run this experiment on harder and bigger \pl\, instances to observe effect of sample complexity on best item recovery rate. 
Note that \textbf{g4} being the easiest instance, it reaches the maximum success rate $1$ at a much smaller $Q$, compared to the rest. 
By construction, DnB is not designed to operate in \ft\, setup, but due to lack of any other existing baseline, we still use it for comparison force terminating it if the specified sample complexity is exceeded, and as expected,
here again it performs poorly in the lower sample complexity region.% due to its non-adaptive nature.

\vspace{-10pt}
\begin{figure}[h!]
	\begin{center}
		\includegraphics[trim={0cm 0 0cm 0},clip,scale=0.25,width=0.435\textwidth]{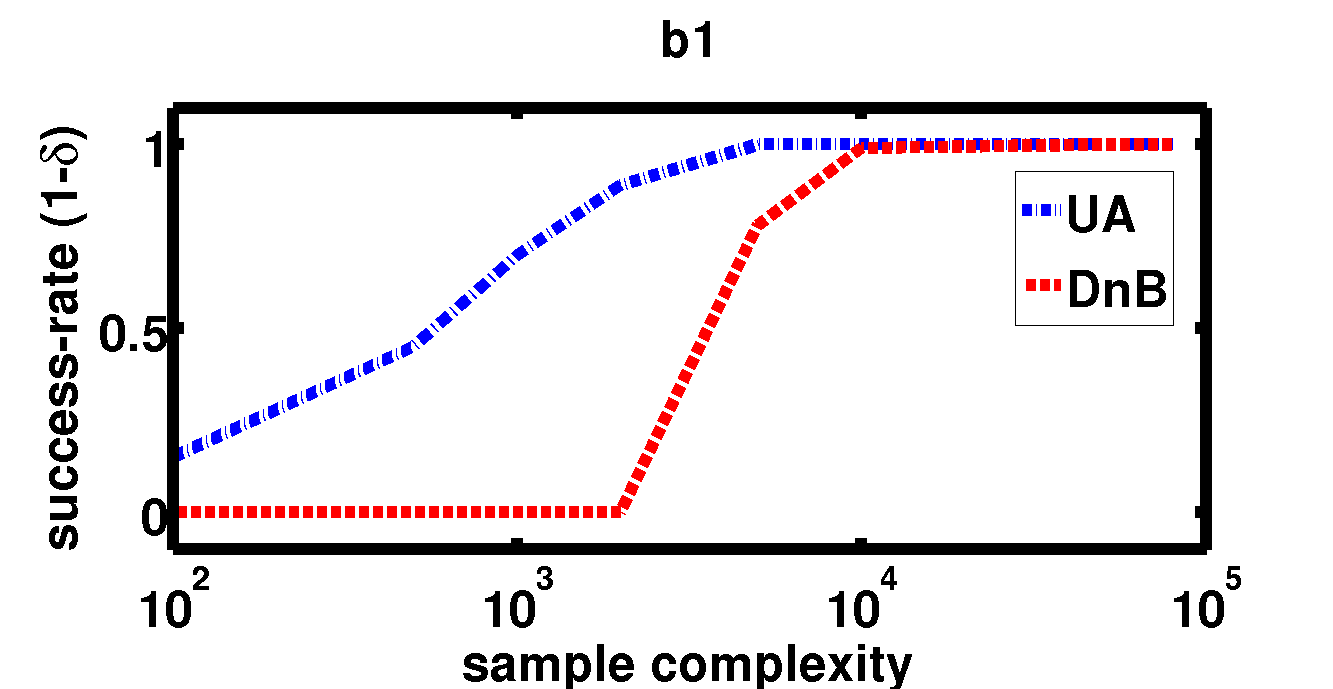}
		\hspace{0pt}
		%\vspace{10pt}
		\includegraphics[trim={0.cm 0 0cm 0},clip,scale=0.25,width=0.435\textwidth]{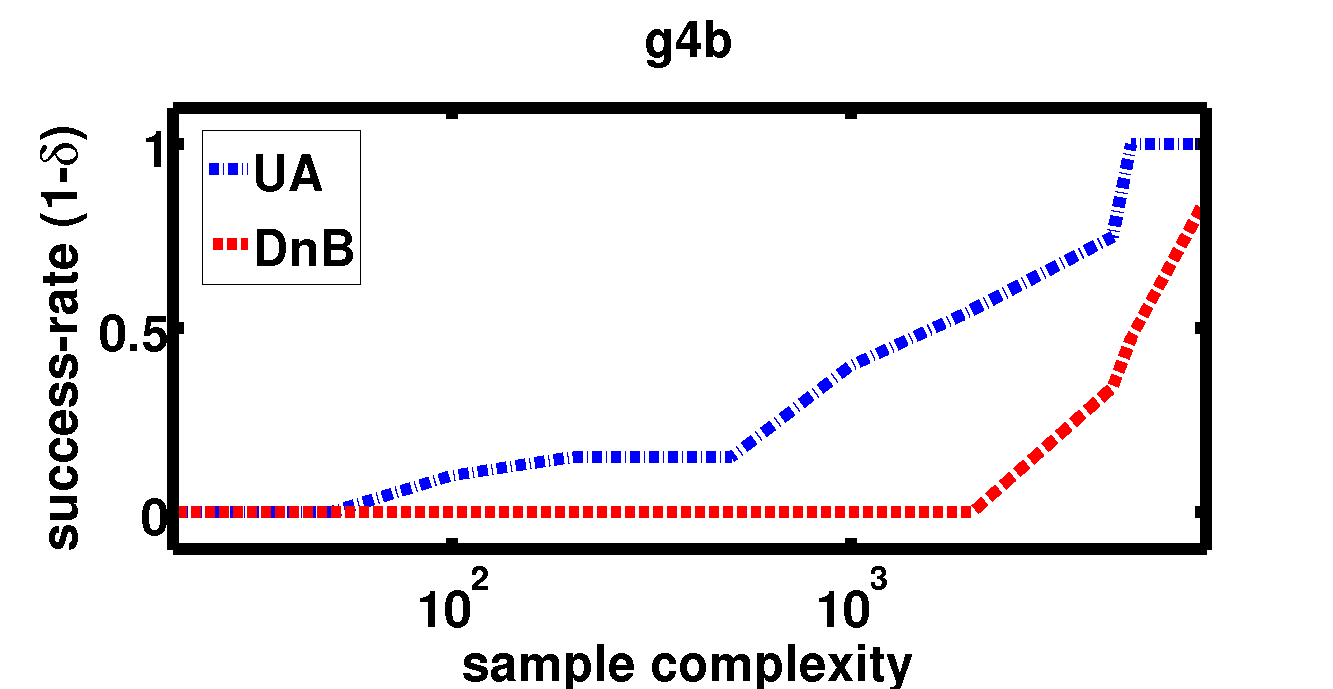}
		\hspace{0pt}
		%\vspace{10pt}
		\includegraphics[trim={0cm 0 0cm 0},clip,scale=0.25,width=0.435\textwidth]{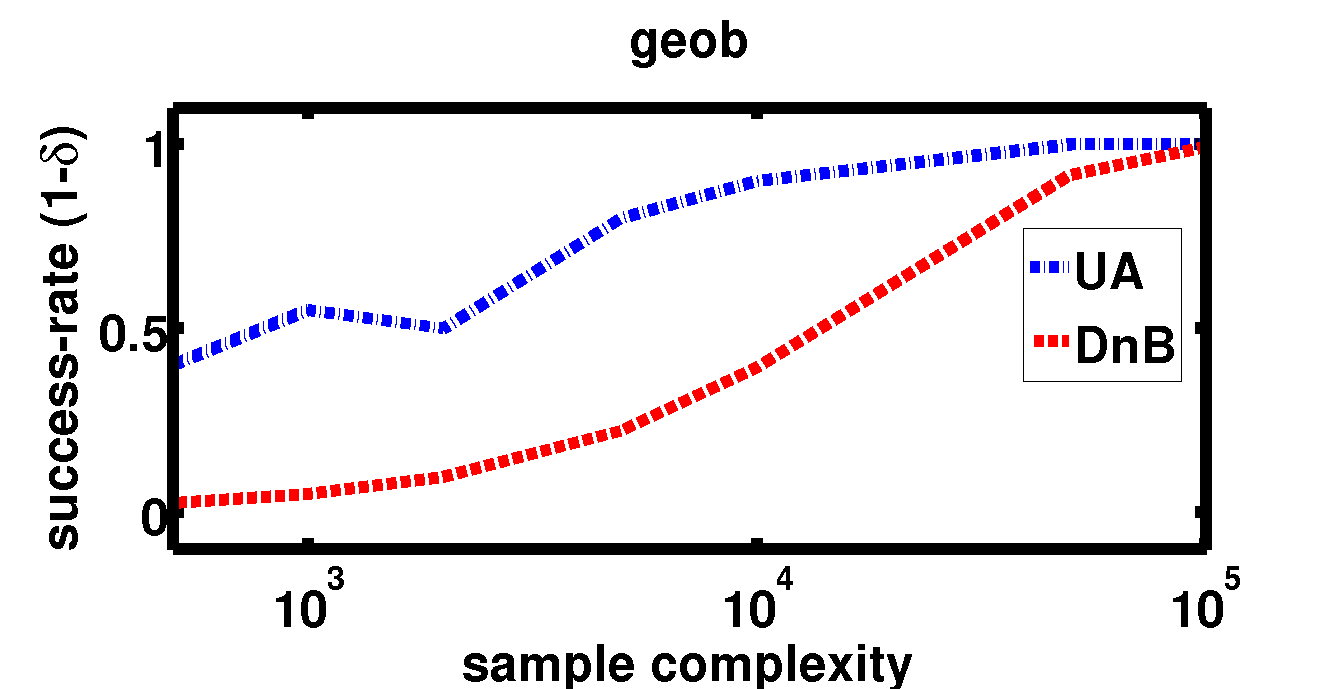}
		\hspace{0pt}
		%\vspace{10pt}
		\includegraphics[trim={0cm 0 0cm 0},clip,scale=0.25,width=0.435\textwidth]{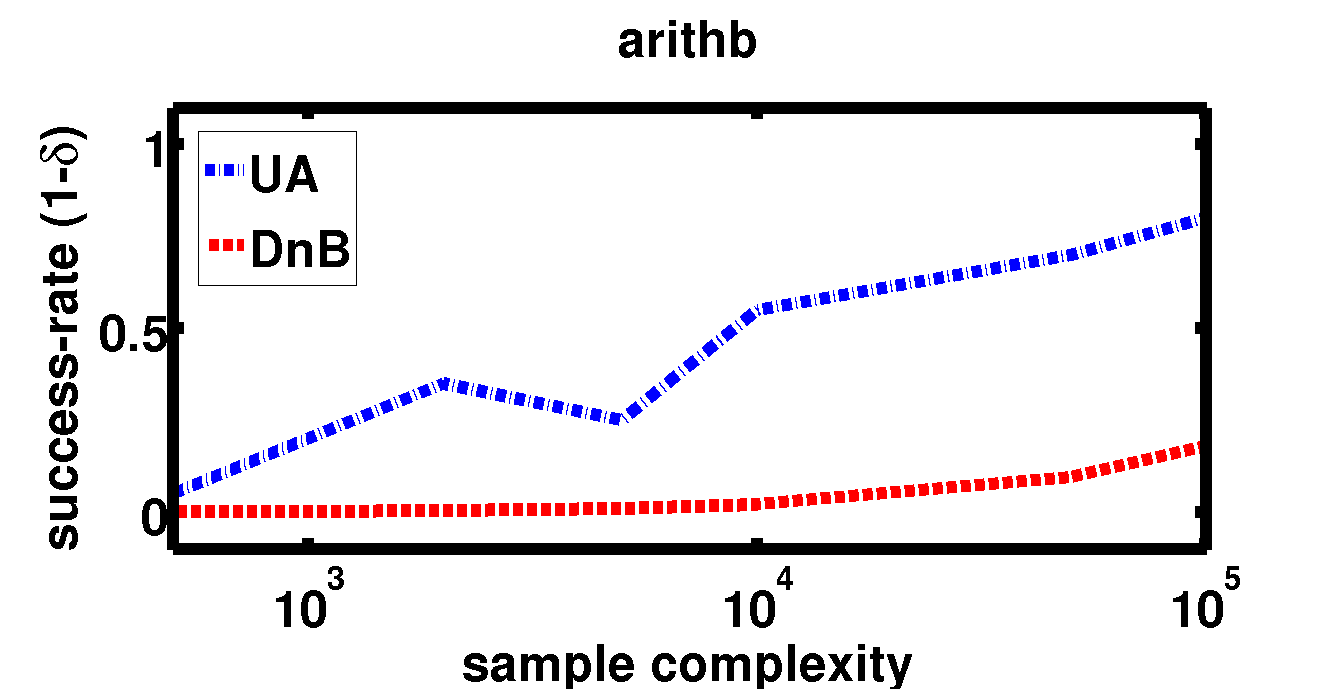}
		\vspace{-10pt}
		\caption{Comparative performances of PW and DnB in terms of Success probability ($1-\delta$) vs Sample-Complexity ($Q$) across $4$ different problem instances.}
		\label{fig:del_vs_sc}
		\vspace{-15pt}
	\end{center}
\end{figure}

\vspace*{-5pt}

\section{Conclusion and Future Work}
\label{sec:conclusion}
\vspace*{-5pt}
%We designed statistically efficient online learning algorithms for optimizing items of \pl\, subset choice with tight, instance-dependent sample complexities.
 Moving forward, it would be interesting to explore similar algorithmic and statistical questions in the context of other common subset choice models such as the Mallows model, Multinomial Probit, etc. It would also be of great practical interest to develop efficient algorithms for large item sets, especially when there is structure among the parameters to be exploited. One can also aim to develop instant dependent guarantees for other `learning from relative feedback' objectives, e.g. PAC-ranking \cite{Busa_pl}, top-set identification \cite{Busa_top} etc., both in fixed confidence as well as fixed budget setting.
\section*{Acknowledgements}
We thank Praneeth Netrapalli for insightful discussions.

%\textbf{Do not} include acknowledgements in the initial version of the paper submitted for blind review.

% In the unusual situation where you want a paper to appear in the
% references without citing it in the main text, use \nocite
%\nocite{langley00}

%\newpage
\bibliographystyle{plainnat}
\bibliography{inst-opt-bbpl}  % put name of your .bib file here

\newpage

\appendix
\appendix
\onecolumn{

\section*{\centering\Large{Supplementary: From PAC to Instance Optimal Sample Complexity in the Plackett-Luce Model}}
\vspace*{1cm}

\allowdisplaybreaks

\section{Related Works}
\label{app:rel}

{\bf Related work.} %\red{to write}
%Chen SODA paper; dueling bandits -- instance indep and dependent results, Szorenyi et al; standard MAB -- many instance-dep algs like LUCB were known, boosting median elim to instance dep result by Karnin et al
For classical multiarmed bandits setting, there is a well studied literature on PAC-arm identification problem \cite{Even+06,Audibert+10,Kalyanakrishnan+12,Karnin+13,LilUCB}, where the learner gets to see a noisy draw of absolute reward feedback of an arm upon playing a single arm per round.
Some of the existing results on dueling bandits line of works also focuses on PAC learning from pairwise preference feedback for best arm identification problem
\cite{BTM,SAVAGE,Busa_pl,Busa_mallows}, or even more general problem objectives e.g. PAC top set recovery \cite{Busa_top,MohajerIcml+17,ChenSoda+17}, or PAC-ranking of items \cite{Busa_aaai,falahatgar_nips}, even in the feedback setup of noisy comparisons \cite{braverman+08,nisarg+13}. There are also very few recent developments that focuses on learning for subsetwise feedback in an online setup \cite{Sui+17,Brost+16,SG18,SGwin18,Ren+18,ChenSoda+18}.
Some of the existing work also explicitly consider the Plackett-Luce parameter estimation problem with subset-wise feedback but for offline setup only \cite{SueIcml+17,KhetanOh16}. 
While most of the above work address the ($\epsilon,\delta$)-PAC recovery problem, i.e. finding an `$\epsilon$-approximation' of the desired (set of) item(s) with probability at least $(1-\delta)$, few of them also focuses of instant dependent PAC recovery guarantees where the sample complexity explicitly depends of the parameters of the underlying model, e.g. for classical multiarmed bandits \cite{Audibert+10,Karnin+13,Kalyanakrishnan+12}, or even for preference based bandits \cite{Busa_pl,ChenSoda+18}.

\section{Appendix for Sec. \ref{sec:fe}}

%%%%%%%%%%%%%%%%%%%%%%%%%%%%%%%%%

\subsection{Subroutines used in \algfewf\,(Alg. \ref{alg:fewf})}
\label{app:subrout}

\textbf{\algdiv\, subroutine:} Partition a given set of items $\cA$ into $B = \ceil{\frac{|\cA|}{k}}$ equally sized batches $\cB_1, \cB_2, \ldots \cB_B$, each of size at most $k$.

\begin{center}
\vspace*{-5pt}
\begin{algorithm}[h]
   \caption{\textbf{\algdiv} subroutine}
   \label{alg:div}
\begin{algorithmic}[1]
   \STATE {\bfseries Input:} Set of items: $\cA \subseteq [n]$, Batch size: $k \in [n]$
   \STATE $B \leftarrow \ceil{\frac{|\cA|}{k}}$
   \STATE Divide $\cA$ into $B$ subsets $\cB_1, \cB_2, \ldots \cB_B$ such that $\cB_i \cap \cB_j = \emptyset$, $\cup_{i=1}^B\cB_i = \cA$ and $|\cB_i| = k\,, \forall i \in [B-1]$
   \STATE {\bfseries Output:} $B$ batches $\cB_1, \cB_2, \ldots \cB_B$
\end{algorithmic}
\end{algorithm}
%\vspace*{-10pt}
\end{center}

\noindent 
\textbf{\algs\, subroutine:} Our proposed algorithm  relies on a black-box subroutine for efficient estimation of  sum of the \pl\, model score parameters ($\theta$) of any given subset $S \subseteq [n]$, which we denote by $\Theta_S := \sum_{i \in S}\theta_i$. 
We achieve this with the subroutine \algs\, (Alg. \ref{alg:algs}) which requires a pivot element $b \in [n]$ to estimate the sum of the score parameters of the given set $S$ (i.e. $\Theta_S$): The algorithm simply plays the subset $S \cup \{b\}$ sufficiently many times and estimate $\Theta_S$ based on the relative win counts of items in $\Theta_S$ w.r.t. pivot $b$.
Under the assumption that $b \in [n]$ is a sufficiently good item such that $\theta_b > \frac{1}{2}$, Thm. \ref{thm:algs} shows Alg. \ref{alg:algs} successfully estimates the relative scores of any subset $S$ (upto multiplicative constants) with high confidence $(1-\delta_s)$.

\begin{center}
\vspace*{-5pt}
\begin{algorithm}[H]
   \caption{\textbf{\algs}$(S,b,\delta)$ subroutine}
   \label{alg:algs}
\begin{algorithmic}[1]
   \STATE {\bfseries Input:} Set of items: $S \subseteq [n]$, pivot $b$, and confidence parameter $\delta$
   \REPEAT
   \STATE Play $S\cup\{b\}$ and observe the winner.
   %\UNTIL{$b$ is chosen for $t = \frac{567}{4\big(\epsilon/2\big)^2}\ln\big( \frac{8n}{\delta} \big)$ times {\color{red} ReCheck!}} 
   \UNTIL{$b$ wins for $d = 10\ln \frac{4}{\delta} $ times} %{\color{red} ReCheck!}
   \STATE Let $T$ be the total number of plays of $S\cup\{b\}$, and let $Z = T - d$.
   \STATE {\bfseries Return: $\frac{Z}{d}$} 
\end{algorithmic}
\end{algorithm}
%\vspace*{-10pt}
\end{center}

\begin{restatable}[{\algs\, high probability estimation guarantee}]{thm}{estalgs}
\label{thm:algs}
Let $\thets := \sum_{i \in S}\theta_i$. Given $\thets > \theta_b$, with probability at least $(1-\delta)$: 

i. the algorithm terminates in at most $\frac{10(\thetb+\thets)}{\thetb}\ln \frac{2}{\delta}$ rounds and,

ii. the output returned by \algs\, (Alg. \ref{alg:algs}) satisfies: 
\[
 \Big | \frac{Z}{d} - \frac{\thets}{\theta_b}\Big| \le \frac{1}{2}\max\bigg( \frac{\thets}{\theta_b},1\bigg)
\]
\end{restatable}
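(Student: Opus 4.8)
The plan is to recognize the sampling scheme as a negative-binomial (inverse-sampling) procedure and to reduce both parts of the claim to binomial tail bounds. First I would observe that each independent play of $S \cup \{b\}$ produces $b$ as its winner with probability $p_b = \frac{\theta_b}{\Theta_S + \theta_b}$ and some item of $S$ otherwise, so the stopping time $T$ is exactly the number of plays needed to accumulate $d = 10\ln\frac{4}{\delta}$ $b$-wins, while $Z = T - d$ counts the $S$-wins collected in the meantime. Hence $Z$ is a sum $Z = \sum_{j=1}^{d} G_j$ of $d$ i.i.d. geometric variables, where $G_j$ is the number of $S$-wins strictly between the $(j-1)$-th and $j$-th $b$-win, each with mean $\frac{1-p_b}{p_b} = \frac{\Theta_S}{\theta_b} =: \mu$. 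Consequently $\E[Z/d] = \mu$, so the estimator $Z/d$ is unbiased for the target ratio $\Theta_S/\theta_b$, and the whole task is to control its fluctuations.

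For part (ii), the hypothesis $\Theta_S > \theta_b$ gives $\mu > 1$, so $\max(\mu,1) = \mu$ and the claim reduces to the two-sided multiplicative statement $\tfrac{1}{2}\mu \le Z/d \le \tfrac{3}{2}\mu$ with probability at least $1 - \delta$. I would prove this through the standard duality between the inverse-sampling stopping time and a fixed-horizon binomial count: writing $W(t) \sim \mathrm{Bin}(t, p_b)$ for the number of $b$-wins in $t$ fixed plays, one has $\{T \le t\} = \{W(t) \ge d\}$, which turns the upper tail of $Z$ into a lower-tail binomial deviation and the lower tail of $Z$ into an upper-tail one. Evaluating $W(\cdot)$ at the two critical horizons $t = d(1 + \tfrac{3}{2}\mu)$ and $t = d(1 + \tfrac{1}{2}\mu)$, where $\E[W(t)] = t/(\mu+1)$ is separated from $d$ by a constant multiplicative factor, and applying the multiplicative Chernoff bounds $\Pr(W \le (1-\gamma)\E W) \le e^{-\gamma^2 \E W/2}$ and $\Pr(W \ge (1+\gamma)\E W) \le e^{-\gamma^2 \E W/3}$, drives each of the two failure events below $\delta/2$ once $d = \Theta(\ln\frac{1}{\delta})$; summing them completes part (ii).

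Part (i) then follows from the same machinery applied only to the upper tail. Since $\E[T] = d/p_b = \frac{10(\Theta_S + \theta_b)}{\theta_b}\ln\frac{4}{\delta}$, the upper-tail control of $Z$ already established above certifies $T = Z + d \le d\bigl(1 + \tfrac{3}{2}\mu\bigr) \le \tfrac{3}{2}\,\frac{\Theta_S + \theta_b}{\theta_b}\cdot 10\ln\frac{4}{\delta}$ with the same high probability, which is exactly the claimed form $\frac{10(\theta_b + \Theta_S)}{\theta_b}\ln\frac{2}{\delta}$ after re-expressing $d$ and collecting constants.

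The main obstacle I anticipate is quantitative: pinning down the constants. Geometric summands are sub-exponential rather than sub-Gaussian, so the upper tail of $Z$ (equivalently, the event that $T$ overshoots) genuinely needs the exponential Chernoff estimate and cannot be handled by a Gaussian bound; the cleanest way to keep the constants honest is precisely the reduction to bounded Bernoulli variables $W(t)$ above, rather than manipulating the negative-binomial moment generating function directly. The one place where the calculation must be done carefully is verifying that $d = 10\ln\frac{4}{\delta}$ successes simultaneously suppress both the $\tfrac{1}{2}\mu$ lower-deviation and the $\tfrac{3}{2}\mu$ upper-deviation probabilities across the full range $\mu \in (1,\infty)$, since the relevant Chernoff exponent is weakest near $\mu = 1$; this is exactly where the constant in $d$ (and in the termination bound) has to be chosen with care.
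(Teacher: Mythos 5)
Your overall strategy is a legitimate alternative to the paper's: where the paper imports a ready-made concentration inequality for sums of geometric random variables (Lemma 7 of \cite{SGrank18}) and then merely tunes $\eta$ in two cases, you propose to derive the concentration from scratch via the inverse-sampling duality $\{T \le t\} = \{W(t) \ge d\}$ with $W(t) \sim \mathrm{Bin}\big(t, \theta_b/(\Theta_S+\theta_b)\big)$. That reduction is valid, the identification of $Z/d$ as an unbiased estimator of $\mu = \Theta_S/\theta_b$ is correct, and your observation that the hypothesis $\Theta_S > \theta_b$ collapses $\max(\mu,1)$ to $\mu$ (so only one of the paper's two cases is needed) is a genuine simplification. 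Deriving part (i) directly from the upper tail of part (ii) is also cleaner than the paper's separate Chernoff argument on the win count.

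However, there is a concrete step that fails: the claim that the two Chernoff applications drive each failure event below $\delta/2$ with $d = 10\ln\tfrac{4}{\delta}$. Take $\mu = 1$, the worst case you yourself identify. For the upper tail you evaluate $W$ at $t = d(1+\tfrac{3}{2}\mu) = \tfrac{5}{2}d$, where $\E W = \tfrac{5}{4}d$, so the bad event $\{W < d\}$ is a relative deviation of only $\gamma = \tfrac{1}{5}$; your stated bound $e^{-\gamma^2 \E W/2}$ gives $e^{-d/40}$, which with $d = 10\ln\tfrac{4}{\delta}$ is $(\delta/4)^{1/4}$ --- nowhere near $\delta/2$. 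Even the sharp KL form of the binomial tail bound gives only about $e^{-d/20} = (\delta/4)^{1/2}$ here. So with the tools you invoke, the theorem with $d = 10\ln\tfrac{4}{\delta}$ is out of reach by a polynomial factor in $\delta$; you would need roughly $d = 40\ln\tfrac{4}{\delta}$, which correspondingly inflates the round bound in part (i) (indeed, even your own $T \le d(1+\tfrac{3}{2}\mu)$ already gives $\tfrac{15(\theta_b+\Theta_S)}{\theta_b}\ln\tfrac{4}{\delta}$, exceeding the stated $\tfrac{10(\theta_b+\Theta_S)}{\theta_b}\ln\tfrac{2}{\delta}$). The paper lands on $\delta/2$ only because the quoted Lemma 7 has exponent $\tfrac{2d\eta^2}{(1+\mu)(\eta+1+\mu)}$, which at $\mu=1$, $\eta=\tfrac{1}{2}$ equals $d/10$; that exponent is not reproducible from the standard multiplicative Chernoff bounds via your binomial reduction. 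You correctly predicted where the difficulty sits, but flagging it is not the same as resolving it: as written, your argument establishes the statement only with strictly worse constants in both $d$ and the termination bound.
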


\begin{proof}
Let $X_i$ denotes the time iteration when $b$ wins for the $i^{th}$ time, $\forall i \in [d]$. Note that this implies $X_i \sim \text{Geometric}\Big(\frac{\theta_b}{\thets + \theta_b}\Big)$, $\forall i \in [d]$. Then
from Lem. $7$ of \cite{SGrank18}, we have for any $\eta > 0$,
\begin{align*}
Pr\Big( \Big| \frac{Z}{d} - \frac{\thets}{\theta_b} \Big| \ge \eta \Big) \le 2\exp\Bigg( - \frac{2 d \eta^2}{\Big( 1 + \frac{\thets}{\theta_b}\Big)\Big( \eta + 1 + \frac{\thets}{\theta_b} \Big)} \Bigg).
\end{align*}

We first want to get the right hand side $2\exp\Bigg( - \frac{2 d \eta^2}{\Big( 1 + \frac{\thets}{\theta_b}\Big)\Big( \eta + 1 + \frac{\thets}{\theta_b} \Big)} \Bigg) \le \frac{\delta}{2}$, which further implies to have $d \ge \frac{\Big( 1 + \frac{\thets}{\theta_b}\Big)\Big( \eta + 1 + \frac{\thets}{\theta_b} \Big)}{2\eta^2} \ln \frac{4}{\delta}$. Towards this we now would consider two cases:

\textbf{Case 1:} Suppose $\frac{\thets}{\theta_b} \ge 1$:
Then we can set $\eta = \frac{\thets}{2\thetb}$ and thus one must have: 
\begin{align*}
\frac{\Big( 1 + \frac{\thets}{\theta_b}\Big)\Big( \eta + 1 + \frac{\thets}{\theta_b} \Big)}{2\eta^2} \ln \frac{4}{\delta}
& = \frac{\Big( \frac{\theta_b}{\thets} + 1 \Big)\Big( \frac{\theta_b}{\thets} + \frac{3}{2} \Big)}{2(1/4)} \ln \frac{4}{\delta} \le \frac{\Big( 2 \Big)\Big( \frac{5}{2} \Big)}{2(1/4)} \ln \frac{4}{\delta} = 10 \ln \frac{4}{\delta} \le d
\end{align*}

\textbf{Case 2:} Suppose $\frac{\thets}{\theta_b} < 1$: In this case we may set $\eta = \frac{1}{2}$ so then it suffices to have
\begin{align*}
\frac{\Big( 1 + \frac{\thets}{\theta_b}\Big)\Big( \eta + 1 + \frac{\thets}{\theta_b} \Big)}{2\eta^2} \ln \frac{4}{\delta}
& \le \frac{\Big( 2 \Big)\Big( \frac{5}{2} \Big)}{2/4} \ln \frac{4}{\delta} = 10 \ln \frac{4}{\delta}
\end{align*}

Thus combining both cases we get with probability at least $(1-\delta/2)$: 
$\Big| \frac{Z}{d} - \frac{\thets}{\theta_b} \Big| < \frac{1}{2}\max\bigg( \frac{\thets}{\theta_b},1\bigg)$.

So we are only left to prove the required sample complexity of \algs\, to yield $d = 10\ln \frac{4}{\delta}$ wins of $\thetb$. For this, note that at any round item $b$ wins with probability $\frac{\thetb}{\thets+\thetb}$. So for any fixed $\tau$ rounds $E[d] =  \frac{\thetb \tau}{\thets+\thetb}$. Then applying multiplicative Chernoff bounds, we know that for any $\epsilon \in (0,1)$,
\[
Pr\Big( d \le (1-\epsilon)E[d] \Big) \le \exp\Big(-\frac{E[d]\epsilon^2}{2}\Big), 
\]
which implies whenever $\tau \ge \frac{20(\thets + \thetb)}{\thetb \epsilon^2}\ln \frac{4}{\delta}$, $d \ge (1-\epsilon)\frac{\tau \thetb}{\thetb + \thets}$ with probability at least $(1-\delta/4)$. Finally noting that we need $d \ge 10\ln \frac{4}{\delta}$, this implies we can easily set $\epsilon = \frac{1}{2}$ so that
\begin{align*}
d & \ge (1-\epsilon)\frac{20(\thets+\thetb)}{\thetb\epsilon^2}\frac{\thetb}{\thetb + \thets}\ln \frac{4}{\delta} \ge 10 \ln \frac{4}{\delta},
\end{align*}
and the claim follows.
\end{proof}

\begin{restatable}[]{lem}{lemestalgs}
\label{lem:algs}
Let us denote $\thetsh: =  \max($\algs$(S,b,\delta),1)$.
Consider the notations introduced in Thm. \ref{thm:algs}, %denote by $\thetsh = \max\Big( 2\frac{Z}{d}+1, 2 \Big)$ 
Then with probability at least $(1-\delta)$, $ \max(1, \frac{\thets}{2\theta_b}) \le \frac{Z}{d} \le \max( \frac{\thets}{2\theta_b}, \frac{3}{2}) $, and the algorithm \algs\, terminates in at most ${40(\thets + \theta_b)}\ln \frac{4}{\delta}$ many iterations.
\end{restatable}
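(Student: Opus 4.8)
The plan is to read \Lem{lem:algs} as a clean reformulation of the concentration guarantee already established in \Thm{thm:algs}, so that the entire argument becomes deterministic once we condition on the good event. First I would invoke \Thm{thm:algs} and let $\cE$ denote the event, of probability at least $(1-\delta)$, on which both of its conclusions hold simultaneously: the two-sided estimate $\bigl|\tfrac{Z}{d} - \tfrac{\thets}{\thetb}\bigr| \le \tfrac12\max\bigl(\tfrac{\thets}{\thetb},1\bigr)$ and the termination bound of $\tfrac{10(\thetb+\thets)}{\thetb}\ln\tfrac{2}{\delta}$ rounds. All remaining steps are carried out on $\cE$, so no further probability is spent and the claimed confidence $(1-\delta)$ is inherited verbatim.

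For the bracketing of $\thetsh=\max(\tfrac{Z}{d},1)$ I would write $R:=\tfrac{\thets}{\thetb}$ and split on $R\ge 1$ versus $R<1$, mirroring the two cases in the proof of \Thm{thm:algs}. When $R\ge1$ the error is multiplicative and the estimate gives $\tfrac{Z}{d}\in[\tfrac{R}{2},\tfrac{3R}{2}]$; when $R<1$ it is additive and gives $\tfrac{Z}{d}\in[R-\tfrac12,R+\tfrac12]$. Taking the maximum with $1$ then yields the lower bound: in the first case $\thetsh\ge\tfrac{Z}{d}\ge\tfrac{R}{2}=\tfrac{\thets}{2\thetb}$ and also $\thetsh\ge1$, while in the second $\tfrac{\thets}{2\thetb}<\tfrac12<1$, so the clamp alone gives $\thetsh\ge 1$; together this is $\thetsh\ge\max(1,\tfrac{\thets}{2\thetb})$. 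For the upper bound, the multiplicative regime gives $\tfrac{Z}{d}\le\tfrac{3R}{2}$ and the additive regime gives $\tfrac{Z}{d}\le R+\tfrac12<\tfrac32$, whence $\thetsh\le\max(\tfrac{3\thets}{2\thetb},\tfrac32)$. The one genuinely load-bearing observation is that it is the clamp $\max(\cdot,1)$ defining $\thetsh$, rather than the raw output $\tfrac{Z}{d}$, that upgrades the weak (possibly negative) additive lower bound into the clean $\ge1$ guarantee the downstream analysis in Algorithm~\ref{alg:fewf} relies on.

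Finally, for the iteration count I would convert the termination bound of \Thm{thm:algs} using that the pivot $b$ is a good reference item with $\thetb>\tfrac12$ (this is exactly how \algs\ is invoked, with $b=b_s$ the $(\epsilon_s,\delta_s)$-PAC item), so that $\tfrac1{\thetb}\le 2$. Then $\tfrac{10(\thetb+\thets)}{\thetb}\ln\tfrac2\delta\le 20(\thetb+\thets)\ln\tfrac2\delta\le 40(\thetb+\thets)\ln\tfrac4\delta$, the last inequality merely absorbing constants and using $\ln\tfrac2\delta\le\ln\tfrac4\delta$. The main obstacle, such as it is, is not analytic but bookkeeping: keeping the multiplicative and additive error regimes of \Thm{thm:algs} separate and tracking how the $\max(\cdot,1)$ clamp interacts with each endpoint, since all probabilistic content is fully inherited from \Thm{thm:algs}.
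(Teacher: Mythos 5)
Your proposal is correct and matches the paper's approach exactly — the paper's own proof of this lemma is the single line ``directly follows from Thm.~\ref{thm:algs}'', and you have simply filled in the case analysis it leaves implicit, including the two genuinely necessary observations that the clamp $\max(\cdot,1)$ is what rescues the lower bound when $\thets/\thetb<1$ and that a lower bound $\theta_b>\tfrac12$ on the pivot (true for the way \algs\ is invoked) is needed to absorb the $1/\thetb$ in the termination constant. Note only that the upper bound you correctly derive is $\max\bigl(\tfrac{3\thets}{2\thetb},\tfrac32\bigr)$, whereas the lemma as printed states $\max\bigl(\tfrac{\thets}{2\thetb},\tfrac32\bigr)$; your version is the one that actually follows from Thm.~\ref{thm:algs} (and is the one consistent with the $6\thetk$ bound in Cor.~\ref{cor:algs}), so the printed statement appears to drop a factor of $3$ rather than your derivation being off.
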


\begin{proof}
The proof directly follows from Thm. \ref{thm:algs}. 
\end{proof}

\begin{restatable}[]{cor}{corestalgs}
\label{cor:algs}
Let $S \subseteq [n],\, |S| = k$, and let $\thetk = \max_{S \subseteq [n]\mid |S|=k}\sum_{i \in S}\theta_i$. With the notation of Lem. \ref{lem:algs}, if $b$ is an $\frac{1}{2}$-optimal item such that $\theta_b > \theta_1 - \epsilon$ for any $\epsilon \in \Big( 0, \frac{1}{2}\Big]$, then with probability at least $(1-\delta)$, $ \max(1, \thets/2) \le \frac{Z}{d} \le 6 \thetk $, and the  \algs\, algorithm terminates in at most ${80\thetk }\ln \frac{4}{\delta}$  iterations.
\end{restatable}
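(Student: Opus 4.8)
The plan is to obtain the corollary as an immediate specialization of Lemma~\ref{lem:algs}: that lemma already supplies the $(1-\delta)$ high-probability guarantee in terms of the generic quantities $\thets$ and $\thetb$, so the only work left is deterministic bookkeeping that replaces $\thetb$ and $\thets$ by the quantities appearing in the corollary, using the hypotheses on the pivot $b$ and the definition of $\thetk$. In particular, no new probabilistic argument is required; everything is carried out on the event of probability at least $(1-\delta)$ on which Lemma~\ref{lem:algs} holds.

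First I would pin down the range of $\thetb$. Using the normalization $\theta_1=\max_i\theta_i=1$, the hypothesis $\theta_b>\theta_1-\epsilon$ with $\epsilon\in(0,\tfrac12]$ gives $\theta_b>1-\epsilon\ge\tfrac12$, while $\theta_b\le\theta_1=1$ since no item can exceed the maximum. Hence $\tfrac12<\thetb\le 1$. I would also record the two structural facts about subset sums: $\thets\le\thetk$ because $|S|=k$ and $\thetk$ is the largest size-$k$ subset sum, and $\thetk\ge 1$ because any size-$k$ subset containing item $1$ already has sum at least $\theta_1=1$.

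With these in hand the two estimate bounds follow by substitution. For the lower bound, $\thetb\le 1$ yields $\tfrac{\thets}{2\thetb}\ge\tfrac{\thets}{2}$, so the lemma's inequality $\tfrac{Z}{d}\ge\max\!\big(1,\tfrac{\thets}{2\thetb}\big)$ upgrades to $\tfrac{Z}{d}\ge\max(1,\thets/2)$. For the upper bound, the strict inequality $\thetb>\tfrac12$ gives $\tfrac{\thets}{2\thetb}<\thets\le\thetk$, so the lemma's $\tfrac{Z}{d}\le\max\!\big(\tfrac{\thets}{2\thetb},\tfrac32\big)$ becomes $\tfrac{Z}{d}\le\max(\thetk,\tfrac32)$; invoking $\thetk\ge 1$ to absorb the additive constant $\tfrac32$ into $6\thetk$ then gives $\tfrac{Z}{d}\le 6\thetk$. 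Finally, the termination count $40(\thets+\thetb)\ln\tfrac{4}{\delta}$ of Lemma~\ref{lem:algs} is bounded via $\thets\le\thetk$ and $\thetb\le 1\le\thetk$, so $\thets+\thetb\le 2\thetk$ and the run length is at most $80\thetk\ln\tfrac{4}{\delta}$.

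The only point needing care — and the nearest thing to an obstacle — is the constant-chasing in the upper bound: one must use $\thetk\ge 1$ to guarantee that the stray constant $\tfrac32$ is dominated by $6\thetk$ (the factor $6$ is deliberately loose), and keep track that it is precisely the strict bound $\thetb>\tfrac12$ that turns the factor $\tfrac{1}{2\thetb}$ into something strictly below $1$. Everything else is a single-line substitution into the already-established Lemma~\ref{lem:algs}.
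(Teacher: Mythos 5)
Your proposal is correct and follows exactly the route of the paper's own (one-line) proof: the paper likewise derives the corollary by substituting $\thets \le \thetk$, $\theta_b > \tfrac{1}{2}$ (hence also $\theta_b \le \theta_1 = 1 \le \thetk$), and $\thetk \ge 1$ into the bounds of Lemma~\ref{lem:algs}, and your constant-chasing for the $6\thetk$ upper bound and the $80\thetk\ln\frac{4}{\delta}$ termination count matches what that substitution yields. You have merely written out the bookkeeping the paper leaves implicit.
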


\begin{proof}
The proof directly follows from Lem. \ref{lem:algs}, with noting that by definition for any $S \subseteq [n],\, |S| = k$,  $\thets \le \thetk$, $\theta_b > \frac{1}{2}$ and $\thetk \ge 1$, since we assume $\theta_1 = 1$ and of course $\theta_1 \in \thetk$.
\end{proof}

\noindent
\textbf{Rank-Breaking Subroutine} \cite{AzariRB+14,KhetanOh16}. 
This is a procedure of deriving pairwise comparisons from multiwise (subsetwise) preference information. Formally, given any set $S \subseteq [n]$, $m \le |S| < n$, if $\bsigma \in \bSigma_{S}^m$ denotes a possible \tf\, of $S$, \rb\, considers each item in $S$ to be beaten by its preceding items in $\bsigma$ in a pairwise sense and extracts out total $\sum_{i = 1}^{m}(k-i) = \frac{m(2k-m-1)}{2}$ such pairwise comparisons. For instance, given a full ranking of a set of $4$ elements $S = \{a,b,c,d\}$, say $b \succ a \succ c \succ d$, Rank-Breaking generates the set of $6$ pairwise comparisons: $\{(b\succ a), (b\succ c), (b\succ d), (a\succ c), (a\succ d), (c\succ d)\}$. %
%Similarly, given the ranking of only $2$ most preferred items, say $b \succ a$, it yields the $5$ pairwise comparisons $(b \succ a)$, $( b,a\succ c),(b,a\succ d)$ and $(b\succ a)$ etc. See Line $16$ of 
%The pseudo code of \rb \, is given in Algorithm \ref{alg:updt_win}, Appendix \ref{app:alg_fewf}.

\vspace*{-5pt}
\begin{center}
%\vspace*{-10pt}
\begin{algorithm}[h]
   \caption{\rb\, subroutine}
   \label{alg:updt_win}
\begin{algorithmic}[1]
   \STATE {\bfseries Input:} Subset $S \subseteq [n]$, such that $|S| = k$ ($n\ge k$) %and subset size: $k > 1$
   \STATE ~~~ A top-$m$ ranking $\bsigma \in \bSigma_{S}^m$, for some $m \in [k-1]$
   \STATE ~~~ Pairwise win-count $w_{ij}$ for each item pair $i,j \in S$
%   \STATE {\bfseries Initialize:} 
%   \STATE ~~~ $r_1 \leftarrow $  Select an item randomly from $[n]$
   \WHILE {$\ell = 1, 2, \ldots m$}
   	\STATE Update $w_{\sigma(\ell)i} \leftarrow w_{\sigma(\ell)i} + 1$, for all $i \in S \setminus\{\sigma(1),\ldots,\sigma(\ell)\}$
	\ENDWHILE
\end{algorithmic}
\end{algorithm}
%\vspace*{-10pt}
\end{center}
\vspace*{-5pt}

\subsection{Pseudo-code for \algep} %~(Alg. \ref{alg:epsdel})
\label{app:wiub_div}

\textbf{Algorithm description:} The algorithm \algep\, first divides the set of $n$ items into batches of size $k$, and plays each group sufficiently long enough, until a single item of that group stands out as the empirical winner in terms of its empirical pairwise advantage over the rest (again estimated through \rb). It then just retains this empirical winner for every group and recurses on the set of surviving winners, until only a single item is left behind, which is declared as the $(\epsilon,\delta)$-PAC item. The 
%Our algorithm appeals to the similar line of arguments proposed by \cite{SGwin18} for developing instance independent $(\epsilon,\delta)$-PAC subroutine for \fe \, objective. 
Its important to note that the sample complexity of our algorithm (see Thm. \ref{lem:up_algep}) offers an improved instance dependent guarantee (compared to the $O\Big( \frac{n}{m\epsilon^2}\ln \frac{k}{\delta} \Big)$ sample complexity algorithm \algdnb\, proposed by \cite{SGwin18}), which would turn out to be crucial for the \emph{instance-dependent sample-complexity} analyses of our main algorithms, Alg. \ref{alg:fewf} or \ref{alg:fetf}, later. (See proof of Thm. \ref{thm:sc_fewf} and \ref{thm:sc_fetf} respectively for details.) Though our proposed algorithm proceed along a line similar to \algdnb\, of \cite{SGwin18}, the crux of our proposed algorithm lies in sampling each subset $\cG_g$ just sufficiently enough in an adaptive way for only $O\bigg(\frac{\thetg}{m\epsilon^2} \ln \frac{2n}{\delta}\bigg)$ times---thanks to our sum estimation routine \algs\, (Alg. \ref{alg:algs})---instead of sampling them blindly for $O\bigg(\frac{k}{m\epsilon^2} \ln \frac{2n}{\delta}\bigg)$ times as proposed in \algdnb. To find the $(1/2,\delta)$-optimal item: $b \in [n]$ required to estimate $\thetg$, we can use the existing algorithms like \algdnb.
The complete algorithm is described in Alg. \ref{alg:epsdel}.

%%%%%%%%%%%%%%%%%%%%%%%%%%%%%%%%

\begin{center}
\begin{algorithm}[H]
   \caption{\textbf{\algep} (for {TR} feedback) }
   \label{alg:epsdel}
\begin{algorithmic}[1]
   \STATE {\bfseries Input:} 
   \STATE ~~~ Set of items: $[n]$, and subset size: $k > 2$ ($n \ge k \ge m$)
   \STATE ~~~ Error bias: $\epsilon >0$, and confidence parameter: $\delta >0$
    \STATE ~~~ A $(1/2,\delta)$-optimal item: $b \in [n]$, such that $\theta_b > \frac{1}{2}$
   \STATE {\bfseries Initialize:} 
   \STATE ~~~ $S \leftarrow [n]$, $\epsilon_0 \leftarrow \frac{\epsilon}{8}$, and $\delta_0 \leftarrow \frac{\delta}{2}$  
   \STATE ~~~ Divide $S$ into $G: = \lceil \frac{n}{k} \rceil$ sets $\cG_1, \cG_2, \cdots \cG_G$ such that $\cup_{j = 1}^{G}\cG_j = S$ and $\cG_{j} \cap \cG_{j'} = \emptyset, ~\forall j,j' \in [G], \, |G_j| = k,\, \forall j \in [G-1]$.
    \textbf{If} $|\cG_{G}| < k$, \textbf{then} set $\cR_1 \leftarrow \cG_G$  and $G = G-1$, \textbf{Else} $\cR_1 \leftarrow \emptyset$.
   %\REPEAT  
   \WHILE{$\ell = 1,2, \ldots$}
   \STATE Set $\delta_\ell \leftarrow \frac{\delta_{\ell-1}}{2}, \epsilon_\ell \leftarrow \frac{3}{4}\epsilon_{\ell-1}$
   \FOR {$g = 1,2, \cdots G$}
   \STATE $\thetg \leftarrow $ \algs$\big(\cG_g,b,\delta_\ell \big)$
    \STATE Initialize pairwise (empirical) win-count $w_{ij} \leftarrow 0$, for each item pair $i,j \in \cG_g$
	\FOR {$\tau = 1, 2, \ldots t := \left\lceil \frac{16\thetg}{m\epsilon_l^2} \ln \frac{2k}{\delta_\ell} \right\rceil$  }
   	\STATE Play the set $\cG_g$ (one round of battle)
   	\STATE Receive feedback: The top-$m$ ranking $\bsigma \in \bSigma_{\cG_{g}}^m$ 
   	\STATE Update win-count $w_{ij}$ of each item pair $i,j \in \cG_g$ using \rb$(\cG_g,\bsigma)$
   	\ENDFOR 
%   	\STATE $B_\ell \leftarrow \text{argmax}\{ i \in \cA \mid  \sum_{j \in \cA\setminus\{i'\}}~\1\big(w_{ij} \ge w_{ji}\big) \}$  
	\STATE Estimate pairwise win probabilities: $\forall i,j \in \cG_g \; \hat p_{i,j} = \frac{w_{ij}}{w_{ij}+w_{ji}}$ if $w_{ij}+w_{ji} > 0$, $\hat p_{i,j} = \frac{1}{2}$ otherwise 
   	\STATE If $\exists i \in \cG_g$ such that $\hp_{i j} + \frac{\epsilon_\ell}{2} \ge \frac{1}{2}, \, \forall j \in \cG_g$, then set $c_g \leftarrow i$, else select $c_g \leftarrow$ uniformly at random from $\cG_g$. Set $S \leftarrow S \setminus \left( \cG_g \setminus  \{c_g\}\right)$
%   \STATE Set $c_g \leftarrow \underset{i \in \cG_g}{\text{argmax}}\sum_{j \in \cG_g\setminus\{i\}}~\1\big(w_{ij} \ge w_{ji}\big)$ (break ties arbitrarily), 
   \ENDFOR
   \STATE $S \leftarrow S \cup \cR_\ell$
   \IF{$(|S| == 1)$}
   \STATE Break (out of the {\bf while} loop)
   \ELSIF{$|S|\le k$}
   \STATE $S' \leftarrow $ Randomly sample $k-|S|$ items from $[n] \setminus S$, and set $S \leftarrow S \cup S'$, $\epsilon_\ell \leftarrow \frac{2\epsilon}{3}$, $\delta_\ell \leftarrow {\delta}$  
%	\STATE Initialize pairwise (empirical) win-count $w_{ij} \leftarrow 0$, for each item pair $i,j \in S$
%	\FOR {$\tau = 1, 2, \ldots t\,(:= \frac{k}{m\epsilon_\ell^2}\ln \frac{2k}{\delta_\ell})$}
%   	\STATE Play the set $S$ (one round of battle)
%   	\STATE Receive TR feedback: $\bsigma \in \bSigma_{S^\tau_{m}}$, where $S^\tau_{m} \subseteq S$ such that $|S^\tau_{m}| = m$ 
%   	\STATE Update win-count $w_{ij}$ of each item pair $i,j \in S$ using \algupdt$(S,\bsigma)$
%   	\ENDFOR
%   	\STATE Define $\hat p_{i,j} = \frac{w_i}{w_i+w_j}, \, \forall i,j \in S$
%   	\STATE \textbf{If} $\exists i \in \cG_g$ such that $\hp_{i j} + \frac{\epsilon}{2} \ge \frac{1}{2}, \, \forall j \in \cG_g$, \textbf{then} set $c_g \leftarrow i$, \textbf{else} select $c_g \leftarrow$ uniformly at random from $\cG_g$, and  Break (go out of the while loop)
%  \ELSE
 \ENDIF
   \STATE Divide $S$ into $G: = \big \lceil \frac{|S|}{k} \big \rceil$ sets $\cG_1, \cdots \cG_G$ such that $\cup_{j = 1}^{G}\cG_j = S$, $\cG_{j} \cap \cG_{j'} = \emptyset, ~\forall j,j' \in [G], \, |G_j| = k,\, \forall j \in [G-1]$. \textbf{If} $|\cG_{G}| < k$, \textbf{then} set $\cR_{\ell+1} \leftarrow \cG_G$  and $G = G-1$, \textbf{Else} $\cR_1 \leftarrow \emptyset$.
  
   \ENDWHILE
%   \UNTIL{$(S == \emptyset)$}
   \STATE {\bfseries Output:} $r_*$, the single item remaining in $S$ %: an $\epsilon$-optimal item
\end{algorithmic}
\end{algorithm}
\vspace{-10pt}
\end{center}

\ubalgep*

%%%%%%%%%%%%%%%%%%%%%% PROOF %%%%%%%%%%%%%%%%%%%%%%%%%

\begin{proof}
For notational convenience we will use $\tp_{ij} = p_{ij} - \frac{1}{2}, \, \forall i,j \in [n]$.

We start by recalling a lemma from \cite{SGwin18} which will be used crucially in the analysis:

\begin{lem}\cite{SGwin18}
\label{lem:pl_sst}
For any three items $a,b,c \in [n]$ such that $\theta_a > \theta_b > \theta_c$, if $\tp_{ba} > -\epsilon_1$ and $\tp_{cb} > -\epsilon_2$, where $\epsilon_1,\epsilon_2 > 0$ and $(\epsilon_1+\epsilon_2) < \frac{1}{2}$, then $\tp_{ca} > -(\epsilon_1+\epsilon_2)$.
\end{lem}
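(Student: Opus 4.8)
The plan is to reduce this ``weak stochastic transitivity'' statement to a single scalar inequality among the PL parameters. First I would invoke the explicit form of the pairwise preference in the PL model: for a pair $\{i,j\}$ the winner distribution gives $p_{ij} = \frac{\theta_i}{\theta_i+\theta_j}$, so $\tp_{ij} = p_{ij} - \tfrac12 = \frac{\theta_i-\theta_j}{2(\theta_i+\theta_j)}$. Under this form each hypothesis becomes a bound on a parameter \emph{ratio}. Rewriting $\tp_{ba} > -\epsilon_1$ (equivalently $p_{ab} < \tfrac12 + \epsilon_1$) gives $\theta_a < \frac{1/2+\epsilon_1}{1/2-\epsilon_1}\,\theta_b$, where the division is legitimate precisely because $\epsilon_1 < \tfrac12$; symmetrically $\tp_{cb} > -\epsilon_2$ yields $\theta_b < \frac{1/2+\epsilon_2}{1/2-\epsilon_2}\,\theta_c$. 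The target conclusion $\tp_{ca} > -(\epsilon_1+\epsilon_2)$ is, by the same rearrangement, equivalent to $\theta_a < \frac{1/2+\epsilon_1+\epsilon_2}{1/2-\epsilon_1-\epsilon_2}\,\theta_c$, whose denominator is positive exactly because of the standing assumption $\epsilon_1+\epsilon_2 < \tfrac12$.

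Chaining the two hypothesis bounds yields $\theta_a < \frac{(1/2+\epsilon_1)(1/2+\epsilon_2)}{(1/2-\epsilon_1)(1/2-\epsilon_2)}\,\theta_c$, so it suffices to verify the single scalar inequality
\[
\frac{(1/2+\epsilon_1)(1/2+\epsilon_2)}{(1/2-\epsilon_1)(1/2-\epsilon_2)} \;\le\; \frac{1/2+\epsilon_1+\epsilon_2}{1/2-\epsilon_1-\epsilon_2}.
\]
I would clear denominators (all four factors are strictly positive under $\epsilon_1+\epsilon_2<\tfrac12$) and expand both sides; the whole argument then collapses to checking that the difference between the two sides is nonnegative, which after simplification is exactly $2\epsilon_1\epsilon_2(\epsilon_1+\epsilon_2)\ge 0$. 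This is the only genuine computation in the proof, and it is elementary; everything else is bookkeeping.

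The step I expect to require the most care is the rearrangement of each probabilistic hypothesis into a ratio bound, since this is where one must track that $\epsilon_1,\epsilon_2<\tfrac12$ and $\epsilon_1+\epsilon_2<\tfrac12$ keep every denominator strictly positive (otherwise an inequality would flip or become vacuous)---this is precisely where the hypothesis $\epsilon_1+\epsilon_2<\tfrac12$ is consumed. As a sanity check and a slicker alternative route, I would note the change of variables $r=\theta_a/\theta_b$, $s=\theta_b/\theta_c$ turns $\frac{\theta_i-\theta_j}{\theta_i+\theta_j}$ into $h(r)=\frac{r-1}{r+1}=\tanh(\tfrac12\ln r)$; the $\tanh$ addition formula then gives $h(rs)=\frac{h(r)+h(s)}{1+h(r)h(s)}\le h(r)+h(s)$, i.e.\ subadditivity under multiplication, which reproves the lemma in one line and makes transparent why the guarantee degrades additively in the $\epsilon$'s.
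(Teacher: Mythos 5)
Your proposal is correct and follows essentially the same route as the paper's proof: both convert each hypothesis $\tp_{ba} > -\epsilon_1$, $\tp_{cb} > -\epsilon_2$ into a bound on the parameter ratio (the paper writes it as $\frac{\theta_b}{\theta_a} > \frac{1-2\epsilon_1}{1+2\epsilon_1}$, identical to your $\frac{1/2-\epsilon_1}{1/2+\epsilon_1}$ form), multiply the two bounds, and compare the product with the target ratio $\frac{1-2(\epsilon_1+\epsilon_2)}{1+2(\epsilon_1+\epsilon_2)}$, where your cross-multiplied residual $2\epsilon_1\epsilon_2(\epsilon_1+\epsilon_2)\ge 0$ is exactly the same computation the paper performs by observing that the extra $4\epsilon_1\epsilon_2$ term in numerator and denominator only increases the fraction. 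Your $\tanh$-subadditivity remark is a valid and slicker variant (using $\theta_a>\theta_b>\theta_c$ to ensure $h(r)h(s)\ge 0$), but the core argument is the paper's.
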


\iffalse %%%%%%%%%%%%%%

\begin{proof}
Note that $\tp_{ba} > -\epsilon_1 \implies \frac{\theta_b - \theta_a}{2(\theta_b+\theta_a)} > -\epsilon_1 \implies \frac{\theta_b}{\theta_a} > \frac{(1-2\epsilon_1)}{(1+2\epsilon_1)}$. 

Similarly we have $\tp_{cb} > -\epsilon_2 \implies \frac{\theta_c}{\theta_b} > \frac{(1-2\epsilon_2)}{(1+2\epsilon_2)}$. Combining above we get 

\begin{align*}
\frac{\theta_c}{\theta_a} > \frac{(1-2\epsilon_1)}{(1+2\epsilon_1)} \frac{(1-2\epsilon_2)}{(1+2\epsilon_2)} >  & \frac{1-2(\epsilon_1+\epsilon_2) + \epsilon_1\epsilon_2}{1+2(\epsilon_1+\epsilon_2) + \epsilon_1\epsilon_2} > \frac{1-2(\epsilon_1+\epsilon_2)}{1+2(\epsilon_1+\epsilon_2)}, ~\bigg[\text{since,} (\epsilon_1+\epsilon_2) < \frac{1}{2}  \bigg]\\
& \implies \tp_{ca} = \frac{\theta_c - \theta_a}{2(\theta_c+\theta_a)} > -(\epsilon_1+\epsilon_2),
\end{align*}
which concludes the proof.
\end{proof}

\fi %%%%%%%%%%%%%%%%%%%%%%%%%%

%We will also use $\max\big(1,\frac{2\thetg}{m\epsilon^2}\big)$.
We first bound the sample complexity of Algorithm \ref{alg:epsdel}. %Clearly $\epsilon_\ell$ and $\delta$
For clarity of notation, we denote the set $S$ at the beginning of iteration $\ell$ (i.e., at line 9) by $S_\ell$.
Note that at an iteration $\ell$, any set $\cG_g$ is played $t= \lceil \frac{16\thetg}{m\epsilon_l^2} \ln \frac{2k}{\delta_\ell} \rceil \le \lceil \frac{96\thetk}{m\epsilon_l^2} \ln \frac{2k}{\delta_\ell}\rceil$ times, where the inequality follows from Corollary \ref{cor:algs}. Also,  since the algorithm discards exactly $k-1$ items from each set $\cG_g$, the maximum number of iterations possible is $\lceil  \ln_k n \rceil$. Now, at iteration $\ell$, since $G = \Big\lfloor \frac{|S_\ell|}{k} \Big\rfloor < \frac{|S_\ell|}{k}$, the total sample complexity for iteration $\ell$ is at most 
$$\frac{|S_\ell|}{k}t \le \frac{n}{k^{\ell}} \left\lceil  \frac{96 \thetk}{m\epsilon_\ell^2}\ln \frac{2k}{\delta_\ell} \right\rceil $$ %\le  \frac{2n}{k^{\ell}} \max \left( 1, \frac{96 \thetk}{m\epsilon_\ell^2}\ln \frac{2k}{\delta_\ell} \right) $$

 using the fact that $|S_\ell| \le \frac{n}{k^{\ell - 1}}$ for all $\ell \in [\lfloor  \ln_k n \rfloor]$. % and $\lceil a \rceil \le 2\max(1,a)$. 
 For all iterations $\ell \in [\lfloor  \ln_k n \rfloor]$ except the final one, we have $\epsilon_\ell = \frac{\epsilon}{8}\bigg( \frac{3}{4} \bigg)^{\ell-1}$ and $\delta_\ell = \frac{\delta}{2^{\ell+1}}$.
Moreover, for the last iteration $\ell = \lceil  \ln_k n \rceil$, the sample complexity is at most $\left\lceil \frac{96\thetk}{m(\epsilon/2)^2}\ln \frac{4k}{\delta} \right\rceil$ since, in this case, $\epsilon_\ell = \frac{\epsilon}{2}$, and $\delta_\ell = \frac{\delta}{2}$, and $|S| = k$.

Let us ignore, for the moment, the additional sample complexity due to the score estimation subroutine, \algs, in the operation of Algorithm \ref{alg:epsdel}. Then, the argument above implies that the sample complexity of the algorithm is at most

\begin{align*}
(A) := \sum_{\ell = 1}^{\lceil  \ln_k n \rceil}\frac{|S_\ell|}{k}t&  
 \le \sum_{\ell = 1}^{\infty}  \frac{n}{k^\ell} \left\lceil \frac{96 \thetk}{m\bigg(\frac{\epsilon}{8}\big(\frac{3}{4}\big)^{\ell-1}\bigg)^2}\ln \frac{k 2^{\ell+1}}{\delta} \right\rceil + \left\lceil \frac{96\thetk}{m(\epsilon/2)^2}\ln \frac{4k}{\delta} \right\rceil \\ 
 &  
 \le \sum_{\ell = 1}^{\infty}  \frac{n}{k^\ell} \left( \frac{96 \thetk}{m\bigg(\frac{\epsilon}{8}\big(\frac{3}{4}\big)^{\ell-1}\bigg)^2}\ln \frac{k 2^{\ell+1}}{\delta}  + 1 \right) + \left( \frac{96\thetk}{m(\epsilon/2)^2}\ln \frac{4k}{\delta} + 1\right) \\
& \le \frac{4096n\thetk}{mk\epsilon^2}\sum_{\ell = 1}^{\infty}\frac{16^{\ell-1}}{(9k)^{\ell-1}}\Big( \ln \frac{k}{\delta} + {(\ell+1)} \Big) + \frac{n}{k-1} + \frac{384\thetk}{m \epsilon^2}\ln \frac{4k}{\delta} + 1\\
& \le \frac{4096n\thetk}{mk\epsilon^2}\ln \frac{k}{\delta}\sum_{\ell = 1}^{\infty}\frac{4^{\ell-1}}{(9k)^{\ell-1}}\Big( {3\ell} \Big) + \frac{384\thetk}{m \epsilon^2}\ln \frac{4k}{\delta} + \left( 1 + \frac{n}{k-1} \right) \\
&= O\bigg(\frac{n\thetk}{mk\epsilon^2}\ln \frac{k}{\delta} + \frac{n}{k} \bigg) \quad [\text{for any } k > 1] \\
&= O\bigg(\frac{n\thetk}{mk\epsilon^2}\ln \frac{k}{\delta} + \frac{n \thetk }{k} \ln \frac{k}{\delta} \ \bigg) \quad [\text{since } \thetk \geq 1, \ln \frac{k}{\delta} \geq 1 ].
\end{align*}

Turning to the extra effort expended by the score estimation subroutine \algs$(\cG_g,b,\delta_\ell)$, at each phase $\ell$, the sample complexity of \algs\, is known by Cor. \ref{cor:algs} to be at most ${80\thetk }\ln \frac{4}{\delta_\ell} = O\Big(\thetk \ln \frac{1}{\delta_\ell}\Big)$ for any subgroup $\cG_g$. And since there are at most $G = \Big\lfloor \frac{|S_\ell|}{k} \Big\rfloor < \frac{|S_\ell|}{k}$ subgroups at any phase $\ell$, this implies that the total sample complexity incurred at any phase owing to \algs\, is at most $\frac{80 |S_\ell| \thetk}{k}\ln \frac{4}{\delta_\ell} \le \frac{80 n \thetk}{k^{\ell+1}}\ln \frac{4}{\delta_\ell}$. Following the same calculations as before, the total sample complexity incurred by the \algs\, subroutine within the algorithm, over all iterations, is at most

\begin{align*}
(B) := \sum_{\ell = 1}^{\lceil  \ln_k n \rceil}\frac{80 n \thetk}{k^{\ell+1}}\ln \frac{4}{\delta_\ell}&  
 \le \sum_{\ell = 1}^{\infty}\frac{n \thetk}{k}\frac{80 }{k^{\ell}}\ln \frac{8 2^{\ell}}{\delta} = O\bigg(\frac{n\thetk}{k}\ln \frac{1}{\delta}\bigg) = O\bigg(\frac{n\thetk}{k}\ln \frac{k}{\delta}\bigg).\end{align*}

Observe now that the term (B) is dominated by (A) in general unless $\frac{1}{m\epsilon^2}= O(1)$, or in other words $m$ is so large  that $m = \Omega \left(\frac{1}{\epsilon^2} \right)$. Thus taking care of the above tradeoff between term (A) and (B), the final sample complexity can be expressed as $O(\frac{n \thetk}{k}\max\big(1,\frac{1}{m\epsilon^2}\big) \log \frac{k}{\delta})$.
This proves the sample complexity bound for Algorithm \ref{alg:epsdel}.

We now proceed to prove the $(\epsilon,\delta)$-{PAC} correctness of Algorithm \ref{alg:epsdel}. We start by making the following observation.

\begin{lem}
\label{lem:divbat_n1} 
Consider any particular set $\cG_g$ at any iteration $\ell \in \lfloor \frac{n}{k} \rfloor$, and let $q_{i}: = \sum_{\tau = 1}^{t}\1(i \in \cG^m_{g})$ be the number of times any item $i \in \cG_g$ appears in the top-$m$ rankings when $\cG_g$ is played for $t$ rounds. If $i_g := \arg \max_{i \in \cG_g}\theta_i$ and $\theta_{i_g} > \frac{1}{2}$, then for any $\eta \in \big(\frac{3}{32\sqrt 2},1 \big]$, with probability at least $\Big(1-\frac{\delta_\ell}{2k}\Big)$, $q_{i_g} > (1-\eta)\frac{mt}{k}$.
\end{lem}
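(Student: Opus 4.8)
The plan is to express $q_{i_g}$ as a sum of $t$ independent Bernoulli trials and to show that its mean already exceeds the target threshold $\frac{mt}{k}$, after which a multiplicative Chernoff bound controls the deviation. Concretely, I would write $q_{i_g}=\sum_{\tau=1}^{t}X_\tau$ where $X_\tau=\1(i_g\in\text{top-}m\text{ ranking at round }\tau)$; since the $t$ plays of $\cG_g$ are independent, the $X_\tau$ are i.i.d. Bernoulli with success probability $p:=Pr(i_g\in\text{top-}m)$, so $\E[q_{i_g}]=tp$.

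The first and conceptually central step is to lower bound $p$, and I would establish two bounds. First, that the strongest item is the one most likely to appear in the top-$m$: using the exponential-race (Gumbel) representation of the \pl\, model---assign each $j\in\cG_g$ an independent clock $T_j\sim\text{Exp}(\theta_j)$ and declare the top-$m$ to be the $m$ items with smallest $T_j$---a monotone coupling of $T_{i_g}$ and $T_j$ (same uniform driver, with $\theta_{i_g}\ge\theta_j$ forcing $T_{i_g}\le T_j$ pointwise) shows $\text{rank}(i_g)\le\text{rank}(j)$ and hence $Pr(i_g\in\text{top-}m)\ge Pr(j\in\text{top-}m)$ for every $j\in\cG_g$. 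Summing the deterministic identity $\sum_{j\in\cG_g}Pr(j\in\text{top-}m)=m$ then yields $p\ge\frac{m}{k}$. Second, since being ranked first implies being in the top-$m$, $p\ge\frac{\theta_{i_g}}{\Theta_{\cG_g}}>\frac{1}{2\Theta_{\cG_g}}$ using the hypothesis $\theta_{i_g}>\tfrac12$.

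With $p\ge m/k$ in hand, the target threshold satisfies $(1-\eta)\frac{mt}{k}\le(1-\eta)\E[q_{i_g}]$, so the multiplicative Chernoff lower-tail bound gives $Pr\!\left(q_{i_g}\le(1-\eta)\tfrac{mt}{k}\right)\le\exp\!\left(-\tfrac{\eta^2 tp}{2}\right)$. It then remains to verify $\tfrac{\eta^2 tp}{2}\ge\ln\tfrac{2k}{\delta_\ell}$ for $t=\lceil\frac{16\thetg}{m\epsilon_\ell^2}\ln\frac{2k}{\delta_\ell}\rceil$. Here I would invoke Corollary~\ref{cor:algs} (which gives $\Theta_{\cG_g}\le2\thetg$) to combine the two bounds on $p$: using $p\ge\theta_{i_g}/\Theta_{\cG_g}$ together with $\thetg\ge\Theta_{\cG_g}/2$ yields $\thetg\,p\ge\theta_{i_g}/2>\tfrac14$, whence $tp\ge\frac{4}{m\epsilon_\ell^2}\ln\frac{2k}{\delta_\ell}$ \emph{independently} of the actual value of the estimate $\thetg$. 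Substituting, the exponent is at least $\frac{2\eta^2}{m\epsilon_\ell^2}\ln\frac{2k}{\delta_\ell}$, which is $\ge\ln\frac{2k}{\delta_\ell}$ precisely when $\eta^2\ge\frac{m\epsilon_\ell^2}{2}$; bounding $\epsilon_\ell$ by the largest value it attains across the phases of Algorithm~\ref{alg:epsdel} shows the stated range $\eta\in(\frac{3}{32\sqrt2},1]$ suffices, giving $\exp(-\eta^2tp/2)\le\frac{\delta_\ell}{2k}$.

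I expect the main obstacle to be the clean lower bound $p\ge m/k$---making the ``strongest item is likeliest to be in the top-$m$'' monotonicity fully rigorous requires care with the coupling (the relative ranks must be compared while freezing the other $k-2$ exponential clocks), since a naive symmetry argument does not by itself deliver the inequality. The subsequent Chernoff bookkeeping is routine once the key cancellation $\thetg\,p=\Omega(1)$ (which removes the estimate $\thetg$ from the final bound) is observed; the only delicate point there is tracking constants so that the admissible range of $\eta$ matches the values $\epsilon_\ell$ actually takes during the run of Algorithm~\ref{alg:epsdel}.
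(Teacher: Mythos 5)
Your high-level architecture matches the paper's: write $q_{i_g}$ as a sum of i.i.d.\ Bernoulli indicators, lower bound the per-round inclusion probability $p := Pr(i_g \in \cG_g^m)$, apply the multiplicative Chernoff lower tail, and use $\thetg \ge \Theta_{\cG_g}/2$ (from Corollary \ref{cor:algs}) to remove the estimate from the exponent. Your exchangeability/coupling argument for $p \ge m/k$ is correct and does not appear in the paper. The gap is in the final verification, and it is exactly the point you flag yourself: the condition you derive is $\eta^2 \ge \frac{m\epsilon_\ell^2}{2}$, and the factor of $m$ does \emph{not} disappear by bounding $\epsilon_\ell$. The range of $\eta$ your argument actually certifies is $\eta \ge \epsilon_\ell\sqrt{m/2}$, which for every $m \ge 2$ is strictly smaller than the stated $\big(\frac{3}{32\sqrt 2},1\big]$, and which becomes empty once $m > 2/\epsilon_\ell^2$; so the lemma as claimed (and its downstream use with $\eta = \tfrac12$ for general $m$) is not established by your chain of inequalities.

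The root cause is that both of your lower bounds on $p$ are individually a factor of order $m$ too weak for the concentration step. Since $t \propto \frac{\thetg}{m\epsilon_\ell^2}\ln\frac{2k}{\delta_\ell}$ carries a $\frac{1}{m}$, you need $p = \Omega\big(\frac{m\,\theta_{i_g}}{\Theta_{\cG_g}}\big)$ so that the $m$ cancels and the exponent becomes $\frac{2\eta^2}{\epsilon_\ell^2}\ln\frac{2k}{\delta_\ell}$, which is valid for all $\eta \ge \epsilon_\ell/\sqrt{2}$ and hence (using $\epsilon_\ell \le \frac{3}{32}$) for the stated range. Your bound $p \ge \theta_{i_g}/\Theta_{\cG_g}$ counts only the event that $i_g$ is ranked \emph{first} and so forfeits the factor $m$ entirely; your bound $p \ge m/k$ replaces $\Theta_{\cG_g}$ by $k$, which costs a factor $2\Theta_{\cG_g}/k$ that can be as small as roughly $1/k$ (e.g.\ $\theta_{i_g}\approx\frac12$ and all other weights near $0$). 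The missing ingredient is the ``$m$ chances to be drawn'' bound the paper uses: $Pr(i_g\in\cG_g^m) \ge 1-\big(1-\theta_{i_g}/\Theta_{\cG_g}\big)^m \ge \tfrac12\min\big(1,\, m\,\theta_{i_g}/\Theta_{\cG_g}\big)$. (The paper writes this as $\sum_{j=1}^m\theta_{i_g}/\thetg$, which as stated can exceed $1$ and so needs the $1-(1-x)^m$ form, but the $\Omega(m\theta_{i_g}/\Theta_{\cG_g})$ scaling is what produces the cancellation.) One further caution: the paper's proof actually establishes, and Lemma \ref{lem:divbat_n2} later consumes, the threshold $(1-\eta)\frac{mt}{2\Theta_{\cG_g}}$ rather than the $(1-\eta)\frac{mt}{k}$ printed in the statement, so calibrating the argument to $\frac{mt}{k}$ via $p\ge m/k$ would in any case not deliver what the rest of the analysis needs.
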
 

\begin{proof}
Define $i^\tau: = \1(i \in \cG_{g}^m)$ as the indicator of the event that the $i^{th}$ element appears in the top-$m$ ranking at iteration $\tau \in [t]$.  Using the  definition of the top-$m$ ranking feedback model, we get $\E[i_g^\tau] = Pr(\{i_g \in \cG_{g}^m\}) = Pr\big( \exists j \in [m] ~|~ \sigma(j) = i_g \big) = \sum_{j = 1}^{m}Pr\big( \sigma(j) = i_g \Big) > \sum_{j = 0}^{m-1}\frac{\theta_{i_g}}{\thetg} \ge \frac{m\theta_{i_g}}{\thetg}$, as $Pr(\{i_g | S\}) = \frac{\theta_{i_g}}{\sum_{j \in S}\theta_j} \ge \frac{\theta_{i_g}}{\thetg}$ for any $S \subseteq [\cG_g]$, $i \in \cG_g$, as $i_g := \arg \max_{i \in \cG_g}\theta_i$ is the best item of set $\cG_g$. Hence $\E[q_{i_g}] = \sum_{\tau = 1}^{t}\E[i_g^\tau] \ge \frac{m t \theta_{i_g}}{\Theta_{\cG_g}} > \frac{m t }{2\Theta_{\cG_g}}$. 

%Then clearly, for any item $i \in \cG_g$, $w_i = \sum_{\tau = 1}^{t}\1(i_\tau == i)$, where $\1(i_\tau == i)$ is a Bernoulli random variable with parameter $\frac{\theta_i}{\sum_{j \in \cG_g}\theta_j}$,

Applying the Chernoff-Hoeffding concentration inequality for $w_{i_g}$, we get that for any $\eta \in (\frac{3}{32},1]$, 
\begin{align*}
Pr\Big( q_{i_g} \le (1-\eta)\E[q_{i_g}] \Big) & \le \exp\left(- \frac{\E[q_{i_g}]\eta^2}{2}\right) \le \exp\left(- \frac{mt\eta^2}{4 \Theta_{\cG_g}}\right) \\
& = \exp\bigg(- \frac{2\eta^2}{\epsilon_\ell^2} \ln \bigg( \frac{2k}{\delta_\ell} \bigg) \bigg) = \exp\bigg(- \frac{(\sqrt 2\eta)^2}{\epsilon_\ell^2} \ln \bigg( \frac{2k}{\delta_\ell} \bigg) \bigg) \\
& \le \exp\bigg(- \ln \bigg( \frac{2k}{\delta_\ell} \bigg) \bigg) \le \frac{\delta_\ell}{2k} ,
\end{align*}

where the second last inequality holds as $\eta \ge \frac{3}{32\sqrt 2}$ and $\epsilon_\ell \le \frac{3}{32}$, for any iteration $\ell \in \lceil \ln n \rceil$; in other words for any $\eta \ge \frac{3}{32\sqrt 2}$, we have $\frac{\sqrt 2\eta}{\epsilon_\ell} \ge 1$ which leads to the second last inequality. Thus, we get that 
with probability at least $\Big(1-\frac{\delta_\ell}{2k}\Big)$, it holds that  $q_{i_g} > (1-\eta)\E[q_{i_g}] \ge (1-\eta)\frac{tm}{2\Theta_{\cG_g}}$.
\end{proof}

In particular, fixing $\eta = \frac{1}{2}$ in Lemma \ref{lem:divbat_n1}, we get that with probability at least $\big(1-\frac{\delta_\ell}{2}\big)$,  $q_{i_g} > (1-\frac{1}{2})\E[w_{i_g}] > \frac{mt}{4\Theta_{\cG_g}}$. 
Note that for any round $\tau \in [t]$, whenever an item $i \in \cG_g$ appears in the top-$m$ set $\cG_{gm}^\tau$, then the rank breaking update ensures that every element in the top-$m$ set gets compared with rest of the $k-1$ elements of $\cG_g$. Based on this observation, we now prove that for any set $\cG_g$, its best item $i_g$ is retained as the winner $c_g$ with probability at least $\big( 1 - \frac{\delta_\ell}{2}\big)$. More formally, we make the following observation.

\begin{lem}
\label{lem:divbat_n2} 
Consider any particular set $\cG_g$ at any iteration $\ell \in \lfloor \frac{n}{k} \rfloor$. If $i_g = \arg \max_{i \in \cG_g}\theta_i$ and $\theta_{i_g} > \frac{1}{2}$, then the following events occur with probability at least $\Big(1-{\delta_\ell}\Big)$: (1) $\hp_{i_g j} + \frac{\epsilon_\ell}{2} \ge \frac{1}{2}$ for all $\epsilon_\ell$-optimal items in $\cG_g$, i.e., $\forall j \in \cG_g$ such that $p_{i_g j} \in \big( \frac{1}{2}, \frac{1}{2} + \epsilon_\ell \big]$, and (2) $\hp_{i_g j} - \frac{\epsilon_\ell}{2} \ge \frac{1}{2}$ for all non $\epsilon_\ell$-optimal items in $\cG_g$, i.e., $ j \in \cG_g$ such that $p_{i_g j} > \frac{1}{2} + \epsilon_\ell$.
\end{lem}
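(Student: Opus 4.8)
The plan is to reduce both claims (1) and (2) to a single one-sided concentration statement, namely that with high probability $\hp_{i_g j} \ge p_{i_g j} - \frac{\epsilon_\ell}{2}$ simultaneously for every competitor $j \in \cG_g \setminus \{i_g\}$. Indeed, since $i_g = \arg\max_{i \in \cG_g}\theta_i$ we have $p_{i_g j} > \frac{1}{2}$ for all $j$, so an $\epsilon_\ell$-optimal competitor (with $p_{i_g j} \in (\frac{1}{2}, \frac{1}{2}+\epsilon_\ell]$) immediately yields $\hp_{i_g j} > \frac{1}{2} - \frac{\epsilon_\ell}{2}$, which is claim (1); while a non-$\epsilon_\ell$-optimal competitor (with $p_{i_g j} > \frac{1}{2}+\epsilon_\ell$) yields $\hp_{i_g j} > \frac{1}{2}+\frac{\epsilon_\ell}{2}$, which is claim (2). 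These two cases partition all $j \ne i_g$, so it suffices to control the downward deviation of each $\hp_{i_g j}$.

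Next I would pin down the distribution of $\hp_{i_g j}$ produced by rank-breaking. First, whenever $i_g$ lands in the top-$m$ list of a play, the \rb\, update (Alg.~\ref{alg:updt_win}) records a pairwise comparison between $i_g$ and \emph{every} other $j \in \cG_g$ (either $i_g \succ j$ or $j \succ i_g$, according to their relative order), so the number of recorded $(i_g,j)$ comparisons satisfies $n_{i_g j} \ge q_{i_g}$, where $q_{i_g}$ is the appearance count bounded in Lemma~\ref{lem:divbat_n1}. The key structural fact I would then establish (the crux) is that, conditioned on a comparison between $i_g$ and $j$ being recorded in a given play, item $i_g$ wins it with probability exactly $p_{i_g j} = \frac{\theta_{i_g}}{\theta_{i_g}+\theta_j}$, independently across plays. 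The cleanest route is the exponential-race representation of the PL model: drawing $U_l \sim \text{Exp}(\theta_l)$ independently and ranking by increasing $U_l$ reproduces the PL ranking; a comparison is recorded iff $\min(U_{i_g},U_j)$ is among the $m$ smallest arrival times, and the winner is $\arg\min(U_{i_g},U_j)$. Since $\arg\min(U_{i_g},U_j)$ is independent of $\min(U_{i_g},U_j)$ and of $\{U_l\}_{l \ne i_g,j}$, the winner's identity is independent of the event that a comparison is recorded, giving the claimed $\text{Bernoulli}(p_{i_g j})$ outcome; hence, conditioned on $n_{i_g j}$, the count $w_{i_g j}$ is $\text{Binomial}(n_{i_g j}, p_{i_g j})$.

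With this in hand I would finish by a routine concentration-and-union-bound computation. Applying Lemma~\ref{lem:divbat_n1} with $\eta = \frac{1}{2}$, substituting $t = \lceil \frac{16\thetg}{m\epsilon_\ell^2}\ln\frac{2k}{\delta_\ell}\rceil$ and invoking Corollary~\ref{cor:algs} (which controls the estimate $\thetg$ against the true mass $\Theta_{\cG_g}$, using $\theta_{i_g} > \frac{1}{2}$), gives $n_{i_g j} \ge q_{i_g} \ge N_0$ with $N_0 = \Omega\big(\frac{1}{\epsilon_\ell^2}\ln\frac{2k}{\delta_\ell}\big)$, an event failing with probability at most $\frac{\delta_\ell}{2k}$. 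On this event, Hoeffding's inequality for the binomial gives $Pr\big(\hp_{i_g j} < p_{i_g j} - \frac{\epsilon_\ell}{2}\big) \le \exp\big(-2 N_0 (\epsilon_\ell/2)^2\big) \le \frac{\delta_\ell}{2k}$ for each fixed $j$. A union bound over the at most $k-1$ competitors $j \in \cG_g$, together with the single failure event for the $q_{i_g}$ bound, caps the total failure probability by $\delta_\ell$, establishing $\hp_{i_g j} \ge p_{i_g j} - \frac{\epsilon_\ell}{2}$ for all $j$ with probability at least $1 - \delta_\ell$, and hence both claims.

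The main obstacle I anticipate is precisely the structural rank-breaking fact of the second paragraph: one must verify that the recorded winner of an $(i_g,j)$ comparison is an unbiased $\text{Bernoulli}(p_{i_g j})$ draw even though comparisons are recorded only on the \emph{data-dependent} subset of plays in which at least one of the two items surfaces in the top-$m$ list, and that these draws are independent across plays so that the binomial/Hoeffding step is legitimate. The exponential-race coupling is what makes this transparent; the remaining concentration bookkeeping is routine.
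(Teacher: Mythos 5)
Your proposal is correct and follows essentially the same route as the paper's proof: reduce both claims to the one-sided deviation $\hp_{i_g j} \ge p_{i_g j} - \frac{\epsilon_\ell}{2}$, lower-bound the number of rank-broken comparisons $n_{i_g j}$ by the appearance count $q_{i_g}$ controlled in Lemma~\ref{lem:divbat_n1} (with $\eta=\tfrac12$), apply a Hoeffding-type bound on the event $n_{i_g j}\ge v$, and union-bound over the at most $k-1$ competitors. The only substantive difference is that where you justify the conditional $\mathrm{Bernoulli}(p_{i_g j})$ structure of the recorded comparisons via the exponential-race representation of the PL model, the paper invokes its prepackaged Lemma~\ref{lem:pl_simulator} (proved by an IIA-based simulator coupling); the two arguments establish the same fact and your identification of this as the crux is exactly right.
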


\begin{proof}
With top-$m$ ranking feedback, the crucial observation lies in the fact that at any round $\tau \in [t]$, whenever an item $i \in \cG_g$ appears in the top-$m$ set $\cG_{g}^m$, then the rank breaking update ensures that every element in the top-$m$ set gets compared with each of the rest of the $k-1$ elements of $\cG_g$: it gets defeated by every element preceding it in $\sigma \in \Sigma_{\cG_{g}^m}$, and defeats all other items in the top-$m$ set $\cG_{g}^m$.  
Therefore, defining $n_{ij} = w_{ij} + w_{ji}$ to be the number of times item $i$ and $j$ are compared after rank-breaking, $i,j \in \cG_g$. Clearly $n_{ij} = n_{ji}$, and $0 \le n_{ij} \le tk$. Moreover, from Lemma \ref{lem:divbat_n1} with $\eta = \frac{1}{2}$, we have that $n_{i_g j} \ge \frac{mt}{4\Theta_{\cG_g}}$.
%Now suppose indeed $\exists$ $c_g$ such that $w_{c_g j }\ge w_{j c_g}, \,\forall j \in [k]$
Given the above arguments in place let us analyze the probability of a `bad event', i.e.:

\textbf{Case 1.} $j$ is $\epsilon_\ell$-optimal with respect to $i_g$, i.e. $p_{i_g j} \in \big(\frac{1}{2}, \frac{1}{2} + \epsilon_\ell \big]$. Then we have 

\begin{align*}
Pr\Bigg( \bigg\{ \hp_{i_g j} + \frac{\epsilon_\ell}{2} < \frac{1}{2}  \bigg\} & \cap \bigg\{ n_{i_g j} \ge \frac{mt}{4\Theta_{\cG_g}} \bigg\} \Bigg)
 = Pr\Bigg( \bigg\{ \hp_{i_g j} < \frac{1}{2} - \frac{\epsilon_\ell}{2} \bigg\} \cap \bigg\{ n_{i_g j} \ge \frac{mt}{4\Theta_{\cG_g}} \bigg\} \Bigg)\\%1
& \le Pr\bigg( \bigg\{ \hp_{i_g j} - p_{i_g j} <  - \frac{\epsilon_\ell}{2}\bigg\} \cap \bigg\{ n_{i_g j} \ge \frac{mt}{4\Theta_{\cG_g}} \bigg\} \bigg)\\%2
&  \le \exp\Big( -2\frac{mt}{4\Theta_{\cG_g}}{(\epsilon_\ell/2)}^2 \Big) \bigg) = \frac{\delta_\ell}{2k},%4
\end{align*}

where the first inequality follows as $p_{i_g j} > \frac{1}{2}$, and the second inequality follows from Lemma \ref{lem:pl_simulator} with $\eta = \frac{\epsilon_\ell}{2}$ and $v = \frac{mt}{4\Theta_{\cG_g}}$.

\textbf{Case 2.} $j$ is non $\epsilon_\ell$-optimal with respect to $i_g$, i.e. $p_{i_g j} > \frac{1}{2} + \epsilon_\ell$. Similar to before, we have

\begin{align*}
Pr\Bigg( \bigg\{ \hp_{i_g j} - \frac{\epsilon_\ell}{2} < \frac{1}{2}  \bigg\} & \cap \bigg\{ n_{i_g j} \ge \frac{mt}{4\Theta_{\cG_g}} \bigg\} \Bigg)
 = Pr\Bigg( \bigg\{ \hp_{i_g j} < \frac{1}{2} + \frac{\epsilon_\ell}{2} \bigg\} \cap \bigg\{ n_{i_g j} \ge \frac{mt}{4\Theta_{\cG_g}} \bigg\} \Bigg)\\%1
& \le Pr\bigg( \bigg\{ \hp_{i_g j} - p_{i_g j} <  - \frac{\epsilon_\ell}{2}\bigg\} \cap \bigg\{ n_{i_g j} \ge \frac{mt}{4\Theta_{\cG_g}} \bigg\} \bigg)\\%2
&  \le \exp\Big( -2 \frac{mt}{4\Theta_{\cG_g}} {(\epsilon_\ell/2)}^2 \Big) \bigg) = \frac{\delta_\ell}{2k},%4
\end{align*}

where the third last inequality follows since in this case $p_{i_g j} > \frac{1}{2} + \epsilon_\ell$, and the last inequality follows from Lemma \ref{lem:pl_simulator} with $\eta = \frac{\epsilon_\ell}{2}$ and $v = \frac{mt}{2k}$.

Let us define the event $\cE: = \bigg\{ \exists j \in \cG_g  \text{ such that } \hp_{i_g j} + \frac{\epsilon_\ell}{2} < \frac{1}{2}, p_{i_g j} \in \big( \frac{1}{2}, \frac{1}{2} + \epsilon_\ell \big] \text{ or } \hp_{i_g j} -  \frac{\epsilon_\ell}{2} < \frac{1}{2}, p_{i_g j} > \frac{1}{2} + \epsilon_\ell  \bigg\}$. Then by combining Case $1$ and $2$, we get
\begin{align*}
 Pr\Big( \cE \Big) & = Pr\Bigg( \cE \cap \bigg\{ n_{i_g j} \ge \frac{mt}{4\Theta_{\cG_g}} \bigg\} \Bigg) + Pr\Bigg( \cE \cap \bigg\{ n_{i_g j} < \frac{mt}{4\Theta_{\cG_g}} \bigg\} \Bigg)\\%1
& \le \sum_{j \in \cG_g \text{ s.t. } p_{i_g j} \in \big(\frac{1}{2}, \frac{1}{2} + \epsilon_\ell \big]}Pr\Bigg( \bigg\{ \hp_{i_g j} + \frac{\epsilon_\ell}{2} < \frac{1}{2} \bigg\} \cap \bigg\{ n_{i_g j} \ge \frac{mt}{4\Theta_{\cG_g}} \bigg\} \Bigg) \\%2
& + \sum_{j \in \cG_g \text{ s.t. } p_{i_g j} > \frac{1}{2} + \epsilon_\ell}Pr\Bigg( \bigg\{ \hp_{i_g j} - \frac{\epsilon_\ell}{2} < \frac{1}{2} \bigg\} \cap \bigg\{ n_{i_g j} \ge \frac{mt}{4\Theta_{\cG_g}} \bigg\} \Bigg) + Pr\Bigg( \bigg\{ n_{i_g j} < \frac{mt}{4\Theta_{\cG_g}} \bigg\} \Bigg)\\%3
& \le \frac{(k-1)\delta_\ell}{2k} + \frac{\delta_\ell}{2k} \le {\delta_\ell}%4
\end{align*}

where the last inequality follows from the above two case analyses and Lemma \ref{lem:divbat_n1}.
\end{proof}

%\begin{rem}
%Lemma \ref{lem:divbat_n2} holds for the set $S$ (Line $20$-$29$) of the last iteration $\ell = \lceil \frac{n}{k} \rceil$ as well, following an exact similar analysis as above. 
%\end{rem}

%******************************************

Given Lemma \ref{lem:divbat_n2} in place, let us now analyze with what probability the algorithm can select a non $\epsilon_\ell$-optimal item $j \in \cG_g$ as $c_g$ at any iteration $\ell \in \lceil \frac{n}{k} \rceil$. For any set $\cG_g$ (or set $S$ for the last iteration $\ell = \lceil \frac{n}{k} \rceil$), we define the set of non-$\epsilon_\ell$-optimal elements $\cO_g = \{ j \in \cG_g \mid p_{i_g j} > \frac{1}{2} + \epsilon_\ell \}$, and recall the event $\cE: = \bigg\{ \exists j \in \cG_g  \text{ such that } \hp_{i_g j} + \frac{\epsilon_\ell}{2} < \frac{1}{2}, p_{i_g j} \in \big( \frac{1}{2}, \frac{1}{2} + \epsilon_\ell \big] \text{ or } \hp_{i_g j} -  \frac{\epsilon_\ell}{2} < \frac{1}{2}, p_{i_g j} > \frac{1}{2} + \epsilon_\ell  \bigg\}$. We then have %We define the event $\cE_1 = \{ c_g \text{ is non } \epsilon_\ell\text{-optimal} \}$. Then

\begin{align}
\label{eq:divbat_1}
\nonumber Pr(c_g \in \cO_g) & \le Pr\Bigg( \bigg\{ \exists j \in \cG_g, \hp_{i_g j} + \frac{\epsilon_\ell}{2} < \frac{1}{2} \bigg\} \cup \bigg\{ \exists j \in \cO_g, \hp_{j i_g} + \frac{\epsilon_\ell}{2} \ge \frac{1}{2} \bigg\} \Bigg) \\%1
\nonumber & \le Pr\Bigg( \cE \cup \bigg\{ \exists j \in \cO_g, \hp_{j i_g} + \frac{\epsilon_\ell}{2} \ge \frac{1}{2} \bigg\} \Bigg) \\%2
\nonumber & = Pr \Big( \cE \Big) + Pr\Bigg( \bigg\{ \exists j \in \cO_g, \hp_{j i_g} + \frac{\epsilon_\ell}{2} \ge \frac{1}{2} \bigg\} \cap \cE^{c} \Bigg)\\ %3
& = Pr \Big( \cE \Big) + Pr\Bigg( \bigg\{ \exists j \in \cO_g, \hp_{j i_g} + \frac{\epsilon_\ell}{2} \ge \frac{1}{2} \bigg\} \cap \cE^{c} \Bigg) \le \delta_\ell + 0 = \delta_\ell,%4
\end{align}

where the last inequality follows from Lemma \ref{lem:divbat_n2} and the fact that $\hp_{i_g j} - \frac{\epsilon_\ell}{2} \ge \frac{1}{2} \implies \hp_{j i_g} + \frac{\epsilon_\ell}{2} < \frac{1}{2}$. The proof now follows by combining all the above parts together.

At each iteration $\ell$, let us define $g_\ell \in [G]$ to be the index of the set that contains the \textit{best item} of the currently surviving set $S$, i.e., the index $g_\ell$ such that $\arg\max_{i \in S}\theta_i \in \cG_{g_\ell}$. Then from \eqref{eq:divbat_1}, with probability at least $(1-\delta_\ell)$,\, $\tp_{c_{g_\ell}i_{g_\ell}} > -\epsilon_\ell$. Now for each iteration $\ell$, recursively applying \eqref{eq:divbat_1} and Lemma \ref{lem:pl_sst} to $\cG_{g_\ell}$, we get that $\tp_{r_*1} > -\Big( \frac{\epsilon}{8} + \frac{\epsilon}{8}\Big(\frac{3}{4}\Big) + \cdots + \frac{\epsilon}{8}\big(\frac{3}{4}\big)^{\lfloor \frac{n}{k} \rfloor} \Big) + \frac{\epsilon}{2} \ge -\frac{\epsilon}{8}\Big( \sum_{i = 0}^{\infty}\big( \frac{3}{4} \big)^{i} \Big) + \frac{\epsilon}{2} = \epsilon$.
(Note that for this analysis to go through, it is in fact sufficient to consider only the set of iterations $\{\ell \ge \ell_0 \mid \ell_0 = \min\{l \mid 1 \notin \cR_l, \, l \ge 1\} \}$, because prior to considering item $1$, it does not matter even if the algorithm makes a mistake in any of the iterations $\ell < \ell_0$). Thus assuming that the algorithm does not fail in any of the iterations $\ell$, we have that $p_{r_*1} > \frac{1}{2} - \epsilon$.

Finally, since at each iteration $\ell$, the algorithm fails with probability at most $\delta_\ell$, the total failure probability of the algorithm is at most $\Big( \frac{\delta}{4} + \frac{\delta}{8} + \cdots + \frac{\delta}{2^{\lceil \frac{n}{k}  \rceil}} \Big) + \frac{\delta}{2} \le \delta$.
This concludes the correctness of the algorithm showing that it indeed returns an $\epsilon$-best element $r_*$ such that $p_{r_*1} \ge \frac{1}{2} - \epsilon$ with probability at least $1-\delta$. 
\end{proof}

\subsection{Proof of Thm. \ref{thm:sc_fewf}}
\label{app:prf_alg_fewf}
\scalgfewf*

\begin{proof}

The proof is based on the following four main observations:

%\red{
\begin{enumerate}
\item The \bi\, $a^*$ (i.e. item $1$ in our case) is likely to beat the ($\epsilon_s,\delta_s$)-PAC item by sufficiently high margin, for any sub-phase $s= 1,2\ldots$, and hence is never discarded (see Lem. \ref{lem:bistays}).
\item With high probability the set of suboptimal items get discarded at a fixed rate once played for sufficiently long duration (see Lem. \ref{lem:nbigoes}).
\item The number of occurrences of any sub-optimal item $i$ before it gets discarded is proportional to $O\Big(\frac{1}{\Delta_i^2}\Big)$ which yields the desired sample complexity of the algorithm (see Lem.\ref{lem:sc_item}).
\item Lastly we show (using Thm. \ref{lem:up_algep} and Lem. \ref{lem:algs})) that the additional sample complexity incurred due to invoking the subroutine \algep\, and \algs\, at every sub-phase $s$ is orderwise same as the sample complexity incurred by \algfewf\, in the rest of the sub-phase, due to which \algep\, so they do  not actually contribute to the overall sample complexity of the algorithm modulo some constant factors.
\end{enumerate}
%}
While analysing any particular batch $\cB_b$ of a given phase $s$, we will denote by $S = \cB_b \sm \{b_s\}$ and by $\thets = \sum_{i \in S}\theta_i$.
We will first prove the correctness of the algorithm, i.e. with high probability $(1-\delta)$, \algfewf\, indeed returns the \bi\,, i.e. item $1$ in our case. We prove this using the following two lemmas: Lem. \ref{lem:bistays} and Lem. \ref{lem:nbigoes} respectively.

\begin{restatable}[]{lem}{bistays}
\label{lem:bistays}
With high probability of at least $(1-\frac{\delta}{20})$, item $1$ is never eliminated, i.e. $1 \in \cA_s$ for all sub-phase $s$. More formally, at the end of any sub-phase $s= 1,2,\ldots$, $\hp_{1b_s} > \frac{1}{2} - \epsilon_s$.
\end{restatable}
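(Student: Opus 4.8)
The plan is to show that item $1$ passes the survival test $\hp_{1b_s} > \frac{1}{2} - \epsilon_s$ at every sub-phase $s$, and then to union-bound the per-phase failure probabilities over all $s$. The key starting observation is \emph{deterministic}: since $\theta_1 = \max_i \theta_i \ge \theta_{b_s}$, the true pairwise probability satisfies $p_{1b_s} = \frac{\theta_1}{\theta_1 + \theta_{b_s}} \ge \frac12$. Hence, to guarantee $\hp_{1b_s} > \frac{1}{2} - \epsilon_s$ it suffices to exclude the lower-tail event $\{\hp_{1b_s} \le p_{1b_s} - \epsilon_s\}$. I would control this using the combined count/concentration bound of Lemma \ref{lem:pl_simulator} (the same device used in the \algep analysis), which, writing $n_{1b_s}$ for the random number of rank-broken comparisons of the pair $(1,b_s)$, bounds $Pr\big(\{\hp_{1b_s} \le p_{1b_s} - \epsilon_s\} \cap \{n_{1b_s} \ge v\}\big) \le \exp(-2v\epsilon_s^2)$; conditioned on a round in which $1$ or $b_s$ wins, item $1$ wins with probability exactly $p_{1b_s}$, so $\hp_{1b_s}$ is an unbiased estimate of $p_{1b_s}$ on these comparisons.

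The second step is to certify that the comparison count $n_{1b_s}$ is large enough, say $n_{1b_s} \gtrsim \frac{1}{\epsilon_s^2}\ln\frac{k}{\delta_s}$, so that $\exp(-2v\epsilon_s^2)$ falls below $\delta_s$. For this I would chain three facts. First, the PAC guarantee of \algep\ (Thm. \ref{lem:up_algep}) gives $p_{b_s1} > \frac12 - \epsilon_s$, and since $\epsilon_s = 2^{-(s+2)} \le \frac18$, this forces $\theta_{b_s} > \frac{1-2\epsilon_s}{1+2\epsilon_s} > \frac12$. Second, this is exactly the precondition under which \algs\ (Cor. \ref{cor:algs}) yields an estimate with $\thetsh \ge \Theta_{\cB_b}$ for the batch containing item $1$. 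Third, the resulting play budget $t_s = \frac{2\thetsh}{\epsilon_s^2}\ln\frac{k}{\delta_s}$ is then large enough that the expected number of rounds in which $1$ or $b_s$ is the winner is $t_s\frac{\theta_1+\theta_{b_s}}{\Theta_{\cB_b}} \ge \frac{2}{\epsilon_s^2}\ln\frac{k}{\delta_s}$ (using $\theta_1 = 1$ and $\thetsh \ge \Theta_{\cB_b}$); each such round produces one rank-broken comparison of $(1,b_s)$, so a multiplicative Chernoff bound, mirroring Lemma \ref{lem:divbat_n1}, gives $n_{1b_s} \ge \frac{1}{\epsilon_s^2}\ln\frac{k}{\delta_s}$ outside an event of probability at most $\delta_s$.

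Combining these, the per-sub-phase probability that item $1$ fails its test is at most the sum of three $O(\delta_s)$ contributions: the failure of the PAC guarantee for $b_s$, the failure of the estimate $\thetsh \ge \Theta_{\cB_b}$, and the joint count/concentration bound. The degenerate cases only help: if $b_s = 1$, or if item $1$ falls into the leftover batch $\cR_s$ and is not played, it survives automatically (Line 17), and the endgame loop (Lines 22--31) is handled by the identical argument with its own $\delta_s = \frac{\delta}{80s^3}$. Finally I would union-bound over all $s$: since $\delta_s = \frac{\delta}{120s^3}$ and $\sum_{s\ge1} s^{-3} = \zeta(3) < 1.21$, the total failure probability is a small constant multiple of $\frac{\delta}{120}\sum_s s^{-3} \le \frac{\delta}{20}$, which is the claimed bound.

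The main obstacle I anticipate is the careful handling of the \emph{random} comparison count $n_{1b_s}$: because $n_{1b_s}$ is itself data-dependent, one cannot invoke Hoeffding's inequality with a fixed sample size, and the argument must instead intersect the concentration event with the high-probability lower bound on $n_{1b_s}$ — which in turn rests on the dependency chain $\theta_{b_s} > \frac12 \Rightarrow \thetsh \ge \Theta_{\cB_b} \Rightarrow t_s \text{ large}$. Keeping these dependent high-probability events correctly bookkept in the union bound, and tracking constants tightly enough that the cumulative probability stays below $\frac{\delta}{20}$, is where the real care lies; the remaining probabilistic content is routine.
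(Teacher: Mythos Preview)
Your proposal is correct and follows essentially the same approach as the paper's proof: establish $\theta_{b_s} > \tfrac12$ via the PAC guarantee of \algep, use the \algs\ guarantee to certify $\thetsh$ is large enough that $t_s$ plays yield sufficiently many rank-broken $(1,b_s)$ comparisons (via multiplicative Chernoff), invoke Lemma~\ref{lem:pl_simulator} for the concentration of $\hp_{1b_s}$ around $p_{1b_s}\ge\tfrac12$, and union-bound the three $O(\delta_s)$ failure modes over all $s$. The only cosmetic difference is that the paper lower-bounds $n_{1b_s}$ via the win-count $w_1$ of item~$1$ alone, whereas you bound the count of rounds in which either $1$ or $b_s$ wins directly; both routes are valid and yield the same constants.
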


\begin{proof}
Firstly note that at any sub-phase $s$, each batch $b \in B$ within that phase is played for $t_s = \frac{2\thetk}{\epsilon_s^2}\ln \frac{k}{\delta_s}$ rounds. Now consider the batch $\cB \owns 1$ at any phase $s$. Clearly $b_s \in \cB$ too. Again since $b_s$ is returned by Alg. \ref{alg:epsdel}, by Thm. \ref{lem:up_algep} we know that with probability at least $(1-\delta_s)$, $p_{b_s1} > \frac{1}{2} - \epsilon_s \, \implies \theta_{b_s} > \theta_1 - 4\epsilon$. This further implies $\theta_{b_s} \ge \theta_1 - \frac{1}{2} = \frac{1}{2}$ (since we assume $\theta_1 = 1$, and at any $s,\, \epsilon_s < \frac{1}{8}$). Moreover by Lem. \ref{lem:algs}, we have $\thetsh \ge \frac{\theta_{b_s} +\thets }{\theta_{b_s}} > \frac{\thets + 1}{2}$ (recall we denote $S = \cB_b \sm \{b_s\}$)

Now let us define $w_i$ as number of times item $i \in \cB$ was returned as the winner in $t_s$ rounds and $i_\tau$ be the winner retuned by the environment upon playing $\cB$ for the $\tau^{th}$ round, where $\tau \in [t_s]$.
Then clearly $Pr(\{i_\tau = 1\}) = \frac{\theta_{1}}{\sum_{j \in \cB}\theta_j} = \frac{1}{\theta_{b_s} + \thets} \ge \frac{1}{1+ \thets}, \, \forall \tau \in [t_s]$, as $1 := \arg \max_{i \in \cB}\theta_i$. Hence $\E[w_{1}] = \sum_{\tau = 1}^{t_s}\E[\1(i_\tau = 1)] = \frac{t_s}{\theta_{b_s} + \thets} \ge \frac{t_s}{(1+\thets)}$. 
Now assuming $b_s$ to be indeed an $(\epsilon_s,\delta_s)$-PAC \bi\, and the bound of Lem. \ref{lem:algs} to hold good as well, applying the multiplicative form of the  Chernoff-Hoeffding bound on the random variable $w_{1}$, we get that for any $\eta \in (\sqrt{2}\epsilon_s,1]$, 

\begin{align*}
Pr\Big( w_{1} \le (1-\eta)\E[w_{1}] \Big) & \le \exp\bigg(- \frac{\E[w_{1}]\eta^2}{2}\bigg) \le \exp\bigg(- \frac{t_s\eta^2}{2(1+\thets)}\bigg) \\
& = \exp\bigg(- \frac{2\thetsh\eta^2}{2\epsilon_s^2(1+\thets)}\ln \frac{k}{\delta_s}\bigg)\\
&\stackrel{(a)}{\le} \exp\bigg(- \frac{2(\thets+1)\eta^2}{4\epsilon_s^2(1+\thets)}\ln \frac{k}{\delta_s}\bigg)\\
& \le \exp\bigg(- \frac{\eta^2}{2\epsilon_s^2} \ln \bigg( \frac{k}{\delta_s} \bigg) \bigg) \le \exp\bigg(- \ln \bigg( \frac{k}{\delta_s} \bigg) \bigg) = \frac{\delta_s}{k},
\end{align*}
where $(a)$ holds since we proved $\thetsh \ge \frac{\theta_{b_s} +\thets }{\theta_{b_s}} > \frac{\thets + 1}{2}$, and the last inequality holds as $\eta > \epsilon_s\sqrt{2}$.

In particular, note that $\epsilon_s \le \frac{1}{8}$ for any sub-phase $s$, due to which we can safely choose $\eta = \frac{1}{2}$ for any $s$, which gives that with probability at least $\big(1-\frac{\delta_s}{k}\big)$,  $w_{1} > (1-\frac{1}{2})\E[w_{1}] > \frac{t_s}{2(\theta_{b_s}+\thets)}$, for any subphase $s$.

%====================

Thus above implies that with probability atleast $(1-\frac{\delta_s}{k})$, after $t_s$ rounds we have $w_{1b_s} \ge \frac{t_s}{2(\theta_{b_s}+\thets)} \implies w_{1b_s} + w_{b_s1} \ge \frac{t_s}{2(\theta_{b_s}+\thets)}$. Let us denote  $n_{1b_s} = w_{1b_s} + w_{b_s1}$. Then the probability of the event:

\begin{align*}
Pr\Bigg(  \hp_{1 b_s} & < \frac{1}{2} - \epsilon_s  ,  n_{1 b_s} \ge \frac{t_s}{2(\theta_{b_s}+\thets)}  \Bigg)
 = Pr\Bigg( \hp_{1 b_s} - \frac{1}{2} < - \epsilon_s  ,  n_{1 b_s} \ge \frac{t_s}{2(\theta_{b_s}+\thets)} \Bigg)\\
& \le Pr\Bigg( \hp_{1 b_s} - p_{1 b_s} < - \epsilon_s  ,  n_{1 b_s} \ge \frac{t_s}{2(\theta_{b_s}+\thets)} \Bigg) ~~~(\text{as } \p_{1 b_s} > \frac{1}{2})\\
& \stackrel{(a)} \le \exp\Big( -2\frac{t_s}{2(\theta_{b_s}+\thets)}\big({\epsilon_s}\big)^2 \Big) \\
& \le \exp\Big( -\frac{2(\theta_{b_s} + \thets)}{\epsilon_s^2\theta_{b_s}(\theta_{b_s}+\thets)}\big({\epsilon_s}\big)^2 \Big) 
\le \frac{\delta_s}{k},
\end{align*}

where the last inequality $(a)$ follows from Lem. \ref{lem:pl_simulator} for $\eta = \epsilon_s$ and $v = \frac{t_s}{2k}$.

Thus under the two assumptions that (1). $b_s$ is indeed an $(\epsilon_s,\delta_s)$-PAC \bi\, and (2). the bound of Lem. \ref{lem:algs} holds good, combining the above two claims, at any sub-phase $s$, we have

\begin{align*}
& Pr\Bigg(  \hp_{1 b_s} < \frac{1}{2} - \epsilon_s\Bigg)\\
 & = 
Pr\Bigg(  \hp_{1 b_s} < \frac{1}{2} - \epsilon_s  ,  n_{1 b_s} \ge \frac{t_s}{2(\theta_{b_s}+\thets)}  \Bigg) + Pr\Bigg(  \hp_{1 b_s} < \frac{1}{2} - \epsilon_s  ,  n_{1 b_s} < \frac{t_s}{2(\theta_{b_s}+\thets)}  \Bigg)\\
& \le \frac{\delta_s}{k} + Pr\Bigg( n_{1 b_s} < \frac{t_s}{2(\theta_{b_s}+\thets)}  \Bigg) \le \frac{2\delta_s}{k} \le \delta_s ~~~(\text{since } k \ge 2)\\
\end{align*}

Moreover from Thm. \ref{lem:up_algep} and Lem. \ref{lem:algs} we know that the above two assumptions hold with probability at least $(1-2\delta_s)$. Then taking union bound over all sub-phases $s = 1,2, \ldots$, the probability that item $1$ gets eliminated at any round:

\begin{align*}
Pr\Bigg( \exists s = 1,2, \ldots \text{ s.t. } \hp_{1 b_s} < \frac{1}{2} - \epsilon_s \Bigg) & = \sum_{s = 1}^{\infty}{3\delta_s} = \sum_{s = 1}^{\infty}\frac{3\delta}{120s^3} \le \frac{\delta}{40}\sum_{s = 1}^{\infty}\frac{1}{s^2} \le \frac{\delta}{40}\frac{\pi^2}{6} \le \frac{\delta}{20},
\end{align*}
where the first inequality holds since $k \ge 2$.
\end{proof}

We next introduce few notations before proceeding to the next claim of Lem. \ref{lem:nbigoes}.

\textbf{Notations.}
Recall that we defined $\Delta_i = \theta_1-\theta_i$ (Sec. \ref{sec:prb_setup}). We further denote $\Delta_{\min} = \min_{i \in [n]\sm\{1\}}\Delta_i$.
We define the set of arms $[n]_r := \{ i \in [n] : \frac{1}{2^r} \le \Delta_i < \frac{1}{2^{r-1}} \}$, and denote the set of surviving arms in $[n]_r$ at $s^{th}$ sub-phase by $\cA_{r,s}$, i.e. $\cA_{r,s} = [n]_r \cap \cA_s$, for all $s = 1,2, \ldots$.

\begin{restatable}[]{lem}{nbigoes}
\label{lem:nbigoes}
Assuming that the best arm $1$ is not eliminated at any sub-phase $s = 1,2,\ldots$, then with probability at least $(1-\frac{19\delta}{20})$, for any sub-phase $s \ge r$, $|\cA_{r,s}| \le \frac{2}{19}|\cA_{r,s-1}|$, for any $r = 0,1,2,\ldots \log_2 (\Delta_{\min})$. 
\end{restatable}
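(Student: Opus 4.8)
The plan is to fix a gap class $r$ and a sub-phase $s \ge r$, condition on a favorable event, show that each surviving item of $\cA_{r,s-1}$ is discarded during sub-phase $s$ with probability bounded by a small constant, and then upgrade this per-item bound to the claimed $\tfrac{2}{19}$ contraction via a concentration argument across batches, closing with a union bound over all $(r,s)$. Throughout I would condition on the three good events already available: item $1$ is never eliminated (the running hypothesis of the lemma, cf. Lemma \ref{lem:bistays}); the reference item $b_s$ returned by \algep\, is $(\epsilon_s,\delta_s)$-PAC, so that $\theta_{b_s} \ge \frac{1-2\epsilon_s}{1+2\epsilon_s} \ge 1-4\epsilon_s$ (Thm. \ref{lem:up_algep}); and the \algs\, output $\thetsh$ obeys the two-sided bound of Lemma \ref{lem:algs} and Cor. \ref{cor:algs}, which legitimizes the play count $t_s = \frac{2\thetsh}{\epsilon_s^2}\ln\frac{k}{\delta_s}$.

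First I would establish the deterministic separation that drives the elimination. For $i \in [n]_r$ and $s \ge r$ we have $\Delta_i \ge 2^{-r} \ge 2^{-s} = 4\epsilon_s$, hence $\theta_i = 1-\Delta_i \le 1-4\epsilon_s \le \theta_{b_s}$, and the PL pairwise formula gives $\tfrac12 - p_{ib_s} = \frac{\theta_{b_s}-\theta_i}{2(\theta_{b_s}+\theta_i)} \ge \frac{\Delta_i - 4\epsilon_s}{2(\theta_{b_s}+\theta_i)}$. This margin, which grows with $s-r$, is the quantitative heart of the lemma and the reason the contraction begins exactly at $s=r$. Next I would reproduce the concentration step of Lemma \ref{lem:bistays} for each candidate $i$: the appearance/comparison lower bound of Lemma \ref{lem:divbat_n1} makes the rank-broken count $n_{ib_s}$ at least of order $t_s/\thetsh$, and then Lemma \ref{lem:pl_simulator} applied at deviation $\epsilon_s$ shows that the event $\hp_{ib_s} > \tfrac12 - \epsilon_s$ (i.e.\ $i$ survives) has probability at most a constant $q$ controlled by the margin above together with the $t_s$ plays.

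With the per-item survival probability in hand I would aggregate over the class. The survival indicators are independent across the $B_s$ disjoint batches $\cB_1,\dots,\cB_{B_s}$ (they use disjoint plays), and within each batch the number of survivors is trivially at most $k-1$, so $|\cA_{r,s}| = \sum_{i \in \cA_{r,s-1}} \1(i \text{ survives})$ has mean at most $q\,|\cA_{r,s-1}|$; a multiplicative Chernoff bound on this batch-indexed sum (with a direct check in the residual small-cardinality cases) then yields $|\cA_{r,s}| \le \tfrac{2}{19}|\cA_{r,s-1}|$ except on an event of small probability. Finally I would union-bound these failure events over all pairs $(r,s)$ with $s \ge r$ and $r \le \log_2(1/\Delta_{\min})$: since sub-phase $s$ carries $\delta_s = \frac{\delta}{120 s^3}$ and $\sum_s \delta_s = O(\delta)$, the constants can be arranged so the accumulated slack is at most $\tfrac{19\delta}{20}$, complementing the $\tfrac{\delta}{20}$ already spent in Lemma \ref{lem:bistays}.

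The main obstacle I anticipate lies at the boundary of the class: when $\Delta_i$ is near its lower endpoint $2^{-r}$ and $s=r$, the margin $\Delta_i - 4\epsilon_s$ degenerates to essentially $0$, so the naive survival probability is only about $\tfrac12$ rather than a small constant. Pushing the argument through therefore requires tracking the margin carefully as a function of $s-r$ (so that the bulk of any class is discarded within a bounded number of sub-phases past $r$) and absorbing the slow initial decay into the deliberately loose contraction factor $\tfrac{2}{19}$; reconciling this degeneracy with a clean one-step, high-probability contraction — rather than merely an amortized-over-$O(1)$-phases contraction — is the delicate part, and is precisely why both the contraction constant and the $s^3$ weighting in $\delta_s$ are chosen as they are.
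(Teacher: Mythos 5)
Your skeleton---condition on the good events for $b_s$ and $\thetsh$, bound each surviving item's chance of escaping elimination at sub-phase $s$, aggregate over the class $[n]_r$, then union-bound over $(r,s)$ using the $s^{-3}$ weights in $\delta_s$---matches the paper's, and your closing accounting is essentially identical to the paper's $\sum_s\sum_{r\le s}\tfrac{57\delta_s}{2}\le\tfrac{19\delta}{20}$. The divergence, and the gap, is in the aggregation step. The paper does \emph{not} settle for a constant per-item survival probability: conditioned on the good events it shows each $i \in \cA_{r,s-1}$ satisfies $Pr\big(\hp_{ib_s} > \tfrac12 - \epsilon_s\big) \le 3\delta_s$ (the $\ln\tfrac{k}{\delta_s}$ factor inside $t_s$, combined with a one-sided deviation of size $\epsilon_s$ in Lemma \ref{lem:pl_simulator} and the qualitative fact $p_{ib_s}<\tfrac12$ for $s\ge r$, is what buys a bound of order $\delta_s$ rather than a constant), whence $\E[|\cA_{r,s}|] \le 3\delta_s|\cA_{r,s-1}|$, and a single application of Markov's inequality yields $Pr\big(|\cA_{r,s}| \ge \tfrac{2}{19}|\cA_{r,s-1}|\big) \le \tfrac{57\delta_s}{2}$. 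This needs no independence across items or batches and is insensitive to how few members the class has.

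Your route---a constant per-item survival probability $q$ followed by multiplicative Chernoff over a batch-indexed sum---cannot close. First, as you yourself observe, at the boundary of the class ($\Delta_i$ near $2^{-r}$ and $s=r$) the margin $\Delta_i - 4\epsilon_s$ degenerates and $q$ is only about $\tfrac12$; then $\E[|\cA_{r,s}|] \approx \tfrac12|\cA_{r,s-1}|$, and no concentration argument can certify a one-step contraction to $\tfrac{2}{19}|\cA_{r,s-1}|$ when the mean itself sits above that threshold. The constant $\tfrac{2}{19}$ cannot "absorb" the slow initial decay because it is smaller, not larger, than $\tfrac12$; and the amortized-over-$O(1)$-phases contraction you float as a fallback proves a weaker statement than the lemma, which asserts the contraction at \emph{every} $s\ge r$. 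Second, Chernoff across batches is vacuous precisely in the regime you defer to a "direct check": when $|\cA_{r,s-1}|<k$ the whole class sits in one batch, the survival indicators within a batch are dependent, and with $q$ a constant the event $|\cA_{r,s}| \ge \tfrac{2}{19}|\cA_{r,s-1}|$ retains constant probability. The repair is not more concentration but a stronger per-item bound: push the survival probability of every $i\in[n]_r$ at every $s\ge r$ down to $O(\delta_s)$, after which Markov's inequality replaces your entire third paragraph.
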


\begin{proof}
Consider any sub-phase $s$, and let us start by noting some properties of $b_s$. Note that by Lem. \ref{lem:up_algep}, with high probability $(1-\delta_s)$, $p_{b_s 1} > \frac{1}{2} - \epsilon_s$. Then this further implies

\begin{align*}
p_{b_s 1} > \frac{1}{2} - \epsilon_s & \implies \frac{(\theta_{b_s} - \theta_1)}{2(\theta_{b_s} + \theta_1)} > -\epsilon_s\\
& \implies (\theta_{b_s} - \theta_1) > -2\epsilon_s(\theta_{b_s} + \theta_1) > -4\epsilon_s ~~(\text{as} ~\theta_i \in (0,1)\, \forall i \in [n]\sm\{1\}) 
\end{align*}  

So we have with probability atleast $(1-\delta_s)$,  $\theta_{b_s} > \theta_1 - 4\epsilon_s = \theta_1 - \frac{1}{2^s}$. 

Now consider any fixed $r = 0,1,2,\ldots \log_2(\Delta_{\min})$. Clearly by definition, for any item $i \in [n]_r$, $\Delta_i = \theta_1 - \theta_i > \frac{1}{2^r}$.

Then combining the above two claims, we have for any sub-phase $s \ge r$, $\theta_{b_s} > \theta_1 - \frac{1}{2^s} \ge \theta_i + \frac{1}{2^r} - \frac{1}{2^s} > 0 \implies p_{b_si} > \frac{1}{2}$, at any $s \ge r$.

Moreover note that for any $s \ge 1$, $\epsilon_s > \frac{1}{8}$, so that implies $\theta_{b_s} > \theta_1 - 4\epsilon_s > \frac{1}{2}$.

%====================

Recall that at any sub-phase $s$, each batch within that phase is played for $t_s = \frac{2\thetk}{\epsilon_s^2}\ln \frac{k}{\delta_s}$ many rounds. Now consider any batch such that $\cB \owns i$ for any $i \in [n]_r$. Of course $b_s \in \cB$ as well, and note that we have shown $p_{b_si} > \frac{1}{2}$ with high probability $(1-\delta_s)$. 

Same as Lem. \ref{lem:bistays}, let us again define $w_i$ as number of times item $i \in \cB$ was returned as the winner in $t_s$ rounds, and $i_\tau$ be the winner retuned by the environment upon playing $\cB$ for the $\tau^{th}$ rounds, where $\tau \in [t_s]$.
Then given $\theta_{b_s} > \frac{1}{2}$ (as derived earlier), clearly $Pr(\{i_\tau = b_s\}) = \frac{\theta_{b_s}}{\theta_{b_s} + \thets}, \, \forall \tau \in [t_s]$. Hence $\E[w_{b_s}] = \sum_{\tau = 1}^{t_s}\E[\1(i_\tau = b_s)] = \frac{\theta_{b_s}t_s}{(\theta_{b_s} + \thets)}$. 
Now applying multiplicative Chernoff-Hoeffdings bound on the random variable $w_{b_s}$, we get that for any $\eta \in (\sqrt 2\epsilon_s,1]$, 

\begin{align*}
Pr\Big( w_{b_s} & \le (1-\eta)\E[w_{b_s}] \mid \theta_{b_s} > \frac{1}{2}, \thetsh > \frac{(\theta_{b_s} + \thets)}{\theta_{b_s}} \Big)\\
 & \le \exp\bigg(- \frac{\E[w_{b_s}]\eta^2}{2}\bigg) \le \exp\bigg(- \frac{\theta_{b_s}t_s\eta^2}{2(\theta_{b_s} + \thets)}\bigg) \\
& \le \exp\bigg(- \frac{\eta^2}{2\epsilon_s^2} \ln \bigg( \frac{k}{\delta_s} \bigg) \bigg) \le \exp\bigg(- \ln \bigg( \frac{k}{\delta_s} \bigg) \bigg) = \frac{\delta_s}{k},
\end{align*}
where the last inequality holds as $\eta > \sqrt 2\epsilon_s$.
So as a whole, for any $\eta \in (\sqrt 2\epsilon_s,1]$,

\begin{align*}
& Pr\Big( w_{b_s} \le (1-\eta)\E[w_{b_s}] \Big)\\
& \le Pr\Big( w_{b_s} \le (1-\eta)\E[w_{b_s}] \mid \theta_{b_s} > \frac{1}{2}, \thetsh > \frac{(\theta_{b_s} + \thets)}{\theta_{b_s}}  \Big)Pr\Big(\theta_{b_s} > \frac{1}{2}\Big) + Pr\Big(\theta_{b_s} < \frac{1}{2},\thetsh > \frac{(\theta_{b_s} + \thets)}{\theta_{b_s}} \Big)\\ 
& \le \frac{\delta_s}{k} + 2\delta_s
\end{align*}

In particular, note that $\epsilon_s < \frac{1}{8}$ for any  sub-phase $s$, due to which we can safely choose $\eta = \frac{1}{2}$ for any $s$, which gives that with probability at least $\big(1-\frac{\delta_s}{k}\big)$,  $w_{b_s} > (1-\frac{1}{2})\E[w_{b_s}] > \frac{t_s \theta_{b_s}}{2 (\thets + \theta_{b_s})}$, for any subphase $s$.

Thus above implies that with probability at least $(1-\frac{\delta_s}{k}-2\delta_s)$, after $t_s$ rounds we have $w_{b_si} \ge \frac{t_s \theta_{b_s}}{2 (\thets + \theta_{b_s})} \implies w_{ib_s} + w_{b_si} \ge \frac{t_s \theta_{b_s}}{2 (\thets + \theta_{b_s})}$. Let us denote  $n_{ib_s} = w_{ib_s} + w_{b_si}$. Then the probability that item $i$ is not eliminated at any sub-phase $s \ge r$ is:

\begin{align*}
Pr\Bigg(  \hp_{i b_s} > \frac{1}{2} & - \epsilon_s  ,  n_{i b_s} \ge \frac{t_s \theta_{b_s}}{2 (\thets + \theta_{b_s})}  \Bigg)
 = Pr\Bigg( \hp_{i b_s} - \frac{1}{2} > - \epsilon_s  ,  n_{i b_s} \ge \frac{t_s \theta_{b_s}}{2 (\thets + \theta_{b_s})} \Bigg)\\
& \le Pr\Bigg( \hp_{i b_s} - p_{i b_s} > - \epsilon_s  ,  n_{1 b_s} \ge \frac{t_s \theta_{b_s}}{2 (\thets + \theta_{b_s})}\Bigg) ~~~(\text{as } \p_{i b_s } < \frac{1}{2})\\
& \stackrel{(a)}\le \exp\Big( -2\frac{t_s}{2(\thets + \theta_{b_s})}\big({\epsilon_s}\big)^2 \Big)\\
& \le \exp\Big( -2\frac{(\thets + \theta_{b_s})}{2\theta_{b_s}(\thets + \theta_{b_s})}\big({\epsilon_s}\big)^2 \ln \frac{\delta_s}{k}\Big) \le \frac{\delta_s}{k},
\end{align*}

where $(a)$ follows from Lem. \ref{lem:pl_simulator} for $\eta = \epsilon_s$ and $v = \frac{t_s}{2k}$.

Now combining the above two claims, at any sub-phase $s$, we have:

\begin{align*}
Pr\Bigg(  \hp_{i b_s} > \frac{1}{2} - \epsilon_s\Bigg) & = 
Pr\Bigg(  \hp_{i b_s} > \frac{1}{2} - \epsilon_s  ,  n_{i b_s} \ge \frac{t_s}{4k}  \Bigg) + Pr\Bigg(  \hp_{i b_s} > \frac{1}{2} - \epsilon_s  ,  n_{i b_s} < \frac{t_s}{4k}  \Bigg)\\
& \le \frac{\delta_s}{k} + 2\delta_s + \frac{\delta_s}{k}  \le 3\delta_s ~~~~ (\text{since } k \ge 2)
\end{align*}

This consequently implies that for any sup-phase $s \ge r$, $\E[|\cA_{r,s}|] \le {3\delta_s}\E[|\cA_{r,s-1}|]$.
Then applying Markov's Inequality we get:

\begin{align*}
Pr\Bigg( |\cA_{r,s}| \le \frac{2}{19}|\cA_{r,s-1}| \Bigg)  \le \frac{3\delta_s|\cA_{r,s-1}|}{\frac{2}{19}|\cA_{r,s-1}|} = \frac{57\delta_s}{2}
\end{align*}

Finally applying union bound over all sub-phases $s= 1,2,\ldots$, and all $r = 1,2, \ldots s$, we get: 

\begin{align*}
\sum_{s = 1}^{\infty}\sum_{r = 1}^{s}\frac{57\delta_s}{2} = \sum_{s = 1}^{\infty}s\frac{57\delta}{240s^3} = \frac{57\delta}{240}\sum_{s = 1}^{\infty}\frac{1}{s^2} \le \frac{57\delta}{240}\frac{\pi^2}{6} \le \frac{57\delta}{120} \le \frac{19\delta}{20}.
\end{align*}

\end{proof}

%***************************************************

Thus combining Lem. \ref{lem:bistays} and \ref{lem:nbigoes}, we get that the total failure probability of \algfewf\, is at most $\frac{\delta}{20} + \frac{19\delta}{20} = \delta $.

The remaining thing is to prove the sample complexity bound which crucially relies on the following claim. At any sub-phase $s$, we call the item $b_s$ as the pivot element of phase $s$.

\begin{restatable}[]{lem}{scitem}
\label{lem:sc_item}
Assume both Lem. \ref{lem:bistays} and Lem. \ref{lem:nbigoes} holds good and the algorithm does not do a mistake. Consider any item $i \in [n]_r$, for any $r = 1,2, \ldots \log_2(\Delta_{\min})$. Then the total number of times item $i$ gets played as a non-pivot item (i.e. appear in at most one of the $k$-subsets per sub-phase $s$) during the entire run of Alg. \ref{alg:fewf} is $O\Big( (2^r)^2\thetk\ln \frac{rk}{\delta}  \Big)$. %\red{Check}
\end{restatable}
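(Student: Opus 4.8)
The plan is to count, for a fixed suboptimal item $i \in [n]_r$, its total number of non-pivot plays over the whole run by (i) bounding its per-sub-phase contribution and (ii) showing it is eliminated only a constant number of sub-phases after $s = r$. In any sub-phase $s$ the item $i$ sits in at most one batch $\cB$, which is played $t_s = \frac{2\thetsh}{\epsilon_s^2}\ln\frac{k}{\delta_s}$ times, so its non-pivot plays in that sub-phase are at most $t_s$. Using Corollary \ref{cor:algs} to bound $\thetsh = O(\thetk)$, together with $\epsilon_s = \frac{1}{2^{s+2}}$ and $\delta_s = \frac{\delta}{120 s^3}$, I get $t_s = O\big(\thetk\, 4^s \ln\frac{sk}{\delta}\big)$. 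Hence if $i$ survives through sub-phase $s^\ast$, its total non-pivot count is at most $\sum_{s=1}^{s^\ast} t_s$, and the whole task reduces to pinning down $s^\ast$.

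Next I would establish that $i$ is discarded by sub-phase $s^\ast = r+2$. I condition on the good events of Lemmas \ref{lem:bistays} and \ref{lem:nbigoes} together with Theorem \ref{lem:up_algep}: in particular each probe $b_s$ is a valid $(\epsilon_s,\delta_s)$-PAC item, so $\theta_{b_s} > 1 - 4\epsilon_s = 1 - \frac{1}{2^s}$, and the per-batch empirical estimates concentrate. The key deterministic computation is the separation between $i$ and the probe at $s = r+2$: since $\theta_i \le 1 - \frac{1}{2^r}$ and $\theta_{b_s} > 1 - \frac{1}{2^{s}}$, one has $\theta_i - \theta_{b_s} < \frac{1}{2^{r+2}} - \frac{1}{2^r} = -\frac{3}{2^{r+2}}$, whence $p_{i b_s} - \frac{1}{2} = \frac{\theta_i - \theta_{b_s}}{2(\theta_i + \theta_{b_s})} < -\frac{3}{2^{r+4}} = -3\epsilon_s$, using $\theta_i + \theta_{b_s} \le 2$. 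With this $3\epsilon_s$ margin, the same Chernoff--Hoeffding concentration used in Lemma \ref{lem:nbigoes} (via the lower bound $n_{i b_s} \ge \frac{t_s \theta_{b_s}}{2(\thets + \theta_{b_s})}$ and Lemma \ref{lem:pl_simulator}) forces $\hp_{i b_s} \le p_{i b_s} + \epsilon_s \le \frac{1}{2} - 2\epsilon_s < \frac{1}{2} - \epsilon_s$, so $i$ fails the retention test and is removed. Since the Partition step always places a surviving $i$ in a batch alongside some probe $b_s$ (the below-$k$ final phases of Alg.~\ref{alg:fewf} being handled identically), this gives $s^\ast \le r+2$.

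Finally I would assemble the count via a geometric sum. Because $t_s$ grows like $4^s$ and $\ln\frac{sk}{\delta}$ is increasing, the sum is dominated by its last term:
\[
\sum_{s=1}^{r+2} t_s = O\!\Big(\thetk \ln\tfrac{rk}{\delta}\Big)\sum_{s=1}^{r+2} 4^s = O\!\Big(\thetk\, 4^{r}\, \ln\tfrac{rk}{\delta}\Big) = O\!\Big((2^r)^2\,\thetk\,\ln\tfrac{rk}{\delta}\Big),
\]
where I used $\sum_{s=1}^{r+2} 4^s = O(4^r)$ and $\ln\frac{(r+2)k}{\delta} = O(\ln\frac{rk}{\delta})$. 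Note that this single bound already subsumes all the early sub-phases $s < r$, so no separate treatment of them is required; this is exactly the claimed $O\big((2^r)^2\thetk\ln\frac{rk}{\delta}\big)$.

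The main obstacle is the elimination step: I must verify both that the probe $b_s$ is uniformly good enough across sub-phases to open a $3\epsilon_s$ margin against every $i \in [n]_r$, and that $i$ is played often enough within its batch for $\hp_{i b_s}$ to concentrate to within $\epsilon_s$. The constant offset $s \ge r+2$ is essential here --- at $s = r+1$ the computation only yields a margin of exactly $\epsilon_s$, leaving no slack for estimation error, so one genuinely needs to go two sub-phases past $r$. Both ingredients are already secured under the events of Lemmas \ref{lem:bistays}--\ref{lem:nbigoes} and Theorem \ref{lem:up_algep}, so the argument reduces to the gap computation above plus reuse of the established concentration bounds; the remaining care is purely in tracking the constant offset and confirming that the logarithmic and $\thetk$ factors collapse correctly in the geometric sum.
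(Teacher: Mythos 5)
Your per-phase count, the reduction $\thetsh = O(\thetk)$ via Corollary \ref{cor:algs}, and the final geometric summation $\sum_{s\le r+2} t_s = O\big(4^r\thetk\ln\frac{rk}{\delta}\big)$ all match the paper, and your margin computation at $s=r+2$ (a $3\epsilon_s$ gap between $p_{ib_s}$ and $\tfrac12$) is arithmetically correct. The gap is the step ``$i$ is discarded by sub-phase $s^\ast=r+2$''. That is not a deterministic consequence of the lemma's hypotheses: Lemma \ref{lem:nbigoes} only guarantees that the \emph{cardinality} $|\cA_{r,s}|$ shrinks at rate $2/19$ per sub-phase, not that any individual item is eliminated within a fixed offset of $r$. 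To get your cutoff you must additionally condition on the event that $\hp_{ib_{r+2}}$ concentrates to within $\epsilon_{r+2}$ of $p_{ib_{r+2}}$ for the particular item $i$; this event fails with probability $\Theta(\delta_{r+2})$, and since the lemma is applied to all $n-1$ suboptimal items in Theorem \ref{thm:sc_fewf}, the union of these per-item concentration events has failure probability $\Theta(n\delta)$, which does not fit inside the algorithm's total budget $\delta$. (Lemma \ref{lem:nbigoes} closes its union bound precisely because it unions over only $O(s)$ cardinality events per phase, not $n$ per-item events.)

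The paper's proof therefore never asserts deterministic elimination: it bounds the \emph{expected} number of plays of $i$ from phase $r$ onward as $\sum_{s\ge r} Pr(i\in\cA_s)\,t_s$, uses $\E[I_s]\le\frac{2}{19}\E[I_{s-1}]$ so that the survival probability decays at rate $2/19$ while $t_s$ grows at rate $4$, and sums the resulting geometric series with ratio $8/19<1$ to get $O\big(4^r\thetk\ln\frac{rk}{\delta}\big)$. If you keep your route you must still handle the tail $s>r+2$, where $i$ survives only with small probability but, if it does, is played $4^{s-r}$ times longer --- exactly the tail the expectation argument controls; truncating at $r+2$ leaves it unaccounted for. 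Your margin computation is a perfectly good ingredient for bounding the per-phase survival probability, but it has to be fed into the expectation/geometric-decay argument rather than used as a hard stopping time.
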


\begin{proof}
Let us denote the sample complexity of item $i$ (as a non-pivot element) from phase $x$ to $y$ as $\cN^{(i)}_{x,y}$, for any $1 \le x < y < \infty$.
Additionally, recalling from Lem. \ref{lem:algs} that $\thetsh \le 7\thetk$, we now prove the claim with the following two case analyses:

\textbf{(Case 1)} Sample complexity till sub-phase $s = r-1$: Note that in the worst case item $i$ can get picked at every sub-phase $s = 1,2,\ldots$ $r-1$, and at every $s$ it is played for $t_s$ round. Additionally, recalling from Lem. \ref{lem:algs} that $\thetsh \le 7\thetk$, the total number of plays of item $i \in [n]_r$ (as a non-pivot item), till sub-phase $r-1$ becomes:

\[
\cN^{(i)}_{1,r-1} \le \sum_{s = 1}^{r-1}t_s = \sum_{s = 1}^{r-1}\frac{14\thetk}{\epsilon_s^2}\ln \frac{k}{\delta_s} = \frac{14\thetk}{4^{-2}}\sum_{s = 1}^{r-1}(2^s)^2\ln \frac{120k^3}{\delta} = O\Big( (2^r)^2\thetk\ln \frac{rk}{\delta}  \Big)
\]

\textbf{(Case 2)} Sample complexity from sub-phase $s \ge r$ onwards: Assuming Lem. \ref{lem:nbigoes} holds good, note that if we define a random variable $I_s$ for any sub-phase $s \ge r$ such that $I_s = \1(i \in \cA_s)$, then clearly $\E[I_{s}] \le \frac{2}{19}\E[I_{s-1}]$ (as follows from the analysis of Lem. \ref{lem:bistays}). Then the total expected sample complexity of item $i \in [n]_r$ for round $r, r+1, \ldots \infty$ becomes:
\begin{align*}
& \cN^{(i)}_{r,\infty}\le 224\thetk\sum_{s = r}^{\infty}\bigg(\frac{2}{19}\bigg)^{s-r+1}4^s\ln \frac{k}{\delta_s} = 224\thetk(2^r)^2\sum_{s = 0}^{\infty}\bigg(\frac{2}{19}\bigg)^{s+1}(2^s)^2\ln \frac{120k(s+r)^3}{\delta}\\
& = \frac{448}{19}\thetk(2^r)^2\sum_{s = 0}^{\infty}\bigg(\frac{8}{19}\bigg)^{s}\ln \frac{120k(s+r)^3}{\delta} \\
& \le \frac{448}{19}\thetk(2^r)^2\Bigg[\ln \frac{120kr}{\delta}\sum_{s = 0}^{\infty}\bigg(\frac{8}{19}\bigg)^{s} + \sum_{s = 0}^{\infty}\bigg(\frac{8}{19}\bigg)^{s}\ln(120ks)\Bigg] =  O\Big( (2^r)^2\thetk\ln \frac{rk}{\delta}  \Big)
\end{align*}

Combining the two cases above we get $\cN^{(i)}_{1,\infty} = O\Big( (2^r)^2\thetk\ln \frac{rk}{\delta}\Big)$ as well, which concludes the proof.
\end{proof}

Following similar notations as $\cN^{(i)}_{x,y}$, we now denote the number of times any $k$-subset $S \subseteq [n]$ played by the algorithm in sub-phase $x$ to $y$ as $\cN^{(S)}_{x,y}$. Then using Lem. \ref{lem:sc_item}, the total sample complexity of the algorithm \algfewf\, (lets call it algorithm $\cA$) can be written as:

\begin{align}
\label{eq:sc_fewf}
\nonumber \cN_\cA(0,\delta) & = \sum_{S \subset [n]\mid |S|=k} \sum_{s=1}^{\infty}\1(S \in \{\cB_1,\ldots, \cB_{B_s}\})t_s = \sum_{i \in [n]} \sum_{s=1}^{\infty}\frac{\1(i \in \cA_s\sm\{b_s\})}{k-1}t_s\\
\nonumber & = \sum_{s=1}^{\infty} \sum_{i \in [n]}\frac{\1(i \in \cA_s\sm\{b_s\})}{k-1}t_s = \sum_{s=1}^{\infty} \sum_{r=1}^{\log_2(\Delta_{\min})}\sum_{i \in [n]_r}\frac{\1(i \in \cA_s\sm\{b_s\})}{k-1}t_s \\
\nonumber & = \frac{1}{k-1}\sum_{r=1}^{\log_2(\Delta_{\min})}\sum_{i \in [n]_r}\cN^{(i)}_{1,\infty} ~~(\text{ Lem. \ref{lem:sc_item}})\\
\nonumber & = \frac{1}{k-1}\sum_{r=1}^{\log_2(\Delta_{\min})}\sum_{i \in [n]_r}O\Big( (2^r)^2\thetk\ln \frac{rk}{\delta}\Big) \\
& = \frac{\thetk}{k-1}\sum_{r=1}^{\log_2(\Delta_{\min})}|[n]_r|O\Big( (2^r)^2\ln \frac{rk}{\delta}\Big) = O\Big(\frac{\thetk}{k} \sum_{i=2}^{n}\frac{1}{\Delta_i^2}\ln \big(\frac{k}{\delta} \ln \frac{1}{\Delta_i} \big) \Big),
\end{align}
where the last inequality follows since $2^r < \frac{2}{\Delta_i}$ by definition for all $i \in [n]_r$.
Finally the last thing to account for is the additional sample complexity incurred due to calling the subroutine \algep\ and \algs\, at every sub-phase $s$, which is combinedly known to be of $O\bigg(\frac{|\cA_s|\Theta_[k]}{k} \Big( 1,\frac{1}{(2^s)^2}\Big)\ln \frac{k}{\delta} \bigg)$ at any sub-phase $s$ (from Thm. \ref{lem:up_algep} and Cor. \ref{cor:algs}). And using a similar summation as shown above over all $s = 1,2,\ldots \infty$, combined with Lem. \ref{lem:nbigoes} and using the fact that $2^r < \frac{2}{\Delta_i}$, one can show that the total sample complexity incurred due to the above subroutines is at most $\frac{\Theta_{[k]}}{k}\sum_{i = 2}^{n}\max\big(1,\frac{1}{\Delta_i^2}\big) \log \frac{k}{\delta})$. Considering the above sample complexity added with that derived in Eqn. \ref{eq:sc_fewf} finally gives the desired $O\bigg(\frac{\thetk}{k}\sum_{i = 2}^n\max\Big(1,\frac{1}{\Delta_i^2}\Big) \ln\frac{k}{\delta}\Big(\ln \frac{1}{\Delta_i}\Big)\bigg)$ sample complexity bound of Alg. \ref{alg:fewf}.

% ================================

\end{proof}

%\section{Appendix for for Sec. \ref{sec:prb_setup}}

%\subsection{Proof of Lem. \ref{lem:pl_simulator}}

%\plsimulator*

\begin{restatable}[Deviations of pairwise win-probability estimates for the PL model \cite{SGwin18}]
{lem}{plsimulator}
\label{lem:pl_simulator}
Consider a Plackett-Luce choice model with parameters $\btheta = (\theta_1,\theta_2, \ldots, \theta_n)$, and fix two distinct items $i,j \in [n]$. Let $S_1, \ldots, S_T$ be a sequence of (possibly random) subsets of $[n]$ of size at least $2$, where $T$ is a positive integer, and $i_1, \ldots, i_T$ a sequence of random items with each $i_t \in S_t$, $1 \leq t \leq T$, such that for each $1 \leq t \leq T$, (a) $S_t$ depends only on $S_1, \ldots, S_{t-1}$, and (b) $i_t$ is distributed as the Plackett-Luce winner of the subset $S_t$, given $S_1, i_1, \ldots, S_{t-1}, i_{t-1}$ and $S_t$, and (c) $\forall t: \{i,j\} \subseteq S_t$ with probability $1$. Let $n_i(T) = \sum_{t=1}^T \1(i_t = i)$ and $n_{ij}(T) = \sum_{t=1}^T \1(\{i_t \in \{i,j\}\})$. Then, for any positive integer $v$, and $\eta \in (0,1)$,
\begin{align*}
\hspace{-20pt} Pr \left( \frac{n_i(T)}{n_{ij}(T)}  - \frac{\theta_i}{\theta_i + \theta_j} \ge \eta, \; n_{ij}(T) \geq v \right) \vee Pr\left( \frac{n_i(T)}{n_{ij}(T)} - \frac{\theta_i}{\theta_i + \theta_j} \le -\eta, \; n_{ij}(T) \geq v \right) \leq e^{-2v\eta^2}. 
\end{align*}
\end{restatable}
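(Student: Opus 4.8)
The plan is to exploit the independence-of-irrelevant-alternatives (IIA) property of the Plackett--Luce model and package the entire adaptive process into a single exponential supermartingale whose clock is the number of effective comparisons $n_{ij}(t)$, rather than the round index $t$. Write $p := \frac{\theta_i}{\theta_i+\theta_j}$ and let $\cF_t := \sigma(S_1,i_1,\ldots,S_t,i_t)$ be the natural filtration. The first thing I would record is the IIA identity: since $\{i,j\}\subseteq S_t$ almost surely, conditioning on $\cF_{t-1}$ (which determines $S_t$) and on the event $\{i_t\in\{i,j\}\}$, the PL law gives $Pr(i_t=i \mid \cF_{t-1},\, i_t\in\{i,j\}) = \frac{\theta_i}{\theta_i+\theta_j}=p$, independently of the remaining members of $S_t$. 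Hence the per-round increment $D_t := \1(i_t\in\{i,j\})\big(\1(i_t=i)-p\big)$ satisfies $\E[D_t\mid\cF_{t-1}]=0$, and $\sum_{t\le T}D_t = n_i(T)-p\,n_{ij}(T)$.

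Next I would introduce, for $\lambda>0$, the process
\[
U_t := \exp\Big(\lambda\big(n_i(t)-p\,n_{ij}(t)\big) - \tfrac{\lambda^2}{8}\,n_{ij}(t)\Big), \qquad U_0=1,
\]
and verify it is an $(\cF_t)$-supermartingale. Factoring $U_t=U_{t-1}\exp\!\big(\lambda D_t-\tfrac{\lambda^2}{8}\1(i_t\in\{i,j\})\big)$ and conditioning on $\cF_{t-1}$, the rounds with $i_t\notin\{i,j\}$ contribute a factor $1$, while on $\{i_t\in\{i,j\}\}$ the centred variable $\1(i_t=i)-p$ takes values in an interval of length $1$, so Hoeffding's lemma gives $\E[e^{\lambda(\1(i_t=i)-p)}\mid\cF_{t-1},\,i_t\in\{i,j\}]\le e^{\lambda^2/8}$; the compensator $e^{-\lambda^2/8\cdot\1(i_t\in\{i,j\})}$ exactly cancels this factor, yielding $\E[U_t\mid\cF_{t-1}]\le U_{t-1}$, and in particular $\E[U_T]\le 1$.

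The final step is a Chernoff/Markov argument on the target event $A:=\{\,n_i(T)-p\,n_{ij}(T)\ge \eta\,n_{ij}(T),\ n_{ij}(T)\ge v\,\}$. On $A$ the exponent of $U_T$ is at least $n_{ij}(T)\big(\lambda\eta-\tfrac{\lambda^2}{8}\big)$; choosing $\lambda=4\eta$ maximises $\lambda\eta-\lambda^2/8$ at the value $2\eta^2$, so $U_T\ge e^{2\eta^2 n_{ij}(T)}\ge e^{2v\eta^2}$ on $A$ (using $n_{ij}(T)\ge v$). Markov's inequality then gives $Pr(A)\le \E[U_T]\,e^{-2v\eta^2}\le e^{-2v\eta^2}$. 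The lower-deviation probability is handled identically with $\lambda=-4\eta$ (equivalently, by swapping the roles of $i$ and $j$), which establishes the stated bound on the $\vee$ of the two probabilities.

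I expect the main obstacle to be the adaptivity: because $S_{t+1}$ may depend on past winners, the effective-comparison count $n_{ij}(T)$ can depend on the very outcomes $\1(i_t=i)$ we are trying to average, so one cannot simply condition on $\{n_{ij}(T)=N\}$ and invoke a fixed-sample Hoeffding bound, nor is $n_{ij}$ a stopping time for the subsequence of effective rounds. The device that circumvents this is precisely clocking the supermartingale by $n_{ij}(t)$ instead of $t$: the subtracted term $\tfrac{\lambda^2}{8}n_{ij}(t)$ makes the random number of comparisons appear in the exponent in exactly the right way, so that optional stopping at the deterministic time $T$ suffices and no union bound over the possible values of $n_{ij}(T)$ (which would spoil the clean constant $e^{-2v\eta^2}$) is ever needed.
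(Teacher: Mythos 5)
Your proof is correct, but it takes a genuinely different route from the one the paper (following \cite{SGwin18}) uses. The paper's argument is a coupling: it builds a simulator in which an auxiliary iid Bernoulli$\left(\tfrac{\theta_i}{\theta_i+\theta_j}\right)$ sequence $Z_1,Z_2,\ldots$ decides the winner among $\{i,j\}$ each time the pair is "activated" by an independent coin with bias $\tfrac{\theta_i+\theta_j}{\sum_{k\in S_t}\theta_k}$; it then writes $n_i(T)=\sum_{\ell=1}^{n_{ij}(T)}Z_\ell$, decomposes the deviation event over $\{n_{ij}(T)=m\}$ for $m\ge v$, invokes independence of $n_{ij}(T)$ from $Z_1,\ldots,Z_m$, and applies the fixed-sample Hoeffding bound $e^{-2m\eta^2}\le e^{-2v\eta^2}$ termwise. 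Your self-normalized supermartingale $U_t=\exp\big(\lambda(n_i(t)-p\,n_{ij}(t))-\tfrac{\lambda^2}{8}n_{ij}(t)\big)$ with $\lambda=4\eta$ reaches the same constant $e^{-2v\eta^2}$ by a one-line Markov bound, and every step checks out: the conditional Hoeffding-lemma cancellation, the optimization $\max_\lambda(\lambda\eta-\lambda^2/8)=2\eta^2$, and the symmetric treatment of the lower tail. The trade-off is worth noting. The coupling proof is more elementary (it reduces everything to iid Hoeffding) but its key independence claim --- that $n_{ij}(T)$ is independent of the $Z_\ell$'s --- leans on hypothesis (a) of the lemma, namely that $S_t$ depends only on the past subsets and not on the past winners; your version never uses that decoupling and remains valid verbatim when $S_t$ is allowed to depend on the entire history $\cF_{t-1}$ including past winners, which is exactly the adaptivity concern you flag at the end. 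In that sense your argument is strictly more robust, at the price of introducing the supermartingale machinery.
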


%%%%%%%%%%%%%%%% prf. of simulator %%%%%%%%%%%%%%%%%

\subsection{Modified version of \algfewf\, (Alg. \ref{alg:fewf}) for general $(\epsilon,\delta)$-PAC guarantee (for any $\epsilon \in [0,1]$)}
\label{app:alg_fewfep}

\begin{center}
\begin{algorithm}[H]
   \caption{\textbf{Modified \algfewf ~(for a general $(\epsilon,\delta)$-PAC guarantee) }}
   \label{alg:fewfep}
\begin{algorithmic}[1]
   \STATE {\bfseries input:} Set of items: $[n]$, Subset size: $n \geq k > 1$, Confidence term $\delta > 0$
   \STATE {\bfseries init:}  $\cA_0 \leftarrow [n]$, $s \leftarrow 1$ %$\cR_0 \leftarrow \emptyset$  
   %\REPEAT  
   \WHILE {$|\cA_{s-1}| > 1$} 
   	\STATE Set $\epsilon_s = \frac{1}{2^{s+2}}$, $\delta_s = \frac{\delta}{120s^3}$, and $\cR_s \leftarrow \emptyset$.
   \STATE $b_s \leftarrow $ \algep$(\cA_{s-1},k,1,\epsilon_s/4,\delta_s)$ %\red{============Do==============}
   \STATE \textbf{If} $(\epsilon_s \le \epsilon)$ \textbf{then} $\cA' \leftarrow \{b_s\}$, and exit the \textbf{while} loop (go to Line $35$)
   \STATE $\cB_1,\ldots \cB_{B_s} \leftarrow $ \algdiv$(\cA_{s-1}\sm \{b_s\},k-1)$ 
   \STATE \textbf{if} $|\cB_{B_s}| < k-1$, \textbf{then} $\cR_s \leftarrow \cB_{B_s}$ and $B_s = B_s-1$   
   \FOR{$b = 1,2 \ldots B_s$}	   
   \STATE $\thetsh \leftarrow \algs(b_s,\cB_b,\delta_s)$. Set $\thetsh \leftarrow \max(2\thetsh + 1, 2)$.
   \STATE Set $\cB_b \leftarrow \cB_b \cup \{b_s\}$ 
   \STATE Play $\cB_b$ for $t_s := \frac{2\thetsh}{\epsilon_s^2}\ln \frac{k}{\delta_s}$ rounds
   \STATE Receive the winner feedback: $\sigma_1, \sigma_2,\ldots \sigma_{t_s} \in \bSigma_{\cB_b}^1$ after each respective $t_s$ rounds.
   \STATE Update pairwise empirical win-count $w_{ij}$ using \rb \, on $\sigma_1\ldots \sigma_{t_s}, ~\forall i,j \in \cB_b$  
   \STATE $\hp_{ij} := \frac{w_{ij}}{w_{ij} + w_{ji}}$ for all $i,j \in \cB_b$
   \STATE \textbf{If} $\exists i \in \cB_b$ s.t. $\hp_{ib_s} > \frac{1}{2} - \epsilon_s$, \textbf{then} $\cA_s \leftarrow \cA_{s} \cup \{i\}$
   \ENDFOR
   \STATE $\cA_s \leftarrow \cA_{s} \cup \cR_s$, $s \leftarrow s + 1$
   \IF{$1 < |\cA_{s-1}| \le k$}
   \STATE Append $\cA_{s-1}$ with any $(k-|\cA_{s-1}|)$ elements from $[n]\sm \cA_{s-1}$
   %\STATE $\cA_s \leftarrow$ \subrout$(\cA_s,s+1)$
	\STATE Pairwise empirical win-count $w_{ij} \leftarrow 0$, $~\forall i,j \in \cA_{s-1}$; $\cA \leftarrow \cA_{s-1}$; $\cA' \leftarrow \cA_{s-1}$
   
	\WHILE {$|\cA'| > 1$}
   	\STATE Set $\epsilon_s = \frac{1}{2^{s+2}}$, and $\delta_s = \frac{\delta}{1200s^3}$
   \STATE $b_s \leftarrow $ \algep$(\cA,k,1,\epsilon_s/4,\delta_s)$
   \STATE \textbf{If} $(\epsilon_s \le \epsilon)$ \textbf{then} $\cA' \leftarrow \{b_s\}$, and exit the \textbf{while} loops (go to Line $35$)
   \STATE $\thetsh \leftarrow \algs(b_s,\cA\sm\{b_s\},\delta_s)$. Set $\thetsh \leftarrow \max(2\thetsh + 1, 2)$.
   \STATE Play $\cB$ for $t_s := \frac{2\thetsh}{\epsilon_s^2}\ln \frac{k}{\delta_s}$ rounds, and receive the corresponding winner feedback: $\sigma_1,\ldots \sigma_{t_s} \in \bSigma_{\cA}^1$ per round.
   \STATE Update pairwise empirical win-count $w_{ij}$ using \rb \, on $\sigma_1\ldots \sigma_{t_s}, ~\forall i,j \in \cA'$  
   \STATE Update $\hp_{ij} := \frac{w_{ij}}{w_{ij} + w_{ji}}$ for all $i,j \in \cA'$
   \STATE \textbf{If} $\exists i \in \cA'$ with $\hp_{ib_s} < \frac{1}{2} - \epsilon_s$, \textbf{then} $\cA' \leftarrow \cA' \sm \{i\}$
   \STATE $s \leftarrow s + 1$
  \ENDWHILE   
   
   \ENDIF
  \ENDWHILE
   \STATE {\bfseries output:} The item remaining in $\cA'$ 
\end{algorithmic}
\end{algorithm}
%\vspace{2pt}
\end{center}

\subsection{Proof of Thm. \ref{thm:sc_fewfep}}  

\scalgfewfep*  
  
\begin{proof}
Let us denote by $s_0$ to be the sub-phase at which $\epsilon_s$ falls below $\epsilon$ for the first time, i.e. $s_0 := \arg \min_{s=1,2,\ldots}\1(\epsilon_s \le \epsilon)$.
We first proof the $(\epsilon,\delta)$-PAC correctness of the algorithm:

\emph{(Proof of Correctness)}: Note from Lem. \ref{lem:bistays} that the probability the \bi\, $1$ gets eliminated till sub-phase $s = 1,2,\ldots (s_0-1)$ is upper bounded by $\sum_{s = 1}^{s_0-1}\frac{2\delta_s}{k} \le \sum_{s = 1}^{s_0-1}{\delta_s}$, since $k \ge 2$.

So with probability at least $(1- \sum_{s = 1}^{s_0-1}{\delta_s})$, item $1$ survives till the beginning of sub-phase $s_0$. And by Thm. \ref{thm:sc_fewfep}, we know that with probability at least $(1-\delta_s)$, $p_{b_s1} > \frac{1}{2} - \epsilon_s/4 \implies  \theta_{b_s} > \theta_1 - \epsilon_s$, which ensures $\epsilon_s$ optimality of the item $b_s$ (see Defn. \ref{def:pac}).
So at $s = s_0$, we have $Pr(\theta_{b_{s_0}} > \theta_1 - \epsilon_{s_0}) > (1-\delta_{s_0}) $ which ensures the $(\epsilon,\delta)$ correctness of the algorithm as at $s_0$, $\epsilon_{s_0} \le \epsilon$.

Moreover the over all probability of the algorithm failing to return an $\epsilon$-optimal item is $\sum_{s = 1}^{s_0-1}{\delta_s} + \delta_{s_0} \le \sum_{s = 1}^{\infty}{\delta_s} = \frac{\delta}{120}\sum_{s = 1}^{\infty}\frac{1}{s^2} \le \frac{\delta}{120}\frac{\pi^2}{6} \le \frac{\delta}{20}$.

For the rest of the analysis we will assume that the claim of Lem. \ref{lem:nbigoes} holds good for all $s = 1,2\ldots s_0$, which we know to satisfy with probability at least $(1-\frac{19\delta}{20})$.% \red{ ====check =========}.

\emph{(Proof for Sample-complexity)}: We now proceed to prove the sample complexity of the algorithm.
Let us call $b_s$ to be the pivot item of any phase $s$, and denote the sample complexity of item $i$ (as a non-pivot element) from phase $x$ to $y$ as $\cN^{(i)}_{x,y}$, for any $1 \le x < y < \infty$.
Additionally, recalling from Lem. \ref{lem:algs} that $\thetsh \le 7\thetk$, we now prove the claim with the following two case analyses:

\textbf{(Case 1)} For suboptimal item $i \in [n]\sm\{1\}$ such that $\Delta_i > \epsilon$: Recall from Lem. \ref{lem:sc_item} that the sample complexity of item $i$ (as a non-pivot) is $\cN^{(i)}_{1,\infty} = O\Big( {(2^r)^2\thetk}\ln \frac{rk}{\delta}  \Big)$, where $i \in [n]_r$. Hence we further get $\cN^{(i)}_{1,\infty} = O\Big( \frac{\thetk}{\Delta_i^2}\ln \Big(\frac{k}{\delta}\ln \frac{1}{\Delta_i} \Big) \Big)$ as since $i \in [n]_r$, so by definition $\Delta_i < \frac{2}{2^r}$.

\textbf{(Case 2)} For items $i$ such that $\Delta_i \le \epsilon$: Recall due to Thm. \ref{lem:up_algep} the orderwise sample complexity of playing the sets $\cB_1,\ldots, \cB_{B_s}$ is same as that incurred due to calling the subroutine \algep\, at sub-phase $s$, for all $s=1,2,\ldots$. Now in the worst case, all items $i$ with $\Delta_i < \epsilon$ might survive till phase $s_0$. Thus the maximum sample complexity of any such item $i$ (as a non-pivot) till sub-phase $s_0$ can be upper bounded as:

%Sample complexity till sub-phase $s = r-1$: Note that in the worst case item $i$ can get picked at every sub-phase $s = 1,2,\ldots$ $r-1$, and at every $s$ it is played for $s$ round, so total number of plays of item $i \in [n]_r$ (as a non-pivot item), till sub-phase $r-1$ is:

\begin{align*}
\cN^{(i)}_{1,\infty} & = \cN^{(i)}_{1,s_0} \le \sum_{s = 1}^{s_0}t_s = \sum_{s = 1}^{r-1}\frac{14\thetk}{\epsilon_s^2}\ln \frac{k}{\delta_s} = \frac{14\thetk}{4^{-2}}\sum_{s = 1}^{s_0}(2^s)^2\ln \frac{120ks^3}{\delta}\\
& = O\Big( (2^{s_0+1})^2\thetk\ln \frac{(s_0+1)k}{\delta}  \Big) = O\bigg( \frac{\thetk}{\epsilon_{s_0}^2}\ln \Big(\frac{k}{\delta}\ln \frac{1}{\epsilon_{s_0}}  \Big)\bigg) = O\bigg( \frac{\thetk}{\epsilon^2}\ln \Big(\frac{k}{\delta}\ln \frac{1}{\epsilon}  \Big)\bigg),
\end{align*}
where the last equality follows as ${\epsilon} < 2\epsilon_{s_0} = \epsilon_{s_0-1}$, by definition of $s_0$.

Now denoting the number of times any $k$-subset $S \subseteq [n]$ played by the algorithm in sub-phase $x$ to $y$ as $\cN^{(S)}_{x,y}$, and using the claims from above two cases, the total sample complexity of the algorithm (lets call it algorithm $\cA$) becomes:

\begin{align*}
\cN_\cA(0,\delta) & = \sum_{S \subset [n]\mid |S|=k} \sum_{s=1}^{\infty}\1(S \in \{\cB_1,\ldots, \cB_{B_s}\})t_s = \sum_{i \in [n]} \sum_{s=1}^{\infty}\frac{\1(i \in \cA_s\sm\{b_s\})}{k-1}t_s\\
& = \sum_{s=1}^{\infty} \sum_{i \in [n]}\frac{\1(i \in \cA_s\sm\{b_s\})}{k-1}t_s \\
& = \sum_{s=1}^{\infty} \sum_{r=1}^{\log_2(\Delta_{\min})}\sum_{i \in [n]_r}\frac{\1(i \in \cA_s\sm\{b_s\})}{k-1}t_s = \frac{1}{k-1}\sum_{r=1}^{\log_2(\Delta_{\min})}\sum_{i \in [n]_r}\cN^{(i)}_{1,\infty}\\
& = \frac{1}{k-1}\sum_{r=1}^{\log_2(\Delta_{\min})}\bigg( \sum_{\{i \in [n]_r \mid \Delta_i > \epsilon\}}\cN^{(i)}_{1,\infty} + \sum_{\{i \in [n]_r \mid \Delta_i \le \epsilon\}}\cN^{(i)}_{1,\infty}\bigg)\\
& = \frac{1}{k-1}\sum_{r=1}^{\log_2(\Delta_{\min})}\bigg( \sum_{\{i \in [n]_r \mid \Delta_i > \epsilon\}}O\Big( \frac{\thetk}{\Delta_i^2}\ln \Big(\frac{k}{\delta}\frac{1}{\Delta_i} \Big) \Big) + \sum_{\{i \in [n]_r \mid \Delta_i \le \epsilon\}}O\bigg( \frac{\thetk}{\epsilon^2}\ln \Big(\frac{k}{\delta}\ln \frac{1}{\epsilon}  \Big)\bigg)\bigg)\\
& = O\bigg(\frac{\thetk}{k}\sum_{i = 2}^n\frac{1}{\max(\Delta_i,\epsilon)^2} \ln\frac{k}{\delta}\Big(\ln \frac{1}{\max(\Delta_i,\epsilon)}\Big)\bigg),
\end{align*}
where note that the second last inequality is follows from Case 1 and 2 derived above. Finally, as shown in the proof of Thm. \ref{thm:sc_fewf}, further taking into consideration the additional sample complexity incurred at each sub-phase $s$ due to invoking the \algep\, and \algs \, subroutine can shown to be at most $\frac{\Theta_{[k]}}{k}\sum_{i = 2}^{n}\max\big(1,\frac{1}{\max(\epsilon^2,\Delta_i^2)}\big) \log \frac{k}{\delta})$, combining which with the above sample complexity gives the desired sample complexity bound of Alg. \ref{alg:fewfep}.

\end{proof}  

\subsection{Modified version of \algfewf\, (Alg. \ref{alg:fewf}) for \tf}
\label{app:alg_fetf}

The pseudo code is provided in Alg. \ref{alg:fetf}.

\begin{center}
\begin{algorithm}[H]
   \caption{\textbf{\algfetf }}
   \label{alg:fetf}
\begin{algorithmic}[1]
   \STATE {\bfseries input:} Set of items: $[n]$, Subset size: $n \geq k > 1$, Ranking feedback size: $m \in [k-1]$, Confidence term $\delta > 0$
   \STATE {\bfseries init:} $\cA_0 \leftarrow [n]$, $s \leftarrow 1$ %$\cR_0 \leftarrow \emptyset$  
   %\REPEAT  
   \WHILE {$|\cA_{s-1}| \ge k$}
   	\STATE Set $\epsilon_s = \frac{1}{2^{s+2}}$, $\delta_s = \frac{\delta}{120s^3}$, and $\cR_s \leftarrow \emptyset$.
   \STATE $b_s \leftarrow $ \algep$(\cA_{s-1},k,m,\epsilon_s,\delta_s)$
   \STATE $\cB_1,\ldots \cB_{B_s} \leftarrow $ \algdiv$(\cA_{s-1}\sm \{b_s\},k-1)$ 
   \STATE \textbf{if} $|\cB_{B_s}| < k-1$, \textbf{then} $\cR_s \leftarrow \cB_{B_s}$ and $B_s = B_s-1$   
   \FOR{$b = 1,2 \ldots B_s$}	   
      \STATE $\thetsh \leftarrow \algs(b_s,\cB_b,\delta_s)$. Set $\thetsh \leftarrow \max(2\thetsh + 1, 2)$.
   \STATE Set $\cB_b \leftarrow \cB_b \cup \{b_s\}$ 
   \STATE Play $\cB_b$ for $t_s := \frac{2\thetsh}{m\epsilon_s^2}\ln \frac{k}{\delta_s}$ rounds
   \STATE Receive the winner feedback: $\sigma_1, \sigma_2,\ldots \sigma_{t_s} \in \bSigma_{\cB_b}^m$ after each respective $t_s$ rounds.
   \STATE Update pairwise empirical win-count $w_{ij}$ using \rb \, on $\sigma_1\ldots \sigma_{t_s}, ~\forall i,j \in \cB_b$  
   \STATE $\hp_{ij} := \frac{w_{ij}}{w_{ij} + w_{ji}}$ for all $i,j \in \cB_b$
   \STATE \textbf{If} $\exists i \in \cB_b$ with $\hp_{ib_s} > \frac{1}{2} - \epsilon_s$, \textbf{then} $\cA_s \leftarrow \cA_{s} \cup \{i\}$
   \ENDFOR
   \STATE $\cA_s \leftarrow \cA_{s} \cup \cR_s$, $s \leftarrow s + 1$
  \ENDWHILE
  \STATE $\cA \leftarrow \cA_{s-1}$; $\cB \leftarrow$ $\cA_{s-1} \cup \{(k-|\cA_{s-1}|)$ \text{ elements from } $[n]\sm \cA_{s-1}\}$
  \STATE Pairwise empirical win-count $w_{ij} \leftarrow 0$, $~\forall i,j \in \cA$ 
  \WHILE{$|\cA| > 1$}
  \STATE Set $\epsilon_s = \frac{1}{2^{s+2}}$, and $\delta_s = \frac{\delta}{120s^3}$
   \STATE $b_s \leftarrow $ \algep$(\cB,k,m,\epsilon_s,\delta_s)$
\STATE $\thetsh \leftarrow \algs(b_s,\cA\sm\{b_s\},\delta_s)$. Set $\thetsh \leftarrow \max(2\thetsh + 1, 2)$.
   \STATE Play $\cB$ for $t_s := \frac{2\thetsh}{m\epsilon_s^2}\ln \frac{k}{\delta_s}$ rounds, and receive the corresponding winner feedback: $\sigma_1, \sigma_2,\ldots \sigma_{t_s} \in \bSigma_{\cB}^m$ per round.
   \STATE Update pairwise empirical win-count $w_{ij}$ using \rb \, on $\sigma_1\ldots \sigma_{t_s}, ~\forall i,j \in \cA$  
   \STATE Update $\hp_{ij} := \frac{w_{ij}}{w_{ij} + w_{ji}}$ for all $i,j \in \cA$
   \STATE \textbf{If} $\exists i \in \cA$ with $\hp_{ib_s} < \frac{1}{2} - \epsilon_s$, \textbf{then} $\cA \leftarrow \cA \sm \{i\}$
   \STATE $s \leftarrow s + 1$
  \ENDWHILE
   \STATE {\bfseries output:} The item remaining in $\cA$
   \end{algorithmic}
\end{algorithm}
%\vspace{2pt}
\end{center}

\iffalse %%%%%%%%%%%%%%%%%%%%%%%%%%%%%%%%%%%%%%%%%
\begin{center}
\begin{algorithm}[h]
   \caption{\textbf{\subroutm}}
   \label{alg:subroutm}
\begin{algorithmic}[1]
   \STATE {\bfseries input:} Set of $k$ items: $\cA$, Phase count $s$
   \STATE {\bfseries init:} $\cA' \leftarrow \cA$, Pairwise empirical win-count $w_{ij} \leftarrow 0$, $~\forall i,j \in \cA$  
   \WHILE {$|\cA'| > 1$}
   	\STATE Set $\epsilon_s = \frac{1}{2^{s+2}}$, and $\delta_s = \frac{\delta}{40s^3}$
   \STATE $b_s \leftarrow $ \algep$(\cA,k,m,\epsilon_s,\delta_s)$
   \STATE Play $\cA$ for $t_s := \frac{2k}{m\epsilon_s^2}\ln \frac{k}{\delta_s}$ rounds, and receive the corresponding winner feedback: $\sigma_1, \sigma_2,\ldots \sigma_{t_s} \in \bSigma_{\cA}^m$ per round.
   \STATE Update pairwise empirical win-count $w_{ij}$ using \rb \, on $\sigma_1\ldots \sigma_{t_s}, ~\forall i,j \in \cA'$  
   \STATE Update $\hp_{ij} := \frac{w_{ij}}{w_{ij} + w_{ji}}$ for all $i,j \in \cA'$
   \STATE \textbf{If} $\exists i \in \cA'$ with $\hp_{ib_s} < \frac{1}{2} - \epsilon$, \textbf{then} $\cA' \leftarrow \cA' \sm \{i\}$
   \STATE $s \leftarrow s + 1$
  \ENDWHILE
   \STATE {\bfseries output:} $\cA'$ 
\end{algorithmic}
\end{algorithm}
%\vspace{2pt}
\end{center}
\fi %%%%%%%%%%%%%%%%%%%%%%%%%%%%%%%%%%%%%%%%%

\subsection{Proof of Thm. \ref{thm:sc_fetf}}

\scalgfetf*

%=================================

\begin{proof}
As argued, the main idea behind the $\frac{1}{m}$ factor improvement in the sample complexity w.r.t \wf\, (as proved in Thm. \ref{thm:sc_fewf}), lies behind using \rb \, updates (see Alg. \ref{alg:updt_win}) to the general \tf. This actually gives rise to $O(m)$ times additional number of pairwise preferences in comparison to \wf\, which is why in this case it turns out to be sufficient to sample any batch $\cB_b, \forall b \in [B_s]$ for only  $O\big( \frac{1}{m} \big)$ times compared to the earlier case---precisely the reason behind $\frac{1}{m}$-factor improved sample complexity of \algfewf\, for \tf. The rest of the proof argument is mostly similar to that of Thm. \ref{thm:sc_fewf}. We provide the detailed analysis below for the sake of completeness.

%The proof is based on the following four main observations:
%\begin{enumerate}
%\item The \bi\, $a^*$ (i.e. item $1$ in our case) is likely to beat the ($\epsilon_s,\delta_s$)-PAC item by sufficiently high margin, for any sub-phase $s= 1,2\ldots$, and hence is never discarded (see Lem. \ref{lem:bistays}).
%\item With high probability the set of suboptimal items get discarded at a fixed rate once played for sufficiently long duration (see Lem. \ref{lem:nbigoes}).
%\item The number of occurrences of any sub-optimal item $i$ before it gets discarded is proportional to $O\Big(\frac{1}{\Delta_i^2}\Big)$ which yields the desired sample complexity of the algorithm (see Lem.\ref{lem:sc_item}).
%\item Lastly we show in Lem. \ref{lem:sc_bbox} that the additional sample complexity incurred due to invoking the subroutine \algep\, at every sub-phase $s$ is orderwise same as the sample complexity incurred by \algfewf\, in the rest of the sub-phase, due to which \algep\, does  contribute to the overall sample complexity of the algorithm modulo some constant factors.
%\end{enumerate}

We start by proving the correctness of the algorithm, i.e. with high probability $(1-\delta)$, \algfewf\, indeed returns the \bi\,, i.e. item $1$ in our case. Towards this we first prove the following two lemmas: Lem. \ref{lem:bistaystf} and Lem. \ref{lem:nbigoestf}, same as what was derived for Thm. \ref{thm:sc_fewf} as well---However it is important to note that its is due to the \tf\, feedback the exact same guarantees holds in this case as well, even with a $m$-times lesser observed samples.

\begin{restatable}[]{lem}{bistaystf}
\label{lem:bistaystf}
With high probability of at least $(1-\frac{\delta}{20})$, item $1$ is never eliminated, i.e. $1 \in \cA_s$ for all sub-phase $s$. More formally, at the end of any sub-phase $s= 1,2,\ldots$, $\hp_{1b_s} > \frac{1}{2} - \epsilon_s$.
\end{restatable}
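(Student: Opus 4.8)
The plan is to establish Lemma~\ref{lem:bistaystf} by mirroring the argument already used for Lemma~\ref{lem:bistays} in the winner-feedback case, but carefully tracking how the top-$m$ ranking feedback changes the per-round play count $t_s$ and the effective number of pairwise comparisons extracted via \rb. The key structural observation is that under \tf\ the algorithm plays each batch only $t_s = \frac{2\thetsh}{m\epsilon_s^2}\ln\frac{k}{\delta_s}$ times (a factor $1/m$ fewer than the winner case), yet each play generates up to $m$ rank-ordered items, so the \rb\ subroutine yields roughly $m$ times as many pairwise comparisons per round. These two factors cancel, so item $1$ accumulates comparable statistical evidence against the pivot $b_s$ as in the pure-winner setting.

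First I would fix an arbitrary sub-phase $s$ and restrict attention to the batch $\cB \owns 1$ containing the optimal item, which also contains the pivot $b_s$. Reusing Thm.~\ref{lem:up_algep}, with probability at least $(1-\delta_s)$ the item $b_s$ returned by \algep\ satisfies $p_{b_s 1} > \tfrac12 - \epsilon_s$, whence $\theta_{b_s} > \tfrac12$; and by Lem.~\ref{lem:algs} the estimate satisfies $\thetsh > \frac{\thets+1}{2}$ where $\thets = \sum_{i \in \cB \sm \{b_s\}} \theta_i$. Next I would define $q_1$ as the number of rounds (out of $t_s$) in which item $1$ appears in the top-$m$ ranking. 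Since $\Pr(1 \in \cB^m) \ge \frac{m\theta_1}{\theta_{b_s}+\thets} \ge \frac{m}{1+\thets}$ (by the same union-over-positions argument as in Lem.~\ref{lem:divbat_n1}), we get $\E[q_1] \ge \frac{m t_s}{1+\thets}$. A multiplicative Chernoff bound with $\eta=\tfrac12$, using the definition of $t_s$ and the bound $\thetsh > \frac{\thets+1}{2}$, gives $q_1 > \frac{m t_s}{2(1+\thets)}$ with probability at least $1-\frac{\delta_s}{k}$. Crucially, whenever item $1$ appears in the top-$m$ list, \rb\ compares it against every other item in $\cB$, so $n_{1 b_s} = w_{1b_s}+w_{b_s 1} \ge q_1 > \frac{m t_s}{2(1+\thets)}$.

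Then I would apply the deviation bound of Lem.~\ref{lem:pl_simulator} with $\eta = \epsilon_s$ and the derived lower bound $v = \frac{m t_s}{2(1+\thets)}$ on the comparison count, yielding $\Pr\big(\hp_{1b_s} < \tfrac12 - \epsilon_s,\, n_{1b_s} \ge v\big) \le \exp(-2 v \epsilon_s^2) \le \frac{\delta_s}{k}$, where the final inequality substitutes $t_s = \frac{2\thetsh}{m\epsilon_s^2}\ln\frac{k}{\delta_s}$ and uses $\thetsh > \frac{\thets+1}{2}$ to cancel the $m$ and the $(1+\thets)$ factors. Combining the concentration of $n_{1b_s}$ with this deviation bound via a union over the two complementary events gives $\Pr(\hp_{1b_s} < \tfrac12-\epsilon_s) \le \frac{2\delta_s}{k} \le \delta_s$ at any single sub-phase (using $k \ge 2$), conditioned on the two good events for \algep\ and \algs. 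Folding in the $(1-2\delta_s)$ probability that those subroutine guarantees hold, the per-phase failure probability is at most $3\delta_s$. Finally I would union-bound over all sub-phases: $\sum_{s=1}^\infty 3\delta_s = \sum_{s=1}^\infty \frac{3\delta}{120 s^3} \le \frac{\delta}{40}\cdot\frac{\pi^2}{6} \le \frac{\delta}{20}$, completing the proof.

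The main obstacle I anticipate is bookkeeping the constants so that the $1/m$ speedup is genuinely free: one must verify that the $m$ appearing in the denominator of $t_s$ is exactly offset by the $m$-fold multiplication of comparisons from \rb, and that the lower bound $v$ on $n_{1b_s}$ retains enough strength (after the Chernoff loss of a factor $\tfrac12$) to drive $\exp(-2v\epsilon_s^2)$ below $\delta_s/k$. The delicate point is that \rb\ only guarantees item $1$ is compared with the others on rounds where it is ranked in the top $m$, so the comparison count is tied to $q_1$ rather than to $m t_s$ directly; keeping this dependence explicit through Lem.~\ref{lem:divbat_n1} is what makes the cancellation exact rather than merely order-wise.
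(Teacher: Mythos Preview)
Your proposal is correct and follows essentially the same route as the paper's proof: establish $\theta_{b_s} > \tfrac12$ and $\thetsh > \tfrac{\thets+1}{2}$ via Thm.~\ref{lem:up_algep} and Lem.~\ref{lem:algs}, lower-bound the expected number of top-$m$ appearances of item~$1$ by $\frac{m t_s}{1+\thets}$, apply multiplicative Chernoff with $\eta=\tfrac12$ to get $n_{1b_s} \ge \frac{m t_s}{2(1+\thets)}$, then invoke Lem.~\ref{lem:pl_simulator} so that the $m$ in $t_s$ cancels against the $m$ from rank-breaking, and finish with the same per-phase $3\delta_s$ bound and the series $\sum_s 3\delta_s \le \frac{\delta}{20}$. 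The constants and the sequence of inequalities match the paper's argument step for step.
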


\begin{proof}
Firstly note that at any sub-phase $s$, each batch $b \in B$ within that phase is played for $t_s = \frac{2\thetk}{m\epsilon_s^2}\ln \frac{k}{\delta_s}$ rounds. Now consider the batch $\cB \owns 1$ at any phase $s$. Clearly $b_s \in \cB$ too. Again since $b_s$ is returned by Alg. \ref{alg:epsdel}, by Thm. \ref{lem:up_algep} we know that with probability at least $(1-\delta_s)$, $p_{b_s1} > \frac{1}{2} - \epsilon_s \, \implies \theta_{b_s} > \theta_1 - 4\epsilon$. This further implies $\theta_{b_s} \ge \theta_1 - \frac{1}{2} = \frac{1}{2}$ (since we assume $\theta_1 = 1$, and at any $s,\, \epsilon_s < \frac{1}{8}$). Moreover by Lem. \ref{lem:algs}, we have $\thetsh \ge \frac{\theta_{b_s} +\thets }{\theta_{b_s}} > \frac{\thets + 1}{2}$ (recall we denote $S = \cB_b \sm \{b_s\}$)

Now let us define $w_i$ as number of times item $i \in \cB$ was returned as the winner in $t_s$ rounds and $i_\tau$ be the winner retuned by the environment upon playing $\cB$ for the $\tau^{th}$ round, where $\tau \in [t_s]$.
Then clearly $Pr(\{1 \in \sigma_\tau\}) = \sum_{j = 1}^{m}Pr\big( \sigma_\tau(j) = 1 \Big) \ge \frac{m\theta_{1}}{\sum_{j \in \cB}\theta_j} = \frac{m}{\theta_{b_s} + \thets} \ge \frac{m}{1+ \thets}, \, \forall \tau \in [t_s]$, as $1 := \arg \max_{i \in \cB}\theta_i$. Hence $\E[w_{1}] = \sum_{\tau = 1}^{t_s}\E[\1(i_\tau = 1)] = \frac{mt_s}{\theta_{b_s} + \thets} \ge \frac{mt_s}{(1+\thets)}$. 
Now assuming $b_s$ to be indeed an $(\epsilon_s,\delta_s)$-PAC \bi\, and the bound of Lem. \ref{lem:algs} to hold good as well, applying multiplicative Chernoff-Hoeffdings bound on the random variable $w_{1}$, we get that for any $\eta \in (\sqrt{2}\epsilon_s,1]$, 

\begin{align*}
Pr\Big( w_{1} \le (1-\eta)\E[w_{1}] \Big) & \le \exp\bigg(- \frac{\E[w_{1}]\eta^2}{2}\bigg) \le \exp\bigg(- \frac{mt_s\eta^2}{2(1+\thets)}\bigg) \\
& = \exp\bigg(- \frac{2\thetsh\eta^2}{2m\epsilon_s^2(1+\thets)}\ln \frac{k}{\delta_s}\bigg)\\
&\stackrel{(a)}{\le} \exp\bigg(- \frac{2(\thets+1)\eta^2}{4\epsilon_s^2(1+\thets)}\ln \frac{k}{\delta_s}\bigg)\\
& \le \exp\bigg(- \frac{\eta^2}{2\epsilon_s^2} \ln \bigg( \frac{k}{\delta_s} \bigg) \bigg) \le \exp\bigg(- \ln \bigg( \frac{k}{\delta_s} \bigg) \bigg) = \frac{\delta_s}{k},
\end{align*}
where $(a)$ holds since we proved $\thetsh \ge \frac{\theta_{b_s} +\thets }{\theta_{b_s}} > \frac{\thets + 1}{2}$, and the last inequality holds as $\eta > \epsilon_s\sqrt{2}$.

In particular, note that $\epsilon_s \le \frac{1}{8}$ for any sub-phase $s$, due to which we can safely choose $\eta = \frac{1}{2}$ for any $s$, which gives that with probability at least $\big(1-\frac{\delta_s}{k}\big)$,  $w_{1} > (1-\frac{1}{2})\E[w_{1}] > \frac{mt_s}{2(\theta_{b_s}+\thets)}$, for any subphase $s$.

%====================

Thus above implies that with probability atleast $(1-\frac{\delta_s}{k})$, after $t_s$ rounds we have $w_{1b_s} \ge \frac{mt_s}{2(\theta_{b_s}+\thets)} \implies w_{1b_s} + w_{b_s1} \ge \frac{mt_s}{2(\theta_{b_s}+\thets)}$. Let us denote  $n_{1b_s} = w_{1b_s} + w_{b_s1}$. Then the probability of the event:

\begin{align*}
Pr\Bigg(  \hp_{1 b_s} & < \frac{1}{2} - \epsilon_s  ,  n_{1 b_s} \ge \frac{mt_s}{2(\theta_{b_s}+\thets)}  \Bigg)
 = Pr\Bigg( \hp_{1 b_s} - \frac{1}{2} < - \epsilon_s  ,  n_{1 b_s} \ge \frac{mt_s}{2(\theta_{b_s}+\thets)} \Bigg)\\
& \le Pr\Bigg( \hp_{1 b_s} - p_{1 b_s} < - \epsilon_s  ,  n_{1 b_s} \ge \frac{mt_s}{2(\theta_{b_s}+\thets)} \Bigg) ~~~(\text{as } \p_{1 b_s} > \frac{1}{2})\\
& \stackrel{(a)} \le \exp\Big( -2\frac{mt_s}{2(\theta_{b_s}+\thets)}\big({\epsilon_s}\big)^2 \Big) \\
& \le \exp\Big( -\frac{2m(\theta_{b_s} + \thets)}{m\epsilon_s^2\theta_{b_s}(\theta_{b_s}+\thets)}\big({\epsilon_s}\big)^2 \Big) 
\le \frac{\delta_s}{k},
\end{align*}

where the last inequality $(a)$ follows from Lem. \ref{lem:pl_simulator} for $\eta = \epsilon_s$ and $v = \frac{t_s}{2k}$.

Thus under the two assumptions that (1). $b_s$ is indeed an $(\epsilon_s,\delta_s)$-PAC \bi\, and (2). the bound of Lem. \ref{lem:algs} holds good, combining the above two claims, at any sub-phase $s$, we have

\begin{align*}
& Pr\Bigg(  \hp_{1 b_s} < \frac{1}{2} - \epsilon_s\Bigg)\\
 & = 
Pr\Bigg(  \hp_{1 b_s} < \frac{1}{2} - \epsilon_s  ,  n_{1 b_s} \ge \frac{mt_s}{2(\theta_{b_s}+\thets)}  \Bigg) + Pr\Bigg(  \hp_{1 b_s} < \frac{1}{2} - \epsilon_s  ,  n_{1 b_s} < \frac{mt_s}{2(\theta_{b_s}+\thets)}  \Bigg)\\
& \le \frac{\delta_s}{k} + Pr\Bigg( n_{1 b_s} < \frac{mt_s}{2(\theta_{b_s}+\thets)}  \Bigg) \le \frac{2\delta_s}{k} \le \delta_s ~~~(\text{since } k \ge 2)\\
\end{align*}

Moreover from Thm. \ref{lem:up_algep} and Lem. \ref{lem:algs} we know that the above two assumptions hold with probability at least $(1-2\delta_s)$. Then taking union bound over all sub-phases $s = 1,2, \ldots$, the probability that item $1$ gets eliminated at any round:

\begin{align*}
Pr\Bigg( \exists s = 1,2, \ldots \text{ s.t. } \hp_{1 b_s} < \frac{1}{2} - \epsilon_s \Bigg) & = \sum_{s = 1}^{\infty}{3\delta_s} = \sum_{s = 1}^{\infty}\frac{3\delta}{120s^3} \le \frac{\delta}{40}\sum_{s = 1}^{\infty}\frac{1}{s^2} \le \frac{\delta}{40}\frac{\pi^2}{6} \le \frac{\delta}{20},
\end{align*}
where the first inequality holds since $k \ge 2$.
\end{proof}
Recall the notations introduced in the proof of Thm. \ref{thm:sc_fewf}: $\Delta_i = \theta_1-\theta_i$ (Sec. \ref{sec:prb_setup}), $\Delta_{\min} = \min_{i \in [n]\sm\{1\}}\Delta_i$.
Further $[n]_r := \{ i \in [n] : \frac{1}{2^r} \le \Delta_i < \frac{1}{2^{r-1}} \}$, and $\cA_{r,s}$, i.e. $\cA_{r,s} = [n]_r \cap \cA_s$, for all $s = 1,2, \ldots$.
Then in this case again we claim:

\begin{restatable}[]{lem}{nbigoestf}
\label{lem:nbigoestf}
Assuming that the best arm $1$ is not eliminated at any sub-phase $s = 1,2,\ldots$, then with probability at least $(1-\frac{19\delta}{20})$, for any sub-phase $s \ge r$, $|\cA_{r,s}| \le \frac{2}{19}|\cA_{r,s-1}|$, for any $r = 0,1,2,\ldots \log_2 (\Delta_{\min})$. 
\end{restatable}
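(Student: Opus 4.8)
The plan is to replay the argument of Lemma~\ref{lem:nbigoes} almost verbatim, the only genuinely new ingredient being that the $m$-fold gain in pairwise comparisons produced by \rb\ exactly offsets the $\frac{1}{m}$-fold shrinkage of the per-batch play count $t_s = \frac{2\thetk}{m\epsilon_s^2}\ln\frac{k}{\delta_s}$, so every concentration bound emerges with the identical threshold as in the \wf\ case. First I would pin down the reference item $b_s$: by Thm.~\ref{lem:up_algep}, with probability at least $(1-\delta_s)$ we have $p_{b_s1} > \frac{1}{2} - \epsilon_s$, which translates to $\theta_{b_s} > \theta_1 - 4\epsilon_s = 1 - \frac{1}{2^s}$ and in particular $\theta_{b_s} > \frac{1}{2}$. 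Fixing $r$ and taking any $i \in [n]_r$ (so $\Delta_i > \frac{1}{2^r}$), for every sub-phase $s \ge r$ this gives $\theta_{b_s} > \theta_i + \frac{1}{2^r} - \frac{1}{2^s} \ge \theta_i$, hence $p_{b_s i} > \frac{1}{2}$, which is exactly the setup needed to show that $i$ loses to $b_s$ empirically.

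Next I would lower bound the number of recorded comparisons $n_{ib_s}$ between $i$ and the probe $b_s$ inside the batch $\cB \ni i$ (writing $S = \cB \sm \{b_s\}$, $\thets = \sum_{j \in S}\theta_j$). Let $w_{b_s}$ count the rounds (out of $t_s$) in which $b_s$ lands in the top-$m$ list. Under \tf\ the without-replacement sampling gives $Pr(b_s \in \bsigma) = \sum_{j=1}^m Pr(\bsigma(j) = b_s) \ge \frac{m\theta_{b_s}}{\theta_{b_s} + \thets}$, so $\E[w_{b_s}] \ge \frac{m t_s \theta_{b_s}}{\theta_{b_s} + \thets}$; a multiplicative Chernoff bound with $\eta = \frac{1}{2}$ then yields $w_{b_s} \ge \frac{m t_s \theta_{b_s}}{2(\theta_{b_s} + \thets)}$ with probability at least $(1 - \frac{\delta_s}{k})$. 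Since \rb\ compares $b_s$ against every other member of $\cB$ on each round in which $b_s$ appears in the top-$m$ set, $n_{ib_s} \ge w_{b_s}$; substituting $t_s$ cancels the $m$, leaving exactly the $\frac{\thetk}{\epsilon_s^2}\ln\frac{k}{\delta_s}$-order threshold of the \wf\ analysis.

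Finally, conditioning on $p_{b_s i} > \frac{1}{2}$ and on this comparison count, I would invoke the deviation bound Lem.~\ref{lem:pl_simulator} with $\eta = \epsilon_s$ and $v = n_{ib_s}$ to obtain $Pr(\hp_{ib_s} > \frac{1}{2} - \epsilon_s) \le \frac{\delta_s}{k}$; folding in the $(1-2\delta_s)$-probability validity of the \algep\ and \algs\ guarantees, item $i$ survives sub-phase $s$ with probability at most $3\delta_s$. Hence $\E[|\cA_{r,s}|] \le 3\delta_s\,\E[|\cA_{r,s-1}|]$, Markov's inequality gives $Pr\big(|\cA_{r,s}| > \frac{2}{19}|\cA_{r,s-1}|\big) \le \frac{57\delta_s}{2}$, and a union bound over all $s$ and all $r \le s$ (using $\delta_s = \frac{\delta}{120s^3}$, so the bound contributes $s \cdot \frac{57\delta}{240 s^3}$ at stage $s$) sums to at most $\frac{57\delta}{240}\cdot\frac{\pi^2}{6} \le \frac{19\delta}{20}$.

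The main obstacle is the careful accounting of the $m$-factors: one must verify both that $Pr(b_s \in \bsigma)$ really scales like $\frac{m\theta_{b_s}}{\theta_{b_s}+\thets}$ despite the correlated without-replacement draws comprising the top-$m$ list, and that \rb\ contributes exactly one $(i,b_s)$ comparison per such round, so that the cancellation against the $\frac{1}{m}$ in $t_s$ is exact rather than merely order-wise. Everything else is a transcription of the \wf\ proof, so I would keep those steps terse and point to Lemma~\ref{lem:nbigoes} for the identical calculations.
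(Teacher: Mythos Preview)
Your proposal is correct and mirrors the paper's own proof essentially step for step: the same $\theta_{b_s} > \theta_1 - 4\epsilon_s$ reduction, the same lower bound $Pr(b_s \in \bsigma) \ge \frac{m\theta_{b_s}}{\theta_{b_s}+\thets}$ for top-$m$ feedback, the same Chernoff argument with $\eta = \frac{1}{2}$ yielding $n_{ib_s} \ge \frac{mt_s\theta_{b_s}}{2(\theta_{b_s}+\thets)}$, the same application of Lem.~\ref{lem:pl_simulator}, and the identical $3\delta_s \to \frac{57\delta_s}{2} \to \frac{19\delta}{20}$ Markov-plus-union-bound finish. Your explicit remark that the $m$ in the appearance probability exactly cancels the $\frac{1}{m}$ in $t_s$ is precisely the one nontrivial observation the paper relies on to recycle the \wf\ calculation verbatim.
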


\begin{proof}
Consider any sub-phase $s$, and let us start by noting some properties of $b_s$. Note that by Lem. \ref{lem:up_algep}, with high probability $(1-\delta_s)$, $p_{b_s 1} > \frac{1}{2} - \epsilon_s$. Then this further implies

\begin{align*}
p_{b_s 1} > \frac{1}{2} - \epsilon_s & \implies \frac{(\theta_{b_s} - \theta_1)}{2(\theta_{b_s} + \theta_1)} > -\epsilon_s\\
& \implies (\theta_{b_s} - \theta_1) > -2\epsilon_s(\theta_{b_s} + \theta_1) > -4\epsilon_s ~~(\text{as} ~\theta_i \in (0,1)\, \forall i \in [n]\sm\{1\}) 
\end{align*}  

So we have with probability atleast $(1-\delta_s)$,  $\theta_{b_s} > \theta_1 - 4\epsilon_s = \theta_1 - \frac{1}{2^s}$. 

Now consider any fixed $r = 0,1,2,\ldots \log_2(\Delta_{\min})$. Clearly by definition, for any item $i \in [n]_r$, $\Delta_i = \theta_1 - \theta_i > \frac{1}{2^r}$.
Then combining the above two claims, we have for any sub-phase $s \ge r$, $\theta_{b_s} > \theta_1 - \frac{1}{2^s} \ge \theta_i + \frac{1}{2^r} - \frac{1}{2^s} > 0 \implies p_{b_si} > \frac{1}{2}$, at any $s \ge r$.
Moreover note that for any $s \ge 1$, $\epsilon_s > \frac{1}{8}$, so that implies $\theta_{b_s} > \theta_1 - 4\epsilon_s > \frac{1}{2}$.

%====================

Recall that at any sub-phase $s$, each batch within that phase is played for $t_s = \frac{2\thetk}{m\epsilon_s^2}\ln \frac{k}{\delta_s}$ many rounds. Now consider any batch such that $\cB \owns i$ for any $i \in [n]_r$. Of course $b_s \in \cB$ as well, and note that we have shown $p_{b_si} > \frac{1}{2}$ with high probability $(1-\delta_s)$. 

Same as Lem. \ref{lem:bistays}, let us again define $w_i$ as number of times item $i \in \cB$ was returned as the winner in $t_s$ rounds, and $i_\tau$ be the winner retuned by the environment upon playing $\cB$ for the $\tau^{th}$ rounds, where $\tau \in [t_s]$.
Then given $\theta_{b_s} > \frac{1}{2}$ (as derived earlier), clearly $Pr(\{b_s \in \sigma_\tau\}) = \sum_{j = 1}^{m}Pr\big( \sigma_\tau(j) = b_s \Big) \ge \sum_{j = 0}^{m-1}\frac{\theta_{b_s}}{(\theta_{b_s} + \thets)} = \frac{m\theta_{b_s}}{\theta_{b_s} + \thets}, \, \forall \tau \in [t_s]$. Hence $\E[w_{b_s}] = \sum_{\tau = 1}^{t_s}\E[\1(i_\tau = b_s)] = \frac{m\theta_{b_s}t_s}{(\theta_{b_s} + \thets)}$. 
Now applying multiplicative Chernoff-Hoeffdings bound on the random variable $w_{b_s}$, we get that for any $\eta \in (\sqrt 2\epsilon_s,1]$, 

\begin{align*}
Pr\Big( w_{b_s} & \le (1-\eta)\E[w_{b_s}] \mid \theta_{b_s} > \frac{1}{2}, \thetsh > \frac{(\theta_{b_s} + \thets)}{\theta_{b_s}} \Big)\\
 & \le \exp\bigg(- \frac{\E[w_{b_s}]\eta^2}{2}\bigg) \le \exp\bigg(- \frac{m\theta_{b_s}t_s\eta^2}{2(\theta_{b_s} + \thets)}\bigg) \\
& \le \exp\bigg(- \frac{\eta^2}{2\epsilon_s^2} \ln \bigg( \frac{k}{\delta_s} \bigg) \bigg) \le \exp\bigg(- \ln \bigg( \frac{k}{\delta_s} \bigg) \bigg) = \frac{\delta_s}{k},
\end{align*}
where the last inequality holds as $\eta > \sqrt 2\epsilon_s$.
So as a whole, for any $\eta \in (\sqrt 2\epsilon_s,1]$,

\begin{align*}
& Pr\Big( w_{b_s} \le (1-\eta)\E[w_{b_s}] \Big)\\
& \le Pr\Big( w_{b_s} \le (1-\eta)\E[w_{b_s}] \mid \theta_{b_s} > \frac{1}{2}, \thetsh > \frac{(\theta_{b_s} + \thets)}{\theta_{b_s}}  \Big)Pr\Big(\theta_{b_s} > \frac{1}{2}\Big) + Pr\Big(\theta_{b_s} < \frac{1}{2},\thetsh > \frac{(\theta_{b_s} + \thets)}{\theta_{b_s}} \Big)\\ 
& \le \frac{\delta_s}{k} + 2\delta_s
\end{align*}

In particular, note that $\epsilon_s < \frac{1}{8}$ for any  sub-phase $s$, due to which we can safely choose $\eta = \frac{1}{2}$ for any $s$, which gives that with probability at least $\big(1-\frac{\delta_s}{k}\big)$,  $w_{b_s} > (1-\frac{1}{2})\E[w_{b_s}] > \frac{mt_s \theta_{b_s}}{2 (\thets + \theta_{b_s})}$, for any subphase $s$.

Thus above implies that with probability at least $(1-\frac{\delta_s}{k}-2\delta_s)$, after $t_s$ rounds we have $w_{b_si} \ge \frac{mt_s \theta_{b_s}}{2 (\thets + \theta_{b_s})} \implies w_{ib_s} + w_{b_si} \ge \frac{mt_s \theta_{b_s}}{2 (\thets + \theta_{b_s})}$. Let us denote  $n_{ib_s} = w_{ib_s} + w_{b_si}$. Then the probability that item $i$ is not eliminated at any sub-phase $s \ge r$ is:

\begin{align*}
Pr\Bigg(  \hp_{i b_s} > \frac{1}{2} & - \epsilon_s  ,  n_{i b_s} \ge \frac{mt_s \theta_{b_s}}{2 (\thets + \theta_{b_s})}  \Bigg)
 = Pr\Bigg( \hp_{i b_s} - \frac{1}{2} > - \epsilon_s  ,  n_{i b_s} \ge \frac{mt_s \theta_{b_s}}{2 (\thets + \theta_{b_s})} \Bigg)\\
& \le Pr\Bigg( \hp_{i b_s} - p_{i b_s} > - \epsilon_s  ,  n_{1 b_s} \ge \frac{mt_s \theta_{b_s}}{2 (\thets + \theta_{b_s})}\Bigg) ~~~(\text{as } \p_{i b_s } < \frac{1}{2})\\
& \stackrel{(a)}\le \exp\Big( -2\frac{mt_s}{2(\thets + \theta_{b_s})}\big({\epsilon_s}\big)^2 \Big)\\
& \le \exp\Big( -2\frac{m(\thets + \theta_{b_s})}{2m\theta_{b_s}(\thets + \theta_{b_s})}\big({\epsilon_s}\big)^2 \ln \frac{\delta_s}{k}\Big) \le \frac{\delta_s}{k},
\end{align*}

where $(a)$ follows from Lem. \ref{lem:pl_simulator} for $\eta = \epsilon_s$ and $v = \frac{t_s}{2k}$.

Now combining the above two claims, at any sub-phase $s$, we have:

\begin{align*}
Pr\Bigg(  \hp_{i b_s} > \frac{1}{2} - \epsilon_s\Bigg) & = 
Pr\Bigg(  \hp_{i b_s} > \frac{1}{2} - \epsilon_s  ,  n_{i b_s} \ge \frac{t_s}{4k}  \Bigg) + Pr\Bigg(  \hp_{i b_s} > \frac{1}{2} - \epsilon_s  ,  n_{i b_s} < \frac{t_s}{4k}  \Bigg)\\
& \le \frac{\delta_s}{k} + 2\delta_s + \frac{\delta_s}{k}  \le 3\delta_s ~~~~ (\text{since } k \ge 2)
\end{align*}

This consequently implies that for any sup-phase $s \ge r$, $\E[|\cA_{r,s}|] \le {3\delta_s}\E[|\cA_{r,s-1}|]$.
Then applying Markov's Inequality we get:

\begin{align*}
Pr\Bigg( |\cA_{r,s}| \le \frac{2}{19}|\cA_{r,s-1}| \Bigg)  \le \frac{3\delta_s|\cA_{r,s-1}|}{\frac{2}{19}|\cA_{r,s-1}|} = \frac{57\delta_s}{2}
\end{align*}

Finally applying union bound over all sub-phases $s= 1,2,\ldots$, and all $r = 1,2, \ldots s$, we get: 

\begin{align*}
\sum_{s = 1}^{\infty}\sum_{r = 1}^{s}\frac{57\delta_s}{2} = \sum_{s = 1}^{\infty}s\frac{57\delta}{240s^3} = \frac{57\delta}{240}\sum_{s = 1}^{\infty}\frac{1}{s^2} \le \frac{57\delta}{240}\frac{\pi^2}{6} \le \frac{57\delta}{120} \le \frac{19\delta}{20}.
\end{align*}

\end{proof}

%***************************************************

Thus combining Lem. \ref{lem:bistaystf} and \ref{lem:nbigoestf}, we get that the total failure probability of \algfewf\, is at most $\frac{\delta}{20} + \frac{19\delta}{20} = \delta $.

The remaining thing is to prove the sample complexity bound which crucially follows from a similar claim as proved in Lem. \ref{lem:sc_item}. As before, at any sub-phase $s$, we call the item $b_s$ as the pivot element of phase $s$, then

\begin{restatable}[]{lem}{scitemtf}
\label{lem:sc_itemtf}
Assume both Lem. \ref{lem:bistaystf} and Lem. \ref{lem:nbigoestf} holds good and the algorithm does not do a mistake. Consider any item $i \in [n]_r$, for any $r = 1,2, \ldots \log_2(\Delta_{\min})$. Then the total number of times item $i$ gets played as a non-pivot item (i.e. appear in at most one of the $k$-subsets per sub-phase $s$) during the entire run of Alg. \ref{alg:fetf} is $O\Big( \frac{(2^r)^2\thetk}{m}\ln \frac{rk}{\delta}  \Big)$. %\red{Check}
\end{restatable}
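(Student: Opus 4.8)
The plan is to mirror the proof of the winner-feedback counterpart, Lemma \ref{lem:sc_item}, and simply track how the single multiplicative factor $1/m$ in the per-batch play count $t_s$ propagates through two geometric summations. The only structural difference between Algorithm \ref{alg:fewf} and Algorithm \ref{alg:fetf} relevant here is that in the latter each batch containing $i$ is played for $t_s = \frac{2\thetsh}{m\epsilon_s^2}\ln\frac{k}{\delta_s}$ rounds rather than $\frac{2\thetsh}{\epsilon_s^2}\ln\frac{k}{\delta_s}$ rounds, and that the geometric elimination guarantee is now supplied by Lemma \ref{lem:nbigoestf} in place of Lemma \ref{lem:nbigoes}. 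Since the shrinkage-rate constant $2/19$ in Lemma \ref{lem:nbigoestf} is identical to that of Lemma \ref{lem:nbigoes}, the entire $1/m$ dependence factors out cleanly.

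First I would fix $i \in [n]_r$ and record the elementary facts used throughout: $\epsilon_s = \frac{1}{2^{s+2}}$ so that $1/\epsilon_s^2 = 16\cdot 4^s$; $\delta_s = \frac{\delta}{120 s^3}$ so that $\ln\frac{k}{\delta_s} = O(\ln\frac{ks}{\delta})$; and $\thetsh \le 7\thetk$ by Lemma \ref{lem:algs}, whence $t_s \le \frac{14\thetk}{m\epsilon_s^2}\ln\frac{k}{\delta_s}$. I would then split the accounting into the two regimes used before. In the pre-elimination regime (sub-phases $1 \le s \le r-1$) item $i$ can survive and is played in at most one batch per sub-phase, so
\[
\cN^{(i)}_{1,r-1} \le \sum_{s=1}^{r-1} t_s = \frac{14\thetk}{m}\sum_{s=1}^{r-1} 16\cdot 4^s\,\ln\frac{120 k s^3}{\delta},
\]
a geometric sum dominated by its final term $4^{r-1}$, yielding $O\big(\frac{(2^r)^2\thetk}{m}\ln\frac{rk}{\delta}\big)$.

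Next, for the post-elimination regime (sub-phases $s \ge r$), I would invoke Lemma \ref{lem:nbigoestf}: writing $I_s = \1(i\in\cA_s)$, its analysis gives $\E[I_s] \le \frac{2}{19}\E[I_{s-1}]$ once $s \ge r$. Hence the expected number of non-pivot plays of $i$ from sub-phase $r$ onward is bounded by
\[
\cN^{(i)}_{r,\infty} \le \sum_{s=r}^{\infty}\Big(\tfrac{2}{19}\Big)^{s-r+1} t_s = O\!\left(\frac{\thetk}{m}(2^r)^2\sum_{u=0}^{\infty}\Big(\tfrac{8}{19}\Big)^{u}\ln\frac{120k(u+r)^3}{\delta}\right),
\]
where the substitution $u=s-r$ and the decaying ratio $\tfrac{2}{19}\cdot 4 = \tfrac{8}{19}<1$ make the series converge and pull out the leading factor $(2^r)^2 = 4^r$; splitting the logarithm into an $\ln(rk/\delta)$ part and an $\ln(u)$ part (the latter summed against the geometric weight) gives once more $O\big(\frac{(2^r)^2\thetk}{m}\ln\frac{rk}{\delta}\big)$. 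Adding the two regimes yields the claimed $\cN^{(i)}_{1,\infty} = O\big(\frac{(2^r)^2\thetk}{m}\ln\frac{rk}{\delta}\big)$.

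The one point requiring care — and really the only place the \tf\ structure does nontrivial work — is not in this lemma at all but in the elimination lemmas it cites: the rate $2/19$ of geometric shrinkage of $\cA_{r,s}$ must be preserved even though each batch is now played $m$ times less often. That holds precisely because rank-breaking on a top-$m$ list produces $\Theta(m)$ times as many pairwise comparisons per play, so the concentration events underlying Lemma \ref{lem:nbigoestf} already hold at the reduced sample count. Having assumed those lemmas, the present proof is a routine re-summation in which $1/m$ simply rides along as a constant multiplier, and I expect no genuine obstacle beyond bookkeeping.
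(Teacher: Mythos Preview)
Your proposal is correct and follows essentially the same approach as the paper's proof: the same two-case split at $s=r$, the same bound $t_s \le \frac{14\thetk}{m\epsilon_s^2}\ln\frac{k}{\delta_s}$ via $\thetsh \le 7\thetk$, and the same geometric summations (with the $8/19$ ratio in the post-$r$ regime) that carry the $1/m$ factor through unchanged. Your observation that the nontrivial $m$-dependence is absorbed upstream in Lemma~\ref{lem:nbigoestf} is also exactly how the paper frames it.
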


\begin{proof}
Let us denote the sample complexity of item $i$ (as a non-pivot element) from phase $x$ to $y$ as $\cN^{(i)}_{x,y}$, for any $1 \le x < y < \infty$. Additionally, recalling from Lem. \ref{lem:algs} that $\thetsh \le 7\thetk$, we now prove the claim with the following two case analyses:

\textbf{(Case 1)} Sample complexity till sub-phase $s = r-1$: Note that in the worst case item $i$ can get picked at every sub-phase $s = 1,2,\ldots$ $r-1$, and at every $s$ it is played for $t_s$ rounds. Thus the total number of plays of item $i \in [n]_r$ (as a non-pivot item), till sub-phase $r-1$ becomes:

\[
\cN^{(i)}_{1,r-1} \le \sum_{s = 1}^{r-1}t_s = \sum_{s = 1}^{r-1}\frac{14\thetk}{m\epsilon_s^2}\ln \frac{k}{\delta_s} = \frac{14\thetk}{m4^{-2}}\sum_{s = 1}^{r-1}(2^s)^2\ln \frac{120ks^3}{\delta} = O\Big( \frac{(2^r)^2\thetk}{m}\ln \frac{rk}{\delta}  \Big)
\]

\textbf{(Case 2)} Sample complexity from sub-phase $s \ge r$ onwards: Assuming Lem. \ref{lem:nbigoestf} holds good, note that if we define a random variable $I_s$ for any sub-phase $s \ge r$ such that $I_s = \1(i \in \cA_s)$, then clearly $\E[I_{s}] \le \frac{2}{19}\E[I_{s-1}]$ (as follows from the analysis of Lem. \ref{lem:bistaystf}). Then the total expected sample complexity of item $i \in [n]_r$ for round $r, r+1, \ldots \infty$ becomes:
\begin{align*}
& \cN^{(i)}_{r,\infty}\le \frac{224\thetk}{m}\sum_{s = r}^{\infty}\bigg(\frac{2}{19}\bigg)^{s-r+1}4^s\ln \frac{k}{\delta_s} = \frac{224\thetk(2^r)^2}{m}\sum_{s = 0}^{\infty}\bigg(\frac{2}{19}\bigg)^{s+1}(2^s)^2\ln \frac{120k(s+r)^3}{\delta}\\
& = \frac{448}{19m}\thetk(2^r)^2\sum_{s = 0}^{\infty}\bigg(\frac{8}{19}\bigg)^{s}\ln \frac{120k(s+r)^3}{\delta} \\
& \le \frac{448}{19m}\thetk(2^r)^2\Bigg[\ln \frac{120kr}{\delta}\sum_{s = 0}^{\infty}\bigg(\frac{8}{19}\bigg)^{s} + \sum_{s = 0}^{\infty}\bigg(\frac{8}{19}\bigg)^{s}\ln(120ks)\Bigg] =  O\Big( \frac{(2^r)^2\thetk}{m}\ln \frac{rk}{\delta}  \Big)
\end{align*}

Combining the two cases above we get $\cN^{(i)}_{1,\infty} = O\Big( \frac{(2^r)^2\thetk}{m}\ln \frac{rk}{\delta}\Big)$ as well, which concludes the proof.
\end{proof}

Following similar notations as $\cN^{(i)}_{x,y}$, we now denote the number of times any $k$-subset $S \subseteq [n]$ played by the algorithm in sub-phase $x$ to $y$ as $\cN^{(S)}_{x,y}$. Then using Lem. \ref{lem:sc_itemtf}, the total sample complexity of the algorithm \algfewf\, (lets call it algorithm $\cA$) can be written as:

\begin{align}
\label{eq:sc_fetf}
\nonumber \cN_\cA(0,\delta) & = \sum_{S \subset [n]\mid |S|=k} \sum_{s=1}^{\infty}\1(S \in \{\cB_1,\ldots, \cB_{B_s}\})t_s = \sum_{i \in [n]} \sum_{s=1}^{\infty}\frac{\1(i \in \cA_s\sm\{b_s\})}{k-1}t_s\\
\nonumber & = \sum_{s=1}^{\infty} \sum_{i \in [n]}\frac{\1(i \in \cA_s\sm\{b_s\})}{k-1}t_s = \sum_{s=1}^{\infty} \sum_{r=1}^{\log_2(\Delta_{\min})}\sum_{i \in [n]_r}\frac{\1(i \in \cA_s\sm\{b_s\})}{k-1}t_s \\
\nonumber & = \frac{1}{k-1}\sum_{r=1}^{\log_2(\Delta_{\min})}\sum_{i \in [n]_r}\cN^{(i)}_{1,\infty} ~~(\text{ Lem. \ref{lem:sc_item}})\\
\nonumber & = \frac{1}{k-1}\sum_{r=1}^{\log_2(\Delta_{\min})}\sum_{i \in [n]_r}O\Big( \frac{(2^r)^2\thetk}{m}\ln \frac{rk}{\delta}\Big) = \frac{\thetk}{k-1}\sum_{r=1}^{\log_2(\Delta_{\min})}|[n]_r|O\Big( \frac{(2^r)^2}{m}\ln \frac{rk}{\delta}\Big) \\
& = O\Big( \frac{\thetk}{k}\sum_{i=2}^{n}\frac{1}{m\Delta_i^2}\ln \big(\frac{k}{\delta} \ln \frac{1}{\Delta_i} \big) \Big),
\end{align}
where the last inequality follows since $2^r < \frac{2}{\Delta_i}$ by definition for all $i \in [n]_r$.
%Finally the last thing to account for is the additional sample complexity incurred due to calling the subroutine \algep\ at every sub-phase $s$. But following the same argument as of Thm. \ref{thm:sc_fewf}, this is taken care of by Thm. \ref{lem:up_algep}. %, which can easily verified to hold good in this case as well. 
Finally, same as derived in the proof of Thm. \ref{thm:sc_fewf}, the last thing to account for is the additional sample complexity incurred due to calling the subroutine \algep\ and \algs\, at every sub-phase $s$, which is combinedly known to be of $O\bigg(\frac{|\cA_s|\Theta_[k]}{k} \Big( 1,\frac{1}{(m 2^s)^2}\Big)\ln \frac{k}{\delta} \bigg)$ at any sub-phase $s$ (from Thm. \ref{lem:up_algep} and Cor. \ref{cor:algs}). And using a similar summation as shown above over all $s = 1,2,\ldots \infty$, combined with Lem. \ref{lem:nbigoestf} and using the fact that $2^r < \frac{2}{\Delta_i}$, one can show that the total sample complexity incurred due to the above subroutines is at most $\frac{\Theta_{[k]}}{k}\sum_{i = 2}^{n}\max\big(1,\frac{1}{m \Delta_i^2}\big) \log \frac{k}{\delta})$. Considering the above sample complexity added with the one derived in Eqn. \ref{eq:sc_fetf} finally gives the desired $O\bigg(\frac{\thetk}{k}\sum_{i = 2}^n\max\Big(1,\frac{1}{m \Delta_i^2}\Big) \ln\frac{k}{\delta}\Big(\ln \frac{1}{\Delta_i}\Big)\bigg)$ sample complexity bound of Alg. \ref{alg:fetf}.

%%%%%%%%%%%%%%%%%%%%%%%%%%%%%%%%%%%%%%%%%%%%%

\end{proof}

%================================================

\section{Appendix for Sec. \ref{sec:lb}  }
  
\subsection{Proof of Thm. \ref{thm:lb_fewf}}
\label{app:lb_fewf}

\lbfewf*

\begin{proof}
%We will apply Lemma \ref{lem:gar16} to derive the desired lower bounds of Theorem \ref{thm:lb_plpac_win} for BB-PL with {WI} feedback model. 
The argument is based on a change-of-measure argument (Lemma  $1$) of \cite{Kaufmann+16_OnComplexity}, restated below for convenience: 

Consider a multi-armed bandit (MAB) problem with $n$ arms or actions $\cA = [n]$. At round $t$, let $A_t$ and $Z_t$ denote the arm played and the observation (reward) received, respectively. Let $\cF_t = \sigma(A_1,Z_1,\ldots,A_t,Z_t)$ be the sigma algebra generated by the trajectory of a sequential bandit algorithm upto round $t$.
\begin{restatable}[Lemma $1$, \cite{Kaufmann+16_OnComplexity}]{lem}{gar16}
\label{lem:gar16}
Let $\nu$ and $\nu'$ be two bandit models (assignments of reward distributions to arms), such that $\nu_i ~(\text{resp.} \,\nu'_i)$ is the reward distribution of any arm $i \in \cA$ under bandit model $\nu ~(\text{resp.} \,\nu')$, and such that for all such arms $i$, $\nu_i$ and $\nu'_i$ are mutually absolutely continuous. Then for any almost-surely finite stopping time $\tau$ with respect to $(\cF_t)_t$,
\vspace*{-15pt}
\begin{align*}
\sum_{i = 1}^{n}\E_{\nu}[N_i(\tau)]KL(\nu_i,\nu_i') \ge \sup_{\cE \in \cF_\tau} kl(Pr_{\nu}(\cE),Pr_{\nu'}(\cE)),
\end{align*}
where $kl(x, y) := x \log(\frac{x}{y}) + (1-x) \log(\frac{1-x}{1-y})$ is the binary relative entropy, $N_i(\tau)$ denotes the number of times arm $i$ is played in $\tau$ rounds, and $Pr_{\nu}(\cE)$ and $Pr_{\nu'}(\cE)$ denote the probability of any event $\cE \in \cF_{\tau}$ under bandit models $\nu$ and $\nu'$, respectively.
\end{restatable}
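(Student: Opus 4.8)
The plan is to prove this change-of-measure inequality by the standard three-step route: first express the expected log-likelihood ratio of the two models along the observed trajectory as a weighted sum of per-arm KL divergences, then identify this quantity with the KL divergence between the two trajectory laws restricted to $\cF_\tau$, and finally contract to an arbitrary event via the data-processing inequality. I would begin by fixing the probability space of trajectories and writing down the likelihood ratio. Under either model the law of $(A_1,Z_1,\ldots,A_t,Z_t)$ factorizes as a product of the algorithm's (model-independent) action-selection kernels $Pr(A_s \mid \cF_{s-1})$ and the reward kernels $\nu_{A_s}(Z_s)$ or $\nu'_{A_s}(Z_s)$. Since the action-selection kernels coincide under $\nu$ and $\nu'$, they cancel in the Radon--Nikodym derivative (which is well defined thanks to the mutual absolute continuity hypothesis), leaving
\[
\log \frac{dPr_\nu}{dPr_{\nu'}}\Big|_{\cF_t} = \sum_{s=1}^t \log \frac{d\nu_{A_s}}{d\nu'_{A_s}}(Z_s) =: L_t.
\]

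Next I would establish the key identity $\E_\nu[L_\tau] = \sum_{i=1}^n \E_\nu[N_i(\tau)]\, KL(\nu_i,\nu'_i)$. The increment $L_s - L_{s-1} = \log \frac{d\nu_{A_s}}{d\nu'_{A_s}}(Z_s)$ has conditional $\nu$-expectation equal to $KL(\nu_{A_s},\nu'_{A_s})$ given $\cF_{s-1}$, because $Z_s \sim \nu_{A_s}$ under $\nu$. Summing and grouping rounds by the arm played yields $\E_\nu[L_\tau] = \E_\nu\big[\sum_{s=1}^\tau KL(\nu_{A_s},\nu'_{A_s})\big] = \sum_i \E_\nu[N_i(\tau)]\, KL(\nu_i,\nu'_i)$. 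The main obstacle I anticipate is exactly this exchange of expectation with a random sum evaluated at a stopping time: justifying it requires a Wald-type optional-stopping argument, and one must check that $N_i(\tau)$ is $\cF_\tau$-measurable and integrable. The cleanest route is to observe that $M_t := L_t - \sum_{s\le t} KL(\nu_{A_s},\nu'_{A_s})$ is a $Pr_\nu$-martingale with respect to $(\cF_t)_t$, and to apply the optional stopping theorem at the almost-surely finite $\tau$, controlling integrability through the finiteness assumption on $\tau$ (and, if needed, a truncation-then-monotone-convergence argument on $\tau \wedge n$).

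Finally I would contract to an arbitrary event. By the factorization above, $\E_\nu[L_\tau]$ equals the KL divergence between the laws of the trajectory under $\nu$ and $\nu'$ restricted to $\cF_\tau$. Any $\cE \in \cF_\tau$ induces the two-cell partition $\{\cE,\cE^c\}$, which is a measurable coarsening of $\cF_\tau$, so the data-processing (contraction) inequality for KL divergence gives $\E_\nu[L_\tau] \ge kl(Pr_\nu(\cE),Pr_{\nu'}(\cE))$. Taking the supremum over $\cE \in \cF_\tau$ on the right-hand side and substituting the identity from the previous step yields
\[
\sum_{i=1}^{n}\E_\nu[N_i(\tau)]\, KL(\nu_i,\nu'_i) \ge \sup_{\cE \in \cF_\tau} kl\big(Pr_\nu(\cE),Pr_{\nu'}(\cE)\big),
\]
which is the claimed inequality.
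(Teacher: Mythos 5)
The paper does not actually prove this lemma: it is imported verbatim as Lemma~1 of \cite{Kaufmann+16_OnComplexity}, so there is no in-paper proof to compare against, only the original source's. Your reconstruction is correct and is essentially that original argument — cancellation of the model-independent sampling kernels in the likelihood ratio so that $\log\frac{dPr_\nu}{dPr_{\nu'}}\big|_{\cF_t}=L_t$, the Wald-type identity $\E_\nu[L_\tau]=\sum_{i}\E_\nu[N_i(\tau)]\,KL(\nu_i,\nu_i')$ at the almost-surely finite stopping time (which Kaufmann et al.\ justify by a per-arm Wald argument, equivalent to your compensated-martingale optional-stopping route with truncation at $\tau \wedge n$), and the data-processing contraction of the trajectory-level KL to the two-cell partition $\{\cE,\cE^c\}$ — matching their Lemmas~18--19 step for step.
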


The heart of the lower bound analysis stands on the ground on constructing \pll\, instances, and slightly modified versions of it such that no $(0,\delta)$-PAC algorithm can correctly identify the \bi\, of both the instances without examining enough (precisely $\Omega\Big(\sum_{i=2}^{n}\frac{\theta_i\theta_1}{\Delta_i^2}\ln \big( \frac{1}{\delta} \big) \Big)$) many subsetwise samples per instance. We describe the our constructed problem instances below:

Consider an \pll\, instance with the arm (item) set $A$ containing all subsets of size $k$ of $[n]$ defined as $A = \{S \subseteq [n] \mid |S| = [k]\}$. Let PL$(n,\btheta^1)$ be the true distribution associated to the bandit arms $[n]$, given by the score parameters $\btheta = (\theta_1,\ldots,\theta_n)$, such that $\theta_1 > \theta_i, \, \forall i \in [n]\setminus\{1\}$. Thus we have
\begin{align*}
\textbf{True Instance: } \text{PL}(n,\btheta^1): \theta_1^1 > \theta_2^1 \ge \ldots \ge \theta_n^1.
\end{align*}

Clearly, the \bi\, of PL$(n,\btheta^1)$ is $a^* = 1$. Now for every suboptimal item $a \in [n]\setminus \{1\}$, consider the altered problem instance PL$(n,\btheta^a)$ such that:
\begin{align*}
\textbf{Instance a: } \text{PL}(n,\btheta^a): \theta_a^a = \theta_1^1 + \epsilon; ~\theta_i^a = \theta_i^1, ~~\forall i \in [n]\sm \{a\}
%\theta_a > \theta_1 > \theta_2 \ge \ldots  \theta_{a-1} \ge \theta_{a+1} \ge \ldots \ge \theta_n,
\end{align*}

for some $\epsilon > 0$. Clearly, the \bi\, of PL$(n,\btheta^a)$ is $a^* = a$. Note that, for problem instance PL$(n,\btheta^a) \, a \in [n]$, the probability distribution associated to arm $S \in A$ is given by:
\[
p^a_S \sim Categorical(p_1, p_2, \ldots, p_k), \text{ where } p_i = Pr(i|S) = \frac{\theta_i^a}{\sum_{j \in S}\theta_j^a}, ~~\forall i \in [k], \, \forall S \in A, \, \forall a \in [n],
\]
recall the definition of $Pr(i|S)$ is as defined in Sec. \ref{sec:prb_setup}. Now applying Lem. \ref{lem:gar16} we get: %with $Z = \frac{N_{S^a_*}(T)}{T}$ we get,

\begin{align}
\label{eq:FI_a}
\sum_{\{S \in A \mid a \in S\}}\E_{\btheta^1}[N_S(\tau_\cA)]KL(p^1_S, p^a_S) \ge {kl(Pr_{\btheta^1}(\cE), Pr_{\btheta^a}(\cE))},
\end{align}
where $\tau_\cA := N_\cA(0,\delta)$ denotes the sample complexity (number of rounds of subsetwise game played before stopping) of Alg. $\cA$ and for any subset $S \in A$, $N_S(\tau)$ denotes the number of times $S$ was played by $\cA$ in $N_\cA(0,\delta)$ rounds.
The above result holds from the straightforward observation that for any arm $S \in \cA$ with $a \notin S$, $p^1_S$ is same as $p^a_S$, hence $KL(p^1_S, p^a_S)=0$, $\forall S \in A, \,a \notin S$. 

For the notational convenience we will henceforth denote $S^a = \{S \in \cA \mid a \in S\}$. 
Now let us analyse the right hand side of \eqref{eq:FI_a}, for any set $S \in S^a$. 
We further denote $\Delta'_a = \Delta_a + \epsilon = (\theta_1 - \theta_a) + \epsilon$, %$r = \1(1 \in S)$, $q = (k-r)$
and $\theta_S^a = \sum_{i \in S}\theta_i^a$ for any $a \in [n]$.
Now using the following upper bound on $KL(\p,\q) \le \sum_{x \in \X}\frac{p^2(x)}{q(x)} -1$, $\p$ and $\q$ be two probability mass functions on the discrete random variable $\X$ \cite{klub16}, we get:

\begin{align}
\label{eq:lb_wiwf_kl}
\nonumber KL(p^1_S, p^a_S) & \le \sum_{i \in S\sm\{a\}}\bigg(\frac{\theta_i^1}{\theta_S^1}\bigg)^2\bigg( \frac{\theta_S^a}{\theta_i^a}\bigg) + \bigg(\frac{\theta_a^1}{\theta_S^1}\bigg)^2\bigg( \frac{\theta_S^a}{\theta_a^a}\bigg) - 1\\
\nonumber & = \sum_{i \in S\sm\{a\}}\bigg(\frac{\theta_i^1}{\theta_S^1}\bigg)^2\bigg( \frac{\theta_S^1 + \Delta_a'}{\theta_i^1}\bigg) + \bigg(\frac{\theta_a^1}{\theta_S^1}\bigg)^2\bigg( \frac{\theta_S^1 + \Delta_a'}{\theta_a^1 + \Delta_a'}\bigg) - 1\\
\nonumber & = \bigg(\frac{\theta_S^1 + \Delta_a'}{(\theta_S^1)^2} \bigg) \bigg(\sum_{i \in S\sm\{a\}}\theta^1_i + \frac{(\theta_a^1)^2}{\theta_a^1 + \Delta_a'}\bigg) - 1\\
\nonumber & = \bigg(\frac{\theta_S^1 + \Delta_a'}{(\theta_S^1)^2} \bigg) \bigg( \frac{\theta_a^1\theta_S^1 + \Delta_a'(\theta_S^1 - \theta_a^1)}{\theta_a^1 + \Delta_a'}\bigg) - 1 ~~~\bigg[\text{replacing } \sum_{i \in S\sm\{a\}}\theta^1_i = (\theta_S^1 - \theta_a^1)\bigg]\\
& = \frac{ \Delta_a'^2(\theta_S^1 - \theta_a^1)}{(\theta_S^1)^2(\theta_a^1 + \Delta_a')} \le \frac{ \Delta_a'^2}{\theta_S^1(\theta_a^1 + \Delta_a')} = \frac{ \Delta_a'^2}{\theta_S^1(\theta_1^1 + \epsilon)}
\end{align}

Now, consider $\cE_0 \in \cF_\tau$ be an event such that the algorithm $\cA$ returns the element $i = 1$, and let us analyse the left hand side of \eqref{eq:FI_a} for $\cE = \cE_0$. Clearly, $A$ being an $(0,\delta)$-PAC algorithm, we have $Pr_{\btheta^1}(\cE_0) > 1-\delta$, and $Pr_{\btheta^a}(\cE_0) < \delta$, for any suboptimal arm $a \in [n]\setminus\{1\}$. Then we have: 

\begin{align}
\label{eq:win_lb2}
kl(Pr_{\btheta^1}(\cE_0),Pr_{\btheta^a}(\cE_0)) \ge kl(1-\delta,\delta) \ge \ln \frac{1}{2.4\delta}
\end{align}

where the last inequality follows from \cite{Kaufmann+16_OnComplexity}(see Eqn. $(3)$).
Now combining \eqref{eq:FI_a} and \eqref{eq:win_lb2}, for each problem instance PL$(n,\btheta^a)$, $a\in [n]\sm\{1\}$, we get,

\begin{align*}
%\label{eq:win_lb2.5}
\nonumber \sum_{S \in S^a}\E_{\btheta^1}[N_S(\tau_A)]KL(p^1_S,p^a_S) & \ge \ln \frac{1}{2.4\delta}\\
%\text{or equivalently} \sum_{S \in A}\E_{\btheta^1}[N_S(\tau_A)]KL(p^1_S,p^a_S) & \ge \ln \frac{1}{2.4\delta}
\end{align*}

Moreover, using \eqref{eq:lb_wiwf_kl}, we further get:

\begin{align}
\label{eq:win_lb3}
\ln \frac{1}{2.4\delta} \le \sum_{S \in S^a} \E_{\btheta^1}[N_S(\tau_A)]KL(p^1_S,p^a_S) \le \sum_{S \in S^a}\E_{\btheta^1}[N_S(\tau_A)] \frac{ \Delta_a'^2}{\theta_S^1(\theta_1^1 + \epsilon)}
\end{align}

Clearly, the total sample complexity of $\cA$: $\tau_A = \sum_{S \in A}N_S(\tau_\cA)$, then note that the problem of finding the sample complexity lower bound problem actually reduces down to 

\begin{align*}
\textbf{Primal LP (P):} \hspace{20pt}\min_{S \in A} \sum_{S \in A}& \E_{\btheta^1}[N_S(\tau_\cA)]\\
\text{such that, } ~~\ln \frac{1}{2.4\delta} \le & \sum_{S \in S^a}\E_{\btheta^1}[N_S(\tau_A)] \frac{ \Delta_a'^2}{\theta_S^1(\theta_1^1 + \epsilon)}, ~\forall a \in [n]\sm\{1\},
\end{align*}

which can equivalently be written as a linear programming (LP) of the following form:

\begin{align*}
\textbf{Dual LP (D):} \hspace{20pt}\min_{y} \b^\top\y\\
\text{such that, } \bK^\top \y & \ge \z, \text{ and } \y \geq 0,
\end{align*}

where $\y \in \R^{M}$, $M = |A| = \binom{n}{k}$, with $y(S) = \E_{\btheta^1}[N_S(\tau_A)], ~\forall S \in A$, $\z \in \R^{n-1}$ with $z(i) = \ln \frac{1}{2.4\delta} ~\forall i \in [n-1]$, $\bK \in \R^{M \times (n-1)}$ such that $K(S,a) = \begin{cases} \frac{ \Delta_a'^2}{\theta_S^1(\theta_1^1 + \epsilon)}, \text{ if } S \in S^a\\
0, \text{ otherwise }
\end{cases}$, and $\b \in \R^{M \times 1}$ such that $b(i) = 1 ~\forall i \in [M]$.

Then taking the dual of the above LP (see Chapter 5, \cite{boyd_book}) we get:

\begin{align*}
\max_{\x} \z^\top\x, ~~
\text{ such that, } \bK \x & \le \b, \text{ and } \x \geq 0,
\end{align*}
where clearly $\x \in \R^{n-1}$ is the dual optimization variable.

Now we know that by \emph{strong duality} if $\y^*$ and $\x^*$ respectively denotes the optimal solution of \textbf{(P)} and \textbf{(D)}, then $\b^\top \y^* = \z^{\top}\x^*$.
Thus at any feasible solution $\x'$ of \textbf{(D)}, $\z^{\top}\x' \le \z^{\top}\x^* = \b^\top \y^*$.

\textbf{Claim.} $x'_i = \frac{\theta_{i+1}^1(\theta_1^1+\epsilon)}{{\Delta_a'}^2}$ for all $i \in [n-1]$ is a feasible solution of \textbf{(D)}.

\begin{proof}
Clearly, $x'_i \ge 0 ~\forall i \in [n-1]$ which ensures that the second set of constraints of \textbf{(D)} hold good. Expanding the first set of constraints $\bK \x' \le \b$ we get $M$ constraints, one for each $S \in A$ such that 
\begin{align*}
\sum_{i = 1}^{n-1}K(S,i)x'_i & = \sum_{i = 1}^{n-1}\1(S \in S^{i+1})K(S,i)\frac{\theta_{i+1}^1(\theta_1^1+\epsilon)}{{\Delta_a'}^2} \\
& = \sum_{i = 2}^{n}\1(i \in S)\frac{ \Delta_a'^2}{\theta_S^1(\theta_1^1 + \epsilon)}\frac{\theta_i^1(\theta_1^1+\epsilon)}{{\Delta_a'}^2} \begin{cases} = 1 \text{ if } 1 \notin S\\ 
\le 1 \text{ otherwise } \end{cases}.
\end{align*}
 
The claim now follows recalling that $b(i) = 1 ~\forall i \in [M]$.  
\end{proof}

Thus we get $\ln \big( \frac{1}{\delta} \big)\sum_{i=2}^{n}\frac{\theta_i\theta_1}{{\Delta'_i}^2} =  \z^\top \x' \le  \z^\top \x^* = \b^\top\y^* = \sum_{S \in A}\E_{\btheta^1}[N_S(\tau_\cA)]$.
Moreover since $\epsilon>0$ is a construction dependent parameter, taking $\epsilon \to 0$ the expected sample complexity of $\cA$ under PL($n,\btheta^1$) becomes:
\[
\E_{\btheta^1}\Big[N_\cA(0,\delta)\Big] = \sum_{S \in A}\E_{\btheta^1}[N_S(\tau_\cA)] \ge \sum_{i=2}^{n}\frac{\theta_i\theta_1}{\Delta_i^2}\ln  \frac{1}{\delta} 
\]
Now taking $\epsilon \to 0$, the above construction shows that for any general problem instance, precisely PL($n,\btheta^1$), it requires a sample complexity of $\Omega\Bigg( \sum_{a=2}^n\frac{\theta_1\theta_a}{\Delta_a^2}\ln  \frac{1}{\delta}  \Bigg)$ on expectation, to find the \bi \,\,(i.e. to achieve $(0,\delta)$-PAC objective).
Finally to get the additional $\frac{n}{k}\log \frac{1}{\delta}$ term we appeal to the lower bound argument provided in  \cite{ChenSoda+18} (see their Thm. $B.9$) for the $\left(0, \frac{1}{8}\right)$-{PAC} best-arm identification problem. For such `low confidence' regimes, i.e., when $\delta = \Omega(1) \nrightarrow 0$, these explicitly shows a simple $\frac{n}{k} \log \frac{1}{\delta}$ term (independent of the instance) lower bound, which slightly improves the bound of Thm. \ref{thm:lb_fewf} for instances when $\theta_i \to 0$ (or $\Delta_i \to 1$) for all suboptimal item $i \in [n]\sm \{1\}$---note that a term like $\frac{n}{k} \log \frac{1}{\delta}$ is also intuitive, as for any \pl\, instance, the learner needs to query at the least $\Omega\Big(\frac{n}{k}\ln \frac{1}{\delta}\Big)$ many samples to make sure it covers the entire set of $n$ items.
\end{proof}
  
\subsection{Proof of Thm. \ref{thm:lb_fetf}}

\lbfetf*

\begin{proof}
The proof proceeds almost same as the proof of Thm. \ref{thm:lb_fewf}, the only difference lies in the analysis of the KL-divergence terms with \tf. 

Consider the exact same set of PL instances, PL$(n,\btheta^a)$ we constructed for Thm. \ref{thm:lb_fewf}. It is now interesting to note that how \tf\, affects the KL-divergence analysis, precisely the KL-divergence shoots up by a factor of $m$ which in fact triggers an $\frac{1}{m}$ reduction in regret learning rate.
Note that for \tf\, for any problem instance PL$(n,\btheta^a), \, a \in [n]$, each $k$-set $S \subseteq [n]$ is associated to ${k \choose m} (m!)$ number of possible outcomes, each representing one possible ranking of set of $m$ items of $S$, say $S_m$. Also the probability of any permutation $\bsigma \in \bSigma_{S}^{m}$ is given by
$
p^a_S(\bsigma) = Pr_{\btheta^a}(\bsigma|S),
$
where $Pr_{\btheta^a}(\bsigma|S)$ is as defined for \tf\, (in Sec. \ref{sec:prb_setup}). More formally,
%We denote $k' = |S| \in [k]$. 
%For ease of analysis let us first assume $1 \notin S$  In this case we get
for problem \textbf{Instance-a}, we have that: 

\begin{align*}
p^a_S(\bsigma) 
 = \prod_{i = 1}^{m}\frac{{\theta_{\sigma(i)}^a}}{\sum_{j = i}^{m}\theta_{\sigma(j)}^a + \sum_{j \in S \setminus \sigma(1:m)}\theta_{\sigma(j)}^a}, ~~~ \forall a \in [n],
\end{align*}

The important thing now to note is that for any such top-$m$ ranking of $\bsigma \in \bSigma_S^m$, $KL(p^1_S(\bsigma), p^a_S(\bsigma)) = 0$ for any set $S \not\owns a$. Hence while comparing the KL-divergence of instances $\btheta^1$ vs $\btheta^a$, we need to focus only on sets containing $a$. Applying \emph{Chain-Rule} of KL-divergence, we now get

\begin{align}
\label{eq:lb_witf_kl}
\nonumber KL(p^1_S, p^a_S) = KL(p^1_S(\sigma_1),& p^a_S(\sigma_1)) + KL(p^1_S(\sigma_2 \mid \sigma_1), p^a_S(\sigma_2 \mid \sigma_1)) + \cdots \\ 
& + KL(p^1_S(\sigma_m \mid \sigma(1:m-1)), p^a_S(\sigma_m \mid \sigma(1:m-1))),
\end{align}
where we abbreviate $\sigma(i)$ as $\sigma_i$ and $KL( P(Y \mid X),Q(Y \mid X)): = \sum_{x}Pr\Big( X = x\Big)\big[ KL( P(Y \mid X = x),Q(Y \mid X = x))\big]$ denotes the conditional KL-divergence. 
Moreover it is easy to note that for any $\sigma \in \Sigma_{S}^m$ such that $\sigma(i) = a$, we have $KL(p^1_S(\sigma_{i+1} \mid \sigma(1:i)), p^a_S(\sigma_{i+1} \mid \sigma(1:i))) := 0$, for all $i \in [m]$.

Now as derived in \eqref{eq:lb_wiwf_kl} in the proof of Thm. \ref{thm:lb_fewf}, we have 
\[
KL(p^1_S(\sigma_1), p^a_S(\sigma_1)) \le \frac{ \Delta_a'^2}{\theta_S^1(\theta_1^1 + \epsilon)}.
\]

To bound the remaining terms of \eqref{eq:lb_witf_kl},  note that for all $i \in [m-1]$
\begin{align*}
KL&(p^1_S(\sigma_{i+1} \mid \sigma(1:i)), p^a_S(\sigma_{i+1} \mid \sigma(1:i))) \\
& = \sum_{\sigma' \in \Sigma_S^i}Pr(\sigma')KL(p^1_S(\sigma_{i+1} \mid \sigma(1:i))=\sigma', p^a_S(\sigma_{i+1} \mid \sigma(1:i))=\sigma')\\
& = 	\sum_{\sigma' \in \Sigma_S^i \mid a \notin \sigma'}\Bigg[\prod_{j = 1}^{i}\Bigg(\dfrac{\theta^1_{\sigma'_j}}{\theta_S^1 - \sum_{j'=1}^{j-1}\theta_{\sigma'_{j'}}}\Bigg)\Bigg]\dfrac{ \Delta_a'^2}{(\theta_S^1-\sum_{l=1}^i \theta_{\sigma_l}^1)(\theta_1^1 + \epsilon)}  = 	 \dfrac{ \Delta_a'^2}{\theta_S^1(\theta_1^1 + \epsilon)}
\end{align*}

Thus applying above in \eqref{eq:lb_witf_kl} we get:

\begin{align}
\label{eq:lb_witf_kl2}
\nonumber KL(p^1_S, p^a_S) & = KL(p^1_S(\sigma_1) + \cdots + KL(p^1_S(\sigma_m \mid \sigma(1:m-1)), p^a_S(\sigma_m \mid \sigma(1:m-1)))\\
& \le \dfrac{ m\Delta_a'^2}{\theta_S^1(\theta_1^1 + \epsilon)}.
\end{align}

Eqn. \eqref{eq:lb_witf_kl2} gives the main result to derive Thm. \ref{thm:lb_fetf} as it shows an $m$-factor blow up in the KL-divergence terms owning to \tf. The rest of the proof follows exactly the same argument used in \ref{thm:lb_fewf}. We add the steps below for convenience.

Same as before, consider $\cE_0 \in \cF_\tau$ be an event such that the algorithm $\cA$ returns the element $i = 1$, and combining \eqref{eq:FI_a} and \eqref{eq:win_lb2}, for each problem instance PL$(n,\btheta^a)$, $a\in [n]\sm\{1\}$, we get,

\begin{align*}
\nonumber \sum_{S \in S^a}\E_{\btheta^1}[N_S(\tau_A)]KL(p^1_S,p^a_S) & \ge \ln \frac{1}{2.4\delta}\\
\end{align*}

Now using \eqref{eq:lb_witf_kl2}, we further get:

\begin{align}
\label{eq:win_lb3t}
\ln \frac{1}{2.4\delta} \le \sum_{S \in S^a} \E_{\btheta^1}[N_S(\tau_A)]KL(p^1_S,p^a_S) \le \sum_{S \in S^a}\E_{\btheta^1}[N_S(\tau_A)] \frac{ m\Delta_a'^2}{\theta_S^1(\theta_1^1 + \epsilon)}
\end{align}

Again consider the primal problem towards finding the sample complexity lower bound:

\begin{align*}
\textbf{Primal LP (P):} \hspace{20pt}\min_{S \in A} \sum_{S \in A}& \E_{\btheta^1}[N_S(\tau_\cA)]\\
\text{such that, } ~~\ln \frac{1}{2.4\delta} \le & \sum_{S \in S^a}\E_{\btheta^1}[N_S(\tau_A)] \frac{ m\Delta_a'^2}{\theta_S^1(\theta_1^1 + \epsilon)}, ~\forall a \in [n]\sm\{1\},
\end{align*}

which can equivalently be written as a linear programming (LP) of the following form:

\begin{align*}
\textbf{Dual LP (D):} \hspace{20pt}\min_{y} \b^\top\y\\
\text{such that, } \bK^\top \y & \ge \z, \text{ and } \y \geq 0,
\end{align*}

where $\y \in \R^{M}$, $M = |A| = \binom{n}{k}$, with $y(S) = \E_{\btheta^1}[N_S(\tau_A)], ~\forall S \in A$, $\z \in \R^{n-1}$ with $z(i) = \ln \frac{1}{2.4\delta} ~\forall i \in [n-1]$, $\bK \in \R^{M \times (n-1)}$ such that $K(S,a) = \begin{cases} \frac{ m\Delta_a'^2}{\theta_S^1(\theta_1^1 + \epsilon)}, \text{ if } S \in S^a\\
0, \text{ otherwise }
\end{cases}$, and $\b \in \R^{M \times 1}$ such that $b(i) = 1 ~\forall i \in [M]$.

The dual of the above LP boils down to:

\begin{align*}
\max_{\x} \z^\top\x\\
\text{such that, } \bK \x & \le \b, \text{ and } \x \geq 0,
\end{align*}
where clearly $\x \in \R^{n-1}$ is the dual optimization variable.

\textbf{Claim.} $x'_i = \frac{\theta_{i+1}^1(\theta_1^1+\epsilon)}{m{\Delta_a'}^2}$ for all $i \in [n-1]$ is a feasible solution of \textbf{(D)}.

\begin{proof}
Clearly, $x'_i \ge 0 ~\forall i \in [n-1]$ which ensures that the second set of constraints of \textbf{(D)} hold good. Expanding the first set of constraints $\bK \x' \le \b$ we get $M$ constraints, one for each $S \in A$ such that 
\begin{align*}
\sum_{i = 1}^{n-1}K(S,i)x'_i & = \sum_{i = 1}^{n-1}\1(S \in S^{i+1})K(S,i)\frac{\theta_{i+1}^1(\theta_1^1+\epsilon)}{m{\Delta_a'}^2}\\
& = \sum_{i = 2}^{n}\1(i \in S)\frac{ m\Delta_a'^2}{\theta_S^1(\theta_1^1 + \epsilon)}\frac{\theta_i^1(\theta_1^1+\epsilon)}{m{\Delta_a'}^2} \begin{cases} = 1 \text{ if } 1 \notin S\\ 
\le 1 \text{ otherwise } \end{cases}.
\end{align*}
 
The claim now follows recalling that $b(i) = 1 ~\forall i \in [M]$.  
\end{proof}

Thus we get $\ln \big( \frac{1}{\delta} \big)\sum_{i=2}^{n}\frac{\theta_i\theta_1}{m{\Delta'_i}^2} =  \z^\top \x' \le  \z^\top \x^* = \b^\top\y^* = \sum_{S \in A}\E_{\btheta^1}[N_S(\tau_\cA)]$.
Moreover since $\epsilon>0$ is a construction dependent parameter, taking $\epsilon \to 0$ the expected sample complexity of $\cA$ under PL($n,\btheta^1$) becomes:
\[
\E_{\btheta^1}\Big[N_\cA(0,\delta)\Big] = \sum_{S \in A}\E_{\btheta^1}[N_S(\tau_\cA)] \ge \sum_{i=2}^{n}\frac{\theta_i\theta_1}{m\Delta_i^2}\ln  \frac{1}{\delta} 
\]
Now taking $\epsilon \to 0$, the above construction shows that for any general problem instance, precisely PL($n,\btheta^1$), it requires a sample complexity of $\Omega\Bigg( \sum_{a=2}^n\frac{\theta_1\theta_a}{m\Delta_a^2}\ln  \frac{1}{\delta}  \Bigg)$ on expectation, to find the \bi \,\,(i.e. to achieve $(0,\delta)$-PAC objective) with \tf. Finally, to prove the additional instance independent $\Omega\Big( \frac{n}{k}\log \frac{1}{\delta}  \Big)$ term, we can use a similar argument provided in the Thm. \ref{thm:lb_fewf}, which ensures that no matter what the underlying \pl\, instance is, the learner needs to query at the least $\Omega\Big(\frac{n}{k}\ln \frac{1}{\delta}\Big)$ queries to cover the entire set of $n$ items--note that this term is independent of $m$.% 
\end{proof}

\section{Appendix for Sec. \ref{sec:ft}}

%\iffalse %%%%%%%%%%%%% LB result %%%%%%%%%%%%%%%%%%%%

\subsection{Proof of Thm. \ref{thm:lb_fttf}}

\lbfttf*

%=====================================================

\begin{proof}

Similar to our lower bounds proofs for \fe\, setting (see Thm. \ref{thm:lb_fewf}, \ref{thm:lb_fetf}), we again use a \emph{change-of-measure argument} to prove the instance-dependent lower bounds for the \ft\, setting.

We start by constructing the problem instances as follows:
Consider a general the true underlying \pll\, problem instance $\text{PL}(n,\btheta^1): \theta_1^1 > \theta_2^1 \ge \ldots \ge \theta_n^1$, and corresponding to each suboptimal item $a \in [n]\setminus \{1\}$, let us define an alternative problem instance $\text{PL}(n,\btheta^a): \theta_a^a = \theta_1^1; \,\theta_1^a = \theta_a^1; ~\theta_i^a = \theta_i^1, ~~\forall i \in [n]\sm \{a,1\}$, for some $\epsilon>0$.
%\theta_a > \theta_1 > \theta_2 \ge \ldots  \theta_{a-1} \ge \theta_{a+1} \ge \ldots \ge \theta_n$.

Then using a similar derivation shown for Eqn. \eqref{eq:lb_witf_kl2}, for above construction of problem instances in this case we can can show that:

\begin{align}
\label{eq:ft_kl}
KL(p^1_S, p^a_S) 
\le \dfrac{ m\Delta_a^2}{\theta_S^1}\bigg(\dfrac{\theta^1_1 \1(1 \in S) + \theta^1_a \1(a \in S)}{\theta^1_1\theta^1_a}\bigg)
\end{align}
where recall that we denote $\Delta_a = \theta_1^1 - \theta_a^1$, for any sub-optimal arm $a \in [n]\sm\{a\}$. Clearly for any subset $S \subset [n]$ such that $\{1,a\}\cap S = \emptyset$ must lead to $KL(p^1_S, p^a_S) = 0$ which is also follows from \eqref{eq:ft_kl}.

Same as the proof of Thm. \ref{thm:lb_fetf}, now applying Lem. \ref{lem:gar16} for any event $\cE \in \cF_\tau$ we get: %with $Z = \frac{N_{S^a_*}(T)}{T}$ we get,

\begin{align*}
\sum_{\{S \subseteq [n], |S|=k \mid a \in S\}}\E_{\btheta^1}[N_S(Q)]KL(p^1_S, p^a_S) \ge {kl(Pr_{\btheta^1}(\cE), Pr_{\btheta^a}(\cE))},
\end{align*}
where for any $k$-subset $S$, $N_S(Q)$ denotes the total number of times $S$ was played (i.e. queried upon for the \tf) by $\cA$ in $Q$ samples.
Now, consider $\cE_0 \in \cF_\tau$ be an event such that the algorithm $\cA$ indeed outputs the \bi\, $1$ upon termination, and let us analyse the left hand side of \eqref{eq:FI_a} for $\cE = \cE_0$. Now $\cA$ being  \bcon\, algorithm (see Defn. \ref{def:con}), we have $Pr_{\btheta^1}(\cE_0) > 1 - \exp(-f(\btheta)\sc)$. Moreover, since $\cA$ is \ordo\, as well, we also have $Pr_{\btheta^a}(\cE_0) < \exp(-f(\btheta)\sc)$, for any suboptimal arm $a \in [n]\setminus\{1\}$. Combining above two claims and denoting $\delta = \exp(-f(\btheta)\sc)$, we get: 

\begin{align*}
kl(Pr_{\btheta^1}(\cE_0),Pr_{\btheta^a}(\cE_0)) \ge kl(1-\delta,\delta) \ge \ln \frac{1}{2.4\delta}
\end{align*}

where the last inequality follows from \cite{Kaufmann+16_OnComplexity} (see Eqn. $(3)$). 
%Recall from Thm. \ref{thm:lb_fewf} that we also denoted $A$ to be the set containing all subsets of size $k$ of $[n]$, i.e. $A = \{S \subseteq [n] \mid |S| = [k]\}$.
Then combining the above two claims with \eqref{eq:ft_kl}, for any problem instance PL$(n,\btheta^a)$, $a\in [n]\sm\{1\}$, we get,

\begin{align}
\label{eq:ft_con}
\ln \frac{1}{2.4\delta} \le \sum_{S \in \cS} \E_{\btheta^1}[N_S(Q)]KL(p^1_S,p^a_S) \le \sum_{S \in \cS}\E_{\btheta^1}[N_S(Q)] \dfrac{ m\Delta_a^2}{\theta_S^1}\bigg(\dfrac{\theta^1_1 \1(1 \in S) + \theta^1_a \1(a \in S)}{\theta^1_1\theta^1_a}\bigg),
\end{align}
where we denote the set of all possible k-subsets of $[n]$ by $\cS = \{S \subseteq [n] \mid |S|=k\}$.

Now coming back to our actual problem objective, recall that our goal is to understand the best possible lower bound on the quantity $\delta$---since the left hand side above is a decreasing function of $\delta$, at best any algorithm can aim to minimize $\delta$ as much as possible without violating the right hand side constraints for any $a \in [n]\sm\{1\}$. In other words any algorithm can at best aim to achieve a error confidence $\delta$ such that $\ln\Big(\frac{1}{2.4\delta}\Big)$ is upper bounded by:

\begin{align*}
& \textbf{Max-Min Optimization (P):}  \\
& \hspace{1 in} \max_{\{N_S(Q)\}_{S \in \cS}}\min_{a=2}^{n}\sum_{S \in \cS}\E_{\btheta^1}[N_S(Q)] \dfrac{ m\Delta_a^2}{\theta_S^1}\bigg(\dfrac{\theta^1_1 \1(1 \in S) + \theta^1_a \1(a \in S)}{\theta^1_1\theta^1_a}\bigg) \\
& \hspace*{1.5in} \text{such that, } ~\sum_{S \in \cS}\E_{\btheta^1}[N_S(Q)] = Q, \text{  and } \E_{\btheta^1}[N_S(Q)]\ge 0, \,\forall S \in \cS
\end{align*}
%\red{main Expression, Bib-entry}

Clearly the optimization variables in \textbf{(P)} are $\{\E_{\btheta^1}[N_S(Q)]\}_{S \in \cS}$. We denote the simplex on $\cS$ by $\Pi_\cS = \{\bpi \in [0,1]^{n \choose k} \mid \sum_{i}\pi(i) = 1\}$. In general, we denote any $d$-dimensional simplex by $\Pi_d$, for any $d \in \N$. Then it is easy to follow that the above optimization problem \textbf{(P)} can be equivalently written in terms of optimization variables $x_S: = \frac{\E_{\btheta^1}[N_S(Q)]}{Q}$ as:

\begin{align*}
&\textbf{Equivalent Max-Min Optimization (P'):}\\
& \hspace{1in}  Q\Bigg[\max_{\{x_S\}_{S \in \cS}}\min_{\blam \in \Pi_{n-1}}\sum_{a = 2}^{n}\lambda(a)\Bigg(\sum_{S \in \cS}x_S \dfrac{ m\Delta_a^2}{\theta_S^1}\bigg(\dfrac{\theta^1_1 \1(1 \in S) + \theta^1_a \1(a \in S)}{\theta^1_1\theta^1_a}\bigg)\Bigg)\Bigg] \\
& \hspace{1.5in} \text{such that, } ~\sum_{S \in \cS}x_S = 1, \text{  and } x_s\ge 0, \,\forall S \in \cS
\end{align*}

We denote by opt\textbf{(P)} and opt\textbf{(P')} the optimal values of problem \textbf{P} and \textbf{P'} respectively. Note that opt\textbf{(P)} = opt\textbf{(P')}. Also note that $(x_S)_{S \in \cS} \in \Pi_\cS$. Then, opt(\textbf{P'}) can be further rewritten as:

\begin{align*}
\frac{\text{opt{\bf(P')}}}{Q} & = \max_{\{x_S\}_{S \in \cS}}\min_{\blam \in \Pi_{n-1}}\sum_{a = 2}^{n}\lambda(a)\Bigg(\sum_{S \in \cS}x_S\dfrac{ m\Delta_a^2}{\theta_S^1} \bigg(\dfrac{\theta^1_1 \1(1 \in S) + \theta^1_a \1(a \in S)}{\theta^1_1\theta^1_a}\bigg)\Bigg)\\
& = \min_{\blam \in \Pi_{n-1}}\max_{\{x_S\}_{S \in \cS}}\sum_{S \in \cS}\sum_{a = 2}^{n}\lambda(a)x_S\dfrac{ m\Delta_a^2}{\theta_S^1} \bigg(\dfrac{\theta^1_1 \1(1 \in S) + \theta^1_a \1(a \in S)}{\theta^1_1\theta^1_a}\bigg)\\
& = \min_{\blam \in \Pi_{n-1}}\max_{S \in \cS}\sum_{a = 2}^{n}\lambda(a) \dfrac{ m\Delta_a^2}{\theta_S^1} \bigg(\dfrac{\theta^1_1 \1(1 \in S) + \theta^1_a \1(a \in S)}{\theta^1_1\theta^1_a}\bigg)
\end{align*}
where the second equality follows from Von Neumann’s well-known Minmax Theorem \cite{Freund96}. Now further setting $\lambda'(a) = \dfrac{{(\theta_a^1)^2}/{\Delta_a^2}}{\sum_{i = 2}^n{(\theta_i^1)^2}/{\Delta_i^2}}$, for all $a \in [n]\sm\{1\}$ (note that $\blam' \in \Pi_{n-1}$), using $\blam = \blam'$ in opt\textbf{(P')}, it can further be upper bounded as:

\begin{align*}
& \frac{\text{opt{\bf(P')}}}{Q} \le \max_{S \in \cS}\sum_{a = 2}^{n}\lambda'(a) \dfrac{ m\Delta_a^2}{\theta_S^1} \bigg(\dfrac{\theta^1_1 \1(1 \in S) + \theta^1_a \1(a \in S)}{\theta^1_1\theta^1_a}\bigg) \\
& = \max_{S \in \cS}\sum_{a = 2}^{n}\dfrac{(\theta_a^1)^2}{\sum_{i = 2}^n{(\theta_i^1)^2}/{\Delta_i^2}} \dfrac{ m}{\theta_S^1} \bigg(\dfrac{\theta^1_1 \1(1 \in S) + \theta^1_a \1(a \in S)}{\theta^1_1\theta^1_a}\bigg)\\
& = 
\begin{cases} = \underset{S \in \cS}{\max}\sum_{a = 2}^{n}\dfrac{(\theta_a^1)^2}{\sum_{i = 2}^n{(\theta_i^1)^2}/{\Delta_i^2}} \dfrac{ m}{\theta_S^1} \bigg(\dfrac{1}{\theta^1_1}\bigg) \le m\bigg(\sum_{a = 2}^{n}\frac{(\theta_a^1)^2}{\Delta_a^2}\bigg)^{-1},
 \text{ if } 1 \notin S, \, a\in S\\ 
\le \underset{S \in \cS}{\max}\sum_{a = 2}^{n}\dfrac{(\theta_a^1)^2}{\sum_{i = 2}^n{(\theta_i^1)^2}/{\Delta_i^2}} \dfrac{ m}{\theta_S^1} \bigg(\dfrac{1}{\theta^1_a}\bigg) = m\bigg(\sum_{a = 2}^{n}\frac{(\theta_a^1)^2}{\Delta_a^2}\bigg)^{-1}, \text{ if } a \notin S,\, 1 \in S\\
\le \underset{S \in \cS}{\max}\sum_{a = 2}^{n}\dfrac{(\theta_a^1)^2}{\sum_{i = 2}^n{(\theta_i^1)^2}/{\Delta_i^2}} \dfrac{ m}{\theta_S^1} \bigg(\dfrac{\theta^1_a + \theta^1_1}{\theta^1_1\theta^1_a}\bigg) \le 2m\bigg(\sum_{a = 2}^{n}\frac{(\theta_a^1)^2}{\Delta_a^2}\bigg)^{-1}, \text{ if both } 1,a \in S\\
=  0 \text{ otherwise } 
\end{cases} \\
& \le 2m\Big(\sum_{a = 2}^{n}\frac{(\theta_a^1)^2}{\Delta_a^2}\Big)^{-1}
\end{align*}

Then combining above upper bound to Eqn. \ref{eq:ft_con}, we finally get:

\begin{align*}
\dfrac{\ln \frac{1}{2.4\delta}}{Q} \le 2m\Big(\sum_{a = 2}^{n}\frac{(\theta_a^1)^2}{\Delta_a^2}\Big)^{-1} & \implies 
\frac{1}{2.4\delta} \le \exp\Bigg( 2mQ\Big(\sum_{a = 2}^{n}\frac{(\theta_a^1)^2}{\Delta_a^2}\Big)^{-1} \Bigg)\\
& \implies \delta \ge \dfrac{\exp\Big(-2mQ\Big(\sum_{a = 2}^{n}\frac{(\theta_a^1)^2}{\Delta_a^2}\Big)^{-1} \Big)}{2.4},
\end{align*}
which proves the claim.
Thus we show for any general problem instance, precisely PL($n,\btheta^1$), such that any $(0,\delta)$-PAC algorithm incurs an error on at least $\Omega\Bigg({\exp\Big( -2mQ\Big(\sum_{a = 2}^{n}\frac{(\theta_a^1)^2}{\Delta_a^2}\Big)^{-1} \Big)}\Bigg)$ towards identifying the \bi\, with \tf.
\end{proof}
%\fi%%%%%%%%%%%%%%%%%%%%%%%%%%%%%%%%%%%%%%%%%%%%%%%%%%%
%======================================================

\subsection{Pseudo-code for \algfttf} %~(Alg. \ref{alg:epsdel})
\label{app:alg_fttf}

\vspace{-10pt}
\vspace{-0pt}
\begin{center}
\begin{algorithm}[H]
   \caption{\textbf{\algfttf}}
   \label{alg:fttf}
\begin{algorithmic}[1]
   \STATE {\bfseries input:} Set of items: $[n]$, Subset size: $k \le n$, Ranking \\~~~~~feedback size: $m \in [k-1]$, Sample complexity \sc
   \STATE {\bfseries init:} $\cA \leftarrow [n]$, $s \leftarrow 1$
   \WHILE {$|\cA| \geq k$} %{$\ell = 1, 2, \ldots \ceil{\ln_k n}$}
    \STATE $\cB_1, \cB_2, \ldots \cB_B \leftarrow$ \algdiv$(\cA,k)$
    \STATE \textbf{if} $|\cB_B| < k$, \textbf{then} $B \leftarrow B - 1$, $\cR \leftarrow \cB_B$
	\FOR {$b \in [B]$}   
   	\STATE Play the set $\cB_b$ for $Q':= \frac{kQ}{2n+k\log_2 k}$ times %\red{\Big($\frac{kQ}{2n\log_2 k}$?\Big)} %\frac{kQ}{2n\log_2 k}
   	\STATE For all $i,j \in \cB_b$, update $\hp_{ij}$ with \rb
   	\STATE Compute $w_i: = \sum_{j \in \cB_b}\1(\hp_{ij}>\frac{1}{2})$
   	\STATE Define $\bar w \leftarrow $ Median$(\{w_i\}_{i \in \cB_b}), \, \forall i \in \cB_b$
   	%\STATE $\cC_b = \{i \in \cB_b \mid \hp_{ij} > \frac{1}{2} \,,\forall j \in \cB_b \sm\{i\}\},\, \forall i \in \cB_b$ 
   	%\STATE $\cA_{s+1} \leftarrow \cA_{s+1} \cup$ \{any random item from $\cC_b$\}
   	\STATE $\cA \leftarrow \{i \in \cB_b \mid w_i \ge \bar w\}$ 
   	%\STATE $s \leftarrow s+1$
   	\ENDFOR 
    \STATE $\cA \leftarrow \cA \cup \cR$; 
 
	\ENDWHILE
	%\STATE $\cA' \leftarrow \cA$ 
	\STATE $\cB \leftarrow \cA \cup$ $\{ k-|\cA|$ random elements from $[n]\sm\cA\}$
	\WHILE{$|\cA| > 1$} 
	\STATE Play the set $\cB$ for $Q':= \frac{kQ}{2n+k\log_2 k}$ times %\red{Check!}
   	\STATE For all $i,j \in \cA$, update $\hp_{ij}$ with \rb
   	\STATE Compute $z_i: = \sum_{j \in \cA}\1(\hp_{ij}>\frac{1}{2}),\, \forall i \in \cA$
   	\STATE Define $\bar z \leftarrow $ Median$(\{z_i\}_{i \in \cA})$
   	\STATE $\cA \leftarrow \{i \in \cA \mid z_i \ge \bar z\}$
   	\ENDWHILE
   \STATE {\bfseries output:} The remaining item in $\cA$
\end{algorithmic}
\end{algorithm}
\end{center}
\vspace{-15pt}

\subsection{Proof of Thm. \ref{thm:pr_fttf}}

\pralgfttf*

\begin{proof}
%\red{Finish.....}
Firstly, we establish that the sample complexity of \algfttf\, is always within the stipulated constraint $Q$.

\textbf{Correctness of stipulated sample complexity (Q).} To show this note that inside any round, for any batch $\cB_b, \, b \in [B]$ $\frac{k}{2}$ items of $\cB_b$, by definition of $\bar w$. Thus at each consecutive round, the number of surviving elements gets halved, which implies that the total number of rounds can be at most $\log_2 n$. Hence size of the set of surviving items $|\cA|$ at round $i$ is approximately $\frac{n}{2^{\ell-1}}$, for any round $\ell = 1,2,\ldots \log_2 n$. Also number of sets formed at round $i$ is $\flr{\frac{|\cA|}{k}} < \frac{|\cA|}{k}$. Then total number of sets formed by the algorithm during its entire run can be at most:
\[
\frac{n}{k}\big( 1 + \frac{1}{2} + \frac{1}{2^2} + \cdots + \frac{1}{2^{\ceil{\log_2 \frac{n}{k}}}} \big) + \log_2 k  < \frac{n}{k}\big( \sum_{i = 0}^\infty \frac{1}{2^i} \big) + \log_2 k = \frac{(2n + k\log_2 k)}{k}
\]
where the extra $\log_2k$ term is due to the final $\log_2 k$ rounds for which $|\cA| < k$. Now since our strategy is to allocate uniform budget across all sets, the assumign sample complexity per set is $Q' = \frac{Q}{\dfrac{(2n + k\log_2 k)}{k}} = \frac{Qk}{2n + k\log_2 k}$. Hence our algorithm is always within the budget constraint $Q$.
The only part left is to now prove the confidence bound of Thm. \ref{thm:pr_fttf}, as analysed below:

\textbf{Bounding the \bi\, identification confidence.}
We first analyse the any particular batch $\cB \in \{\cB_b\}_{b \in [B]}$, for any particular round $\ell = 1,2,\ldots \log_2 n$, such that $1 \in \cB$. Let us analyze the probability of item $1$ getting eliminated from batch $\cB$ at the end of round $\ell$.

First recall the number of times $\cB$ is sampled is $Q' = \frac{kQ}{2n+ k \log_2k}$. 
Now let us define $w_i$ as the number of times item $i \in \cB$ was returned in the top-$m$ winner (i.e. $i$ appeared in the \tf\, $\sigma \in \Sigma_S^m$) in $Q'$ plays, and $\sigma_\tau$ be the top-$m$ ranking retuned by the environment upon playing the batch $\cB$ for the $\tau^{th}$ round, $\forall \tau \in [Q']$.
Then given $\theta_{1} = \arg\max_{i \in [n]}\theta_i$, clearly $Pr(\{1 \in \sigma_\tau\}) = \sum_{j = 1}^{m}Pr\big( \sigma_\tau(j) = 1 \Big) = \sum_{j = 0}^{m-1}\frac{1}{2(k-j)} \ge \frac{m}{k}$, since $Pr(\{1 | S\}) = \frac{\theta_{1}}{\sum_{j \in S}\theta_j} \ge \frac{1}{|S|}$ for any $S \subseteq \cB$.
Thus $\E[w_{1}] = \sum_{\tau = 1}^{Q'}\E[\1(1 \in \sigma_\tau)] \ge \frac{mQ'}{k}$. 
Now applying multiplicative Chernoff-Hoeffdings bound on the random variable $w_{1}$, we get that for any $\eta \in (0,1]$, 

\begin{align*}
Pr\Big( w_{1} \le (1-\eta)\E[w_{1}] \Big) & \le \exp\bigg(- \frac{\E[w_{1}]\eta^2}{2}\bigg) \le \exp\bigg(- \frac{mQ'\eta^2}{2k}\bigg) 
\end{align*}
In particular, setting $\eta = \frac{1}{2}$ we get with probability at least $\Bigg(1-\exp\bigg(- \frac{mQ'}{8k}\bigg)\Bigg)$,  $w_{1} > (1-\frac{1}{2})\E[w_{1}] > \frac{mQ'}{2k}$, for any such batch $\cB$, at any round $\ell$. This further implies that with probability at least $1-\exp\bigg(- \frac{mQ'}{8k}\bigg)$, after $Q'$ plays, we have $w_{1i} + w_{i1} \ge \frac{mQ'}{2k}$, for any item $i \in \cB\sm\{1\}$, as due to \rb\, update whenever an item appears in \tf\, $\sigma_\tau$, it ends up getting pairwise compared with the rest of the $k-1$ items in $\cB$ after $\tau^{th}$ play. Let us denote  $n_{1i} = w_{1i} + w_{i1}$. Then the probability that any suboptimal item $i \in \cB\sm\{1\}$ beats $1$ after $Q'$ plays is:

\begin{align*}
Pr\Bigg( & \hp_{1i} > \frac{1}{2},  n_{1 i} \ge \frac{mQ'}{2k}  \Bigg) = Pr\Bigg( \hp_{1i} - p_{1i} > \frac{1}{2} - \p_{1i},  n_{1i} \ge \frac{mQ'}{2k} \Bigg)\\
& = Pr\Bigg( \hp_{1i} - p_{1i} > \p_{1i} -\frac{1}{2},  n_{1i} \ge \frac{mQ'}{2k} \Bigg)\\
& \le Pr\Bigg( \hp_{1i} - p_{1i} > \frac{\Delta_i}{4},  n_{1i} \ge \frac{mQ'}{2k} \Bigg) ~~\bigg[\text{ as, } \p_{1i} -\frac{1}{2} = \frac{(\theta_1- \theta_i)}{2(\theta_1+ \theta_i)} > \frac{(\theta_1- \theta_i)}{4}\bigg]\\
& \le \exp\Big( -2\frac{mQ'}{2k}\bigg(\frac{\Delta_i}{4}\bigg)^2 \Big) = \exp\Big( -\frac{mQ'\Delta_i^2}{16k} \Big),
\end{align*}

where the last inequality follows from Lem. \ref{lem:pl_simulator} for $\eta = \frac{\Delta_i}{4}$, and $v = \frac{mQ'}{2k}$. 
So combining the above two claims, we get that the total probability of

\begin{align}
\label{eq:ft_1}
\nonumber Pr\Bigg(  \hp_{1i} > \frac{1}{2} \Bigg) & =
Pr\Bigg(  \hp_{1i} > \frac{1}{2},  n_{1 i} \ge \frac{mQ'}{2k}  \Bigg) + Pr\Bigg(  \hp_{1i} > \frac{1}{2},  n_{1 i} < \frac{mQ'}{2k}  \Bigg)\\
\nonumber  & \le \exp\Big( -\frac{mQ'\Delta_i^2}{16k} \Big) + Pr\Bigg(  w_{1 i} < \frac{mQ'}{2k}  \Bigg) \le  \exp\Big( -\frac{mQ'\Delta_i^2}{16k} \Big) + \exp\bigg(- \frac{mQ'}{8k}\bigg)\\
& \le 2\exp\Big( -\frac{mQ'\Delta_i^2}{16k} \Big).
\end{align}

Now let us try to analyze that for a fixed round $\ell$, how many such suboptimal item $i \in \cB\sm\{1\}$ can beat the \bi\, $1$. Towards this we define a random variable $V: = \sum_{i \in \cB\sm\{1\}}\1(\hp_{i1} > \frac{1}{2})$. Now from \eqref{eq:ft_1} we get that: 
\[
\E[V] = \sum_{i \in \cB\sm\{1\}}Pr(\hp_{i1} > \frac{1}{2}) \le 2(k-1)\exp\Big( -\frac{mQ'\Delta_i^2}{16k} \Big)
\]

Then applying Markov's inequality we have:
\[
Pr\Big(V \ge \frac{k}{2}\Big) \le \frac{\E[V]}{\frac{k}{2}} \le \frac{4(k-1)}{k}\exp\Big( -\frac{mQ'\Delta_i^2}{16k} \Big) \le \frac{4(k-1)}{k}\exp\Big( -\frac{mQ'\Delta_{\min}^2}{16k} \Big)
\]
It is important to note that in case if $V < \frac{k}{2}$, $\implies z_1 > \frac{k}{2}$ and hence $z_1 > \bar z$, as $\bar z \le \frac{k}{2}$. %\red{Elaborate?}. 

Therefore with probability at least $\Big( 1 - \frac{4(k-1)}{k}\exp\Big( -\frac{mQ'\Delta_{\min}^2}{16k} \Big) \Big)$, item $1$ is not eliminated in round $\ell$. Then the total probability of item $1$ getting eliminated in the entire run of \algfttf\, can be upper bounded as:
\begin{align*}
Pr\Big(\exists \ell = 1,2,\ldots &
log_2 n \text{ s.t. item 1 is eliminated at round } \ell \Big) \\
& \le \sum_{\ell = 1}^{\log_2 n}Pr\Big(\text{ Item 1 is eliminated at round } \ell \Big) \\
& \le 4\log_2 n \frac{(k-1)}{k}\exp\Big( -\frac{mQ'\Delta_{\min}^2}{16k} \Big)\\
& = 4\log_2 n \frac{(k-1)}{k}\exp\Big( -\frac{mQ\Delta_{\min}^2}{16(2n + k \log_2k)} \Big),
\end{align*}
where the last equality follows recalling that we set $Q' = \frac{kQ}{2n+k\log_2 k}$, which concludes the first claim. The second claim simply follows from the first as the total error probability is upper bounded by $\delta$, this further implies
\begin{align*}
& \delta \le 4\log_2 n \frac{(k-1)}{k}\exp\Big( -\frac{mQ\Delta_{\min}^2}{16(2n + k \log_2k)} \Big)\\
& \implies Q \ge \frac{16(2n + k \log_2k)}{m\Delta_{\min}^2}\ln\bigg(\frac{4(k-1)\log_2 n}{k\delta} \bigg)\Bigg) = O\Bigg(\frac{16(2n + k \log_2k)}{m\Delta_{\min}^2}\ln\bigg(\frac{\log_2 n}{\delta} \bigg)\Bigg)\Bigg)
\end{align*}
which proves the second claim.
\end{proof}

\section{Appendix for Sec. \ref{sec:experiments}}
\label{app:expts}

\textbf{Environments.}
1. {\it g1}, 2. {\it g4}, 3. {\it arith}, 4. {\it geo}, 5. {\it b1} all with $n = 16$, and three larger models 5. {\it g4-big}, 6. {\it arith-big}, and 7. {\it geo-big} each with $n=50$ items. Their individual score parameters are as follows: 
\textbf{1. g1:} $\theta_1 = 0.8$, $\theta_i = 0.2, \, \forall i \in [16]\sm\{1\}$
\textbf{2. g4:} $\theta_1 = 1$, $\theta_i = 0.7, \, \forall i \in \{2,\ldots 6\}$, $\theta_i = 0.5, \, \forall i \in \{7,\ldots 11\}$, and $\theta_i = 0.01$ otherwise.
\textbf{3. arith:} $\theta_1 = 1$ and $ \theta_{i} - \theta_{i+1} = 0.06, \, \forall i \in [15]$.
\textbf{4. geo:} $\theta_1=1$, and $\frac{\theta_{i+1}}{\theta_{i}} = 0.8, ~\forall i \in [15]$. 
\textbf{5. b1:} $\theta_1 = 0.8$, $\theta_i = 0.6, \, \forall i \in [16]\sm\{1\}$
\textbf{6. g4b:} $\theta_1 = 1$, $\theta_i = 0.7, \, \forall i \in \{2,\ldots 18\}$, $\theta_i = 0.5, \, \forall i \in \{19,\ldots 45\}$, and $\theta_i = 0.01$ otherwise.
\textbf{7. arithb:} $\theta_1 = 1$ and $ \theta_{i} - \theta_{i+1} = 0.2, \, \forall i \in [49]$.
\textbf{8. geob:} $\theta_1=1$, and $\frac{\theta_{i+1}}{\theta_{i}} = 0.9, ~\forall i \in [49]$.

%%%%%%%%%%%%%%%%%%%%%%%%%

%\input{appendix2.tex}

%%%%%%%%%%%%%%%%%%%%%%%%%%%%%%%%%%%%%%%%%%%%%%%%%%%%%%%%%%%%%%%%%%%%%%%%%%%%%%%
%%%%%%%%%%%%%%%%%%%%%%%%%%%%%%%%%%%%%%%%%%%%%%%%%%%%%%%%%%%%%%%%%%%%%%%%%%%%%%%

\end{document}